\newcommand{\repo}{\url{github.com/tonyduan/rs4a}}
\definecolor{mydarkblue}{rgb}{0,0.08,0.45}
\theoremstyle{}
\newtheorem{claim}[thm]{Claim}
\newcommand{\Vol}{\mathrm{Vol}}
\newcommand{\Zon}{\mathrm{Zon}}
\newcommand{\defeq}{\overset{\mathrm{def}}{=}}
\newcommand{\Vrt}{\mathrm{Vert}}
\newcommand{\npset}{\mathcal{NP}}
\newcommand{\growth}{\mathcal{G}}
\newcommand{\cQ}{\mathcal{Q}}
\renewcommand{\Pr}{\mathbb{P}}
\newcommand{\BinomCDF}{\mathrm{BinomCDF}}
\newcommand{\Rademacher}{\mathrm{Rademacher}}
\newcommand{\Gammadist}{\mathrm{Gamma}}
\newcommand{\GammaCDF}{\mathrm{GammaCDF}}
\newcommand{\GaussianCDF}{\mathrm{GaussianCDF}}
\newcommand{\BetaCDF}{\mathrm{BetaCDF}}
\newcommand{\BetaVar}{\mathrm{Beta}}
\newcommand{\BetaPrime}{\mathrm{BetaPrime}}
\newcommand{\BetaPrimeCDF}{\mathrm{BetaPrimeCDF}}
\newcommand{\eps}{\varepsilon}
\newcommand{\dtv}{d_{\mathrm{TV}}}
\newcommand{\arctanh}{\operatorname{arctanh}}
\newcommand{\BV}{\mathrm{BV}}
\renewcommand{\div}{\operatorname{div}}
\newcommand{\mygamma}[2]{\gamma_{#1}^{(#2)}}
\DeclareFontFamily{U}{matha}{\hyphenchar\font45}
\DeclareFontShape{U}{matha}{m}{n}{
      <5> <6> <7> <8> <9> <10> gen * matha
      <10.95> matha10 <12> <14.4> <17.28> <20.74> <24.88> matha12
      }{}
\DeclareSymbolFont{matha}{U}{matha}{m}{n}
\DeclareFontFamily{U}{mathx}{\hyphenchar\font45}
\DeclareFontShape{U}{mathx}{m}{n}{
      <5> <6> <7> <8> <9> <10>
      <10.95> <12> <14.4> <17.28> <20.74> <24.88>
      mathx10
      }{}
\DeclareSymbolFont{mathx}{U}{mathx}{m}{n}
\DeclareMathDelimiter{\vvvert}{0}{matha}{"7E}{mathx}{"17}
\newcommand{\mynorm}[1]{\left\vvvert#1\right\vvvert}
\icmltitlerunning{Randomized Smoothing of All Shapes and Sizes}
\begin{document}

\twocolumn[
\icmltitle{Randomized Smoothing of All Shapes and Sizes}

\icmlsetsymbol{equal}{*}

\begin{icmlauthorlist}
\icmlauthor{Greg Yang}{equal,ms}
\icmlauthor{Tony Duan}{equal,ms,res}
\icmlauthor{J. Edward Hu}{ms,res}
\icmlauthor{Hadi Salman}{ms}
\icmlauthor{Ilya Razenshteyn}{ms}
\icmlauthor{Jerry Li}{ms}
\end{icmlauthorlist}

\icmlaffiliation{ms}{Microsoft Research AI}
\icmlaffiliation{res}{Work done as part of the \href{https://www.microsoft.com/en-us/research/academic-program/microsoft-ai-residency-program/}{Microsoft AI Residency Program}}

\icmlcorrespondingauthor{Greg Yang}{\texttt{gregyang@microsoft.com}}
\icmlcorrespondingauthor{Tony Duan}{\texttt{tony.duan@microsoft.com}}
\icmlcorrespondingauthor{Jerry Li}{\texttt{jerrl@microsoft.com}}

\icmlkeywords{Machine Learning, ICML}

\vskip 0.3in
]

\printAffiliationsAndNotice{\icmlEqualContribution} %

\begin{abstract}

Randomized smoothing is the current state-of-the-art defense with provable robustness against $\ell_2$ adversarial attacks.
Many works have devised new randomized smoothing schemes for other metrics, such as $\ell_1$ or $\ell_\infty$; however, substantial effort was needed to derive such new guarantees.
This begs the question: can we find a general theory for randomized smoothing?

We propose a novel framework for devising and analyzing randomized smoothing schemes, and validate its effectiveness in practice.
Our theoretical contributions are: (1) we show that for an appropriate notion of ``optimal'', the optimal smoothing distributions for any ``nice'' norms have level sets given by the norm's \emph{Wulff Crystal};
(2) we propose two novel and complementary methods for deriving provably robust radii for any smoothing distribution;
and, (3) we show fundamental limits to current randomized smoothing techniques via the theory of \emph{Banach space cotypes}.
By combining (1) and (2), we significantly improve the state-of-the-art certified accuracy in $\ell_1$ on standard datasets.
Meanwhile, we show using (3) that with only label statistics under random input perturbations, randomized smoothing cannot achieve nontrivial certified accuracy against perturbations of $\ell_p$-norm $\Omega(\min(1, d^{\f 1 p - \f 1 2}))$, when the input dimension $d$ is large.
We provide code in \repo{}.
\end{abstract}

\section{Introduction}
\enlargethispage{\baselineskip}

Deep learning models are vulnerable to adversarial examples -- small imperceptible perturbations to their inputs that lead to misclassification \citep{goodfellow_explaining_2015, szegedy_intriguing_2014}.
To solve this problem, recent works proposed heuristic defenses that are robust to specific classes of perturbations, but many would later be broken by stronger attacking algorithms~\citep{carlini2017adversarial, athalye2018obfuscated, uesato2018adversarial}.
This led the community to both strengthen empirical defenses \cite{kurakin2016adversarial,madry2017towards} as well as build \emph{certified} defenses that provide robustness guarantees, i.e., models whose predictions are constant within a neighborhood of their inputs \citep{wong2018provable,raghunathan2018certified}.
In particular, \emph{randomized smoothing} is a recent method that has achieved state-of-the-art provable robustness~\citep{lecuyer2018certified, li_certified_2019, cohen_certified_2019}.
In short, given an input, it outputs the class most likely to be returned by a base classifier, typically a neural network, under random noise perturbation of the input.
This mechanism confers stability of the output against $\ell_p$ perturbations, even if the base classifier itself is highly non-Lipschitz.
Canonically, this noise has been Gaussian, and the adversarial perturbation it protects against has been $\ell_2$ \citep{cohen_certified_2019,salman2019provably,Zhai2020MACER}, but some have explored other kinds of noises and adversaries as well \citep{lecuyer2018certified,li_certified_2019,dvijotham_framework_2019}.
In this paper, we seek to comprehensively understand the interaction between the choice of smoothing distribution and the perturbation norm.%
\footnote{V2 update: we added results using stability training, semi-supervised learning, and ImageNet pre-training. See \cref{tab:sota}.}
\enlargethispage{\baselineskip}
\begin{enumerate}[nosep]
    \item We propose two new methods to compute robust certificates for additive randomized smoothing against different norms.
    \item We show that, for $\ell_1, \ell_2, \ell_\infty$ adversaries, the optimal smoothing distributions have level sets that are their respective \emph{Wulff Crystals} --- a kind of equilibrated crystal structure studied in physics since 1901 (\citeauthor{wulffZurFrageGeschwindigkeit1901}).
    \item Using the above advances, we obtain state-of-the-art $\ell_1$ certified accuracy on CIFAR-10 and ImageNet. With stability training \citep{li_certified_2019}, semi-supervised learning \citep{carmon_unlabeled_2019}, and pre-training in the fashion of \citet{hendrycks_using_2019}, we further improve CIFAR-10 certified accuracies, with $>30\%$ advantage over prior SOTA for $\ell_1$ radius $\ge 1.5$.
    See \cref{tab:sota}.
    \item Finally, we leverage the classical theory of Banach space \emph{cotypes} \citep{wojtaszczyk1996banach} to show that current techniques for randomized smoothing cannot certify nontrivial accuracy at more than $\Omega(\min(1, d^{\f 1 p - \f 1 2}))$ $\ell_p$-radius, if all one uses are the probabilities of labels when classifying randomly perturbed input.
\end{enumerate}

\begin{table*}[t!]
    \centering
    \begin{tabular}{llrrrrrrrr} \toprule
         ImageNet & $\ell_1$ Radius &  0.5 & 1.0 & 1.5 & 2.0 & 2.5 & 3.0 & 3.5 & 4.0 \\ \midrule
         & Laplace, \citet{teng_ell_1_2019} (\%)  & 48 & 40 & 31 & 26 & 22 & 19 & 17 & 14\\
         & Uniform, Ours (\%) & 55 & 49 & 46 & 42 & 37 & 33 & 28 & 25\\
         & + Stability Training & \bf 60 & \bf 55 & \bf 51 & \bf 48 & \bf 45 & \bf 43 & \bf 41 & \bf 39\\
          \midrule
         CIFAR-10 & $\ell_1$ Radius & 0.5 & 1.0 & 1.5 & 2.0 & 2.5 & 3.0 & 3.5 & 4.0 \\ \midrule
         & Laplace, \citet{teng_ell_1_2019} (\%) & 61 & 39 & 24 & 16 & 11 & 7 & 4 & 3\\
         & Uniform, Ours (\%) & 70 & 59 & 51 & 43 & 33 & 27 & 22 & 18\\
         & + Stability Training & 70 & 60 & 53 & 47 & \bf 43 & 39 & 35 & 28\\
         & + Stability Training, Semi-supervision & \bf 74 & \bf 63 & 54 & \bf 48 & \bf 43 & 38 & 34 & 31 \\
         & + Stability Training, Pre-training & \bf 74 & 62 & \bf 55 & \bf 48 & \bf 43 & \bf 40 & \bf 37 & \bf 33\\
         \bottomrule
    \end{tabular}
    \caption[Caption]{Certified top-1 accuracies of our $\ell_1$-robust classifiers, vs previous state-of-the-art, at various radii, for ImageNet and CIFAR-10.\protect\footnotemark}
    \label{tab:sota}
\end{table*}

\section{Related Works}
\label{sec:relatedWorks}

Defences against adversarial examples are mainly divided into \emph{empirical} defenses and \emph{certified} defenses.

\textbf{Empirical defenses} are heuristics designed to make learned models empirically robust. An example of these are \emph{adversarial training} based defenses  \citep{ kurakin2016adversarial, madry2017towards} which optimize the parameters of a model by minimizing the worst-case loss over a neighborhood around the input to these models~\citep{carlini2017adversarial, laidlaw2019functional, wong2019wasserstein, hu2020improved}.
Such defenses may seem powerful, but have no guarantees that they are not ``breakable''. In fact, the majority of the empirical defenses proposed in the literature were later ``broken'' by stronger attacks \citep{carlini2017adversarial, athalye2018obfuscated, uesato2018adversarial, athalye2018robustness}.

\textbf{Certified defenses} guarantee that for any input $x$, the classifier's output is constant within a small neighborhood of $x$. Such defenses are typically based on certification methods that are either \textit{exact} or \textit{conservative}. Exact methods include those based on Satisfiability Modulo Theories solvers~\citep{katz2017reluplex, ehlers2017formal} or
mixed integer linear programming \citep{tjeng2018evaluating, lomuscio2017approach, fischetti2017deep}, which, although guaranteed to find adversarial examples if they exist,  are unfortunately computationally inefficient.
On the other hand, conservative methods are more computationally efficient, but might mistakenly flag a ``safe'' data point as vulnerable to adversarial examples \citep{wong2018provable, wang2018mixtrain, wang2018efficient, raghunathan2018certified, raghunathan2018semidefinite, wong2018scaling, dvijotham2018dual, dvijotham2018training, croce2018provable, salman2019convex, gehr2018ai2, mirman2018differentiable, singh2018fast, gowal2018effectiveness, weng2018towards, zhang2018efficient}.
However, none of these defenses scale to practical networks.
Recently, a new method called randomized smoothing has been proposed as a \textit{probabilistically} certified defense, whose architecture-independence makes it scalable.

\footnotetext{Unless stated otherwise, these models were trained with noise augmentation. In our replication of \citet{teng_ell_1_2019}, our noise augmentation results matched their adversarial training results.}

\paragraph{Randomized smoothing}

Randomized smoothing was first proposed as a heuristic defense without any guarantees~\citep{liu2018towards, cao2017mitigating}.
Later on, \citet{lecuyer2018certified} proved a robustness guarantee for smoothed classifiers from a differential privacy perspective.
Subsequently, \citet{li_certified_2019} gave a stronger robustness guarantee utilizing tools from information theory.
Recently, \citet{cohen_certified_2019} provided a tight $\ell_2$ robustness guarantee for randomized smoothing, applied by \citet{salman2020blackbox} to provably defend pre-trained models for the first time.
Furthermore, a series of papers came out recently that developed robustness guarantees against other adversaries such as $\ell_1$-bounded \citep{teng_ell_1_2019}, $\ell_\infty$-bounded~\cite{zhang*2020filling}, $\ell_0$-bounded~\cite{levine_robustness_2019,lee_tight_2019}, and Wasserstein attacks~\cite{levine_wasserstein_2019}.
In \cref{sec:main-comp}, we give a more in-depth comparison on how our techniques compare to their results.

\paragraph{Wulff Crystal}
We are the first to relate to adversarial robustness the theory of \emph{Wulff Crystals}.
Just as the round soap bubble minimizes surface tension for a given volume, the Wulff Crystal minimizes certain similar surface energy that arises when the crystal interfaces with another material.
The Russian physicist George Wulff first proposed this shape via physical arguments in 1901 \citep{wulffZurFrageGeschwindigkeit1901}, but its energy minimization property was not proven in full generality until relatively recently, building on a century worth of work \citep{gibbsEquilibriumHeterogeneousSubstances1875,wulffZurFrageGeschwindigkeit1901,hiltonMathematicalCrystallography1903,liebmannCurieWulffScheSatz1914,vonlaueWulffscheSatzFur,dinghasUberGcometrischenSatz1944,burtonGrowthCrystalsEquilibrium1951,herringKonferenzUberStruktur,constableKineticsMechanismCrystallization1968GoogleBooks,taylorUniqueStructureSolutions1975,taylorCrystallineVariationalProblems1978ProjectEuclid,fonseca_muller_1991,brothersIsoperimetricTheoremGeneral1994DOI.orgCrossref,cerfWulffCrystalIsing2006www.springer.com}.

\paragraph{No-go theorems for randomized smoothing} Prior to the initial submission of this manuscript, the only other no-go theorem for randomized smoothing in the context of adversarial robustness is~\citet{zheng2020a}.
However, they are only concerned with a non-standard notion of certified robustness that does not imply anything for the original problem.
Moreover, they show that, under this different notion of robustness, if they are robust for $\ell_\infty$, then the $\ell_2$ norm of the noise must be large on average.
While this provides indirect evidence for the hardness of certifying $\ell_\infty$, it does not actually address the question.
Our result, on the other hand, directly rules out a large suite of current techniques for deriving robust certificates for all $\ell_p$ norms for $p > 2$, for the standard notion of certified robustness.

After the initial submission of this manuscript, we became aware of two concurrent works~\cite{blum2020random,kumar2020curse} that claim impossibility results for randomized smoothing.
\citet{blum2020random} demonstrate that, under some mild conditions, any smoothing distribution for $\ell_p$ with $p > 2$ must have large component-wise magnitude.
This gives indirect evidence for the hardness of the problem, but does not directly show a limit for the utility of randomized smoothness for the robust classification problem, which we do in this work.
\citet{kumar2020curse} demonstrate that certain classes of smoothing distributions cannot certify $\ell_\infty$ without losing dimension-dependent factors.
Our result is more general, as it rules out \emph{any} class of smoothing distributions, and in fact, any smoothing scheme that allows the distribution to vary arbitrarily with the input point.

\section{Randomized Smoothing}

Consider a classifier $f$ from $\mathbb{R}^d$ to classes $\mathcal{Y}$ and a distribution $q$ on $\R^d$.
Randomized smoothing with $q$ is a method that constructs a new, \textit{smoothed} classifier $g$ from the \textit{base} classifier $f$.
The smoothed classifier $g$ assigns to a query point $x$ the class which is most likely to be returned by the base classifier $f$ when $x$ is perturbed by a random noise sampled from $q$, i.e.,
\begin{align}
g(x) &\defeq \argmax_{c \in \mathcal{Y}} q(U_c - x)
\label{eq:smoothed-hard}
\end{align}
where $U_c$ is the decision region $\{x' \in \R^d: f(x') = c\}$, $U_c - x$ denotes the translation of $U_c$ by $-x$, and $q(U)$ is the measure of $U$ under $q$, i.e.\ $q(U) = \Pr_{\delta \sim q}(\delta \in U)$.

\paragraph{Robustness guarantee for smoothed classifiers}

For $p \in [0, 1], v \in \R^d,$ define the \emph{growth function}
\begin{align*}
    \growth_q(p, v) \defeq
    \sup_{U \sbe \R^d: q(U) = p} q(U - v),
    \numberthis\label{eqn:growth}
\end{align*}
One can think of $U$ has the decision region of some base classifier.
Thus $\growth_q(p, v)$ gives the maximal growth of measure of a set (i.e. decision region) when $q$ is shifted by the vector $v$, if we only know the initial measure $p$ of the set.

Consider an adversary that can perturb an input additively by any vector $v$ inside an allowed set $\mathcal B$.
In the case when $\mathcal B$ is the $\ell_2$ ball and $q$ is the Gaussian measure, \citet{cohen_certified_2019} gave a simple expression for $\growth_q$ involving the Gaussian CDF, derived via the Neyman-Pearson lemma, which is later rederived by \citet{salman2019provably} as a nonlinear Lipschitz property.
Likewise, the expression for Laplace distributions was derived by \citet{teng_ell_1_2019}.
(See \cref{thm:gaussianl2} and \cref{thm:laplaceL1} for their expressions.)

Suppose when the base classifier $f$ classifies $x + \delta$, $\delta \sim q$, the class $c \in \mathcal Y$ is returned with probability $\rho = \Pr_{\delta \sim q} (f(x + \delta) = c) > 1/2$.
Then the smoothed classifier $g$ will not change its prediction under the adversary's perturbations if~\footnote{Many earlier works state robustness guarantees in terms of estimates of $p_A = \rho$ of the top class and $p_B$ of the runner up class; however, their implementations are all in the form provided here, as $p_B$ is usually taken to be $1 - p_A$.}
\begin{equation}
    \sup_{v \in \mathcal B} \growth_q(1-\rho, v) < 1/2.
    \label{eqn:robustguarantee}
\end{equation}

\section{Methods for Deriving Robust Radii}
Let $q$ be a distribution with a density function, and we shall write $q(x), x \in \R^d$, for the value of the density function on $x$.
Then, given a shift vector $v \in \R^d$ and a ratio $\kappa > 0$, define the \emph{Neyman-Pearson set}
\begin{equation}
    \npset_\kappa \defeq \{x \in \R^d: \kappa q(x-v) \ge q(x)\}.
    \label{eqn:neymanpearsonset}
\end{equation}
Then the Neyman-Pearson lemma tells us that \cite{neyman_ix_1933,cohen_certified_2019}
\begin{align}
    \growth_q(q(\npset_\kappa), v) = q(\npset_\kappa - v).
    \label{eqn:NP}
    \tag{NP}
\end{align}
While this gives way to a simple expression for the growth function when $q$ is Gaussian \cite{cohen_certified_2019}, it is difficult for more general distributions as the geometry of $\npset_\kappa$ becomes hard to grasp.
To overcome this difficulty, we propose the \emph{level set method} that  decomposes this geometry so as to compute the growth function exactly, and the \emph{differential method} that upper bounds the growth function derivative, loosely speaking.

\subsection{The Level Set Method}

For each $t > 0$, let $U_t$ be the superlevel set
\begin{align*}
    U_t \defeq \{x \in \R^d: q(x) \ge t\}.
\end{align*}
Then its boundary $\pd U_t$ is the level set with $q(x) = t$ under regularity assumptions.
The integral of $q$'s density is of course 1, but this integral can be expressed as the integral of the volumes of its superlevel sets:
\begin{equation}
    1 = \int q(x) \dd x = \int_0^\infty \Vol(U_t) \dd t.
    \label{eqn:suplevelint}
    \tag{$\bullet$}
\end{equation}
If $q$ has a differentiable density, then we may rewrite this as an integral of \emph{level} sets (\cref{thm:coarea}):
\begin{equation}
    1 = \int_0^\infty \int_{\pd U_t} \f t{\|\nabla q(x)\|_2} \dd x \dd t.
    \label{eqn:levelint}
    \tag{$\circ$}
\end{equation}
\begin{center}
    \includegraphics[width=0.35\textwidth]{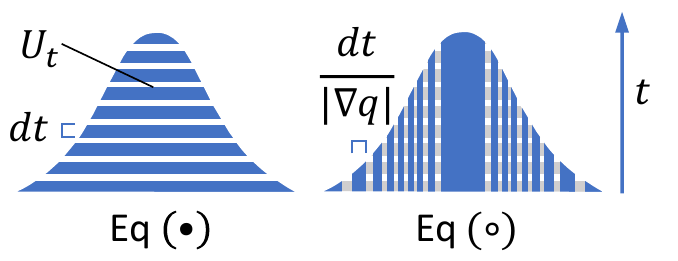}
\end{center}
The graphics above illustrate the two integral expressions (best viewed on screen).
In this level set perspective, the Neyman-Pearson set $\npset_\kappa$ (\cref{eqn:neymanpearsonset}) can be written as
\begin{align*}
    \npset_\kappa &=
        \bigcup_{t > 0} \{x: q(x) = t \text{  and  } q(x - v) \ge t / \kappa\}\\
    &=
    \bigcup_{t > 0} \{\pd U_t \cap (U_{t / \kappa} + v)\}.
\end{align*}
Then naturally, its measure is calculated by
\begin{align}
    q(\npset_\kappa)
    &=
        \int_0^\infty \int_{\pd U_t \cap (U_{t / \kappa} + v)}\f t{\|\nabla q(x)\|_2} \dd x \dd t.
    \label{eqn:psmall}
    \tag{$\vee$}
\end{align}
Similarly, the Neyman-Pearson set can also be written from the perspective of $q(\cdot - v)$,
\begin{align*}
    \npset_\kappa
    &=
        \bigcup_{t > 0}\{x : q(x - v) = t \text{  and  } q(x) \le t \kappa \}\\
    &=
        \bigcup_{t > 0} \{ (\pd U_t + v) \setminus \mathring U_{t\kappa}\},
\end{align*}
where $\mathring U$ is the interior of the closed set $U$.
So its measure under $q(\cdot - v)$ is
\begin{align}
    q(\npset_\kappa - v)
    &=
        \!\int_0^\infty\!\!\!\! \int_{\pd U_t \setminus (\mathring U_{t\kappa} - v)}
         \f{t}{\|\nabla q(x)\|_2} \dd x \dd t.
    \label{eqn:pbig}
    \tag{$\wedge$}
\end{align}
\begin{center}
    \includegraphics[width=0.45\textwidth]{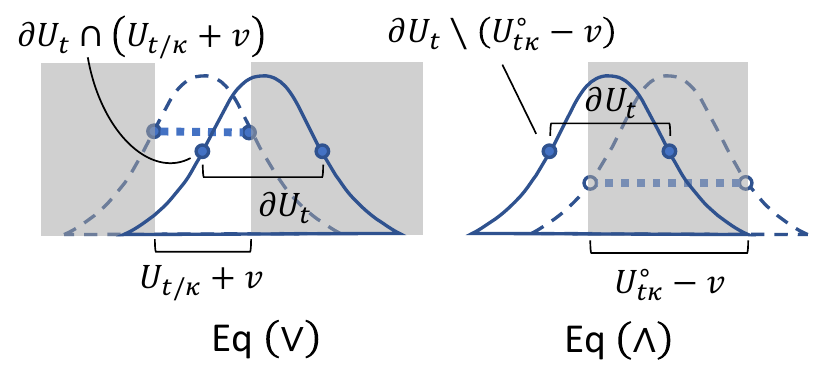}
\end{center}
The graphics above illustrate the integration domains of $x$ in \cref{eqn:psmall,eqn:pbig}.
In general, the geometry of $\pd U_t \cap (U_{t/\kappa} + v)$ or $\pd U_t \setminus (\mathring U_{t\kappa} - v)$ is still difficult to handle, but in highly symmetric cases when $U_t$ are concentric balls or cubes, \cref{eqn:psmall,eqn:pbig} can be calculated efficiently.

\paragraph{Computing Robust Radius}
\cref{eqn:psmall,eqn:pbig} allow us to compute the growth function by \cref{eqn:NP}.
In general, this yields an \emph{upper bound} of the robust radius
\begin{align*}
        &\phantomeq
            \sup \left\{r: \sup_{\|v\|_p \le r} \growth_q(1-\rho, v) < 1/2\right\}\\
        &\le
            \sup \left\{r: \growth_q(1-\rho, r u) < 1/2\right\}
\end{align*}
for any particular $u$ with $\|u\|_p = 1$.
With sufficient symmetry, e.g. with $\ell_2$ adversary and distributions with spherical level sets, this upper bound becomes \emph{tight} for well-chosen $u$, and we can build a lookup table of certified radii.
See \cref{alg:levelsettable,alg:levelsetcert}.

\begin{algorithm}[tb]
   \caption{Pre-Computing Robust Radius Table
   via Level Set Method for Spherical Distributions Againt $\ell_2$ Adversary}
   \label{alg:levelsettable}
\begin{algorithmic}
   \STATE {\bfseries Input:} Radii $r_1 < \ldots < r_N$
   \STATE Initialize $u=(1, 0, \ldots, 0) \in \R^d$.
   \FOR{$i=1$ {\bfseries to} $N$}
   \STATE Find $\kappa$ s.t.\ $q(\npset_\kappa-r_i u) = 1/2$ (via \cref{eqn:pbig} or \cref{thm:levelsetl2}) by binary search
   \STATE Compute $p_i \gets q(\npset_\kappa)$ via \cref{eqn:psmall} or \cref{thm:levelsetl2}
   \ENDFOR
   \STATE {\bfseries Output:} $p_1 > \cdots > p_N$
\end{algorithmic}
\end{algorithm}

\begin{algorithm}[tb]
    \caption{Certification with Table}
    \label{alg:levelsetcert}
\begin{algorithmic}
   \STATE {\bfseries Input:} Probability of correct class $\rho$
   \STATE {\bfseries Output:} Look up $r_i$ where $p_i \ge 1-\rho > p_{i+1}$
\end{algorithmic}
\end{algorithm}

\begin{figure*}
    \centering
    \includegraphics[width=\linewidth]{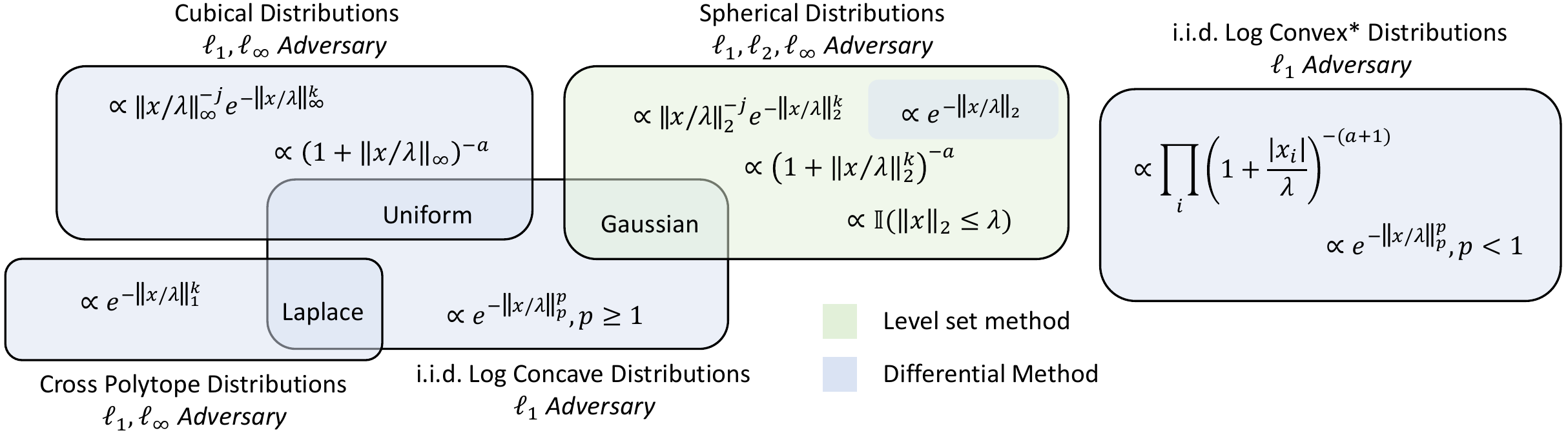}
    \caption{\textbf{Smoothing distributions for which we derive robustness guarantees in this paper.}
    Each box represents a family of distributions that obtain guarantees through similar proofs.
    Text beside each box indicates the name of the family and the $\ell_p$ adversaries against which we have guarantees.
    \emph{Log Convex*} means log convex on the positive and negative half lines, but not necessarily on the whole line.
    The color indicates the basic technique used, among the two proposed techniques in this paper.
    We explicitly list example densities in each box.
    For the robust radii formulas, see \cref{tab:bounds}.
    }
    \label{fig:distribs}
\end{figure*}

\subsection{The Differential Method}

To derive certification (robust radius \emph{lower bounds}) for more general distributions, we propose a \emph{differential method}, which can be thought of as a vast generalization of the proof in \citet{salman2019provably} of the Gaussian robust radius.
The idea is to compute the largest possible \emph{infinitesimal increase in $q$-measure} due to an \emph{infinitesimal adversarial perturbation}.
More precisely, given a norm $\| \cdot \|$, and a smoothing measure $q$, we define
\begin{align}
    \Phi(p) \defeq \sup_{\|v\|=1} \sup_{U \sbe \R^d: q(U) = p} \lim_{r \searrow 0} \f{q(U-rv) - p}r.
    \label{eqn:PhiMaintext}
\end{align}
Intuitively, one can then think of $1/\Phi(p)$ as the \emph{smallest} possible perturbation in $\|\cdot\|$ needed to effect a unit of infinitesimal increase in $p$.
Therefore,
\begin{thm}[\cref{thm:differentialMethodMasterTheorem}]\label{thm:differentialMethodMainText}
The robust radius in $\| \cdot \|$ is at least
\begin{align*}
    R \defeq \int_{1-\rho}^{1/2} \f 1 {\Phi(p)} \dd p,
\end{align*}
where $\rho$ is the probability that the base classifier predicts the right label under random perturbation by $q$.
\end{thm}
By exchanging differentiation and integration and applying a similar greedy reasoning as in the Neyman-Pearson lemma, $\Phi(p)$ can be derived for many distributions $q$ and integrated symbolically to obtain expressions for $R$.
We demonstrate the technique with a simple example below, but much of it can be automated; see \cref{thm:differentialMethodMasterTheorem}.

\begin{exmp}[see \cref{thm:ExpInfL1}]\label{exmp:explinf}
If the smoothing distribution is $q(x) \propto \exp(-\|x\|_\infty/\lambda)$, then the robust radius against an $\ell_1$ adversary is at least
\[R = 2d \lambda (\rho - 1/2),\]
when $\rho$ is the probability of the correct class as in \cref{thm:differentialMethodMainText}.
\end{exmp}
\begin{proof}[Proof Sketch]
By linearity in $\lambda$, we WLOG assume $\lambda = 1$.
By \cref{thm:differentialMethodMainText} and the monotonicity of $\Phi$, it suffices to show that $\Phi(p) = 1/2d$ for $p \ge 1/2d.$
For any fixed $U$ with $q(U) = p$,
\begin{align*}
    \lim_{r \searrow 0} \f{q(U-rv) - p}r
    &= \f{d}{dr} \left.\int_U q(x-rv) \dd x\right|_{r=0}
        \\
    &= \int_U \la v, \nabla q(x)\ra \dd x.
\end{align*}
Note $\nabla q(x) = e_x q(x)$, where $e_x = \sgn(x_{i^*}) e_{i^*}$, $e_i$ is the $i$th unit vector, and $i^* = \argmax_i |x_i|$.
Additionally, the above integral is linear in $v$, so the supremum over $\|v\|_1 = 1$ is achieved on one of the vertices of the $\ell_1$ ball.
So we may WLOG consider only $v = \pm e_i$; furthermore, due to symmetry of $\nabla q(x)$, we can just assume $v = e_1$:
\begin{align*}
    \Phi(p)
        &=
            \sup_{U} \lim_{r \searrow 0} \f{q(U-re_1) - p}r
        =
            \sup_{U}
            \int_U \la e_1, e_x\ra q(x) \dd x,
\end{align*}
where $U$ ranges over all $q(U) = p$.
Note $\la e_1, e_x \ra = 0$ if $i^* \ne 1$, and $\sgn(x_{i^*})$ otherwise.
Thus, to maximize $\lim_{r \searrow 0} \f{q(U-re_1) - p}r$ subject to the constraint that $q(U) = p$, we should put as much $q$-mass on those $x$ with large $\la e_1, e_x \ra$.
For $p \ge 1/2d$, we thus should occupy the entire region $\{x: \la e_1, e_x\ra = 1\}$, which has $q$-mass $1/2d$, and then assign the rest of the $q$-mass (amounting to $p - 1/2d$) to the region $\{x: \la e_1, e_x\ra = 0\}$, which has $q$-mass $1 - 1/d$.
This shows that
\[\Phi(p) = 1/2d, \quad\forall p \in [1/2d, 1-1/2d]\]
as desired.
\end{proof}

\subsection{Comparison of the Two Methods and Prior Works}
\label{sec:main-comp}

We summarize the distributions our methods cover in \cref{fig:distribs} and the bounds we derive in \cref{tab:bounds}.
We highlight a few broadly applicable robustness guarantees:
\begin{exmp}[\cref{thm:iidlogconcaveL1}]\label{exmp:iidlogconcaveL1}
Let $\phi: \R \to \R$ be convex and even, and let  $\mathrm{CDF}_\phi^{-1}$ be the inverse CDF of the 1D random variable with density $\propto \exp(-\phi(x))$.
If $q(x) \propto \prod_i e^{-\phi(x_i)}$, and $\rho$ is the probability of the correct class, then the robust radius in $\ell_1$ is
\begin{align*}
    R = \mathrm{CDF}_\phi^{-1}(\rho)
\end{align*}
and this radius is \emph{tight}.
This in particular recovers the Gaussian bound of \citet{cohen_certified_2019}, Laplace bound of \citet{teng_ell_1_2019}, and Uniform bound of \citet{lee_tight_2019} in the setting of $\ell_1$ adversary.
\end{exmp}

\begin{exmp}[\cref{sec:cubicalExponentialL1,sec:cubicalPowerL1}]
Facing an $\ell_1$ adversary, cubical distributions, like that in \cref{exmp:explinf}, typically enjoy, via the differential method, $\ell_1$ robust radii of the form
\begin{align*}
    R = c(\rho - 1/2)
\end{align*}
for some constant $c$ depending on the distribution.
\end{exmp}

In general, the level set method always gives certificate as tight as Neyman-Pearson, while the differential method is tight only for infinitesimal perturbations, but can be shown to be tight for certain families, like in \cref{exmp:iidlogconcaveL1} above.
On the other hand, the latter will often give efficiently evaluable symbolic expressions and apply to more general distributions, while the former in general will only yield a table of robust radii, and only for distributions whose level sets are sufficiently symmetric (such as a sphere or cube).

For distributions that are covered by both methods, we compare the bounds obtained and note that the differential and level set methods yield almost identical robustness certificates in high dimensions (e.g. number of pixels in CIFAR-10 or ImageNet images).
See \cref{sec:levelset_vs_differential}.

Many earlier works used differential privacy or $f$-divergence methods to compute robust radii of smoothed models \citep{lecuyer2018certified,li_certified_2019,dvijotham_framework_2019}.
In particular, \citet{dvijotham_framework_2019} proposed a general $f$-divergence framework that subsumed all such works.
Our robust radii are computed only from $\rho$; \citeauthor{dvijotham_framework_2019} called this the ``information-limited'' setting, and we shall compare with their robustness guarantees of this type.
While their algorithm in a certain limit becomes as good as Neyman-Pearson, in practice outside the Gaussian distribution, their robust radii are too loose.
This is evident by comparing our baseline Laplace results in \cref{tab:sota} with theirs, which are trained the same way.
Additionally, our differential method often yields symbolic expressions for robust radii, making the certification algorithm easy to implement, verify, and run.
Moreover, we derive robustness guarantees for many more (distributions, adversary) pairs (\cref{fig:distribs,tab:bounds}).
See \cref{sec:fdivCompare} for a more detailed comparison.

\section{Wulff Crystals}
\label{sec:wulffMain}
A priori, it is a daunting task to understand the relationship between the adversary $\mathcal B$ and the smoothing distribution $q$.
In this section, we shall begin our investigation by looking at uniform distributions, and then end with an optimality theorem for all ``reasonable'' distributions.

Let $q$ be the uniform distribution supported on a measurable set $S \sbe \R^d$.
WLOG, assume $S$ has (Lebesgue) volume 1, $\Vol(S) = 1$.
Then for any $v \in \R^d$ and any $p \in [0, 1]$,
\begin{align*}
    \growth_q(p, v) = \min\left(1, p + {\Vol((S + v) \setminus S)}\right).
\end{align*}
\begin{wrapfigure}{r}{0.18\textwidth}
\begin{center}
    \includegraphics[width=0.15\textwidth]{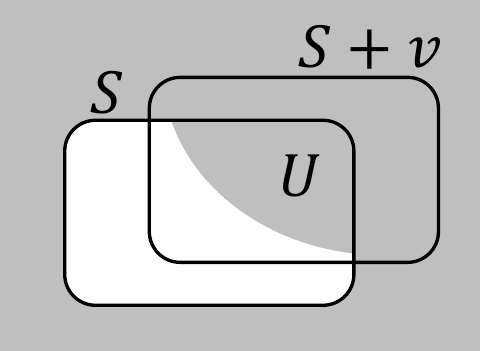}
\end{center}
\end{wrapfigure}
This can be seen easily by taking $U$ in \cref{eqn:growth} to be a subset of $(S + v) \cap S$ with volume $p$ (or any set of volume $p$ containing $(S + v) \cap S$ if $p \ge \Vol((S + v) \cap S)$) unioned with the complement of $S$.
For example, in the figure here, $U$ would be the gray region, if $U \cap S$ has volume $p$.

If $S$ is convex, and we take $v$ to be an infinitesimal translation, then the RHS above is infinitesimally larger than $p$, as follows:
\begin{align*}
    \lim_{r \to 0} \f{\growth_q(p, rv) - p}r &= \lim_{r \to 0} \f{\Vol((S + r v) \setminus S)}r\\
        &= \|v\|_2 \Vol(\Pi_{v} S)
        \numberthis \label{eqn:differentialUniform}
\end{align*}
\begin{wrapfigure}{r}{0.2\textwidth}
\vspace{-20pt}
\begin{center}
    \includegraphics[width=0.2\textwidth]{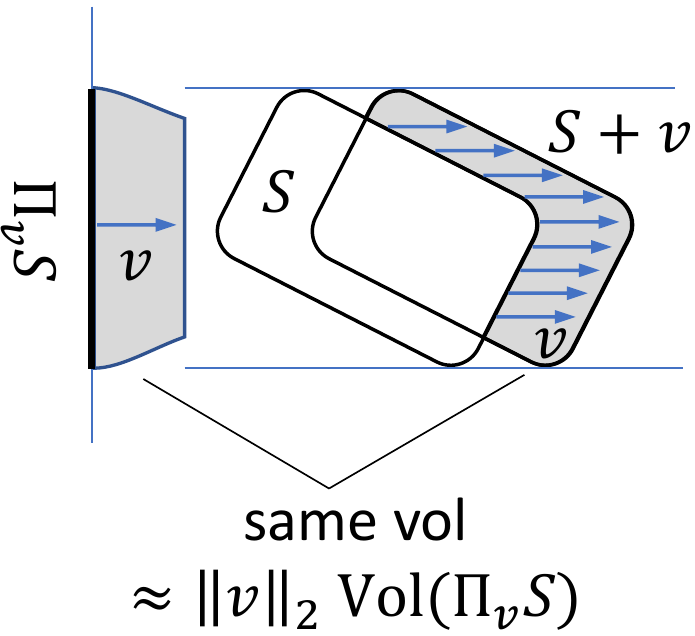}
\end{center}
\vspace{-10pt}
\end{wrapfigure}
where $\Pi_{v} S$ is the projection of $S$ along the direction $v/\|v\|_2$, and $\Vol(\Pi_{v} S)$ is its $(d-1)$-dimensional Lebesgue measure.
A similar formula holds when $S$ is not convex as well (\cref{eqn:leakage}).
In the context of randomized smoothing, this means that the classifier $g$ smoothed by $q$ is robust at $x$ under a perturbation
$\f{\f 1 2 - p}{\|v\|_2 \Vol(\Pi_{v} S)} v$
when $1/2 - p$ is small, and $p$ is the probability the base classifier $f$ \emph{mis}classifies $x + \delta$, $\delta \sim q$.
Thus, for $r$ small, we have
\begin{align*}
    \sup_{v \in r \mathcal B} \growth_q(p, v) \approx p + r \sup_{v \in \mathcal B} \|v\|_2 \Vol(\Pi_{v} S) = p + r\Phi(p),
\end{align*}
with $\Phi$ as in \cref{eqn:PhiMaintext}.
The smaller $\sup_{v \in \mathcal B} \|v\|_2 \Vol(\Pi_{v} S)$ is, the more robust the smoothed classifier $g$ is, for a fixed $p$.
A natural question, then, is: among convex sets of volume 1,
\begin{align*}
    \text{which set $S$ minimizes $\Phi = \sup_{v \in \mathcal B} \|v\|_2 \Vol(\Pi_{v} S)$?}
    \label{eqn:minquestion}
\end{align*}

If $\mathcal B$ is the $\ell_p$ ball, the reader might guess $S$ should either be the $\ell_p$ ball or the $\ell_r$ ball with $\f 1 r + \f 1 p = 1$.
It turns out the correct answer, at least in the case when $\mathcal B$ is a highly symmetric polytope (e.g.\ $\ell_1, \ell_2, \ell_\infty$ balls), is a kind of \emph{energy-minimizing} crystals studied in physics since 1901 (\citeauthor{wulffZurFrageGeschwindigkeit1901}).
\begin{defn}\label{defn:wulff}
The \emph{Wulff Crystal} (w.r.t.\ $\mathcal B$) is defined as the unit ball of the norm dual to $\|\cdot\|_*$, where $\|x\|_* = \EV_{y \sim \Vrt(\mathcal B)} |\la x, y \ra|$ and $y$ is sampled uniformly from the vertices of $\mathcal B$~\footnote{When $\mathcal B$ is the $\ell_2$ ball, $\Vrt(\mathcal B)$ is the entire boundary.}.
\end{defn}
In fact, Wulff Crystals solve the more general problem without convexity constraint.
\begin{thm}[\cref{thm:wulffOptimalFull}, informal]\label{thm:wulffOptimalMainText}
The Wulff Crystal w.r.t.\ $\mathcal B$ minimizes
\[\Phi = \sup_{v \in \mathcal B} \lim_{r \to 0} \inv r \Vol((S + rv) \setminus S)\]
among all measurable (not necessarily convex) sets $S$ of the same volume, when $\mathcal B$ is sufficiently symmetric (e.g. $\ell_1, \ell_2,\ell_\infty$ balls).
\end{thm}
When $\Vrt(\mathcal B)$ is a finite set, the Wulff Crystal has an elegant description as the \emph{zonotope} of $\Vrt(\mathcal B)$, i.e.\ the Minkowski sum of the vertices of $\mathcal B$ as vectors (\cref{prop:wulffzonotope}), from which we can derive the following examples.
\begin{exmp}
The Wulff Crystal w.r.t. $\ell_2$ ball is the $\ell_2$ ball itself.
The Wulff Crystal w.r.t. $\ell_1$ ball is a cube ($\ell_\infty$ ball).
The Wulff Crystal w.r.t. $\ell_\infty$ in 2 dimensions is a rhombus; in 3 dimensions, it is a rhombic dodecahedron; in higher dimension $d$, there is no simpler description of it other than the zonotope of the vectors $\{\pm1\}^d$.
\end{exmp}

\begin{center}
    \includegraphics[width=0.3\textwidth]{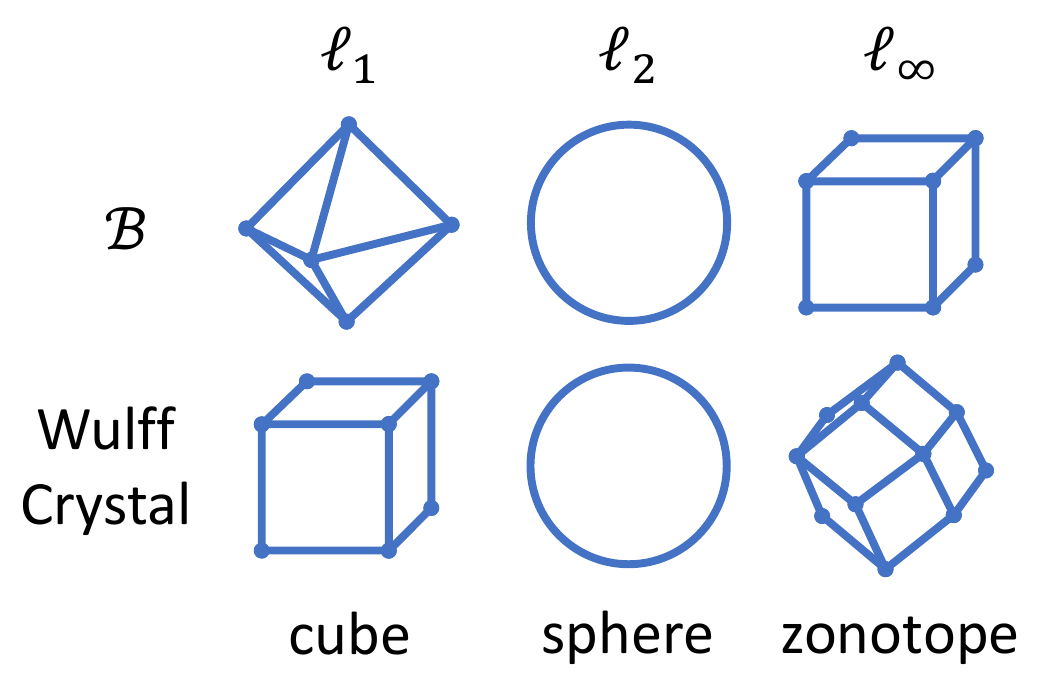}
\end{center}

In fact, distributions with Wulff Crystal level sets more generally maximizes the robust radii for ``hard'' inputs.
\begin{thm}[\cref{thm:wulffOptimalGeneral}, informal]
Let $\mathcal B$ be sufficiently symmetric.
Let $q_0$ be any distribution with a ``reasonable''\footnote{\emph{Reasonable} here roughly means Sobolev, i.e. has weak derivative that is integrable, and this can be further relaxed to \emph{bounded variations}; for details see \cref{thm:wulffOptimalGeneral} and \cref{thm:wulffOptimalGeneralBV}.} and even density function.
Among all ``reasonable'' and even density functions $q$ whose superlevel sets $\{x: q(x) \ge t\}$ have the same volumes as those of $q_0$, the quantity
\[\Phi(1/2) = \sup_{v \in \mathcal B} \sup_{q(U) = 1/2} \lim_{r \searrow 0} \f{q(U-rv) - 1/2}r\]
is minimized by the unique distribution $q^*$ whose superlevel sets are proportional to the Wulff Crystal w.r.t.\ $\mathcal B$.
\end{thm}
This theorem implies that distributions with Wulff Crystal level sets give the best robust radii for those \emph{hard} inputs $x$ that a smooth classifier classifies correctly but only barely, in that the probability of the correct class $\rho = 1/2 + \epsilon$ for some small $\epsilon$.
The constraint on the volumes of superlevel sets indirectly controls the variance of the distribution.
While this theorem says nothing about the robust radii for $\rho$ away from $1/2$, we find the Wulff Crystal distributions empirically to be highly effective, as we describe next in \cref{sec:experiments}.

\section{Experiments}
\label{sec:experiments}

We empirically study the performance of different smoothing distributions on image classification datasets, using the bounds derived via the level set or the differential method, and verify predictions made by the Wulff Crystal theory.
We follow the experimental procedure in \citet{cohen_certified_2019} and further works on randomized smoothing \citep{salman2019provably,li_certified_2019,Zhai2020MACER} using ImageNet \citep{deng_imagenet_2009} and CIFAR-10 \citep{krizhevsky_learning_2009}.

The certified accuracy at a radius $\epsilon$ is defined as the fraction of the test set for which the smoothed classifier $g$ correctly classifies and certifies robust at an $\ell_p$ radius of $\epsilon$.
All results were certified with $N=100,000$ samples and failure probability $\alpha=0.001$.
For each distribution $q$, we train models across a range of scale parameter $\lambda$ (see\ \cref{tab:bounds}), corresponding to the same range of noise variances $\sigma^2 \defeq \EV_{\delta \sim q}[\frac{1}{d}\|\delta\|^2_2]$ across different distributions.
Then we calculate for each model the certified accuracies across the range of considered $\epsilon$.
Finally, in our plots, we present, for each distribution, the upper envelopes of certified accuracies attained over the range of considered $\sigma$.
Further details of experimental procedures are described in \cref{sec:experiments_details}.

We focus on the effect of the noise distribution in this section and only train models with noise augmentation.
In \cref{sec:experiments_details} we also study (1) stability training, and (2) the use of more data through (a) pre-training on downsampled ImageNet \citep{hendrycks_using_2019} and (b) semi-supervised self-training with data from 80 Million Tiny Images \citep{carmon_unlabeled_2019}. As shown in Table \ref{tab:sota}, these techniques further improve upon our results in this section.

\subsection{\texorpdfstring{$\ell_1$ Adversary}{L1 Adversary}}

\begin{figure}[t]
    \centering
    \caption{\textbf{SOTA $\ell_1$ Certified Accuracies.} Certified $\ell_1$ top-1 accuracies for ImageNet (left) and CIFAR-10 (right). For each distribution $q$, we train models across a range of $\sigma^2 \defeq \EV_{\delta \sim q}[\frac{1}{d}\|\delta\|_2^2]$, and at each level of $\ell_1$ adversarial perturbation radius $\epsilon$ we report the best certified accuracy.}
    \begin{subfigure}[t]{0.49\linewidth}
        \caption{ImageNet}
        \includegraphics[width=\linewidth]{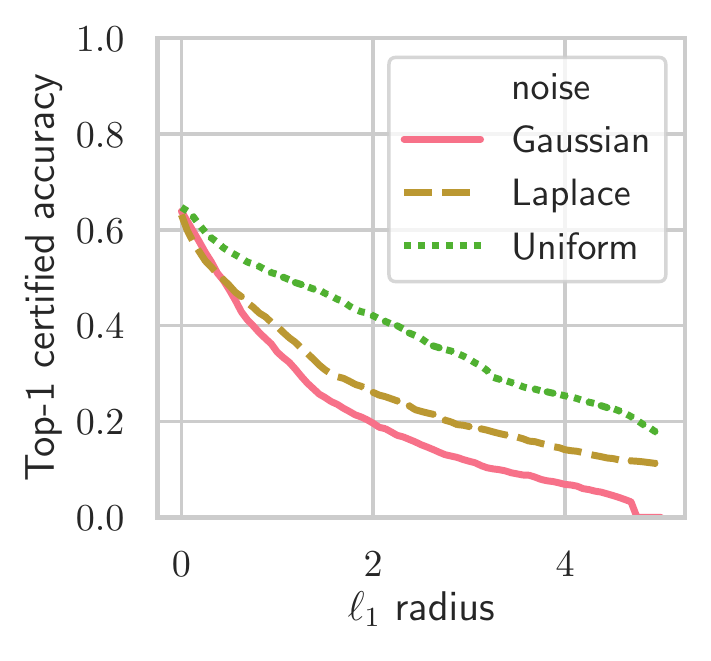}
    \end{subfigure}
    \begin{subfigure}[t]{0.49\linewidth}
        \caption{CIFAR-10}
        \includegraphics[width=\linewidth]{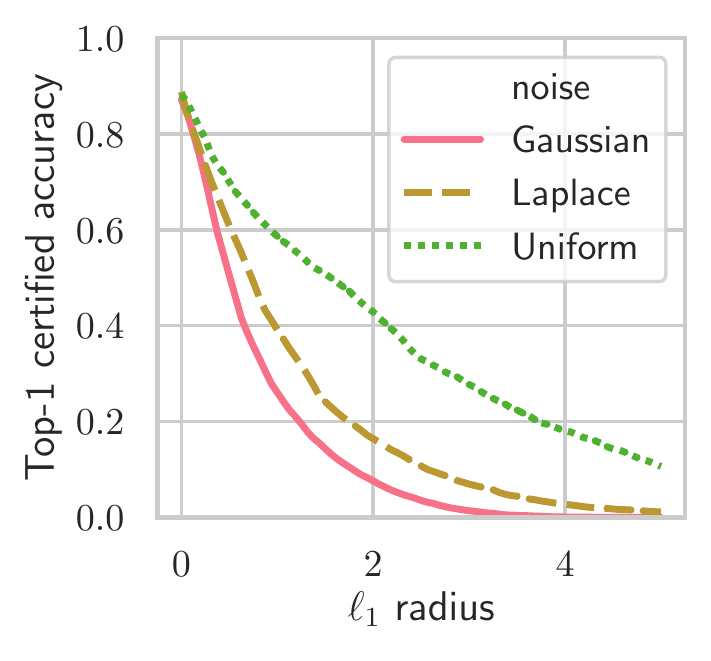}
    \end{subfigure}
    \label{fig:l1_certificates}
\end{figure}

As previously mentioned, the Wulff Crystal for the $\ell_1$ ball is a cube.
With this motivation, we explore certified accuracies attained by distributions with cubical level sets.
\begin{enumerate}[nosep]
    \item Uniform, $\propto \ind(\|x\|_\infty \le \lambda)$
    \item Exponential, $\propto \|x\|_\infty^{-j} e^{-\|x/\lambda\|^k_\infty}$
    \item Power law, $\propto (1 + \|x/\lambda\|_\infty)^{-a}$
\end{enumerate}
We compare to previous state-of-the-art approaches using the Gaussian and Laplace distributions, as well as new non-cubical distributions.
\begin{enumerate}[nosep]
    \setcounter{enumi}{3}
    \item Exponential $\ell_1$ (non-cubical), $\propto \|x\|_1^{-j} e^{-\|x/\lambda\|_1^k}$
    \item Pareto i.i.d. (non-cubical), $\propto \prod_i (1 + |x_i|/\lambda)^{-a}.$
\end{enumerate}
The relevant certified bounds are given in \cref{tab:bounds}.

We obtain state-of-the-art robust certificates for ImageNet and CIFAR-10, finding that the Uniform distribution performs best, significantly better than the Gaussian and Laplace distributions (\cref{tab:sota}, \cref{fig:l1_certificates}).
The other distributions with cubic level sets match but do not exceed the performance of Uniform distribution, after sweeping hyper-parameters.
This verifies that distributions with cubical level sets are significantly better for $\ell_1$ certified accuracy than those with spherical or cross-polytope level sets.
See results for other distributions in \cref{sec:additional_results}.

\subsection{\texorpdfstring{$\ell_2$ Adversary}{L2 Adversary}}

\begin{figure}[t]
    \centering
    \caption{CIFAR-10 certified accuracies for $\ell_2$ (left) and $\ell_\infty$ (right) adversaries. For each distribution $q$ we train models across a range of $\sigma^2 \defeq \mathbb{E}[\frac{1}{d}\|\delta\|_2^2]$, and at each level of $\ell_p$ adversarial perturbation radius $\epsilon$, we pick the model that maximizes certified accuracy.}
    \begin{subfigure}[t]{0.49\linewidth}
        \includegraphics[width=\linewidth]{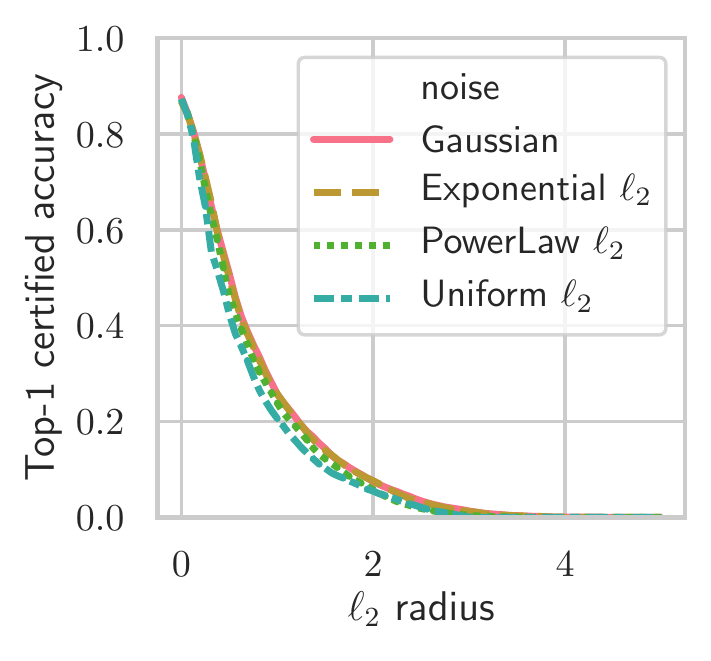}
    \end{subfigure}
    \begin{subfigure}[t]{0.49\linewidth}
        \includegraphics[width=\linewidth]{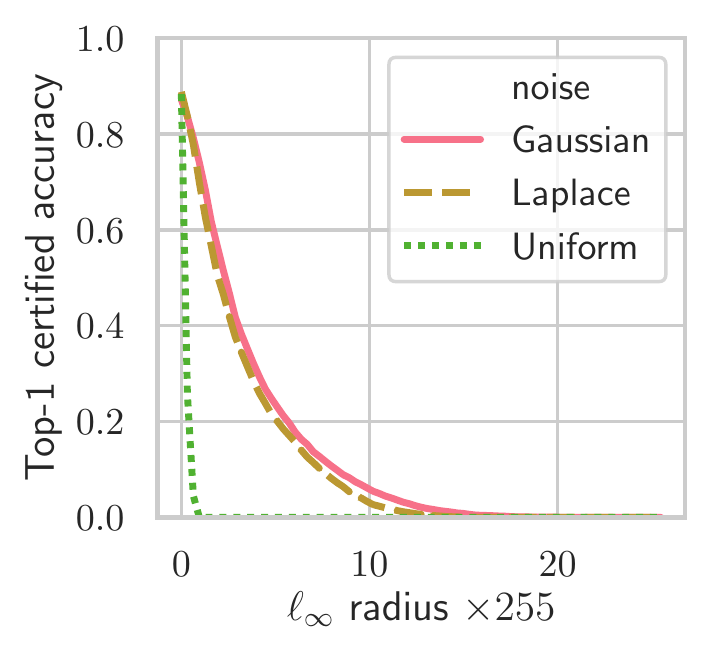}
    \end{subfigure}
    \label{fig:l2_certificates}
    \vspace{-2em}
\end{figure}

The Wulff Crystal w.r.t.\ the $\ell_2$ ball is a sphere, so we explore distributions with spherical level sets (\cref{tab:bounds}):
 \begin{enumerate}[nosep]
     \item Uniform, $\propto \ind(\|x\|_2 \le \lambda)$
     \item Exponential, $\propto\|x\|_2^{-j} e^{-\|x/\lambda\|_2^k}$
     \item Power law, $\propto (1 + \|x/\lambda\|_2)^{-a}$
 \end{enumerate}
We find these distributions perform similarly to, though do not surpass the Gaussian (\cref{fig:l2_certificates}, left).

\subsection{\texorpdfstring{$\ell_\infty$ Adversary}{Linf Adversary}}

The Wulff Crystal for the $\ell_\infty$ ball is the zonotope of vectors $\{\pm 1\}^d$, which is a highly complex polytope hard to sample from and related to many open problems in polytope theory \citep{ziegler_lectures_1995}.
However, we can note that it is approximated by a sphere with constant ratio (\cref{prop:linfWulffCrystalBasic}), and in high dimension $d$, the sphere gets closer and closer to minimizing $\Phi$ (\cref{thm:wulffOptimalMainText}), but the cube and the cross polytope do not (\cref{claim:linfSphereOptimal}).
Accordingly, we find that distributions with spherical level sets outperform those with cubical or cross polytope level sets in certifying $\ell_\infty$ robustness (\cref{fig:l2_certificates}, right).
In fact, in the next section we show that up to a dimension-independent factor, the Gaussian distribution is optimal for defending against $\ell_\infty$ adversary if we don't use a more powerful technique than Neyman-Pearson.

\section{No-Go Results for Randomized Smoothing}

Recall that given a smoothing distribution $q$, a point $x \in \R^d$, and a binary base classifier $U \sbe \R^d$ (identified wth its decision region), the smoothed classifier outputs $\sgn(\rho - 1/2)$ where $\rho = q(U - x)$ is the ``confidence'' of this prediction (\cref{eq:smoothed-hard}).
Randomized smoothing (via Neyman-Pearson) tells us that, if $\rho$ is large enough, then, no matter what $U$ is, a small perturbation of $x$ cannot decrease $\rho$ too much to change $\sgn(\rho - 1/2)$ (\cref{eqn:robustguarantee}).

If all we care about is robustness, then the optimal strategy would set $q$ to be an arbitrarily wide distribution (say, e.g. a wide Gaussian), and the resulting smoothed classifier is roughly constant.
Of course, such a smoothed classifier can never achieve good clean accuracy, so it is not useful.
Thus there is an inherent tension between 1) having to have large enough noise variance to be robust and 2) having to have small enough noise variance to avoid trivializing the smoothed classifier.
In this section, we seek to formalize this tradeoff.
As we'll show, even if we only assume a very weak condition on the accuracy, we can show strong upper bounds on the best robust radius for each $\ell_p$ norm.

In fact, our negative results below will hold for a more general class of smoothing schemes than those in our positive results in previous sections:
In what follows, a \emph{smoothing scheme} for $\R^d$ is any family of probability distributions $\cQ = \{ q_x\}_{x \in \R^d}$.
In practice, including in our paper, almost all smoothing schemes are \emph{translational}, that is, there is some base distribution $q$, and for every $x$, the smoothing distribution at $x$ is defined by $q_x (U) = q(U - x)$, for all base classifiers $U \subseteq \R^d$.
The above discussion then motivates the following
\begin{defn}
\label{def:useful}
Let $\| \cdot \|$ be a norm over $\R^d$, and let $\cQ = \{q_x\}_{x \in \R^d}$ be a smoothing scheme for $\R^d$.
We say that $\cQ$ satisfies \emph{$(\eps, s, \ell)$-useful smoothing} with respect to $\| \cdot \|$ if:
\begin{enumerate}[nosep]
    \item {\bf ($(\eps, s)$-Robustness)} For all $x, y$ with $\| x- y \| < \eps$, if $U \sbe \R^d$ is any set (\emph{read: base classifier}) satisfying $q_x (U) \geq 1/2 + s$, then $q_y (U) \geq 1/2$.
    \item {\bf ($\ell$-Accuracy)} For all $x, y$ with $\| x - y \| \geq 1$, there exists a set (\emph{read: base classifier}) $U \subseteq \R^d$ so that $|q_x (U) - q_y(U)| \ge \ell$.
\end{enumerate}
\end{defn}

\begin{center}
    \includegraphics[width=0.45\textwidth]{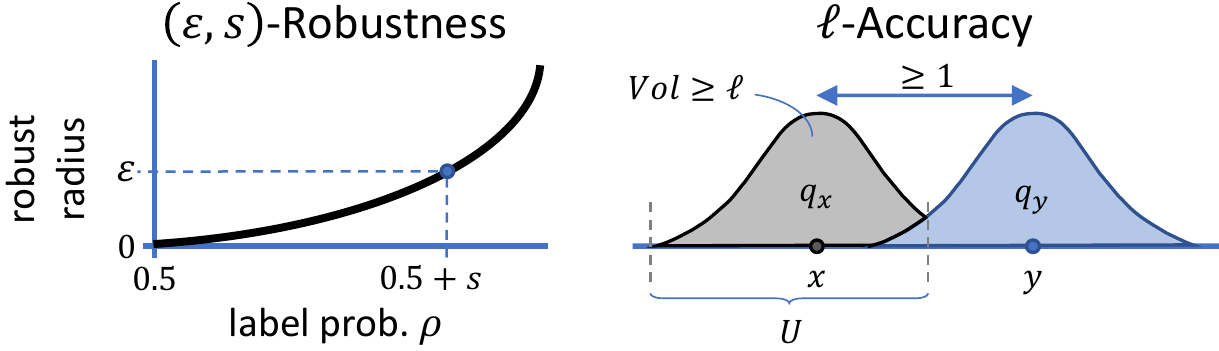}
\end{center}

We pause to interpret this definition.
Condition (1) indicates how large the certified radii can be for a classifier at any given point $x$, if the smoothed classifier assigns likelihood at least $1/2 + s$ to it; i.e. $(1/2 + s, \eps)$ is a point on the robust radii curve in the style of \cref{fig:radii_vs_rho}.
The goal of the smoothing scheme is to achieve the largest possible $\eps$, for every fixed $s$.
In particular, observe that for $\ell_2$, Gaussian smoothing achieves dimension-independent $\eps$, for every fixed choice of $s$ (\cref{thm:gaussianl2}).

Condition (2) says that the resulting smoothing should not ``collapse'' points: in particular, if $x, y$ are far in norm, then there should be some smoothed classifier that distinguishes them.
We argue that this is a very mild assumption.
For Condition (2) to be satisfied, the $U$ which distinguishes these two points can be completely arbitrary.
Thus, if it is violated for $\ell = o(1)$, the two distributions are indistinguishable by any statistical test in high dimension, implying the impossibility of classifying between $x$ and $y$ after smoothing.

We seek to show that, for constant $s$ and $l$, any $(\eps, s, \ell)$-useful smoothing scheme must have $\eps = o(1)$ for a number of norms, including $\ell_\infty$.
This would imply that any smoothing scheme that satisfying our weak notion of accuracy can only certify a vanishingly small radius, even when the confidence of the classifier is strictly bounded away from $1/2$ by a constant.

\vspace{1em}
\textbf{Randomized Smoothing as Metric Embedding}\ \
A smoothing scheme can be thought of as a mapping from a normed space supported on $\R^d$ to the space of distributions, e.g. each point $x$ is mapped to the distribution $q_x$.
We will show that \cref{def:useful} is roughly equivalent to a bi-Lipschitz condition on this mapping, where the target distributions are equipped with the total variation distance.
Then the existence of a \emph{useful} smoothing scheme is equivalent to whether $(\R^d, \|\cdot\|)$ can be embedded \emph{with low distortion} into the total variation space of distributions.
Classical mathematics has a definitive answer to this question in the form of a geometric invariant, called the \emph{cotype}.
\begin{defn}[see e.g.~\citet{wojtaszczyk1996banach}]
A normed space $T = (X, \| \cdot \|)$ is said to have \emph{cotype} $p$ for $2 \leq p \leq \infty$ if there exists $C$ such that for all finite sequences $x_1, \ldots, x_n \in X$, we have
\[\mathbb{E} \left[ \left\| \sum_{j = 1}^n \sigma_j x_j \right\| \right] \ge C^{-1} \left( \sum_{j = 1}^n \| x_i \|^p  \right)^{1/p},\]
where the $\sigma_j$ are independent Rademacher random variables.
The smallest such $C$ is denoted $C_p (T)$.
\end{defn}
When the underlying space of the normed space $T$ is $\R^d$, John's theorem~\citep{john1948extremum} implies that any norm has cotype $2$ with $C_2 (T) \leq O(d^{1/2})$.
Because $C_2$ lower bounds the distortion of a metric embedding of $T$, by the aforementioned connection with randomized smothing, $C_2$ also limits the usefulness of any smoothing scheme of $T$:

\begin{thm}
    \label{thm:main-lb}
    Let $T$ be any normed space over $\R^d$.
    There exist universal constants $c, K > 0$ so that any $(\eps, s, \ell)$-useful smoothing scheme for $T$ with $s/\ell < c$ must have
    \[\eps \le K \sqrt[4]{s / \ell} \cdot C_2(T)^{-1}.\]
\end{thm}
In particular, it is well-known that $C_2 ((\R^d, \| \cdot \|_p)) = \Omega (\max(1, d^{1/2 - 1/p}))$, for all $p \in [1, \infty]$.
Thus, as an immediately corollary, we get:
\begin{cor}\label{cor:lp-lb}
For the value of $c$ in \cref{thm:main-lb} and for $p \in [1, \infty]$, any $(\eps, s, \ell)$-useful smoothing scheme for $(\R^d, \| \cdot \|_p)$ with $s/\ell < c$ must have
\[\eps \le O(\min(1, d^{-1/2 + 1/p})).\]
\end{cor}
It is easy to see that, up to constants, the Gaussian smoothing scheme achieves equality, and thus is optimal (in terms of dimension dependence), for all $p \in [1, \infty]$.

\paragraph{Discussion}
After \citet{cohen_certified_2019} showed the surprising scalability of Gaussian randomized smoothing to high-dimensional $\ell_2$-robust classification problems, many anticipated that this can be extended to $\ell_\infty$ as well.
One might also hope that, even though it seems like we cannot certify $\ell_2$ radius that grows with input dimension, we could do so for $\ell_1$.
But \cref{thm:main-lb,cor:lp-lb} present a strong barrier to such hopes.
In words:
\begin{center}
\it
\parbox{0.8\linewidth}{
Without using more than the information of the probability $\rho$ of correctly classifying an input under random noise, no smoothing techniques can certify nontrivial robust accuracy at $\ell_\infty$ radius $\Omega (d^{-1/2})$, or at $\ell_2$ or $\ell_1$ radius $\Omega(1)$.
}
\end{center}
Indeed, the $\ell_1$-radii we can obtain nontrivial certified accuracy at are on the same order between CIFAR10 and Imagenet (\cref{fig:l1_certificates}).

However, there are some ways to bypass this barrier.
For one, more information about the base classifier can be collected to produce better robustness certificates.
In fact, \citet{dvijotham_framework_2019} proposed a ``full-information'' algorithm that computes many moments of the base classifier in a convex optimization procedure to improve certified radius, but it is 100 times slower than the ``information-limited'' algorithms we discuss here that use only $\rho$.
It would be interesting to see whether this technique can be scaled up, and whether other methods can leverage more information\!~\footnote{
\citet{lee_tight_2019} also used the decision tree structure of their base classifier to improve $\ell_0$ certification, but the $\ell_0$-adversary does not fall within our framework.}.

Another route is to directly look for better randomized smoothing schemes for multi-class classification.
We formulated our no-go result in the setting of binary classification, and it is not clear whether a similarly strong barrier applies for multi-class classification.
However, current techniques for certification only look at the two most likely classes, and separately reason about how much each one can change by perturbing the input.
Our no-go result then straightforwardly applies to this case as well.

\section{Conclusion}

In this work, we have showed how far we can push randomized smoothing with different smoothing distributions against different $\ell_p$ adversaries, by presenting two new techniques for deriving robustness guarantees, by elucidating the geometry connecting the noise and the norm, and by empirically achieving state-of-the-art in $\ell_1$ provable defense.
At the same time, we have showed the limit current techniques face against $\ell_p$ adversaries when $p > 2$, especially $\ell_\infty$.
Our results point out ways to bypass this barrier, by either leveraging more information about the base classifier or by taking advantage of the multi-class problem structure better.
We wish to investigate both directions in the future.

More broadly, randomized smoothing is a method for inducing stability in a mechanism while maintaining utility --- precisely the bread and butter of differential privacy.
We suspect our methods for deriving robustness guarantees here and for optimizing the noise distribution can be useful in that setting as well, where Laplace and Gaussian noise dominate the discussion.
Whereas previous work \citet{lecuyer2018certified} has applied differential privacy tools to randomized smoothing, we hope to go the other way around in the future.

\newpage

\subsection*{Acknowledgements}
We thank Huan Zhang for brainstorming of ideas and performing a few experiments that unfortunately did not work out.
We also thank Aleksandar Nikolov, Sebastien Bubeck, Aleksander Madry, Zico Kolter, Nicholas Carlini, Judy Shen, Pengchuan Zhang, and Maksim Andriushchenko for discussions and feedback.

\bibliography{bib/wulff_crystal,bib/wasserstein,bib/robustness}
\bibliographystyle{icml2020}
\newpage
\appendix

\counterwithin{figure}{section}
\counterwithin{table}{section}

\newpage
\onecolumn
\section{Table of Robust Radii}

\begin{table*}[h]
    \centering
    \begin{tabular}{llllll} \toprule
         Distribution & Density & Adv. & Certified radius & Reference\\ \midrule
         iid Log Concave & $\propto e^{-\sum_i\phi(x_i)}$ & $\ell_1$ & $\mathrm{CDF}_\phi^{-1}(\rho)$ & \cref{thm:iidlogconcaveL1}\\
         iid Log Convex* & $\propto e^{-\sum_i \phi(|x_i|)}$& $\ell_1$ &
            $\int^{\infty}_{\varphi^{-1}(1-\rho)}
            \f{1}{e^{\phi(c) - \phi(0)} - 1}
            \dd c$\hfill for $\varphi$, see $\rightarrow$
            &\cref{thm:iidlogconvexL1}\\
         Exp. $\ell_p, p \ge 1$ & $\propto e^{-\|\f x \lambda\|_p^p}$ & $\ell_1$ & $\lambda\sqrt[p]{\GammaCDF^{-1}(2\rho - 1; 1/p)}$ & \cref{cor:lpExponentialL1Adv}\\
         Exp. $\ell_p, p < 1$ & $\propto e^{-\|\f x \lambda\|_p^p}$ & $\ell_1$ & $\lambda \int^{\infty}_{\varphi^{-1}(1-\rho)}\f{\dd c}{e^{c^p} - 1}$\hfill for $\varphi$, see $\rightarrow$ & \cref{cor:lpExponentialLogconvexL1Adv} \\
         Gaussian  & $\propto e^{-\|\f x \lambda\|^2_2 / 2}$ & $\ell_2$ &  $\lambda\GaussianCDF^{-1}(\rho;0,1)$ & \cref{thm:gaussianl2}$^{\text{C}}$\\
         && $\ell_1$ & $\lambda\GaussianCDF^{-1}(\rho;0,1)$ & Symmetry \\
         & & $\ell_\infty$ & $\lambda\GaussianCDF^{-1}(\rho;0,1) / \sqrt{d}$ & Symmetry\\
         Laplace  & $\propto e^{-\|\f x \lambda\|_1}$ & $\ell_1$ &  $-\lambda\log (2(1-\rho))$ & \cref{thm:laplaceL1}$^{\text{T}}$\\
         && $\ell_\infty$ &  $\approx\lambda\mathrm{GaussianCDF}^{-1}(\rho;0,1)/\sqrt{d}$\hfill see $\rightarrow$
         & \cref{thm:laplacelinf}\\
         Exp. $\ell_\infty$ & $\propto e^{-\|\f x \lambda\|_\infty}$ & $\ell_1$ & $2d \lambda(\rho-\f 1 2)$ & \cref{thm:ExpInfL1}\\
         && $\ell_\infty$ & $\lambda \log \f {1}{2(1-\rho)}$ & \cref{thm:ExpInfLinfty}\\
         Exp. $\ell_2$ & $\propto e^{-\|\f x \lambda\|_2}$ & $\ell_2$ & $\lambda (d-1)\arctanh(
             1 - 2\beta^{-1}\lp
                 1 - \rho
                 ;
                 \f{d-1}2, \f{d-1}2
             \rp
        )$ & \cref{eqn:exp2l2}\\
        && $\ell_1$ & $\lambda(d-1)\arctanh(
            1 - 2\beta^{-1}\lp
                1 - \rho
                ;
                \f{d-1}2, \f{d-1}2
            \rp
        )$ & Symmetry\\
        && $\ell_\infty$ & $\f{\lambda(d-1)}{\sqrt d}\arctanh(
            1 - 2\beta^{-1}\lp
                1 - \rho
                ;
                \f{d-1}2, \f{d-1}2
            \rp
        )$ & Symmetry\\
         Uniform $\ell_\infty$  & $\propto \ind(\|x\|_\infty \le \lambda) $ & $\ell_1$ & $2\lambda(\rho-\frac{1}{2})$ & \cref{thm:UniformL1}$^{\text{L}}$\\
         && $\ell_\infty$ & $2\lambda(1-\sqrt[d]{\frac{3}{2}-\rho})$ & \cref{thm:UniformLinfty}$^{\text{L}}$\\
         Uniform $\ell_2$ & $\propto \ind(\|x\|_2 \le \lambda)$ & $\ell_2$ & $\lambda\left(2-4\beta^{-1}\left(\frac{3}{4}-\frac{\rho}{2};\frac{d+1}{2},\frac{d+1}{2}\right)\right)$ & \cref{thm:BallUniformAdvL2}\\
         && $\ell_1$ & $\lambda\left(2-4\beta^{-1}\left(\frac{3}{4}-\frac{\rho}{2};\frac{d+1}{2},\frac{d+1}{2}\right)\right)$ & Symmetry\\
         && $\ell_\infty$ & $\f\lambda{\sqrt d}\left(2-4\beta^{-1}\left(\frac{3}{4}-\frac{\rho}{2};\frac{d+1}{2},\frac{d+1}{2}\right)\right)$ & Symmetry\\
         General Exp. $\ell_\infty$ & $\propto \|\f x \lambda\|_\infty^{-j}e^{-\|\f x \lambda\|_\infty^k}$ & $\ell_1$ & $\frac{2d\lambda}{d-1}\Gamma\left(\frac{d-j}{k}\right)/\Gamma\left(\frac{d-1-j}{k}\right)\left(\rho-\frac{1}{2}\right)$ & \cref{thm:ExpInfkjL1}\\
         && $\ell_\infty$ & $\lambda \int_{1-\rho}^{1/2} \f 1 {\Phi(p)} \dd p$\hfill for $\Phi$, see $\rightarrow$ & \cref{thm:ExpInfjkLinfty} \\
         General Exp. $\ell_2$ & $\propto \|\f x \lambda\|_2^{-j} e^{-\|\f x \lambda\|_2^k}$ & $\ell_2$ & \emph{level set method} & \cref{sec:l2exponential} \\
         General Exp. $\ell_1$ & $\propto e^{-\|\f x \lambda\|_1^k}$ & $\ell_1$ & $\lambda\int_{1-\rho}^{1/2} \f R {\Psi(p)} \dd p$\hfill for $R, \Psi$, see $\rightarrow$ & \cref{thm:Exp1kL1} \\
         && $\ell_\infty$ & $\lambda \int_{1-\rho}^{1/2} \f 1 {\Phi(p)} \dd p$\hfill for $\Phi$, see $\rightarrow$ & \cref{thm:exp1kLinf} \\
         Power Law $\ell_\infty$ & $\propto \f{1}{(1+\|\f x \lambda\|_\infty)^a}$ & $\ell_1$ & $\frac{2d\lambda}{a-d}\left(\rho-\frac{1}{2}\right)$ & \cref{thm:ExpInfpolyL1}\\
         && $\ell_\infty$ & $\f{2\lambda}{a-d} \int_{1-\rho}^{1/2} \f {\dd p} {\Upsilon(\Upsilon^{-1}(2p; d, a-d); d, a+1-d)}$ & \cref{thm:ExpInfpolyLinf} \\
         Power Law $\ell_2$ & $\propto \f 1 {(1+\|\f x \lambda\|_2^k)^{a}}$ & $\ell_2$ & \emph{level set method} &  \cref{sec:l2powerlaw}\\
         Pareto (i.i.d.) & $\propto \f{1}{\prod_i \left(1+\frac{|x_i|}{\lambda}\right)^{a+1}}$  & $\ell_1$ & $\lambda \frac{2\rho-1}{a}\, _2F_1\left(1, \f a {a+1}, \frac{2a+1}{a+1};(2\rho-1)^{1+1/a}\right)$ & \cref{thm:paretoL1}\\
         \bottomrule
    \end{tabular}
    \caption{Distributions we derive robust radii for and assess experimentally.
    Here $\rho$ is the probability the base classifier answers correctly when input is perturbed by the smoothing noise,
    $d$ is the dimensionality of the noise, $\mathrm{CDF}_\phi^{-1}$ is the inverse CDF of the 1D random variable with density $\propto e^{-\phi(x)},$ $\beta^{-1}(\cdot; a, b)$ is the inverse Beta CDF function with shape parameters $a$ and $b$,
    $\Upsilon(\cdot; a, b)$ (resp. $\Upsilon^{-1}(\cdot; a, b)$) is the Beta Prime (resp. inverse) CDF function with shape parameters $a$ and $b$,
    $\Gamma$ is the Gamma function, and $_2F_1$ is the Gaussian hypergeometric function.
    Under \emph{Reference}, superscript $^\text{C}$ refers to \citet{cohen_certified_2019}, superscript $^\text{L}$ refers to \citet{lee_tight_2019}, and superscript $^\text{T}$ refers to \citet{teng_ell_1_2019}.
    }

    \label{tab:bounds}
\end{table*}

\begin{table*}[h]
    \centering
    \begin{tabular}{llllll} \toprule
         Distribution $q$ & Density & $\lambda/\sigma = \lambda / \sqrt{\EV_{\delta \sim q} \f 1 d \|\delta\|^2}$\\ \midrule
         Exp. $\ell_p$ & $\propto e^{-\|\f x \lambda\|_p^p}$ & $\sqrt{\f {\Gamma(1/p)} {\Gamma(3/p)}}$\\
         Gaussian  & $\propto e^{-\|\f x \lambda\|^2_2 / 2}$ & $1$\\
         Laplace  & $\propto e^{-\|\f x \lambda\|_1}$ & $1/\sqrt 2$ \\
         Exp. $\ell_\infty$ & $\propto e^{-\|\f x \lambda\|_\infty}$ & $\sqrt{\f 1 {(d+1) ((d-1)/3 + 1)}}$\\
         Exp. $\ell_2$ & $\propto e^{-\|\f x \lambda\|_2}$ & $\sqrt{\f 1 {d+1}}$\\
         Uniform $\ell_\infty$  & $\propto \ind(\|x\|_\infty \le \lambda) $ & $\sqrt 3$\\
         Uniform $\ell_2$ & $\propto \ind(\|x\|_2 \le \lambda)$ & $\sqrt{\f 1 {d+2}}$\\
         General Exp. $\ell_\infty$ & $\propto \|\f x \lambda\|_\infty^{-j}e^{-\|\f x \lambda\|_\infty^k}$ &
            $\sqrt{
                \f{
                    d\Gamma(\f{d-j}k)
                }{
                    ((d-1)/3+1)\Gamma(\f{d+2-j}k)
                }}$\\
         General Exp. $\ell_2$ & $\propto \|\f x \lambda\|_2^{-j} e^{-\|\f x \lambda\|_2^k}$ & $\sqrt{\f{d\Gamma(\f{d-j}k)}{\Gamma(\f{d+2-j}k)}}$\\
         General Exp. $\ell_1$ & $\propto e^{-\|\f x \lambda\|_1^k}$ & $\sqrt{\f{d(d+1)\Gamma(\f{d}k)}{2\Gamma(\f{d+2}k)}}$\\
         Power Law $\ell_\infty$ & $\propto \f{1}{(1+\|\f x \lambda\|_\infty)^a}$ & $\sqrt{\f{(a-d-1)(a-d-2)}{(d+1)((d-1)/3 + 1)}}$\\
         Power Law $\ell_2$ & $\propto \f 1 {(1+\|\f x \lambda\|_2^k)^{a}}$ &
         $\sqrt{\frac{\Gamma \left(\frac{d+2}{k}\right) \Gamma \left(a-\frac{d+2}{k}\right)}{d\Gamma \left(\frac{d}{k}\right) \Gamma \left(a-\frac{d}{k}\right)}}$\\
         Pareto (i.i.d.) & $\propto \f{1}{\prod_i \left(1+\frac{|x_i|}{\lambda}\right)^{a+1}}$  & $\sqrt{\frac{1}{2}(a-1)(a-2)}$\\
         \bottomrule
    \end{tabular}
    \caption{Relation between the scale parameter $\lambda$ and the variance $\sigma^2 = \EV_{\delta \sim q} \f 1 d \|\delta\|^2$ of each distribution.
    This table is used to choose the correct $\lambda$ to match $\sigma$ across different distributions.
    All quantities can be computed easily using \cref{lemma:GammaExpect,lemma:expNormSampling}.
    They are also tested to be numerically correct in the test suite of our code base \repo{}.
    }

    \label{tab:distinfo}
\end{table*}

\begin{figure*}[t]
    \centering
    \includegraphics[width=\linewidth]{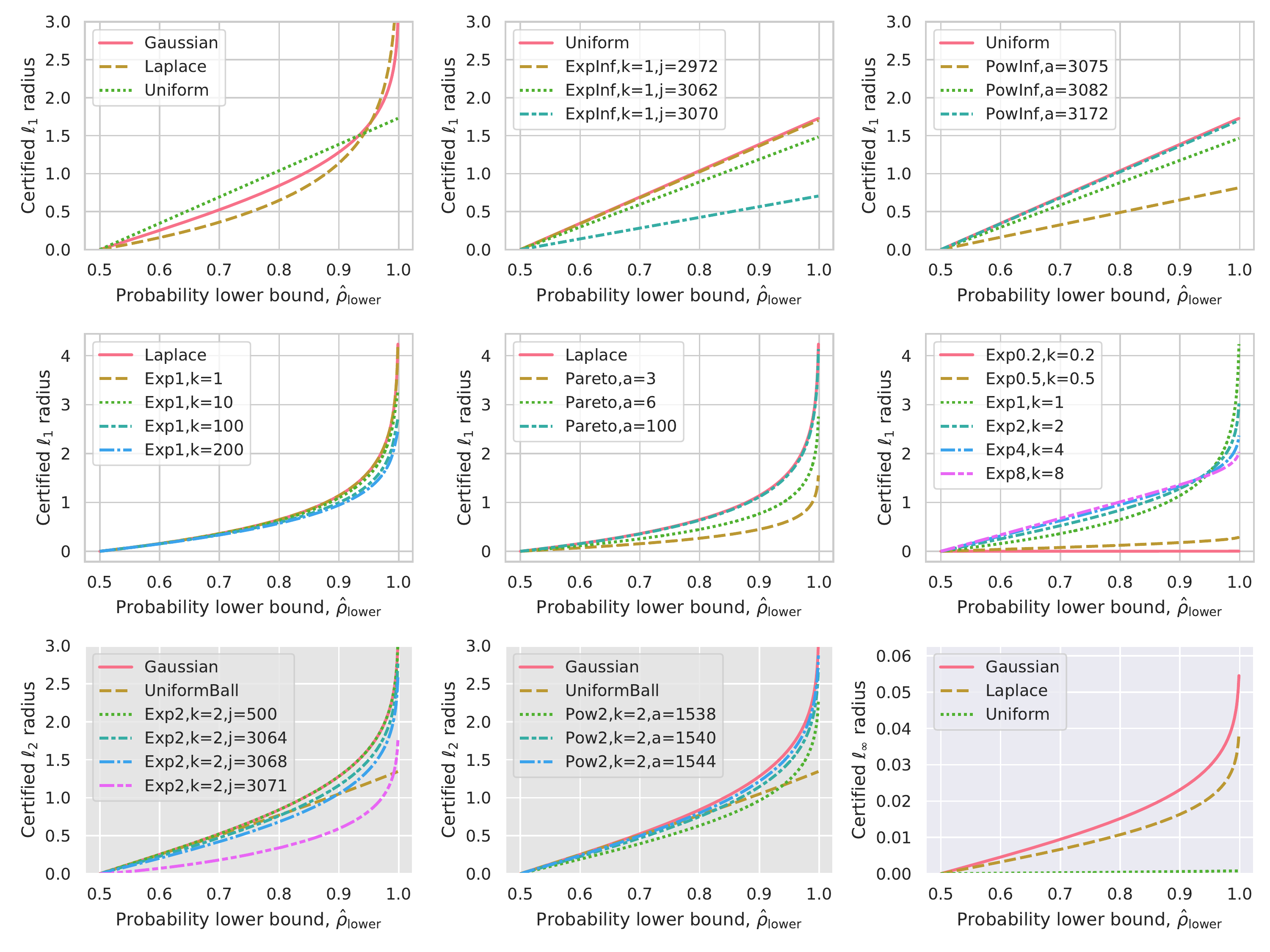}
    \caption{Certified robust radii of a selection of the distributions in \cref{tab:bounds}, with input dimension $d = 3072$ and normalized variance $\sigma^2 = 1$, across a range of $\hat \rho_{\mathrm{lower}}$, the high probability lower bound of $\rho$ (the probability that the base classifier answers correctly when perturbed by smoothing noise).
    The first two rows are for the $\ell_1$ adversary while the last row is for the $\ell_2$ and $\ell_\infty$ adversaries.}
    \label{fig:radii_vs_rho}
\end{figure*}

\twocolumn

\section{Analysis of Robust Radii}

Here we make a few observations about the robust radii of the distributions studied in this paper.

\paragraph{Distributions that concentrate around the same level set have similar robust radii.}
This is evident, for example, in the top middle subplot of \cref{fig:radii_vs_rho}, where the distribution $\propto \|x\|_\infty^{-j} e^{-\|x/\lambda\|_\infty}$ with ``small'' $j=2972$ has robust radii almost the same as those of $\propto e^{-\|x/\lambda\|_\infty}$ (here $\lambda$ for each distribution is the one that sets $\sigma = 1$), and both distributions concentrate around the sphere of radius $\sqrt d$.
We can also see this in the top right ($\ell_\infty$-based power law), middle left ($\ell_1$-based exponential law), bottom left ($\ell_2$-based exponential law), and bottom middle ($\ell_2$-basd power law) subplots of \cref{fig:radii_vs_rho}.
This is also reflected in the center subplot of \cref{fig:radii_vs_rho}, which shows that Pareto distribution with large power gets the same robust radii as Laplace.
The reason is that such a high-power Pareto distribution concentrates around an $\ell_1$-ball in high dimension.

We can understand this phenomenon intuitively via the level set method:
Two distributions concentrating around the same level set will have \cref{eqn:psmall} and \cref{eqn:pbig} evaluate to similar quantities.

\paragraph{Among distributions concentrated around some level set, the shape of the level set is the biggest determinant of performance.}
This is evident in the top left, middle right, and bottom right subplots of \cref{fig:radii_vs_rho}.

\paragraph{Distributions that don't concentrate on a level set do worse than those that do.}
This is evident, for example, in the top middle subplot of \cref{fig:radii_vs_rho}, where the distribution $\propto \|x\|_\infty^{-j} e^{-\|x/\lambda\|_\infty}$ with ``large'' $j=3070$ has robust radii much smaller than those of $\propto e^{-\|x/\lambda\|_\infty}$.
Same thing can be observed in the top right, center, middle right, bottom left, bottom middle subplots of \cref{fig:radii_vs_rho}.

\paragraph{Introducing a singularity at the origin only reduces robust radii.}
The top middle and bottom left subplots of \cref{fig:radii_vs_rho} illustrate this point.
Thus, we see no evidence for the ``soap-bubble hypothesis'' put forth by \citet{zhang*2020filling}; see also \cref{fig:l2_additional}.

\paragraph{Introducing a fatter tail yields larger robust radii for large $\hat\rho_{\mathrm{lower}}$, as long as the level set concentration is not affected.}
The middle left and bottom left subplots of \cref{fig:radii_vs_rho} demonstrate this behavior.
The robust radii formulas for $\exp(-\|x\|_\infty)$ (\cref{thm:ExpInfL1}) and for the uniform distribution (\cref{thm:UniformL1}) also reflect this, as the former has robust radius $\to \infty$ as $\rho \to 1$, but the latter has a finite maximal robust radius.

\subsection{Level Set Method vs Differential Method}
\label{sec:levelset_vs_differential}

Here we concretely compare the robust radii obtained from the level set method and those obtained from the differential method for the distribution $\exp(-\|x\|_2 \sqrt d)$, for various input dimensions $d$ (we scale the distributions this way so each coordinate has size $\Theta(1)$).
For convenience, here's the robust radius from the differential method (\cref{eqn:exp2l2}):
\begin{align*}
    R
        &=
            \f{d-1}{\sqrt d} \arctanh\bigg(\\
        &\phantomeq\quad
            1 - 2\BetaCDF^{-1}\lp
                1 - \rho
                ;
                \f{d-1}2, \f{d-1}2
            \rp
        \bigg).
\end{align*}
The robust radii from level set method are computed as in \cref{thm:levelsetl2}, and they are tight.
As we see in \cref{fig:levelset_v_diff}, the differential method is \emph{very slightly} loose in low dimensions $d = 2$ and 4, but in high dimensions $d = 32$ or $1024$, the robust radii obtained from both methods are indistinguishable.

\begin{figure}
    \centering
    \includegraphics[width=\linewidth]{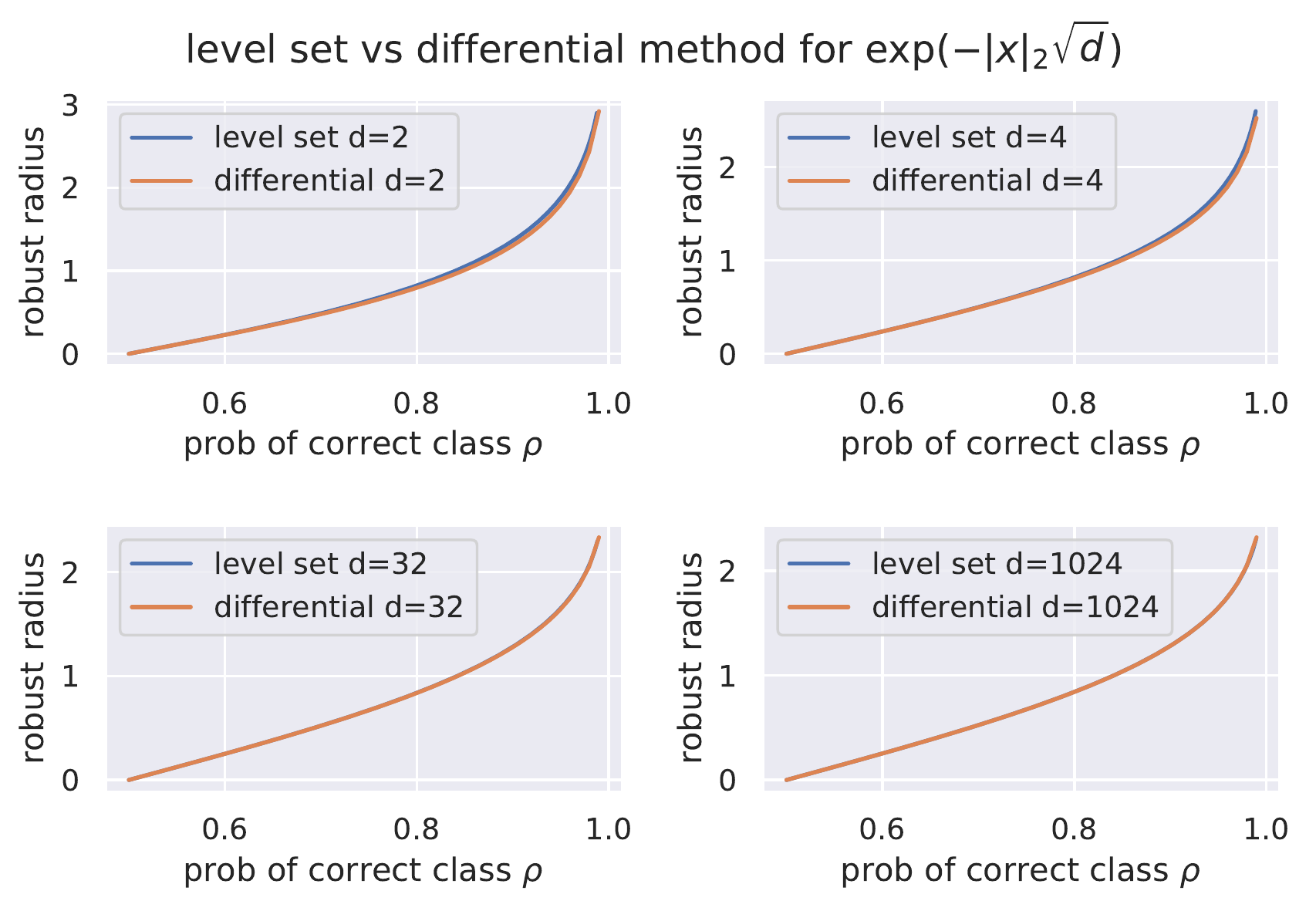}
    \caption{\textbf{Differential Method is Tight} for practical purposes in high dimension $d$.}
    \label{fig:levelset_v_diff}
\end{figure}

\subsection{In-Depth Comparison with \texorpdfstring{\citet{dvijotham_framework_2019}}{Dvijotham et al. (2019)}}
\label{sec:fdivCompare}

The information-limited certification algorithm in \citet{dvijotham_framework_2019} relaxes the optimization problem
\begin{align*}
    \sup_{v \in \mathcal B} \growth_q(p, v)
    &=
        \sup_{q' \in q_{\mathcal B}}
        \sup_{U: q(U) = p} q'(U)
        \\
    &\le
        \sup_{q' \in \mathcal D_F(q)}
        \sup_{U: q(U) = p} q'(U)
        \numberthis\label{eqn:fdivRelax}
\end{align*}
enlarging the set of shifted distributions $q_{\mathcal B} \defeq \{q(\cdot - v): v \in \mathcal B\}$ to the set of distributions close to $q$ in several $f$-divergences $\mathcal D_F \defeq \{q': \mathcal D_f(q'\| q) \le \epsilon_f, \forall f \in F\}$, for a set of functions $F$.
\citet{dvijotham_framework_2019} showed that when $F$ consists of all Hockey-Stick divergences, \cref{eqn:fdivRelax} becomes tight,, but in practice this is not feasible.
In fact, \citet{dvijotham_framework_2019} admits themselves that
\begin{quote}
\it
It turns out that the Renyi and KL divergences are computationally attractive for a broad class of smoothing measures, while the Hockey-Stick divergences are theoretically attractive as they lead to optimal certificates in the information-limited setting. However, Hockey-Stick divergences are harder to estimate in general, so we only use them for Gaussian smoothing measures.
\end{quote}
Concretely, the looseness of their relaxation can be observed when comparing our baseline Laplace results (\cref{tab:sota}) with theirs.

Operationally, their algorithm proceeds as follows
\begin{enumerate}
    \item For each distribution $q$ and function $f$, manually find the $f$-divergence ``ball'' that contains $\{q(\cdot - v): v \in \mathcal B\}$, i.e. compute $\{\epsilon_f\}_{f \in F}$ such that
    \begin{align*}
    \{q(\cdot - v): v \in \mathcal B\} \sbe \{q' : \mathcal D_f(q'\| q) \le \epsilon_f, \forall f \in F\}.
    \end{align*}
    \item Then they relax the original certification problem to the certification of all $q'$ close to $q$ in $f$-divergence, i.e. they solve \cref{eqn:fdivRelax} for the $\epsilon_f$ found in the previous step.
\end{enumerate}
The 2nd step is a straightforward low-dimensional convex optimization problem, but the trickiness of the 1st step limits the distributions they can apply their technique to.
For example, they only know how to do step 1 for $\exp(-\|x\|_p)$ against $\ell_p$ adversary, but not against $\ell_r$ for $r \ne p$; in contrast, our differential method computes robust radii for Laplace against $\ell_\infty$ perturbation, for example.

\section{Additional Experimental Results}
\label{sec:additional_results}

\begin{figure}[t!]
    \centering
    \begin{subfigure}[t]{0.91\linewidth}
        \caption{CIFAR-10}
        \includegraphics[width=\linewidth]{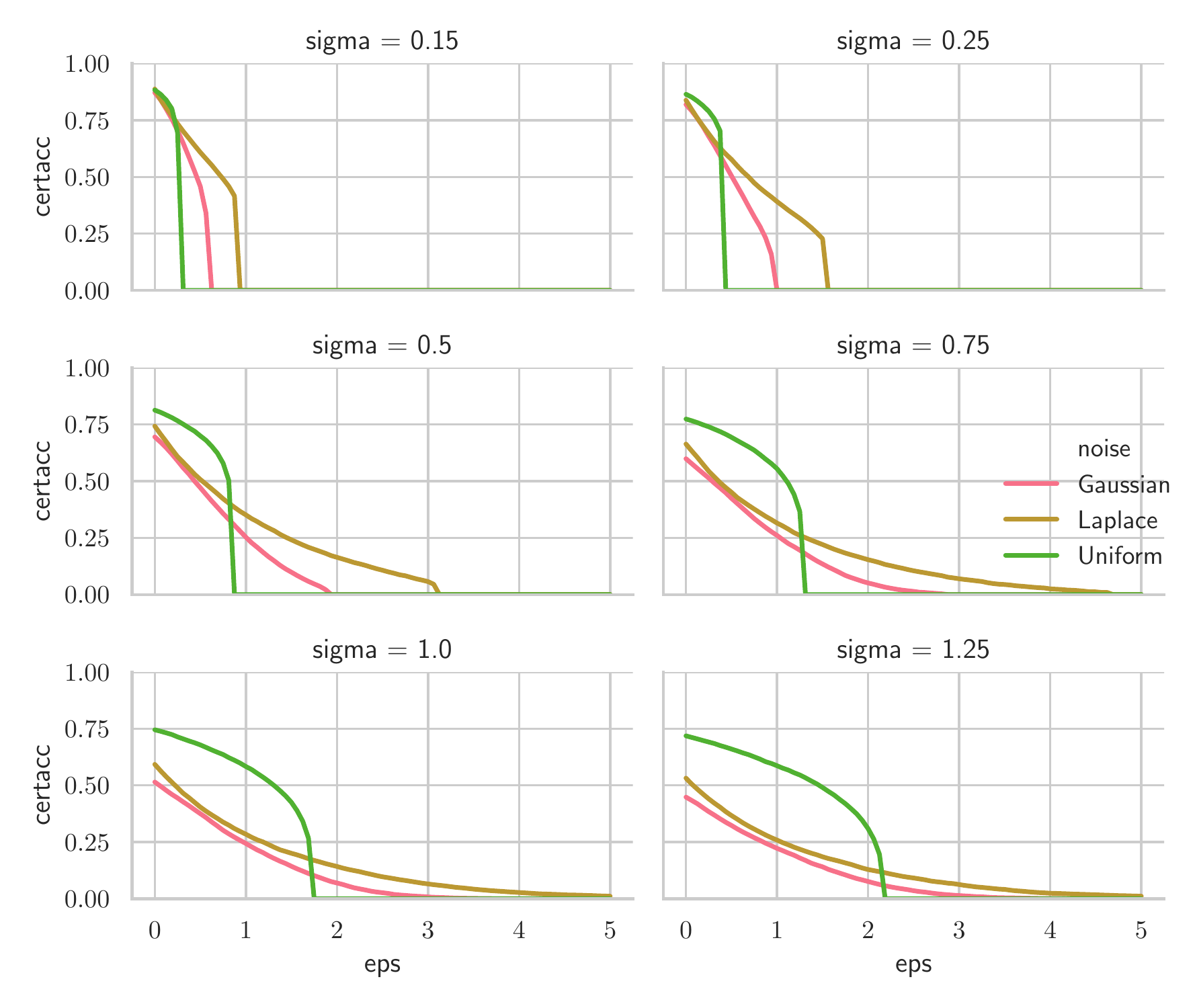}
    \end{subfigure}
    \begin{subfigure}[t]{0.91\linewidth}
        \caption{ImageNet}
        \includegraphics[width=\linewidth]{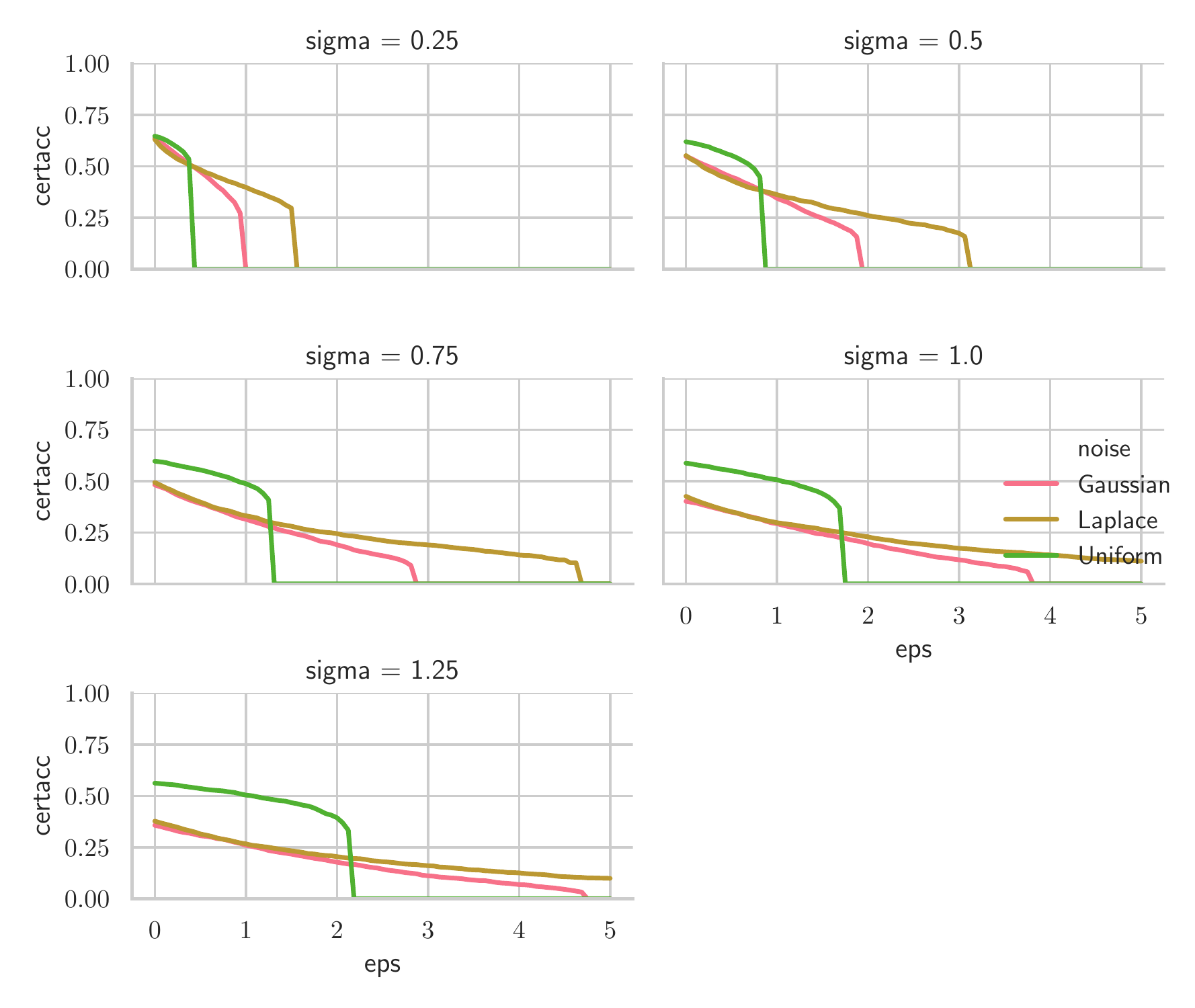}
    \end{subfigure}
    \caption{\textbf{Certified Accuracy per $\sigma$}. Certified accuracies against an $\ell_1$ adversary at each level of $\epsilon$, across the range of $\sigma$ with which models were trained (we omit $\sigma > 1.25$ for brevity).
    The upper envelope for each distribution is taken to be the maximum certified accuracy across values of $\sigma$.}
    \label{fig:per_sigma}
\end{figure}

\begin{figure}[t]
    \centering
    \begin{subfigure}[t]{0.49\linewidth}
    \includegraphics[width=\linewidth]{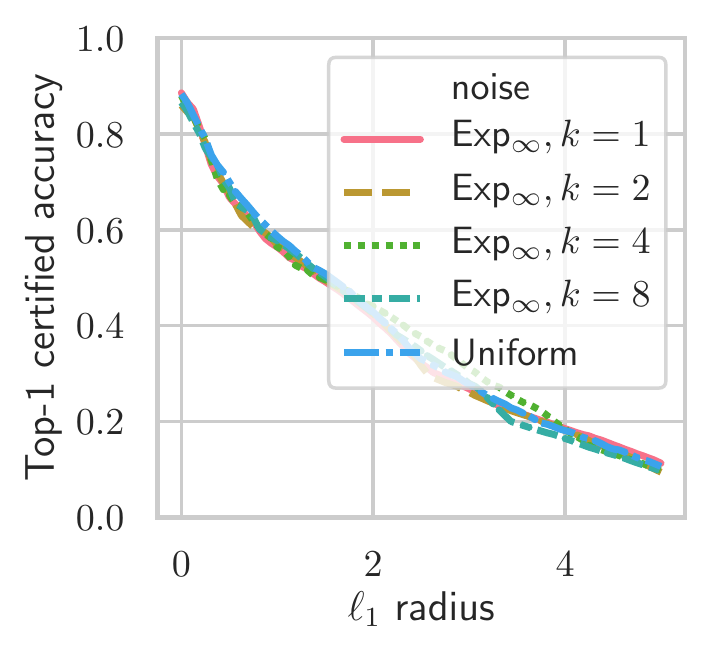}
    \end{subfigure}
    \begin{subfigure}[t]{0.49\linewidth}
    \includegraphics[width=\linewidth]{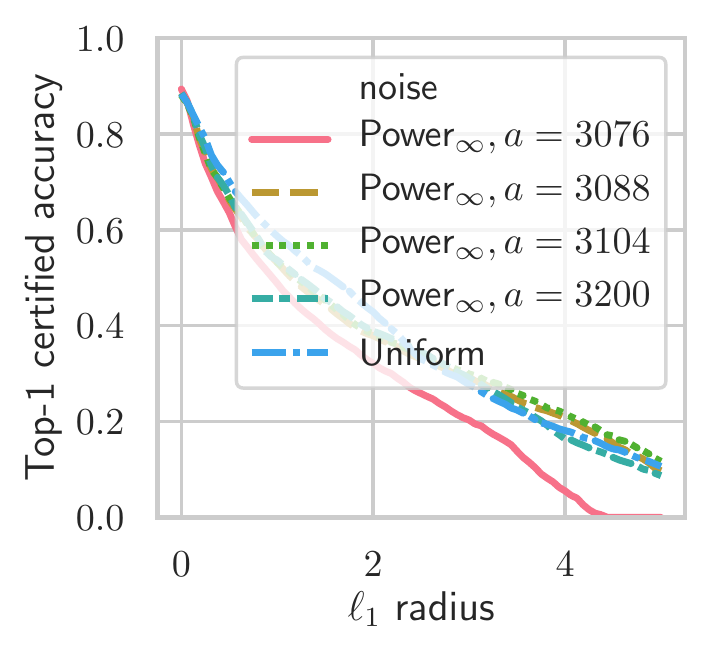}
    \end{subfigure}
    \begin{subfigure}[t]{0.49\linewidth}
    \includegraphics[width=\linewidth]{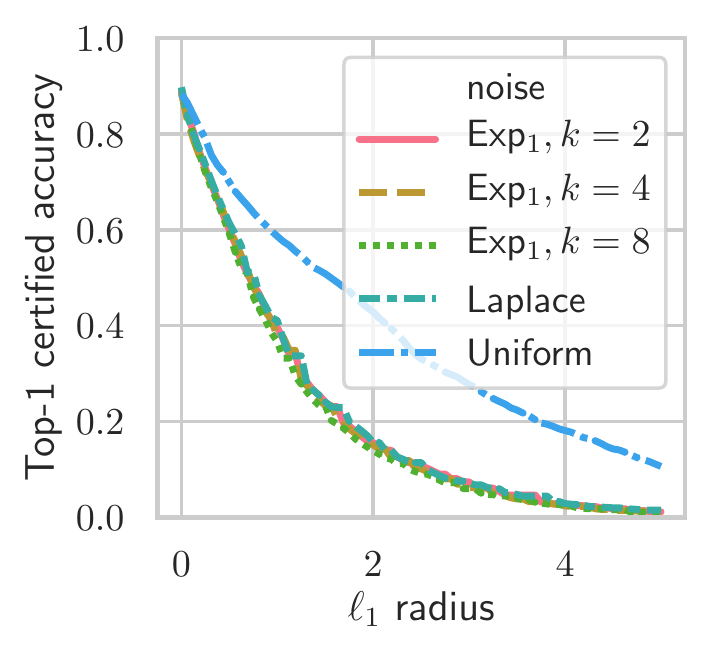}
    \end{subfigure}
    \begin{subfigure}[t]{0.49\linewidth}
    \includegraphics[width=\linewidth]{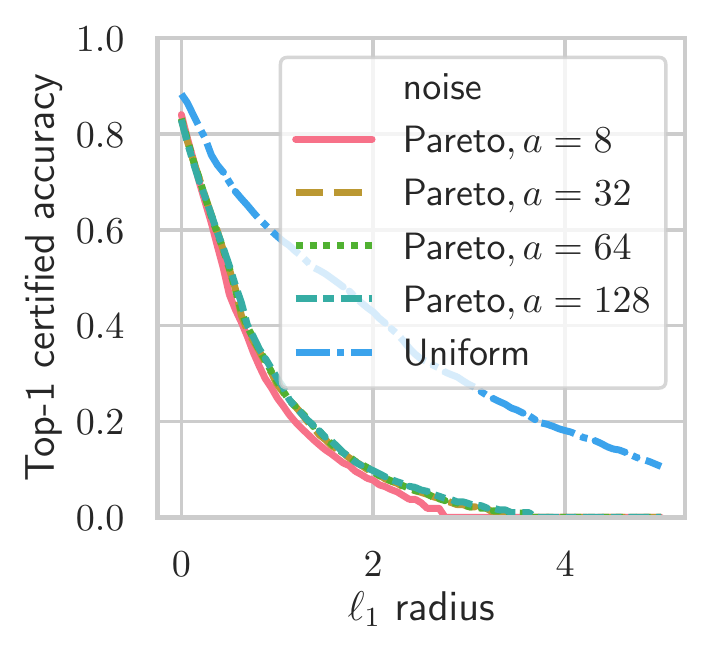}
    \end{subfigure}
    \caption{\textbf{More Distributions for $\ell_1$ Adversary.} CIFAR-10 certified top-1 accuracies of against the $\ell_1$ adversary, on generalized exponential law (with $\ell_\infty$ and $\ell_1$ level sets), power law (with $\ell_\infty$ level sets), and Pareto distributions.
    After appropriate hyperparameter search ($k$ or $a$), distributions with cubic level sets achieve performance roughly matching that of the Uniform distribution.}
    \label{fig:l1_additional}
\end{figure}

\begin{figure}[t]
    \centering
    \begin{subfigure}[t]{0.49\linewidth}
        \includegraphics[width=\linewidth]{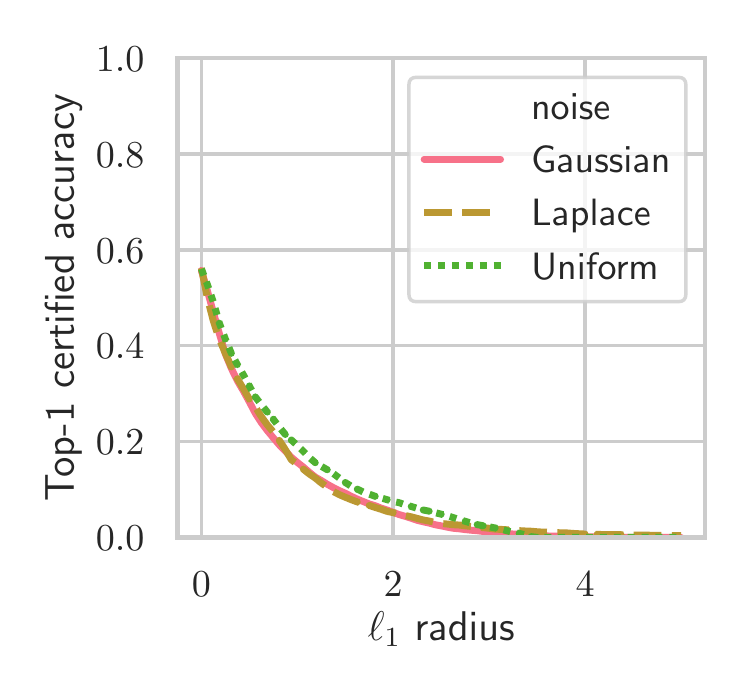}
    \end{subfigure}
    \begin{subfigure}[t]{0.49\linewidth}
        \includegraphics[width=\linewidth]{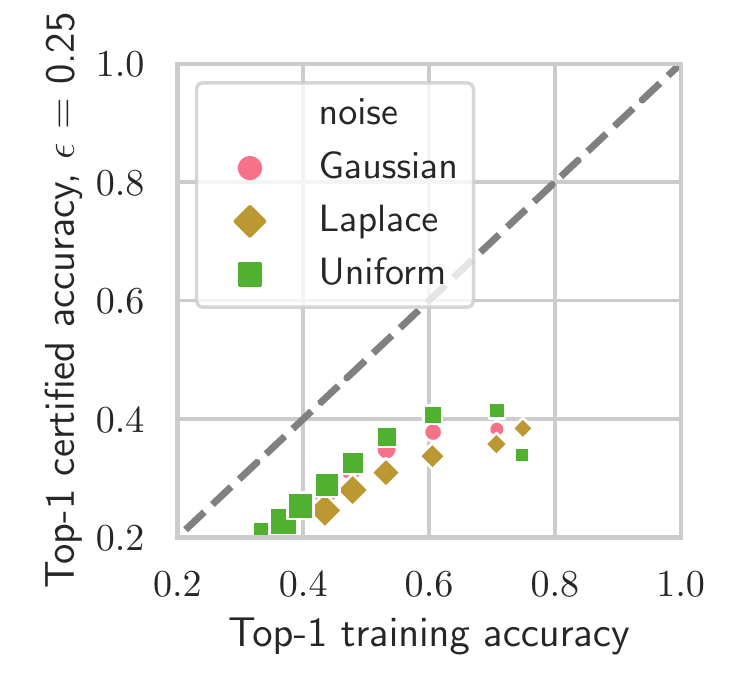}
    \end{subfigure}
    \caption{\textbf{Multi-layer Perceptron.} (Left) CIFAR-10 certified top-1 accuracies for the $\ell_1$ adversary, with a multi-layer perceptron. (Right) Certified accuracies at $\ell_1$ perturbation $\epsilon=0.25$ plotted against training accuracy under smoothing noise.}
    \label{fig:mlp_additional}
\end{figure}

\begin{figure}[t]
    \centering
    \caption{\textbf{Effect of Architecture.} Clean CIFAR-10 training (left) and testing (right) accuracies for Wide ResNet, AlexNet, and a fully connected neural network, at fixed levels of $\mathbb{E}[\frac{1}{d}\|\delta\|_2^2]\defeq \sigma^2$. For fixed $\sigma$, there is no difference between the distributions when smoothing a fully connected network, but differences arise when the architecture improves to AlexNet and ResNet.}
    \begin{subfigure}[t]{\linewidth}
        \caption{Wide ResNet 40-2}
        \includegraphics[width=\linewidth]{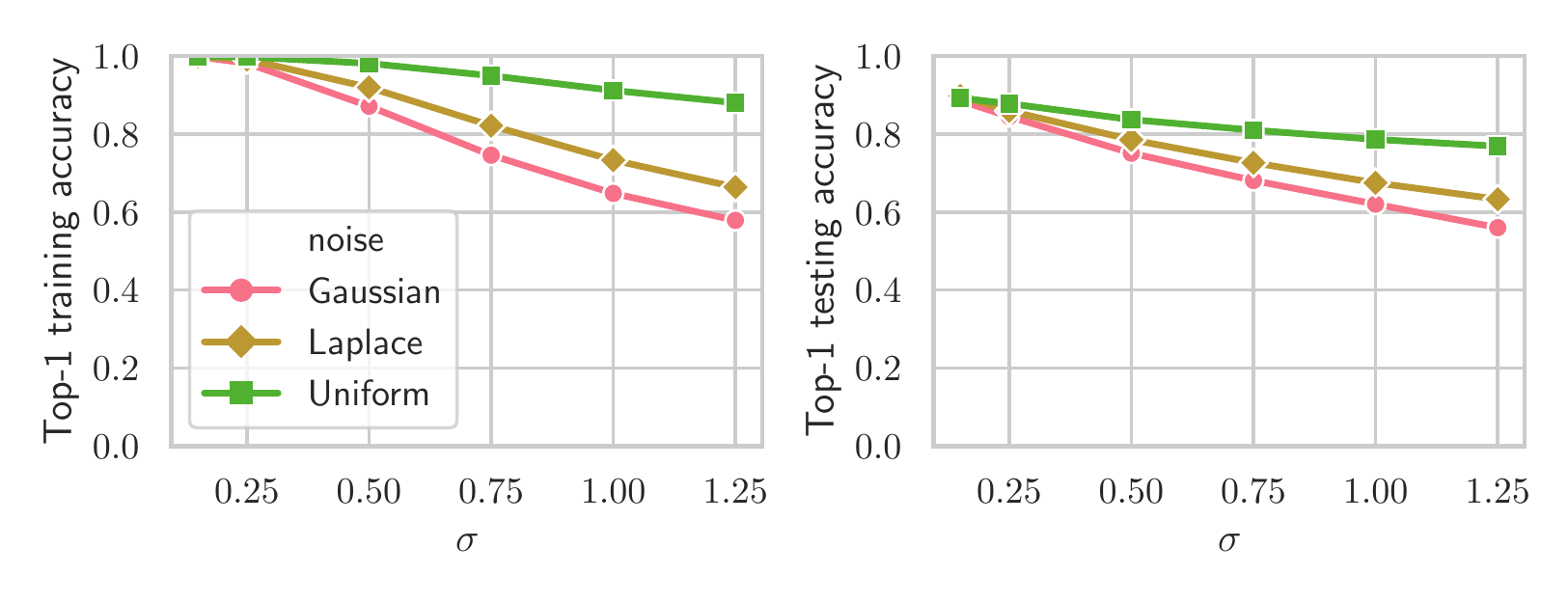}
    \end{subfigure}
    \begin{subfigure}[t]{\linewidth}
        \caption{AlexNet}
        \includegraphics[width=\linewidth]{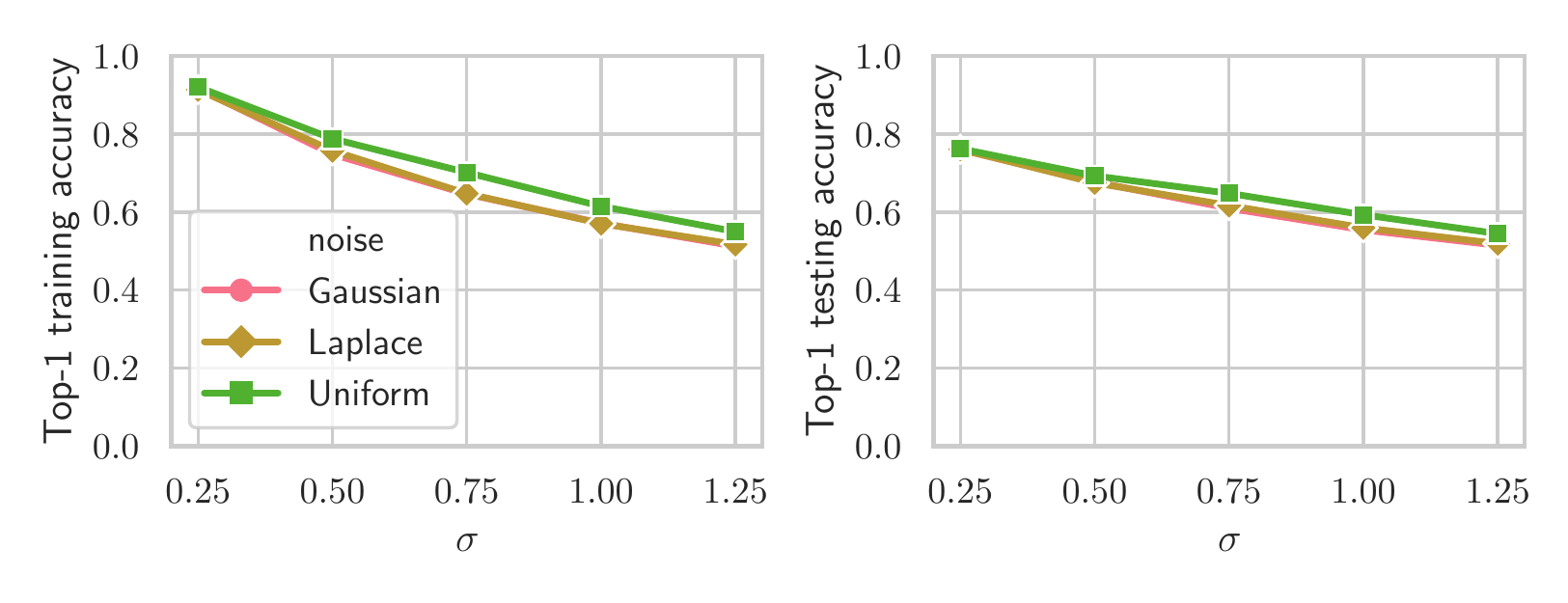}
    \end{subfigure}
    \begin{subfigure}[t]{\linewidth}
        \caption{FCNN}
        \includegraphics[width=\linewidth]{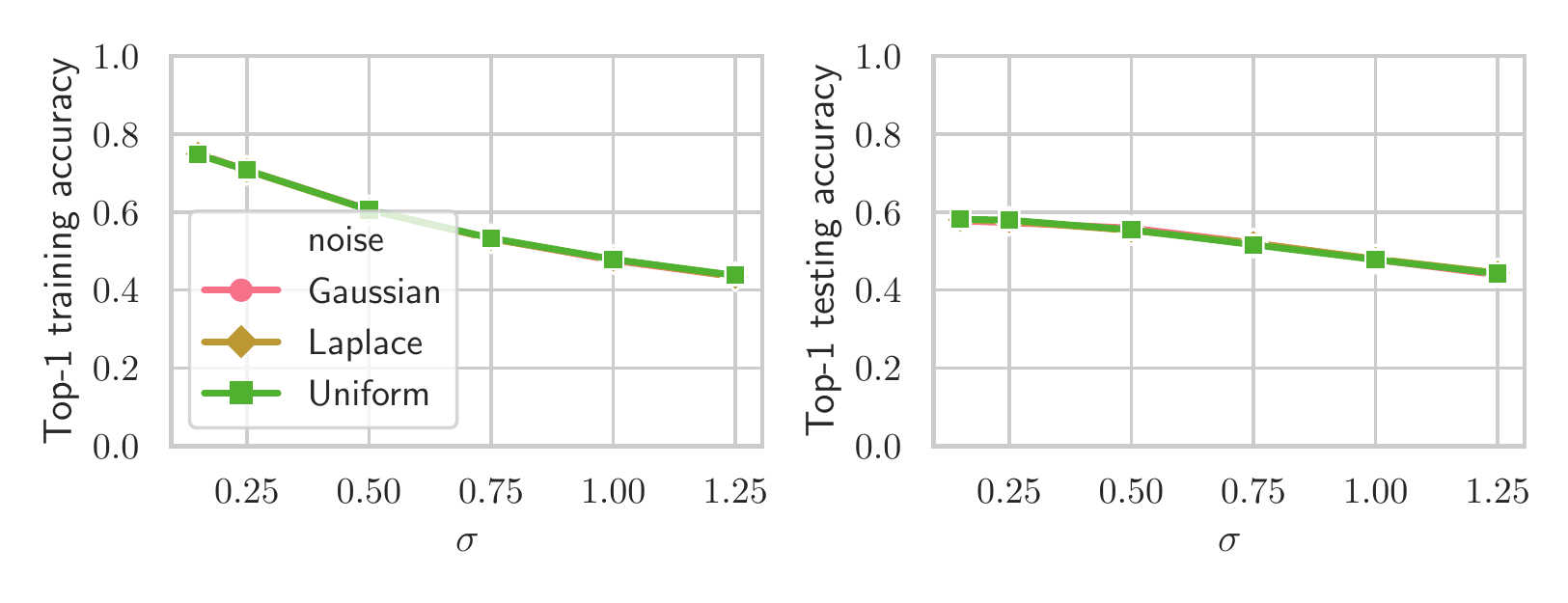}
    \end{subfigure}
    \label{fig:conv_vs_mlp}
\end{figure}

\begin{figure}[t]
    \centering
    \begin{subfigure}[t]{0.49\linewidth}
        \includegraphics[width=\linewidth]{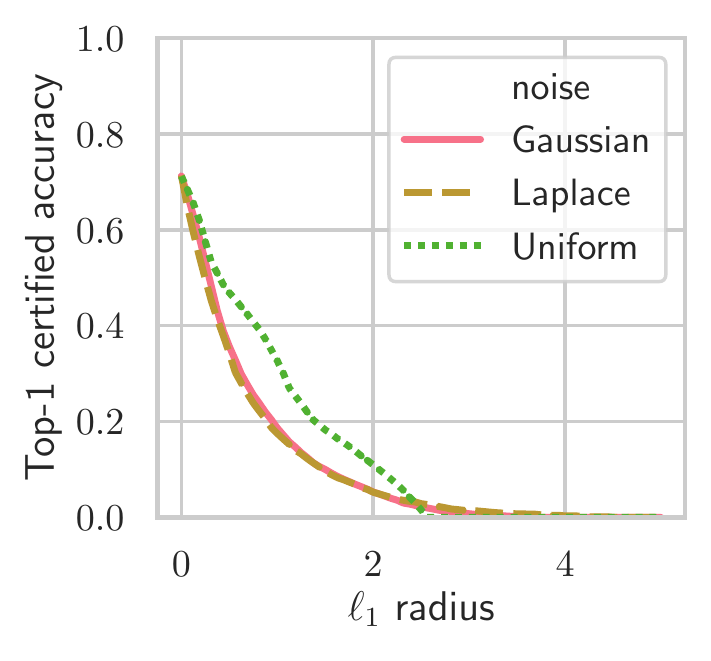}
    \end{subfigure}
    \begin{subfigure}[t]{0.49\linewidth}
        \includegraphics[width=\linewidth]{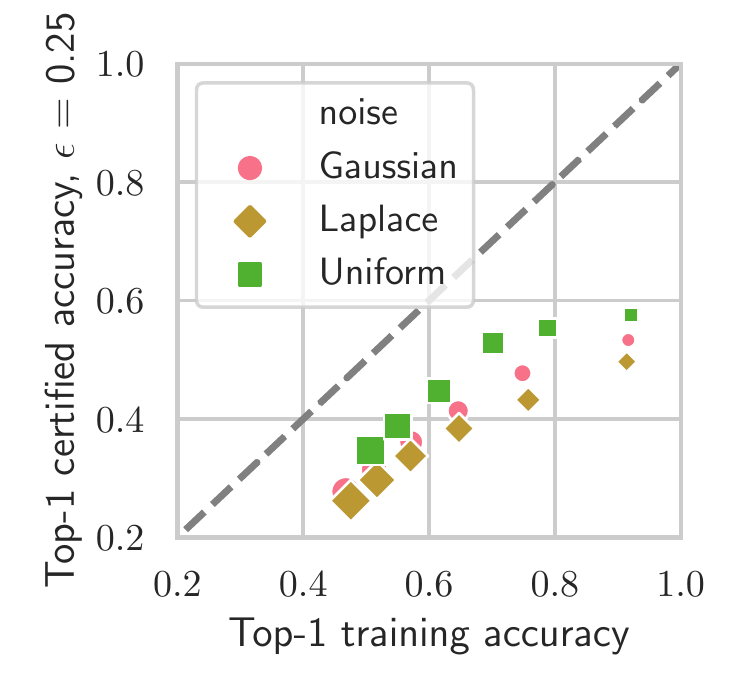}
    \end{subfigure}
    \caption{\textbf{AlexNet.} (Left) CIFAR-10 certified top-1 accuracies for the $\ell_1$ adversary, with an AlexNet architecture. (Right) Certified accuracies at $\epsilon = 0.25$, plotted against training accuracy under noise.}
    \label{fig:alexnet_additional}
\end{figure}

\begin{figure}[t]
    \centering
    \begin{subfigure}[t]{0.49\linewidth}
        \includegraphics[width=\linewidth]{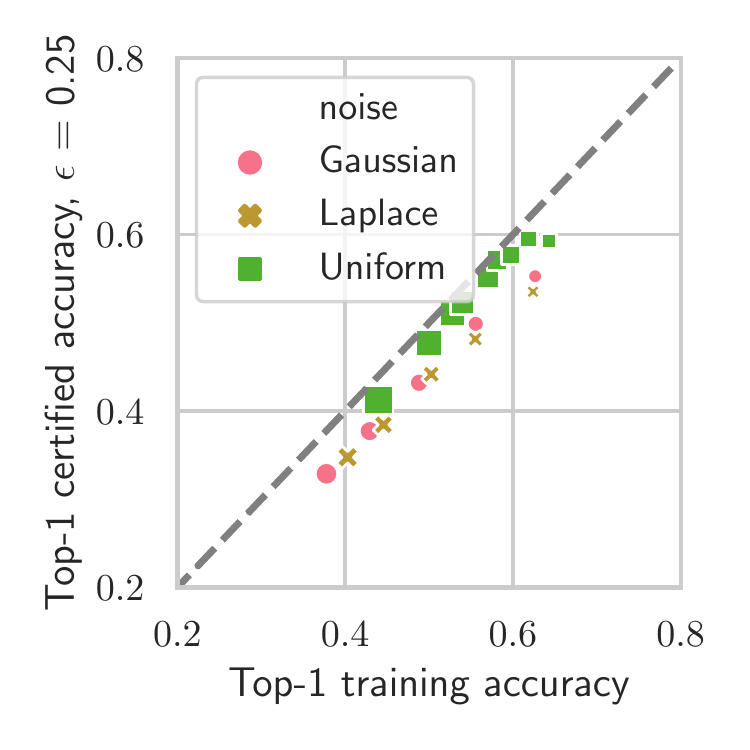}
    \end{subfigure}
    \begin{subfigure}[t]{0.49\linewidth}
        \includegraphics[width=\linewidth]{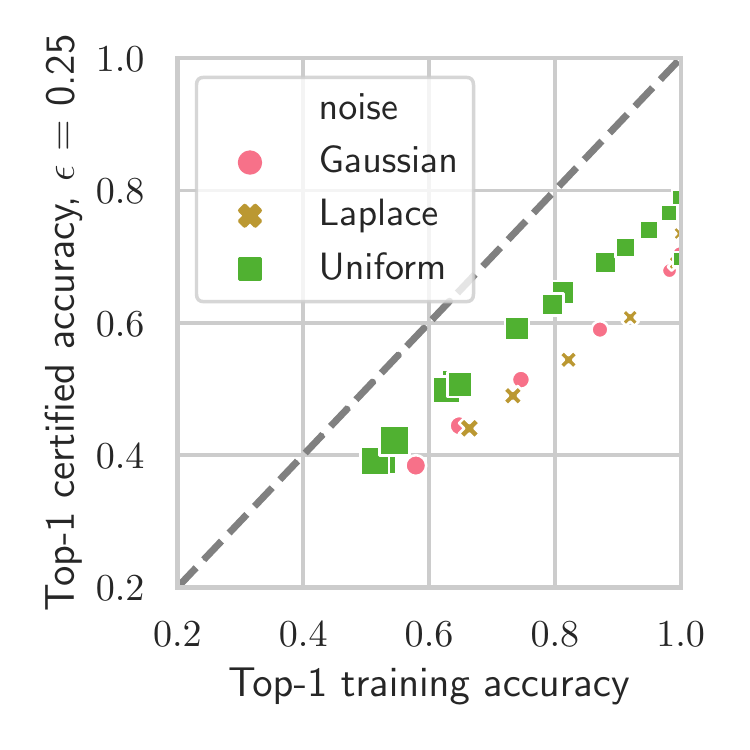}
    \end{subfigure}
    \caption{\textbf{Training Accuracy vs Certified Accuracy.} Top-1 $\ell_1$ certified accuracies for ImageNet (left) and CIFAR-10 (right) at pre-specified $\epsilon=0.25$, controlling for fixed training accuracy.
    Larger sized points denote larger $\sigma$.
    Predictably, as $\sigma$ increases, training and certified accuracy decreases.
    At fixed training accuracy, the Uniform distribution significantly outperforms Gaussian and Laplace.}
    \label{fig:zonotope_evidence_l1}
\end{figure}

\begin{figure}[t]
    \centering
    \begin{subfigure}[t]{\linewidth}
        \includegraphics[width=\linewidth]{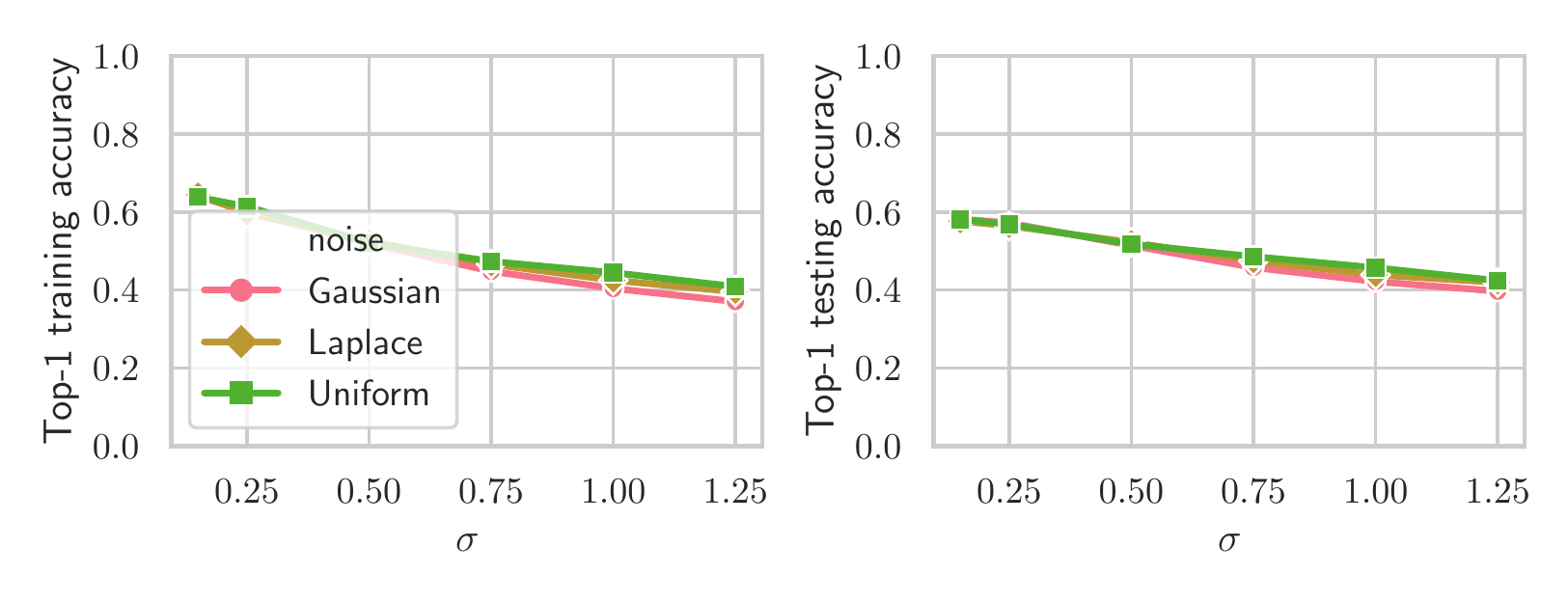}
        \caption{Unmodified noise, rotated images.}
    \end{subfigure}
    \begin{subfigure}[t]{\linewidth}
        \includegraphics[width=\linewidth]{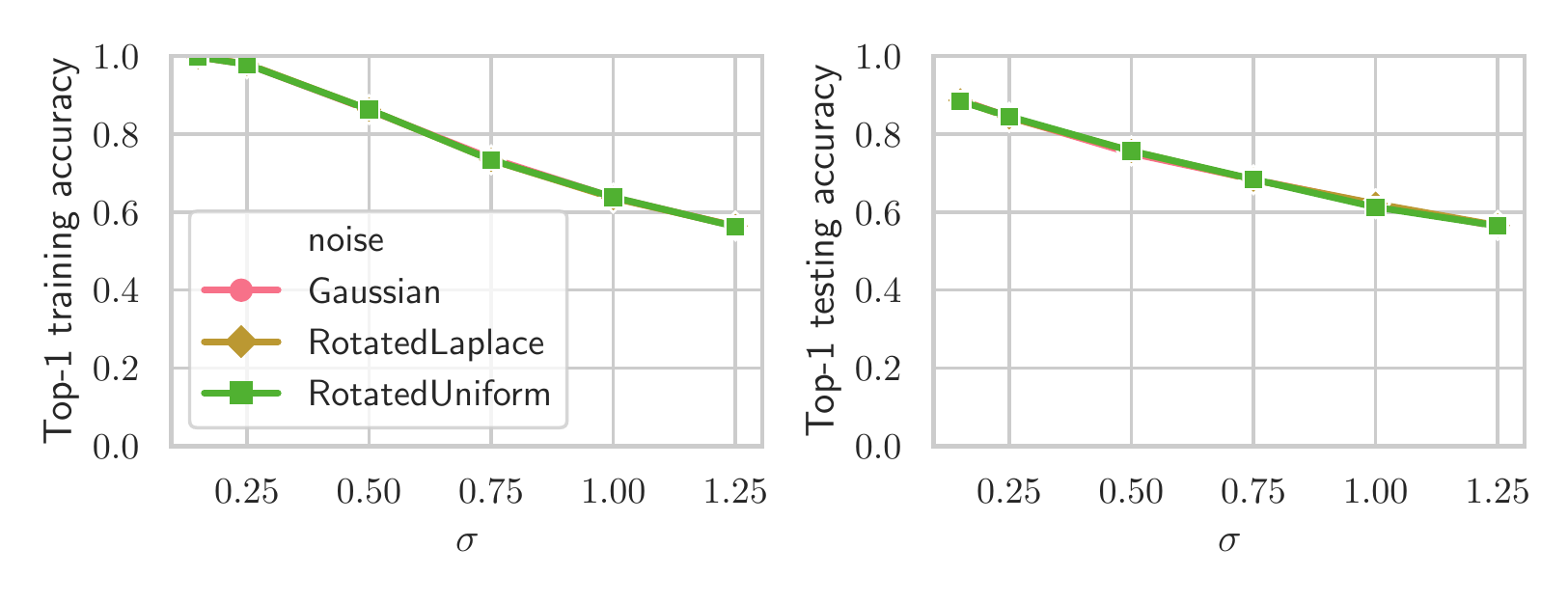}
        \caption{Rotated noise, unmodified images.}
    \end{subfigure}
    \caption{\textbf{Rotation Experiments.} Wide ResNet clean training/testing accuracies in the two rotation experiments.}
    \label{fig:rotation}
\end{figure}

\begin{figure}[t]
    \centering
    \begin{subfigure}[t]{0.49\linewidth}
    \includegraphics[width=\linewidth]{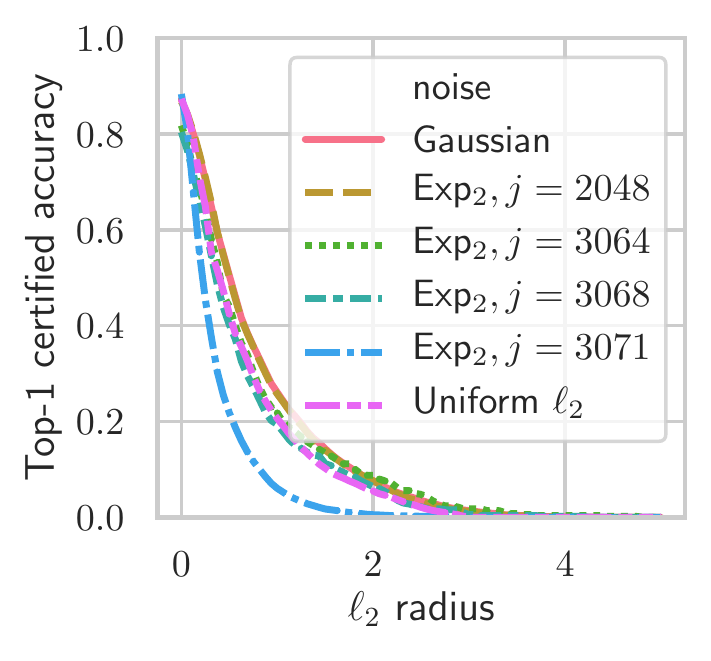}
    \end{subfigure}
    \begin{subfigure}[t]{0.49\linewidth}
    \includegraphics[width=\linewidth]{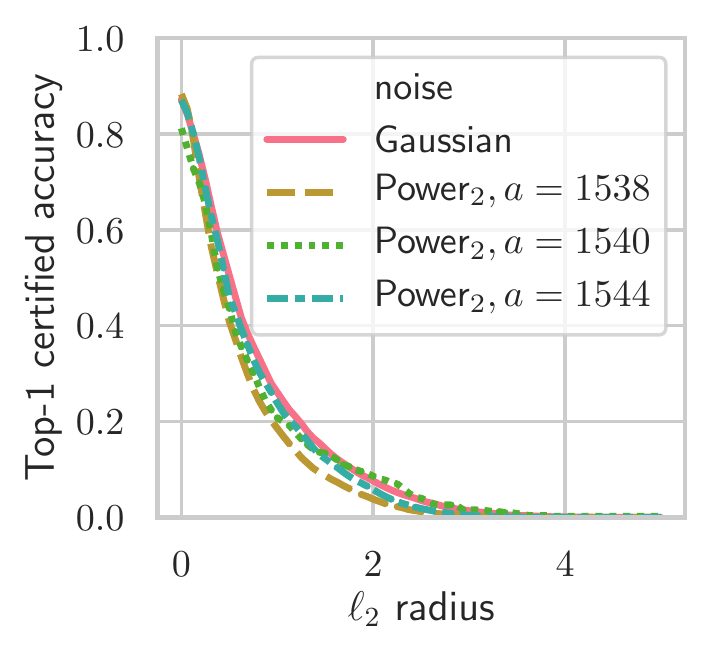}
    \end{subfigure}
    \caption{\textbf{Distributions with Spherical Level Sets.} CIFAR-10 certified top-1 accuracies against the $\ell_2$ adversary, on spherical level set exponential and power law distributions.
    After appropriate hyper-parameter search, performance matches that of the Gaussian distribution.}
    \label{fig:l2_additional}
\end{figure}

\begin{figure}[t]
    \centering
    \includegraphics[width=\linewidth]{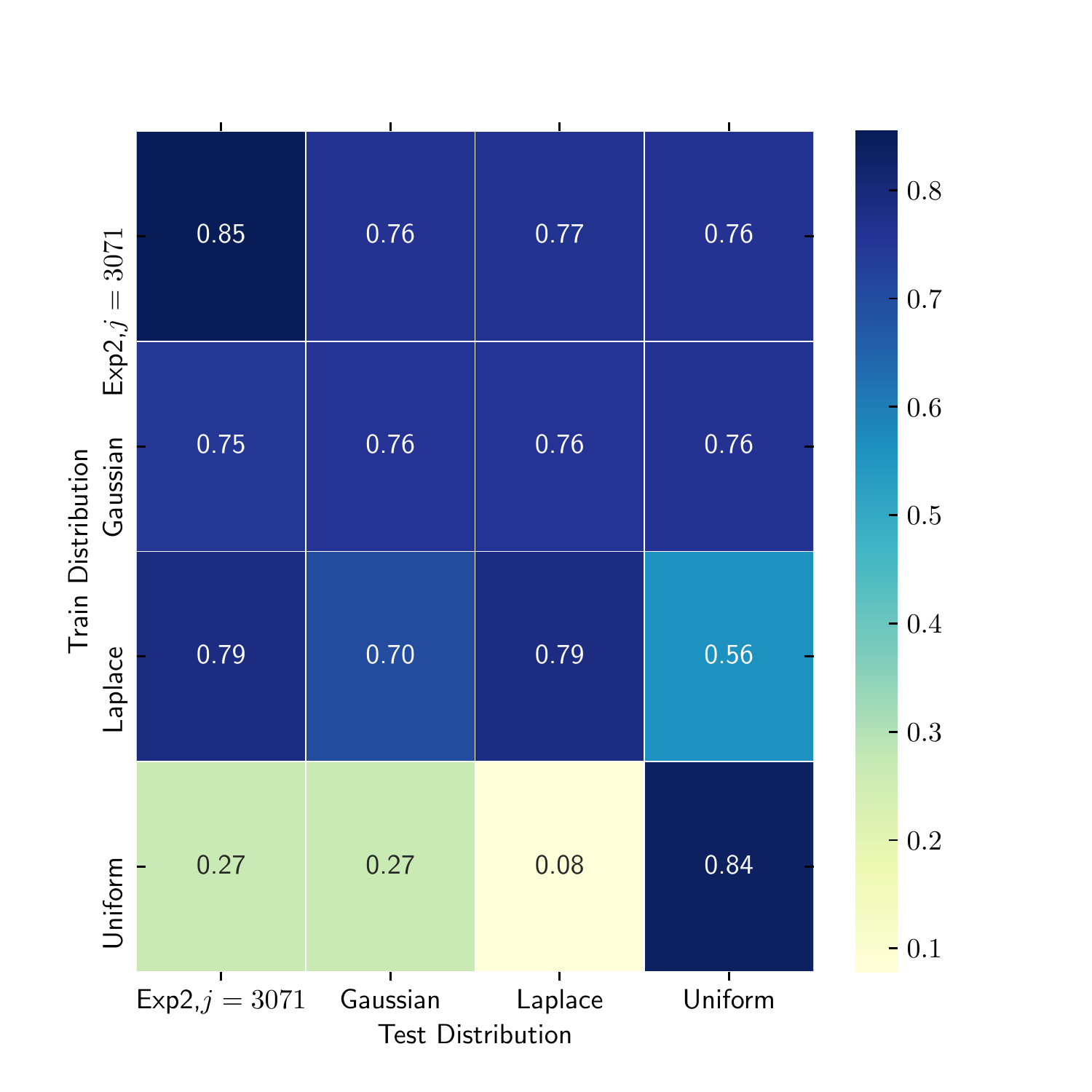}
    \caption{\textbf{Testing on a Different Noise than Trained For.} We compare clean testing accuracies of models (denoted by color) trained on one noise and tested on another, at fixed $\sigma=0.5$.
    We find a model performs best when tested with the same noise for which it was trained.}
    \label{fig:confusion}
\end{figure}

All results in this section are described for CIFAR-10.

\paragraph{$\ell_1$ Adversary}
In addition to the Gaussian, Laplace, and Uniform distributions, we considered an Exponential distribution with cubic level sets, an Exponential distribution with $\ell_1$ level sets, a power law distribution with cubic level sets, and an i.i.d Pareto distribution.
\begin{align*}
    q_{\mathrm{Exp}_\infty}(x) & \propto \exp(-(\|x/\lambda\|_\infty^k))\\
    q_{\mathrm{Exp}_1}(x) & \propto \exp(-(\|x/\lambda\|_1^k))\\
    q_{\mathrm{Power}_\infty}(x) & \propto (1+\|x/\lambda\|_\infty)^{-a}\\
    q_\mathrm{Pareto}(x) & \propto \prod_i\left(1+\frac{|x_i|}{\lambda}\right)^{-(a+1)}
\end{align*}
Results for these experiments are shown in \cref{fig:l1_additional}.
The suffix of the noises in the legend denotes the value of the shape parameter $k$ or $a$ that was chosen (whereas we fixed shape parameter $j=0$).
We note that results for distributions with cubical level sets match but do not exceed that of the Uniform distribution.
Meanwhile distributions without cubical level sets do not match performance of the Uniform distribution.
This suggests that the tail behavior of the noise does not matter as much as the shape of level sets.

\paragraph{Ablation of Our $\ell_1$ Improvement over Previous SOTA}
To understand how much of our $\ell_1$ results come from improved certification vs improved training performance, we repeated our Wide ResNet experiments with a multi-layer perceptron (MLP) and AlexNet.
We find that the Uniform distribution attains a higher upper envelope of certified accuracy than Gaussian or Laplace with this model (\cref{fig:mlp_additional}), but the improvement is less dramatic compared to \cref{tab:sota,fig:l1_certificates}.
Interestingly, the clean (i.e. $\epsilon=0$) training and testing accuracy of all three distributions are identical when fixed to the same level of $\sigma$ for the fully-connected model, but for AlexNet, the Uniform noise allows much higher accuracies (\cref{fig:conv_vs_mlp}), and for Wide ResNet, even more so.
This training improvement leads to substantial improvement in certified accuracies (\cref{fig:alexnet_additional}).

As an additional visualization, when we plot the certified accuracy at fixed $\epsilon$s versus the training accuracy of a Wide ResNet on noise-augmented CIFAR-10, the Uniform distribution can be seen to significantly outperform the Gaussian and Laplace distributions at all training accuracies except those very close to 1 (\cref{fig:zonotope_evidence_l1}).

So while some of the improvement in certified accuracy in \cref{fig:l1_certificates} is due to improved certified radius per $\rho$, it seems much more of it is due to the difference in how well a classifier trains when smoothed by noise.

\paragraph{Why Does Uniform Distribution Get Better Training Accuracy?}

Here we further investigate why improvement in architecture seems to amplify the advantage of uniform distribution over others, in terms of training accuracy for each level of $\sigma$.
Letting $W \in \mathbb{R}^{d,d}$ denote a pre-specified rotation matrix fixed throughout training/testing, we consider:
\begin{enumerate}[nosep]
    \item Smoothing with unmodified noise, rotated images:
    $$x \leftarrow Wx + \delta, \quad\delta\sim q.$$
    \item Smoothing with rotated noise, unmodified images:
    $$x \leftarrow x + W\delta, \quad\delta\sim q.$$
\end{enumerate}
Note that certification bounds are no longer necessarily applicable, so we only compare clean training accuracy i.e. whether $\arg\max_{\mathcal{Y}} g(x) = y$.
Results for Wide ResNet are shown in \cref{fig:rotation}.
We find that the difference in training performance still exists (but to a lesser degree) under alternative (1), smoothing with unmodified noise but rotated images.
On the other hand, we find this difference vanishes under alternative (2), smoothing with rotated noise and unmodified images.

This suggests that the improvement of training accuracy under Uniform noise is due to some \emph{synergy} of the model architecture with the data distribution and the smoothing noise.
The choice of Uniform distribution induces some improvement in training accuracy but this is greatly amplified by the interaction between convolution layers and the image dataset.
Thus, a good noise for randomized smoothing seems to be one that balances its robustness properties with its \emph{compatibility} with the architecture and the data.

\paragraph{$\ell_2$ Adversary}
In addition to the Gaussian distribution, we considered an Exponential distribution with spherical level sets and a power law distribution with spherical level sets.
\begin{align*}
    q_{\mathrm{Exp}_2}(x) & \propto (\|x\|_2/\lambda)^{-j}\exp(-(\|x\|_2^2/\lambda))\\
    q_{\mathrm{Power}_2}(x) & \propto (1+\|x\|_2^2/\lambda)^{-a}
\end{align*}
Results for these experiments are shown in \cref{fig:l2_additional}.
After appropriate hyperparameter search (of $j$ and $a$), performance for both distributions with spherical level sets matches that of the Gaussian.

\paragraph{Does Training and Testing on Different Noises Help?}

One may hope that certifying with a different noise than what a model was trained on may improve performance of the classifier.
For example, $\mathrm{Exp}_2$ noise with large $j$ has more mass concentrated around zero compared to Gaussian noise, and may therefore be easier to ``de-noise''.
In this section we find that training and testing with different noises does \emph{not} improve clean accuracy, when we compare noises at a fixed level of $\sigma=0.5$ (Figure \cref{fig:confusion}).
For all the noises we considered, testing a model with the same noise it was trained upon results in the best clean accuracy.
This suggests the classifier's de-noising process is quite reliant on the properties of the noise to which it is exposed in the training process.
\section{Experimental Details}

\label{sec:experiments_details}

\paragraph{Training Methods}
There are several methods of training a smoothed classifier.
Let $f$ denote the base classifier (up to the logit layer), $q$ denote the smoothing distribution, and consider an observation $(x,y)$.
\begin{enumerate}[nosep]
    \item Noise augmentation as in \citet{cohen_certified_2019},
    $$\mathcal{L}(x,y) = -\log f(x+\delta)_y,\quad \delta \sim q.$$
    \item Directly training the smoothed classifier as described in \citet{salman2019provably} (without adversarial attacks),
    $$\mathcal{L}(x,y) = -\log\mathbb{E}[f(x+\delta)]_y,\quad\delta\sim q.$$
    \item Adversarial training as in \citet{salman2019provably},
    $$\mathcal{L}(x,y) = -\log\mathbb{E}[f(\tilde{x} + \delta)]_y,\quad\delta\sim q.$$
    where $\tilde{x}$ is found via PGD on the smoothed classifier and $\delta$ noise samples are fixed
    throughout the PGD process.
    \item Stability training as in \citet{li_certified_2019},
    $$\mathcal{L}(x,y) = -\log f(x)_y + \gamma D_{\mathrm{KL}}( \sigma(f(x))\ \|\ \sigma(f(x+\delta))$$
    where $\delta \sim q$, $\sigma$ here denotes the softmax function, and $\gamma$ is a hyper-parameter.
\end{enumerate}
Unless otherwise noted, in all experiments we trained with the first option, appropriate noise augmentation.
We found that direct training was slower and did not yield superior performance in practice.
Of these four options we found that stability training with $\gamma=6$ tended to produce the best results (our choice of $\gamma$ follows \citet{carmon_unlabeled_2019}).
Therefore, we re-trained our SOTA models with stability training and list results in Table \ref{tab:sota} and Figure \ref{fig:more_data}.

\paragraph{Range of $\sigma$}
Recall that $\sigma^2 \defeq \mathbb{E}[\frac{1}{d}\|\delta\|_2^2]$.
This is a fairly consistent measurement of noise level across different noise distributions, and is a natural control variate for comparing the effect (e.g.\ training, testing, and certified accuracies) of different noises.
In addition, to obtain a good estimate of the upper envelope of certified accuracy, we need to take the pointwise maximum of the radius-vs-certified-accuracy curve (such as those in \cref{fig:l1_certificates}) for many $\sigma$s.
In this work, we swept over:
\[\sigma \in \{0.15, 0.25, 0.5, 0.75, 1.0, 1.25, 1.50\}.\]
For distributions with cubic level sets, we needed to sweep over larger $\sigma$s as well to estimate the large-radius portion of the upper envelope better:
\[\sigma \in \{1.75, 2.0, 2.25, 2.5, 2.75, 3.0, 3.25, 3.5\}.\]

\cref{tab:distinfo} lists for each distribution the conversion constant needed to obtain $\lambda$ from $\sigma = \sqrt{\EV_{\delta \sim q} \f 1 d \|\delta\|^2_2}$.

\paragraph{Certified Accuracy per $\sigma$}
In \cref{fig:per_sigma} we show the certified accuracies of Gaussian, Laplace and Uniform distributions, for each $\sigma$, for both ImageNet and CIFAR-10.
The upper envelopes reported in the main text are defined as the maximum certified accuracies over $\sigma$.

\paragraph{Experiment Hyperparameters}
For all experiments we trained with a cosine-annealed learning rate of 0.1, optimized by stochastic gradient descent with momentum of 0.9 and weight decay of 0.0001.

For ImageNet experiments we used a ResNet-50 model and trained with a batch size of 64 for 30 epochs.

For CIFAR-10 experiments we used a Wide ResNet 40-2 model and trained with a batch size of 128 for 120 epochs.

Ablation studies with a fully connected neural network employed two hidden layers of 2048 and 512 nodes followed by ReLU activations, trained with a learning rate of 0.01.

To compute the top categories for certification (which used $N=100,000$ samples), we used 64 samples.

Our code is publicly available at: \repo

\paragraph{More Data for Improved Robustness}
We explore using more data to improve the robustness of our SOTA smoothed classifiers for CIFAR-10 in two ways: using \emph{pre-training} as in \citet{hendrycks_using_2019}, and \emph{semi-supervised learning} as in \citet{carmon_unlabeled_2019}. Results are listed in Table \ref{tab:sota} and Figure \ref{fig:more_data}.

Pre-training is inspired by \citet{hendrycks_using_2019}, who showed that pre-training on the large downsampled ImageNet dataset can improve empirical $\ell_\infty$ robustness for CIFAR-10 and CIFAR-100 datasets.
Similarly, our pre-trained models are initially trained on the 1000-class downsampled ImageNet dataset \citep{chrabaszcz_downsampled_2017}.
We then re-initialize the final logit layers for the CIFAR-10 dataset and fine-tune with a learning rate of 0.001.

Semi-supervised learning is inspired by \citet{carmon_unlabeled_2019}, who showed that self-training on the unlabeled 80 Million Tiny Images dataset can improve robustness of CIFAR-10 classifiers.
We use their publicly released dataset of 500k images equipped with pseudo-labels generated by a network trained by CIFAR-10, and train on mini-batches from this dataset and CIFAR-10.

\begin{figure}[t]
    \centering
    \begin{subfigure}[t]{0.49\linewidth}
    \caption{ImageNet Stability}
    \includegraphics[width=\linewidth]{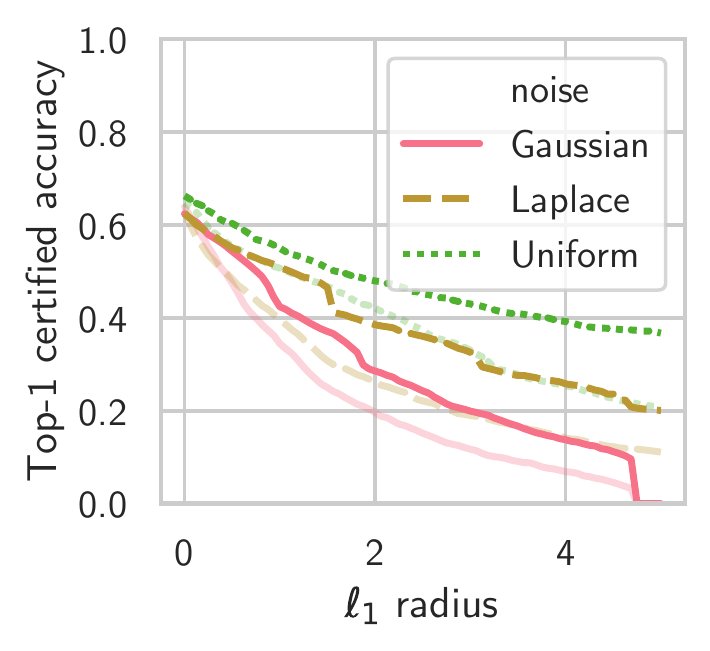}
    \end{subfigure}
    \begin{subfigure}[t]{0.49\linewidth}
    \caption{CIFAR-10 Stability}
    \includegraphics[width=\linewidth]{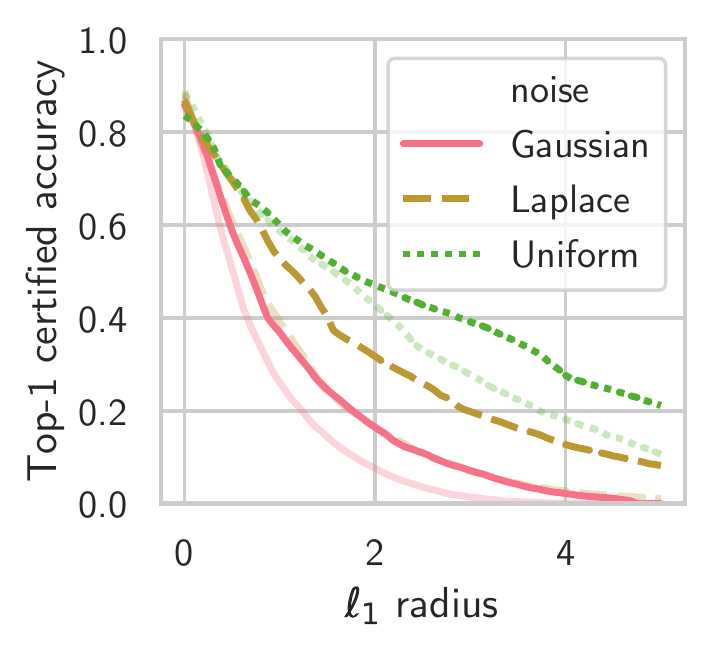}
    \end{subfigure}
    \begin{subfigure}[t]{0.49\linewidth}
    \caption{CIFAR-10 Stab + Semi-sup}
    \includegraphics[width=\linewidth]{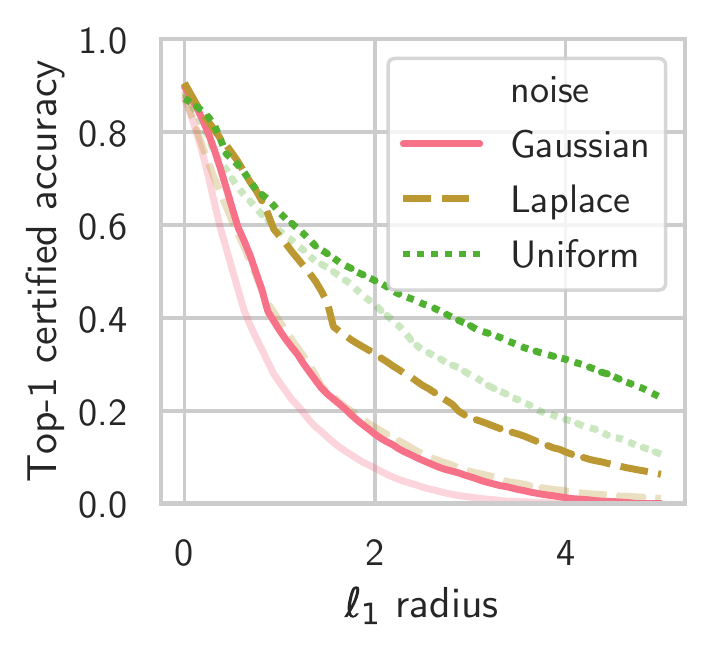}
    \end{subfigure}
    \begin{subfigure}[t]{0.49\linewidth}
    \caption{CIFAR-10 Stab + Pre-train}
    \includegraphics[width=\linewidth]{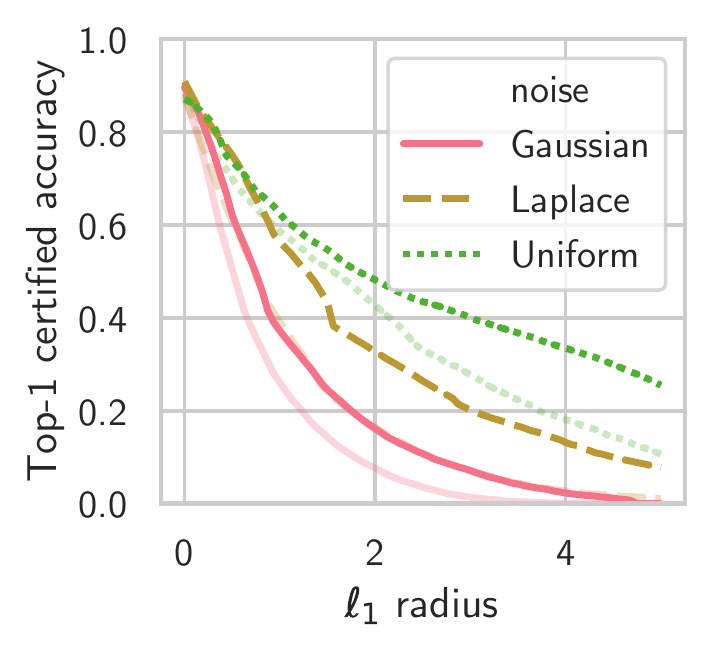}
    \end{subfigure}
    \caption{\textbf{Improved Results with Stability Training and More Data.} Certified top-1 accuracies against the $\ell_1$ adversary, showing improvements due to stability training, pre-training, and semi-supervised self-training.
    Certified accuracies yielded by the default noise augmentation are plotted in faint lines for ease of comparison.}
    \label{fig:more_data}
\end{figure}

\section{Mathematical Preliminaries}

In this section, we rigorously define several mathematical notions and their properties that will recurrent throughout what follows.
We will be brief here, but readers can skip this on first reading and refer back when necessary.
\paragraph{Note about Notation}
We will use $\Vol$ to denote measure, typically Hausdorff measure with the dimension implicit from context.
When integrating over a measurable set, the underlying measure is also typically the Hausdorff measure as well.
By $\pd U$ of a set $U$, we typically mean \emph{reduced boundary} (when $U$ has finite perimeter), especially in a measure-theoretic context.
Readers needing more background can consult \citet{evans2015measure}.

\paragraph{Sobolev Functions and Regular Functions}

While many distributions like Gaussian have continuously differentiable densities, many others, like Laplace, only have ``weak'' derivatives.
Thus, to cover all such distributions, we need to pin down a notion of ``weakly differentiable.''

\begin{defn}\label{defn:sobolev}
    Let $\Omega \sbe \R^d$, and $f: \Omega \to \R$, $g: \Omega \to \R^d$.
    We say $g$ \emph{is a weak derivative of} $f$ if for every smooth function $\phi: \Omega \to \R^d$ with compact support,
    \begin{align*}
        \int f \div \phi = - \int g \cdot \phi.
    \end{align*}
    We write $g = \nabla f$ in this case.

    For any open set $\Omega \sbe \R^d$, the \emph{Sobolev space} $W^{1,p}(\Omega)$ is defined as the functions $f \in L^p(\Omega)$ whose weak derivative exists and is in $L^p(\Omega; \R^d)$, i.e.\
    \begin{align*}
        W^{1, p}(\Omega) \defeq \{f \in L^p(\Omega): \nabla f \in L^p(\Omega; \R^d)\}.
    \end{align*}
\end{defn}

\begin{defn}\label{defn:regular}
Let $\Omega \sbe \R^d$ be an open set.
For the purpose of this paper, we say a function $f: \Omega \to \R$ is \emph{regular} if  $f \in W^{1,1}(\Omega)$.
\end{defn}
This means that $f$ has a weak derivative $\nabla f$ such that both $f$ and $\nabla f$ are integrable.
For example, ReLU is not a continuously differentiable function, but it is regular since it has the Heavyside step function as its weak derivative.

\paragraph{Coarea Formula and the Weak Sard's Theorem}

\begin{thm}[Coarea Formula \citep{federer2014geometric,evans2015measure}]\label{thm:coarea}
Let $\Omega \sbe\R^d$ be an open set, $g \in L^1(\Omega)$, and $f: \R^d \to \R$ regular in the sense of \cref{defn:regular}.
Define $U_t \defeq \{x: f(x) \ge t\}$ to be $f$'s superlevel sets.
Then
\begin{align*}
    \int_\Omega g(x) \|\nabla f(x)\|_2 \dd x
    = \int_{\R} \int_{\pd U_t} g(x) \dd x \dd t.
\end{align*}
Here the integral in $x$ in the RHS is over the $(d-1)$-dimensional Hausdorff measure of the \emph{reduced boundary} of $U_t$, which we abuse notation and denote as $\pd U_t$.
\end{thm}

The regularity of $f$ can be replaced by weaker conditions (see \citet{evans2015measure}), but the statement here suffices for our purposes.

By setting $g$ in \cref{thm:coarea} to be the indicator function over the set where the gradient of $f$ vanishes, we get
\begin{thm}[Weak Sard]\label{thm:regulargradvanish}
For any $f: \R^d \to \R$ regular in the sense of \cref{defn:regular}, let $Z \defeq \{x \in \R^d: \nabla f(x) = 0\}$.
Let $U_t \defeq \{x: f(x) \ge t\}$ denote the superlevel set of $f$ at level $t$.
Then
\begin{align*}
    \Vol(Z \cap \pd U_t) = 0
\end{align*}
for almost every $t \in \R$.
Here $\Vol$ denote the Hausdorff measure of dimension $d-1$, and again $\pd U_t$ denotes reduced boundary.
\end{thm}

\paragraph{Sobolev Functions and Absolute Continuity}

Recall the standard definition of \emph{absolute continuity}, which can be thought of as a more general notion of \emph{differentiability.}
\begin{defn}
A function $f: \R \to \R$ is called \emph{absolute continuous} if there exists a Lebesgue integrable function $g: \R \to \R$ and some $a \in \R$, such that
\[f(x) = f(a) + \int_a^x g(t) \dd t.\]
\end{defn}
Such an $f$ has derivative $f'$ almost everywhere, and $f'$ coincides with $g$ almost everywhere.

Sobolev functions are known to be absolutely continuous on every line, and this property roughly captures all Sobolev functions.
\begin{thm}[ACL Property of Sobolev Functions \citep{nikodym_sur_1933}]\label{thm:sobolevACL}
Let $\Omega \sbe \R^d$ be an open set.
The following statements hold.
\begin{itemize}
    \item Let $f: \Omega \to \R$ be Sobolev, $f \in W^{1,p}(\Omega)$.
    Then possibly after modifying $f$ on a set of measure 0, for every $u \in \R^d$, the function $t \mapsto f(x + tu)$ is absolutely continuous for almost every x.
    Furthermore, the (classical) directional derivative $D_u f$ is in $L^p(\Omega)$ for every $u$.
    \item Conversely, if the restriction of a function $f: \Omega \to \R$ on almost every line parallel to the coordinate axes is absolutely continuous, then pointwise gradient $\nabla f$ exists almost everywhere, and $f \in W^{1, p}(\Omega)$ as long as $f, \nabla f \in L^p(\Omega)$.
\end{itemize}

\end{thm}

The ACL property of Sobolev functions yields the differentiability of the convolution of a $L^\infty$ and a $W^{1, 1}$ function.
\begin{lemma}\label{lemma:convolutionDiff}
If a function $q$ is in $W^{1,1}(\R^d)$, then for every bounded measurable $F: \R^d \to \R$, the convolution $F * q$ is continuously differentiable, and
\[
\nabla (F * q) = F * (\nabla q).
\]
\end{lemma}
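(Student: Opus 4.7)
The plan is to reduce to the case of smooth $q$ by mollification and then pass to the limit, exploiting that $F \in L^\infty$ makes convolution with $F$ well-behaved in a uniform sense. Fix a standard mollifier $\eta \in C_c^\infty(\R^d)$ with $\int \eta = 1$, set $\eta_\epsilon(x) \defeq \epsilon^{-d}\eta(x/\epsilon)$, and define $q_\epsilon \defeq q * \eta_\epsilon$. Classical Sobolev theory gives $q_\epsilon \in C^\infty(\R^d)$ and, because $q \in W^{1,1}(\R^d)$, both $q_\epsilon \to q$ in $L^1$ and $\nabla q_\epsilon = (\nabla q) * \eta_\epsilon \to \nabla q$ in $L^1$ as $\epsilon \to 0$.

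For each fixed $\epsilon > 0$, $F * q_\epsilon$ is $C^1$ on $\R^d$ and $\nabla(F * q_\epsilon) = F * (\nabla q_\epsilon)$, by differentiation under the integral sign (justified by dominated convergence, using $\|F\|_\infty < \infty$ and $\nabla q_\epsilon \in L^1$). To pass to the limit, I would invoke the classical estimate that for any $h \in L^1(\R^d)$, the convolution $F * h$ is bounded and uniformly continuous with $\|F * h\|_\infty \le \|F\|_\infty \|h\|_1$ — the uniform continuity coming from continuity of translation in $L^1$. Applying this with $h = q - q_\epsilon$ and with $h = \nabla q - \nabla q_\epsilon$ yields $F * q_\epsilon \to F * q$ and $F * \nabla q_\epsilon \to F * \nabla q$ uniformly on $\R^d$, and in particular $F * \nabla q$ is continuous as a uniform limit of continuous functions.

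The standard calculus fact that uniform convergence of $C^1$ functions together with uniform convergence of their gradients forces the limit to be $C^1$ with the expected gradient then closes the argument. The main subtlety is handling the Sobolev regularity correctly: since $q$ has only a \emph{weak} derivative, the mollification step is essential in order to gain enough classical regularity to differentiate under the integral; afterwards everything is driven by the robust $L^\infty \ast L^1 \to C_u$ fact. An alternative route would use the ACL characterization in \cref{thm:sobolevACL} to write the difference quotient of $F * q$ in direction $u$ directly via the fundamental theorem of calculus on a.e.\ line, interchange integrals by Fubini, and conclude using continuity of $F * D_u q$; this reaches the same conclusion but requires more care with measure-zero exceptional sets.
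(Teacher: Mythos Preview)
Your proof is correct but follows a genuinely different route from the paper. The paper proves $D_u(F*q) = F*(D_u q)$ directly via the ACL characterization (\cref{thm:sobolevACL}): it inserts $\frac{d}{d\tau}\int_0^\tau$ in front of $F*(D_u q)$, swaps the order of integration by Fubini--Tonelli, and then integrates out $D_u q$ along almost every line using absolute continuity. Continuity of each partial derivative then follows from the $L^\infty * L^1 \subset C_b$ fact, and the pieces assemble into $C^1$. Your approach instead mollifies $q$, handles the smooth case classically, and passes to the limit using the uniform estimate $\|F*h\|_\infty \le \|F\|_\infty \|h\|_1$ together with the standard ``uniform limit of $C^1$ with uniformly convergent gradients is $C^1$'' lemma. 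Your route is arguably cleaner and more modular, since the measure-theoretic subtleties are all absorbed into off-the-shelf mollification results; the paper's route is more direct and self-contained but demands more careful bookkeeping of null sets --- exactly as you note in your closing remark, which in fact describes precisely what the paper does.

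One small imprecision: your appeal to ``dominated convergence'' to differentiate $F*q_\epsilon$ under the integral sign is not quite the right justification, since a uniform-in-$h$ integrable dominator for the difference quotient of $q_\epsilon$ is not immediate from $\nabla q_\epsilon \in L^1$ alone. A cleaner argument writes $q_\epsilon(x+he_i-y)-q_\epsilon(x-y) = \int_0^h \partial_i q_\epsilon(x+te_i-y)\,dt$ and applies Fubini, or simply uses associativity $F*q_\epsilon = (F*q)*\eta_\epsilon$ with $\eta_\epsilon \in C_c^\infty$. This is routine and does not affect the validity of your argument.
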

\begin{proof}
A function is differentiable if all of 1) its partial derivatives exist and 2) are continuous.
First we show that, for any vector $u$, $F * (D_u q) = D_u (F * q)$.
We can compute as follows.
\begin{align*}
    &\phantomeq
        F * (D_u q)(x)
        \\
    &=
        \int F(\hat x) D_u q(x - \hat x) \dd \hat x\\
    &=
        \left.
        \f d {d\tau} \int_0^\tau \int F(\hat x) D_u q(x - \hat x + t u) \dd \hat x \dd t
        \right|_{\tau = 0}
        \numberthis\label{_eqn:DInt}
        \\
    &=
        \left.
        \f d {d\tau} \int F(\hat x) \int_0^\tau D_u q(x - \hat x + t u) \dd t \dd \hat x
        \right|_{\tau = 0}
        \numberthis\label{_eqn:Fubini}
        \\
    &=
        \left.
        \f d {d\tau} \int F(\hat x) [q(x - \hat x + \tau u) - q(x - \hat x)] \dd \hat x
        \right|_{\tau = 0}
        \numberthis\label{_eqn:AC}
        \\
    &=
        D_u \int F(\hat x) q(x - \hat x) \dd \hat x = D_u (F * q)(x)
        \numberthis\label{_eqn:simpl}
        .
\end{align*}
In these equations, first note that
\begin{align*}
    &\phantomeq
        \int_0^\tau \int |F(\hat x) D_u q(x - \hat x + t u)| \dd \hat x \dd t
        \\
    &\le
        \tau \int |D_u q(\hat x)| \dd \hat x
        < \infty,
\end{align*}
by the ACL property of $q$ (\cref{thm:sobolevACL}).
Thus, in \cref{_eqn:DInt}, we introduced $\f d {d\tau} \int_0^\tau$ innocuously by the fundamental theorem of calculus, since the inner integral is absolutely integrable in $t$.
Then, in \cref{_eqn:Fubini}, we applied Fubini-Tonelli Theorem to swap the order of integration.
In \cref{_eqn:AC}, we integrated out the directional derivative $D_u q$ for almost every $\hat x$ where $t \mapsto q(x - \hat x + tu)$ is absolutely continuous.
Finally, in \cref{_eqn:simpl}, we simplified the integral by noting that $q(x - \hat x)$ does not depend on $\tau$, and $q(x - \hat x + \tau u)$ is absolutely integrable in $\hat x$.
This proves our claim that $F * (D_u q) = D_u (F * q)$ for any $u \in \R^d$.

Note additionally that, since $D_u q \in L^1$ (by assumption) and $F \in L^\infty$, their convolution $F * (D_u q)$ is bounded and continuous.

Then, taking $u$ to be the coordinate vectors, we see the partial derivatives of $F * q$ all exist and are continuous, proving our lemma.
\end{proof}

\section{The Differential Method}

We summarize the setup of this section in the following assumption.
Here we use a notion called \emph{regularity} introduced in \cref{defn:regular} that roughly says that a function needs to continuous almost everywhere and be ``weakly'' differentiable, and it and its gradient are both absolutely integrable.
All concrete density functions we work with in this paper will be regular, with the exception of the uniform distribution.

\begin{assm}\label{assm:FGqsmooth}
Let $F: \R^d \to [0, 1]$ be a measurable function and let $G: \R^d \to [0, 1]$ be the smoothing of $F$ by the distribution $q(x) \propto \exp(-\psi(x))$ for some $\psi: \R^d \to \R$, such that $q$ is \emph{regular} in the sense of \cref{defn:regular}.
Formally,
\begin{align*}
    G(x) = \EV_{\delta \sim q} F(x + \delta)
        &= \int q(\delta) F(x + \delta) \dd \delta\\
        &= \int q(\hat x - x) F(\hat x) \dd \hat x.
\end{align*}
Consider a norm $\| \cdot \|$ with unit ball $\Bb$ that is a convex body.
Let $\Vrt(\Bb)$ be the set of its extremal points.
\end{assm}

\begin{exmp}
If $\| \cdot \| = \| \cdot \|_1$ is the $\ell_1$-norm, then $\Bb$ is what is called the cross-polytope, defined as the convex hull of the unit vectors and their negations.
If $\| \cdot \| = \| \cdot \|_\infty$ is the $\ell_\infty$-norm, then $\Bb$ is the cube with vertices $\{\pm1\}^d$.
If $\| \cdot \| = \| \cdot \|_2$ is the $\ell_2$-norm, then $\Bb$ is the unit sphere, and $\Vrt(\Bb)$ is its entire boundary.
\end{exmp}

\begin{exmp}
If $\psi(x) = \|x\|^2_2$, then $q$ is the standard Gaussian distribution.
If $\psi(x) = \|x\|_1$, then $q$ is the Laplace distribution.
\end{exmp}

The following definition of $\Phi$ turns out to be equivalent to \cref{eqn:PhiMaintext}, which will be apparent in the proof of \cref{thm:differentialMethodMasterTheorem}.
It gives a systematic way of computing $\Phi$.
\begin{defn}\label{defn:Phi}
Let $q(x) \propto \exp(-\psi(x))$ be a distribution over $\R^d$ as in \cref{assm:FGqsmooth}.
For any vector $u \in \R^d$, let $\gamma_u$ be the random variable $\la u, \nabla \psi(\delta) \ra \in \R$ with $\delta \sim q$.
Define $\varphi_u$ to be the complementary CDF of $\gamma_u$,
\begin{align*}
    \varphi_u(c) \defeq \Pr[\gamma_u > c],
\end{align*}
and define the inverse complementary CDF $\inv\varphi_u(p)$ of $\gamma_u$ to be
\begin{align*}
    \inv\varphi_u(p) \defeq \inf\{c: \Pr[\gamma_u > c] \le p\}.
\end{align*}
For any $p \in [0, 1]$, define a new random variable $\mygamma{u}{p}$ by
\begin{align*}
    \mygamma{u}{p} =
    \begin{cases}
    \gamma_u|_{(c, \infty)}
        &   \text{with probability $\varphi_u(c)$}\\
    c
        &   \text{with probability $p - \varphi_u(c)$}\\
    0   &   \text{with probability $1 - p$,}
    \end{cases}
\end{align*}
where $c \defeq \varphi_u^{-1}(p)$ and $\gamma_u|_{(c, \infty)}$ is the random variable $\gamma_u$ conditioned on $\gamma_u > c$.
Roughly speaking, the PDF of $\mygamma{u}{p}$ allocates probability $p$ to the right portion of $\gamma_u$'s PDF,
and puts the rest $1-p$ probability on 0.
One just needs to be careful when $\gamma_u$'s measure has a singular point at $\inv \varphi_u(p)$, which is dealt with in the middle line above.

Let $\Bb$ be the unit ball of $\|\cdot \|$ as in \cref{assm:FGqsmooth}.
Then we define $\Phi: [0, 1] \to \R$ by
\begin{align*}
    \Phi(p) \defeq
        \max_{u \in \Vrt(\Bb)}
            \EV \mygamma{u}{p}
            .
\end{align*}
\end{defn}

\begin{remk}
The function $p \mapsto \bar \EV \mygamma{u}{p}$ in \cref{defn:Phi} is increasing on $[0, \varphi_u(0)]$ and nonincreasing on $[\varphi_u(0), 1]$.
Thus $\Phi(p)$ is also increasing on $[0, \inf_{u \in \Vrt(\Bb)} \varphi_u(0)]$.
\end{remk}

The following theorem is the master theorem for applying the differential method.
We illustrate its usage to recover the known Gaussian \cite{cohen_certified_2019} and Laplace \cite{teng_ell_1_2019} bounds as warmups in \cref{sec:GaussianWarmup,sec:LaplaceWarmup} before applying the technique at scale.
\begin{thm}[The Differential Method]
\label{thm:differentialMethodMasterTheorem}
As in \cref{assm:FGqsmooth}, fix any norm $\| \cdot \|$ and
let $G: \R^d \to [0, 1]$ be the smoothing of any measurable $F: \R^d \to [0, 1]$ by $q(x) \propto \exp(-\psi(x))$, such that $q$ is regular in the sense of \cref{defn:regular}.
Let $\Phi: [0, 1] \to \R$ be given as in \cref{defn:Phi}.

Then for any $x$, if $G(x) < 1/2$, then $G(x + \delta) < 1/2$ for any
\begin{align}
    \| \delta \| < \int_{G(x)}^{1/2} \f 1 {\Phi(p)} \dd p
    .
    \label{eqn:robustrad}
    \tag{$\star$}
\end{align}
\end{thm}

In \cref{thm:differentialMethodMasterTheorem}, one should think of $G(x)$ as the probability that the smoothed classifier assigns to \emph{any class other than the} correct one.
So \cref{thm:differentialMethodMasterTheorem} says that, if the smoothed classifier predicts the correct class ($G(x) < 1/2$), then it continues to do so even when the input is perturbed by a noise with magnitude bounded by \cref{eqn:robustrad}.

Sometimes, when $\varphi_u$ is continuous for all $u$, for $p \in [0, 1/2]$, we can factor
\[\Phi(p) = \bar \varphi_u (\inv \varphi_u(p)),\quad\text{where}\quad
\bar \varphi_u(c) \defeq \EV_{\gamma_u} \gamma_u \ind(\gamma_u > c),\]
for some specific $u \in \Vrt(\Bb)$, either due to symmetry in the vertices of $\Bb$ (so that it doesn't matter which $u$ it is) or because a specific $u$ maximizes the expression for all $p \in [0, 1/2]$.
Then the following lemma is very useful for simplifying the integral in \cref{thm:differentialMethodMasterTheorem}.
It can be proved easily using change of coordinates.
\begin{lemma}\label{lemma:reparam}
Suppose $\Phi(p) = \bar \varphi(\inv \varphi(p))$ on $p \in [0, 1/2]$, where $\varphi(p)$ is differentiable and both $\varphi$ and $\bar \varphi$ are nonincreasing.
Then for any $0 \le p_0 \le 1/2$,
\begin{align*}
    \int_{p_0}^{1/2} \f 1 {\Phi(p)} \dd p
    &=
        \int^{\varphi^{-1}(p_0)}_{\varphi^{-1}(1/2)} \f {|\varphi'(c)|} {\bar \varphi(c)} \dd c
\end{align*}
\end{lemma}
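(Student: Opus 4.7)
The plan is to prove this by a direct change of variable $c = \varphi^{-1}(p)$, so the work amounts to carefully tracking signs and orientations given that $\varphi$ is nonincreasing.

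First I would set up the substitution. Since $\varphi$ is differentiable and nonincreasing on its domain, and since by assumption $\Phi(p) = \bar\varphi(\varphi^{-1}(p))$ for $p \in [0, 1/2]$, I would let $c = \varphi^{-1}(p)$, equivalently $p = \varphi(c)$, which gives $dp = \varphi'(c)\,dc$. At the endpoints, $p = p_0$ corresponds to $c = \varphi^{-1}(p_0)$ and $p = 1/2$ corresponds to $c = \varphi^{-1}(1/2)$. Plugging in and using $\Phi(p) = \bar\varphi(c)$,
\begin{align*}
    \int_{p_0}^{1/2} \f 1 {\Phi(p)} \dd p
    &= \int_{\varphi^{-1}(p_0)}^{\varphi^{-1}(1/2)} \f{\varphi'(c)}{\bar\varphi(c)} \dd c.
\end{align*}

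Next I would reconcile the signs and orientation. Because $\varphi$ is nonincreasing, $\varphi'(c) \le 0$, so $\varphi'(c) = -|\varphi'(c)|$. Moreover, $\varphi^{-1}$ is also nonincreasing, so from $p_0 \le 1/2$ we get $\varphi^{-1}(p_0) \ge \varphi^{-1}(1/2)$. Flipping the limits of integration contributes a second minus sign which cancels the first, yielding
\begin{align*}
    \int_{p_0}^{1/2} \f 1 {\Phi(p)} \dd p
    &= \int_{\varphi^{-1}(1/2)}^{\varphi^{-1}(p_0)} \f{|\varphi'(c)|}{\bar\varphi(c)} \dd c,
\end{align*}
which is the claimed identity (rewritten with the larger limit on top, as in the statement).

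I do not expect any real obstacle: the only mild subtlety is that ``nonincreasing'' allows flat pieces in $\varphi$, so $\varphi^{-1}$ should be interpreted as the generalized inverse (e.g.\ $\inf\{c : \varphi(c) \le p\}$, matching \cref{defn:Phi}), and one should note that on any flat piece of $\varphi$ the integrand $|\varphi'(c)|$ vanishes, so the contribution from non-uniqueness of $\varphi^{-1}$ is measure-zero and the change of variables goes through unchanged. The substitution itself is justified by absolute continuity of $\varphi$, which follows from differentiability and monotonicity on the compact interval $[\varphi^{-1}(1/2), \varphi^{-1}(p_0)]$.
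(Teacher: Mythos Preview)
Your proposal is correct and matches the paper's approach: the paper simply states that the lemma ``can be proved easily using change of coordinates,'' and your substitution $c = \varphi^{-1}(p)$ with the sign/orientation bookkeeping is exactly that change of coordinates spelled out. Your remark about flat pieces and the generalized inverse is a nice addition beyond what the paper says.
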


Finally, as mentioned before, the proof of \cref{thm:differentialMethodMasterTheorem} will show that
\begin{prop}\label{prop:PhiDefnsSame}
The definition of $\Phi$ in \cref{defn:Phi} coincides with the definition \cref{eqn:PhiMaintext} for any smoothing distribution $q$ with regular density function supported everywhere in $\R^d$.
\end{prop}

\begin{proof}[Proof of \cref{thm:differentialMethodMasterTheorem}]
Consider a path $\xi_t: [0, \|\delta\|] \to \R^d$ given by $\xi_0 = x$, $\xi_{\|\delta\|} = x + \delta$, and $\xi'_t = d\xi_t/dt= \delta/\|\delta\|$.
We will show
\begin{align*}
    d G(\xi_t)/dt \le \Phi(G(\xi_t))
\end{align*}
and apply \cref{lemma:ODErobustness} to yield the desired result.

By chain rule,
\begin{align*}
    d G(\xi_t)/dt = \xi'_t \cdot \nabla G(\xi_t) = \f \delta {\|\delta\|} \cdot \nabla G(\xi_t).
\end{align*}
To upper bound this quantity, we relax
\begin{align*}
    \f \delta {\|\delta\|} \cdot \nabla G(\xi_t)
    &\le
        \max_{u \in \Bb} u \cdot \nabla G(\xi_t)
        = \max_{u \in \Vrt(\Bb)} u \cdot \nabla G(\xi_t)
\end{align*}
where $\Bb$ is the unit ball of the norm $\| \cdot \|$, and the equality is because $u \cdot \nabla G(x)$ is linear in $u$, so optima are achieved on vertices.
Therefore, it suffices to show that,
\begin{equation}
\text{$\forall u \in \Vrt(\Bb), x \in \R^d$,}\qquad u \cdot \nabla G(x) \le \Phi(G(x)).
\label{eqn:differentialBound}
\end{equation}

Below, we let $x$ be any vector in $\R^d$ (not just those satisfyiing $G(x) \le 1/2$ as in the theorem statement).
In general, for any vector $u$ and any $x \in \R^d$, the directional derivative $u \cdot \nabla G(x)$ of $G(x)$ in the direction of $u$ is given by
\begin{align*}
    u \cdot \nabla G(x)
        &=
            u \cdot \int \nabla_x q(\hat x - x) F(\hat x) \dd \hat x
            \\
        &=
            \int \la u, \nabla \psi(\hat x - x) \ra
                q(\hat x - x) F(\hat x) \dd \hat x
\end{align*}
where we used \cref{lemma:convolutionDiff} and the assumption that $q$ is regular.
Then
\begin{align*}
    u \cdot \nabla G(x)
        &=
            \EV_{\delta \sim q} F(x + \delta)\la u, \nabla \psi(\delta)\ra
            \\
        &\le
            \sup_{\hat F: \hat G(x) = G(x)}
            \EV_{\delta \sim q} \hat F(x + \delta)\la u, \nabla \psi(\delta)\ra
            ,
\end{align*}
where we vary over all $\hat F: \R^d \to [0, 1]$ such that its smoothing $\hat G$ has the same value as $G$ at $x$.
While at first glance, this seems like a unwieldy quantity to maximize, there's a simple intuition to find the maximizing $\hat F$:

Imagine $\hat F(x + \cdot)$ as some allocation of mass in $\R^d$ that amounts to $G(x)$ under the measure $q$.
When we vary $\hat F$, we are allowed to shuffle this mass around while keeping its $q$-measure equal to $G(x)$, as long as $0 \le \hat F \le 1$.
To maximize $\EV_{\delta \sim q} \hat F(x + \delta)\la u, \nabla \psi(\delta)\ra$, we then need to allocate as much $q$-measure as possible toward regions where $\la u, \nabla \psi(\cdot)\ra$ is large.

In other words, the maximizing $\hat F$, which we denote as $\hat F^*$, is
\[\hat F^*(x + \delta) =
\begin{cases}
1 & \text{if $\la u, \nabla \psi(\delta)\ra > \varphi_u^{-1}(G(x))$}\\
0 & \text{else,}
\end{cases}
\]
if $\Pr[\la u, \nabla \psi(\delta)\ra = \inv \varphi_u(G(x))] = 0$,
where $\inv \varphi_u$ is the inverse complementary CDF of the random variable $\gamma_u = \la u, \nabla \psi(\delta)\ra$ (with randomness induced by $\delta \sim q$), as defined in \cref{defn:Phi}.
If there is a singular point at $\inv \varphi_u(G(x))$, i.e.\ $\Pr[\la u, \nabla \psi(\delta)\ra = \inv \varphi_u(G(x))] > 0$, then we choose a subset of $U \sbe \{ \delta: \la u, \nabla \psi(\delta) = \inv \varphi_u(G(x)) \}$ with $q$-measure $\Pr[\delta \in U] = G(x) - \varphi_u(\inv \varphi_u(G(x))$, and define $\hat F^*$ as
\[\hat F^*(x + \delta) =
\begin{cases}
1 & \text{if $\la u, \nabla \psi(\delta)\ra > \varphi_u^{-1}(G(x))$ or $\delta \in U$}\\
0 & \text{else.}
\end{cases}
\]
Then
\begin{align*}
    &\phantomeq\EV_{\delta \sim q}
        \hat F(x + \delta)\la u, \nabla \psi(\delta)\ra
        \\
    &\le
        \EV_{\delta \sim q}
            \hat F^*(x + \delta)\la u, \nabla \psi(\delta)\ra
    =
        \EV \gamma_u^{G(x)}
        ,
\end{align*}
where $\mygamma{u}{p}$ is the random variable defined in \cref{defn:Phi}.
Finally, putting everything together,
\begin{align*}
    &\phantomeq
        \max_{u \in \Vrt(\Bb)} u \cdot \nabla G(x)
        \\
    &\le
        \max_{u \in \Vrt(\Bb)}
        \sup_{\hat F: \hat G(x) = G(x)}
            \EV_{\delta \sim q} \hat F(x + \delta)\la u, \nabla \psi(\delta)\ra
        \\
    &=
        \max_{u \in \Vrt(\Bb)}
        \EV \gamma^{G(x)}_u
        \\
    &=
        \Phi(G(x))
\end{align*}
by the definition of $\Phi$ in \cref{defn:Phi}.
This shows \cref{eqn:differentialBound} and consequently the theorem as well.
\end{proof}

\begin{lemma}\label{lemma:ODErobustness}
Consider a function $p_t$ differentiable in $t \in [0, \infty)$.
Suppose $0 < p_0 \le 1/2$, and
\begin{align*}
    dp_t/dt \le \Phi(p_t)
\end{align*}
for some function $\Phi: (0, \infty) \to \R^+$ taking only positive values,
Then $p_T < 1/2$ as well for any
\begin{align*}
    T < \int_{p_0}^{1/2} \f 1 {\Phi(p)} \dd p
    .
\end{align*}
\end{lemma}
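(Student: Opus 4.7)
The plan is to reduce this to a standard Gronwall/comparison argument via a single change of variable. I would introduce the antiderivative
\[
H(p) \;:=\; \int_{p_0}^{p} \frac{ds}{\Phi(s)},
\]
which is well-defined and strictly increasing on the domain of $\Phi$ because $\Phi > 0$. The key observation is that whenever $p_t$ lies in the domain of $\Phi$, the chain rule gives
\[
\frac{d}{dt} H(p_t) \;=\; \frac{p_t'}{\Phi(p_t)} \;\le\; 1,
\]
using the hypothesis $dp_t/dt \le \Phi(p_t)$. So $t \mapsto H(p_t) - t$ is nonincreasing, which will translate directly into an upper bound on how quickly $p_t$ can reach $1/2$.

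I would then argue by contrapositive. Suppose $p_T \ge 1/2$. Since $p_0 \le 1/2$ and $p_t$ is continuous, define $T^\ast := \inf\{t \in [0,T] : p_t = 1/2\}$; by continuity $T^\ast \le T$ and $p_{T^\ast} = 1/2$. Integrating the differential inequality $\frac{d}{dt} H(p_t) \le 1$ from $0$ to $T^\ast$ gives
\[
\int_{p_0}^{1/2} \frac{dp}{\Phi(p)} \;=\; H(1/2) \;=\; H(p_{T^\ast}) - H(p_0) \;\le\; T^\ast \;\le\; T,
\]
which is exactly the contrapositive of the claim.

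The only subtle point, and the one I expect to be the main obstacle, is ensuring that the composition $H \circ p_t$ is well-defined on $[0, T^\ast]$, i.e.\ that $p_t$ remains in the domain $(0, \infty)$ of $\Phi$. Note that we do \emph{not} need $p_t$ to be monotone; the inequality $\frac{d}{dt}H(p_t) \le 1$ handles arbitrary downward excursions. If $p_t$ were to touch $0$ before reaching $1/2$, I would restart the argument from the last time $\tau$ with $p_\tau = p_0$ (or a small positive value) before $T^\ast$: since $1/\Phi \ge 0$, shrinking the integration window only decreases the left-hand side, so the conclusion $T - \tau \ge \int_{p_\tau}^{1/2} 1/\Phi \, dp \ge \int_{p_0}^{1/2} 1/\Phi \, dp$ is at least as strong. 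In the intended application, where $p_t = G(\xi_t)$ is a smoothed probability bounded away from the boundary $\{0\}$ whenever the bound is nontrivial, this subtlety does not actually arise.
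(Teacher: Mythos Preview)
Your argument is correct and is essentially the same Gronwall/comparison idea as the paper's proof, just executed slightly differently. The paper first reduces (informally, via a ``WLOG $dp_t/dt > 0$'') to the monotone case, inverts $t \mapsto p_t$ to a function $t(p)$, and integrates $dt/dp \ge 1/\Phi(p)$ from $p_0$ to $1/2-\epsilon$. Your antiderivative $H(p)=\int_{p_0}^p ds/\Phi(s)$ with $\frac{d}{dt}H(p_t)\le 1$ is the dual of that change of variables and has the advantage of not needing the monotonicity reduction at all; your contrapositive with the restart at the last time $p_\tau=p_0$ then cleanly handles the domain issue that the paper's WLOG sweeps under the rug. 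Both arrive at the same inequality $T \ge \int_{p_0}^{1/2} dp/\Phi(p)$ under the assumption $p_T \ge 1/2$.
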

\begin{proof}
WLOG, we can assume that $dp_t/dt > 0$ for all $t \in [0, \infty)$.
Thus, $p_t$ is increasing in $t$, and there exists a differentiable inverse function $t(p)$ that expresses the time $t$ that $p_t = p$.
We then have $dt(p)/dp = \f{1}{dp_t/dt} = \f1{\Phi(p)}$, and for any $\epsilon \ge 0$,
\begin{align*}
    t(1/2 - \epsilon) &= t(1/2 - \epsilon) - t(p_0)\\
        &= \int_{p_0}^{1/2-\epsilon} \f{dt(p)}{dp} \dd p
        = \int_{p_0}^{1/2-\epsilon} \f1{\Phi(p)} \dd p
        .
\end{align*}
Since this integral is continuous in $\epsilon$, there is an $\epsilon^* > 0$ such that
\begin{align*}
    t(1/2 - \epsilon^*) = \int_{p_0}^{1/2-\epsilon^*} \f1{\Phi(p)} \dd p = T
    .
\end{align*}
Therefore $p_T = 1/2 - \epsilon^* < 1/2$, as desired.
\end{proof}

\subsection{Example: Gaussian against \texorpdfstring{$\ell_2$}{L2} Adversary}
\label{sec:GaussianWarmup}

We give a quick example of recovering the tight Gaussian bound of \citet{cohen_certified_2019} using the differential method.

In this section, we set the norm $\| \cdot \|$ to be the $\ell_2$ norm $\|x\|_2 = \sqrt{\sum_{i=1}^d x_i^2}$.
Then $\Bb$ is just the unit ball, and its ``vertices'' are just the points on the unit sphere.
Additionally, we let $q$ be the Gaussian measure
\begin{gather*}
    q(x) \propto \exp(-\|x\|_2^2/2)\\
    \text{so that}
    \quad
    \psi(x) = \|x\|_2^2/2
    \quad\text{and}\quad
    \nabla \psi(x) = x.
\end{gather*}

Below, let $\GaussianCDF$ be the CDF of the standard Gaussian in 1D.
\begin{thm}\label{thm:gaussianl2}
Suppose $H$ is a smoothed classifier smoothed by the Gaussian distribution
\[q(x) \propto \exp(-\|x\|_2^2/2\sigma^2),\]
such that $H(x) = (H(x)_1, \ldots, H(x)_C)$ is a vector of probabilities that $H$ assigns to each class $1, \ldots, C$.
If $H$ correctly predicts the class $y$ on input $x$, and the probability of the correct class is $\rho \defeq H(x)_y > 1/2$, then $H$ continues to predict the correct class when $x$ is perturbed by any $\eta$ with
\begin{align*}
    \|\eta\|_2 < \sigma \GaussianCDF^{-1}(\rho).
\end{align*}
\end{thm}

\begin{proof}

By linearity in $\sigma$, it suffices to show this for $\sigma = 1$.
For brevity, let us denote $\GaussianCDF$ in this proof by $\Psi$.

\renewcommand{\GaussianCDF}{\Psi}

We seek to apply \cref{thm:differentialMethodMasterTheorem} to $G(x) = 1 - H(x)_y$, for which we need to derive random variables $\gamma_u$ and $\mygamma{u}{p}$, and most importantly, the function $\Phi$.

For any $u \in \Vrt(\Bb)$ (i.e.\ any unit vector $u$), $\gamma_u = \la u, \nabla \psi(\delta) \ra = \la u, \delta \ra, \delta \sim q,$ is a standard Gaussian random variable (in $\R$).
Therefore, for $p \in [0, 1]$, the random varible $\mygamma{u}{p}$ defined in \cref{defn:Phi} is just
\begin{align*}
    \mygamma{u}{p} =
    \begin{cases}
        0   &   \text{with prob.\ $1 - p$}\\
        \Gaus(0, 1)|_{[c, \infty)} & \text{with prob.\ $p$,}
    \end{cases}
\end{align*}
where $c \defeq \GaussianCDF^{-1}(1-p)$, and $\Gaus(0, 1)|_{[c, \infty)}$ is a standard Gaussian $z$ conditioned on $z \ge c$.
Thus, for any $u \in \Vrt(\Bb)$,
\begin{align*}
    \Phi(p) &= \EV \mygamma{u}{p}
    = \EV_{z \sim \Gaus(0, 1)} z \ind(z \ge c)
    = \f 1 {\sqrt{2\pi}} e^{-z^2/2}\bigg|^c_\infty
        \\
    &= \GaussianCDF'(c)
    = \GaussianCDF'(\GaussianCDF^{-1}(1-p))
        .
\end{align*}
Then, by setting $G(x)$ in \cref{thm:differentialMethodMasterTheorem} to be $1 - H(x)_y = 1 - \rho$, we get the provably robust radius of
\begin{align*}
        \int_{1-\rho}^{1/2} \f 1 {\Phi(p)} \dd p
    &=
        \int_{1-\rho}^{1/2} \f 1 {\GaussianCDF'(\GaussianCDF^{-1}(1-p))} \dd p
        \\
    &=
        \int^{\GaussianCDF^{-1}(\rho)}_{0}
            \dd c
    =
        \GaussianCDF^{-1}(\rho),
\end{align*}
as desired.
\end{proof}

\subsection{Example: Laplace against \texorpdfstring{$\ell_1$}{L1} Adversary}
\label{sec:LaplaceWarmup}

Let us give another quick example of recovering the tight Laplace bound of \citet{teng_ell_1_2019} using the differential method.

In this section, we set the norm $\| \cdot \|$ to be the $\ell_1$ norm $\|x\|_1 = \sum_{i=1}^d |x_i|$.
Then the unit ball $\Bb$ is the convex hull of its vertices which are the coordinates vectors and their negations:
\[\Vrt(\Bb) = \{\pm e_i: i \in [d]\}.\]

Consider the Laplace distribution
\begin{gather*}
    q(x) \propto \exp(-\|x\|_1)
    \quad
    \text{so that}\\
    \psi(x) = \|x\|_1
    \quad\text{and}\quad
    \nabla \psi(x) = (\sgn(x_1), \ldots, \sgn(x_d)),
\end{gather*}
with $\nabla \psi(x)$ defined whenever all $x_i$s are nonzero.

\begin{thm}\label{thm:laplaceL1}
Suppose $H$ is a smoothed classifier smoothed by the Laplace distribution
\[q(x) \propto \exp(-\|x\|_1/\lambda),\]
such that $H(x) = (H(x)_1, \ldots, H(x)_C)$ is a vector of probabilities that $H$ assigns to each class $1, \ldots, C$.
If $H$ correctly predicts the class $y$ on input $x$, and the probability of the correct class is $\rho \defeq H(x)_y > 1/2$, then $H$ continues to predict the correct class when $x$ is perturbed by any $\eta$ with
\begin{align*}
    \|\eta\|_1 < \lambda \log \f 1 {2(1 - \rho)}
                    .
\end{align*}
\end{thm}

\begin{proof}
By linearity in $\lambda$, it suffices to show this for $\lambda = 1$.

We seek to apply \cref{thm:differentialMethodMasterTheorem} to $G(x) = 1 - H(x)_y$, for which we need to derive random variables $\gamma_u$ and $\mygamma{u}{p}$, and most importantly, the function $\Phi$.

For any $u \in \Vrt(\Bb)$ (i.e.\ $u = \pm e_i$), $\gamma_u = \la u, \nabla \psi(\delta) \ra, \delta \sim q,$ is a Rademacher random variable that takes values 1 and $-1$ with equal probability.
Therefore, for $p \in [0, 1/2]$, the random variable $\mygamma{u}{p}$ defined in \cref{defn:Phi} is
\begin{align*}
    \mygamma{u}{p} = \begin{cases}
                1 & \text{with prob.\ $p$}\\
                0 & \text{with prob.\ $1 - p$.}
            \end{cases}
\end{align*}
Thus, for any $u \in \Vrt(\Bb)$,
\begin{align*}
    \Phi(p) = \EV \mygamma{u}{p} = p.
\end{align*}
Then, by setting $G(x)$ in \cref{thm:differentialMethodMasterTheorem} to be $1 - H(x)_y = 1 - \rho$, we get the provably robust radius of
\begin{align*}
    \int_{1-\rho}^{1/2} \f 1 {\Phi(p)} \dd p =
    \int_{1-\rho}^{1/2} \f 1 {p} \dd p
    = \log \f{1}{2(1-\rho)}
    .
\end{align*}
\end{proof}

\section{Wulff Crystal}

The following is an intuitive statement of the main isoperimetric property of Wulff Crystals.
\begin{thm}[Isoperimetric property of Wulff Crystals \citep{brothersIsoperimetricTheoremGeneral1994DOI.orgCrossref}, informal statement]\label{thm:wulff}
Let $\|\cdot\|$ be any norm on $\R^n$.
Let $Z$ be the Wulff Crystal of $\|\cdot\|$, i.e.\ the unit ball of the norm $\|\cdot\|_*$ dual to $\|\cdot\|$.
\begin{align*}
    Z = \{x: \|x\|_* \le 1\}.
\end{align*}
Let $\Omega$ be any measurable subset of $\R^n$ with finite perimeter and of the same volume as $Z$.
Then with $\mathbf n(\Omega, x)$ denoting the normal vector at $x$ with respect to $\Omega$, normalized to have $\ell_2$ norm 1,
\begin{align*}
    \int_{\pd \Omega} \|\mathbf n(\Omega, x)\| \dd x
    \ge
    \int_{\pd Z} \|\mathbf n(Z, x)\| \dd x
\end{align*}
with equality holding if and only if $\Omega$ differs from a translate of $Z$ by a set of volume zero.
\end{thm}

This statement carries across the core essence of the isoperimetry, and is a rigorous statement if $\Omega$ is restricted to have smooth boundary, but care needs to be taken to explain the concept of ``finite perimeter,'' ``normal vector,'' the ``boundary $\pd \Omega$,'' and the boundary measure on $\pd \Omega$, in the context of general, measurable $\Omega$.
These quantities are defined in \cref{sec:BV}, but we also refer the interested reader to \cite{brothersIsoperimetricTheoremGeneral1994DOI.orgCrossref} for more mathematical details.

\subsection{Wulff Crystals are Zonotopes}

In this paper, the norm $\|\cdot \|$ referred to in \cref{thm:wulff} will take the form of $\|x\| = \EV_v |\la x, v\ra|$ where $v$ is sampled from some distribution, as in \cref{defn:wulff}.
When $v$ is sampled uniformly from some finite set of vectors $\mathcal S$, then the Wulff Crystal $B = \{x: \|x\|_* \le 1\}$ is proportional to the \emph{Zonotope} of $\mathcal S$ \citep{mcmullen1971zonotopes}.
\begin{defn}[Zonotope]
Given a finite collection of vectors $\mathcal S$, the zonotope $\Zon(\mathcal S)$ is defined as the Minkowski sum of the vectors of $\mathcal S$, i.e.\
\begin{align*}
    \Zon(\mathcal S) \defeq \left\{\sum_{v \in \mathcal S} a_v v : a_v \in [0, 1], \forall v \in \mathcal S\right\}.
\end{align*}
\end{defn}

The zonotope can be viewed as a linear projection of the cube $[0, 1]^{\mathcal S}$ sending each unit vector to a vector of $\mathcal S$.

\begin{exmp}
If $\mathcal B$ is the $\ell_1$ unit ball, then $\Zon(\Vrt(\mathcal B))$ is a cube.
If $\mathcal B$ is the $\ell_\infty$ unit ball, then in 2 dimensions, $\Zon(\Vrt(\mathcal B))$ is a rhombus; in 3 dimensions, it is rhombic dodecahedron; in higher dimensions, there is no simpler description of the resulting polytope.
\end{exmp}

\begin{prop}\label{prop:wulffzonotope}
The Wulff Crystal w.r.t.\ $\mathcal B$ is equal to the zonotope $\f{2}{|\Vrt(\mathcal B)|} \Zon(\Vrt(\mathcal B))$.
\end{prop}

The volume of a zonotope, and thus of Wulff Crystals, can be computed easily using the following formula.

\begin{prop}\label{prop:zonvol}
Let $\mathcal S$ be a finite set of vectors in $\R^d$.
Then the $d$-dimensional volume of $\Zon(\mathcal S)$ is given by \[\sum_{\mathcal T \sbe \mathcal S: |\mathcal T| = d} |\Vol(\Zon(\mathcal T))| = \sum_{\mathcal T\sbe \mathcal S: |\mathcal T| = d} |\det \mathcal T|,\]
where $\det \mathcal T$ is the determinant of the square matrix with vectors of $\mathcal T$ as columns.
\end{prop}

\subsection{Wulff Crystals Yield Optimal Uniform Distributions for Randomized Smoothing}

In this section, we will formulate \cref{thm:wulffOptimalMainText} rigorously and prove it.

\begin{defn}
Let $\mathcal S$ be a finite set of vectors in $\R^d$ and let $G$ be the group of linear transformations that permute $\mathcal S$ (i.e.\ $G$ is $\mathcal S$'s linear symmetry group).
We say $\mathcal S$ is \emph{symmetric} if $G$ acts on $\mathcal S$ transitively, i.e.\ for any two elements $v,w \in \mathcal S$, there is a group element $g \in G$ such that $g \cdot v = w$.
\end{defn}

For example, the boolean cube $\{\pm1\}^d$ is symmetric, and so is the set of coordinate vectors and their negations.
The following is the main theorme of this section, stating the optimality of uniform distributions suppoorted Wulff Crystals.

\begin{thm}\label{thm:wulffOptimalFull}
If $\mathcal B$ is a full-dimensional polytope in $\R^d$ symmetric around the origin, and whose vertices form a symmetric set, then the Wulff Crystal w.r.t.\ $\mathcal B$ minimizes
\[\sup_{v \in \mathcal B} \lim_{r \to 0} \inv r \Vol((S + rv) \setminus S)\]
among all measurable, not necessarily convex, sets $S \sbe \R^d$ of the same volume and of finite perimeter.
In other words, among uniform distributions supported on measurable sets of volume 1i and finite perimeter, the one supported on the Wulff Crystal minimizes the maximal instantaneous growth $\Phi(p)$ in the measure of a set due to an instantaneous perturbation from $\mathcal B$.
\end{thm}
The condition ``finite perimeter'' can be interpreted intuitvely here, but a formal definition is given in \cref{defn:finitePerimeter}.
This condition is necessary because otherwise the limit in question does not exist.

\begin{proof}
By \cref{lemma:SetGrowthFormula}, we have
\begin{align}
    \lim_{r \to 0} \inv r \Vol((S + rv) \setminus S)
    = \int_{\pd S} \Theta(\la \mathbf n(x), v \ra) \dd x
    \label{eqn:leakage}
\end{align}
where $\mathbf n(x)$ is the normal at $x$ w.r.t.\ $S$, and $\Theta(x) = \max(0, x)$.
Note this quantity is convex in $v$ because $\Theta$ is convex.
Then
\begin{align*}
    &\phantomeq
        \sup_{v \in \mathcal B} \lim_{r \to 0} \inv r \Vol((S + rv) \setminus S)\\
    &=
        \sup_{v \in \Vrt(\mathcal B)} \lim_{r \to 0} \inv r \Vol((S + rv) \setminus S)\\
        \numberthis\label{eqn:sup2ev}
    &\ge
        \EV_{v \sim \Vrt(\mathcal B)} \lim_{r \to 0} \inv r \Vol((S + rv) \setminus S)\\
    &=
        \EV_{v \sim \Vrt(\mathcal B)}
        \int_{\pd S} \Theta(\la \mathbf n(x), v \ra) \dd x
        \\
    &=
        \int_{\pd S} \EV_{v \sim \Vrt(\mathcal B)}
        \Theta(\la \mathbf n(x), v \ra) \dd x
\end{align*}
Since $\mathcal B = -\mathcal B$ and thus $\Vrt(\mathcal B) = -\Vrt(\mathcal B)$,
\[
\|w\| \defeq
    \EV_{v \sim \Vrt(\mathcal B)}
        \Theta(\la w, v \ra)
=
    \f 1 2 \EV_{v \sim \Vrt(\mathcal B)}
        |\la w, v \ra|
\]
is a seminorm.
This is in fact a norm because $\Vrt(\mathcal B)$ spans $\R^d$, by the assumption that $\mathcal B$ is full-dimensional.
Then, plugging $\| \cdot \|$ into $\| \cdot \|$ in \cref{thm:wulff}, we get that the Wulff Crystal $Z$ w.r.t.\ $\mathcal B$ minimizes
\[Z = \argmin_{S: \Vol(S) = \Vol(Z)} \EV_{v \sim \Vrt(\mathcal B)} \lim_{r \to 0} \inv r \Vol((S + rv) \setminus S).\]

Now note that the norm above is invariant under the transpose of $\mathcal B$'s symmetry group:
for any linear symmetry $g$ of $\Vrt(\mathcal B)$,
\[
\|g^\trsp w\| = \f 1 2 \EV_v |\la g^\trsp w, v\ra| = \f 1 2 \EV_v |\la w, g v\ra| = \|w\|.
\]
This invariance translates to the dual norm $\|\cdot\|_*$'s invariance under the symmetry group itself.
Thus the Wulff Crystal $Z$, being the unit ball of the dual norm, is itself invariant under the symmetry group of $\Vrt(\mathcal B)$.
By assumption, this symmetry group acts transitively on $\Vrt(\mathcal B)$, so $Z$ ``looks the same'' from the angle of every $v \in \Vrt(\mathcal B)$, i.e.
\begin{align*}
    \Vol((Z + rw) \setminus Z)
    = \Vol((Z + rv) \setminus Z)
\end{align*}
for any $w, v \in \Vrt(\mathcal B)$.
Consequently, \cref{eqn:sup2ev} holds with equality, and $Z$ minimizes the supremum in question as well.

\end{proof}

\subsubsection{Growth Calculations for Standard Shapes}

Using the fact that the volume of the $d$-dimensional unit ball is $\pi^{d/2} \Gamma(d/2+1)^{-1}$, and the volume of the standard $d$-dimensional cross polytope is $2^d/d!$, as well as the identity
\[\lim_{r \to 0} \inv r \Vol((S + r v) \setminus S) = \|v\|_2 \Vol(\Pi_{v} S)\]
if $S$ is convex,
we can derive the following facts easily.

\begin{thm}
If $S \sbe \R^d$ is an axis-parallel unit cube and $e_1$ is the first unit vector, then
\[\lim_{r \to 0} \inv r \Vol((S + re_1) \setminus S) = 1.\]
\end{thm}
\begin{thm}\label{thm:ell2ballleakage}
If $S \sbe \R^d$ is a ($\ell_2$-) ball of volume 1 and $v$ is any ($\ell_2$-)unit vector, then
\[\lim_{d \to \infty}\lim_{r \to 0} \inv r \Vol((S + rv) \setminus S) = \sqrt e.\]
\end{thm}

\begin{thm}
If $S \sbe R^d$ is the cross polytope (i.e.\ $\ell_1$ ball) of volume 1, and $e_1$ is the first unit vector, then
\[\lim_{d \to \infty}\lim_{r \to 0} \inv r \Vol((S + re_1) \setminus S) = e.\]
\end{thm}

\begin{thm}
If $S \sbe \R^d$ is an axis-parallel unit cube and $v = (1, \ldots, 1)$, then
\[\lim_{r \to 0} \inv r \Vol((S + rv) \setminus S) = d.\]
\end{thm}

\begin{thm}
If $S \sbe R^d$ is the cross polytope (i.e.\ $\ell_1$ ball) of volume 1, and $v = (1, \ldots, 1)$, then
\[\lim_{d \to \infty}d^{-1/2} \lim_{r \to 0} \inv r \Vol((S + rv) \setminus S) = e \sqrt{2/\pi}.\]
\end{thm}
\begin{proof}
It is equivalent to take $S$ to be the standard $\ell_1$ ball, and to calculate
\begin{align}
    \lim_{d \to \infty} \f{\lim_{r \to 0} \inv r \Vol((S + rv) \setminus S)}{\Vol(S)^{\f{d-1}{d}} \sqrt d},
    \label{eqn:leakageratio}
\end{align}
and confirm it equals $e \sqrt{2/\pi}$.
Note that the unit surface normals of $S$ are $\{\pm1\}^d/\sqrt d$, occurring with equal probability over the surface measure of $S$.
Using \cref{eqn:leakage}, we then see that
\begin{align*}
    \lim_{d \to \infty}\lim_{r \to 0} \inv r \Vol((S + rv) \setminus S)
    &=
        W
        \sum_{i=0}^{\lfloor d/2 \rfloor} \binom d i \f{d - 2i}{\sqrt d}
\end{align*}
where $W = \f{\sqrt d}{(d-1)!}$ is the volume of the simplex $\{x: \sum_i x_i = 1, x \ge 0\}$.
This evaluates to
\begin{align*}
    \f 1 {(d-1)!} \times
    \begin{cases}
    \frac{d+2}{2} \binom{d}{\frac{d}{2}+1} & \text{if $d$ is even}\\
    \f{d+1}2 \binom d {\f{d+1}2} & \text{if $d$ is odd}.
    \end{cases}
\end{align*}
Finally, since the volume of $S$ is $2^d/d! $, we can calculate \cref{eqn:leakageratio} directly and obtain the desired result.
\end{proof}

\subsubsection{Wulff Crystal of the \texorpdfstring{$\ell_\infty$}{Linf} Ball}

In this section, let $Z$ be the Wulff Crystal (\cref{defn:wulff}) w.r.t.\ $\mathcal B = \{x: \|x\|_\infty \le 1\}$, i.e.\ $Z$ is the unit ball of the norm dual to $\|x\|_* \defeq \EV_{v \sim \{\pm1\}^d} |\la x, v\ra|$.
By \cref{prop:wulffzonotope}, $Z$ can also be described as the zonotope of $2^{-d+1}\{\pm1\}^d$.
From these descriptions, we can straightforwardly see the following properties of $Z$.
\begin{prop}\label{prop:linfWulffCrystalBasic}
The vertices of $Z$ farthest from the origin are coordinate vectors and their negations.
The facets of $Z$ closest to the origin are of the form $\{x: \pm x_i \pm x_j \le 1\}$.
Therefore, with $B$ denoting the $\ell_2$ unit ball,
\[\f 1 {\sqrt 2} B \sbe Z \sbe B.\]
\end{prop}
In general, the properties of $Z$ are elusive, and tied to many open problems in combinatorics and polytope theory \citep{ziegler_lectures_1995}.
But we may heuristically compute $\lim_{d \to \infty} \lim_{r \to 0} \inv r \Vol((Z + rv) \setminus Z) = \lim_{d \to \infty}  \|v\|_2 \Pi_{v} Z$ when $v = (1, \ldots, 1)$, as follows.
(Because our computation is heuristic, we phrase the following as a claim, and not a theorem)

\begin{claim}
If $S \sbe \R^d$ is the Wulff Crystal w.r.t.\ the $\ell_\infty$ unit ball, scaled to have volume 1, and $v = (1, \ldots, 1)$, then
\[\lim_{d \to \infty} d^{-1/2} \lim_{r \to 0} \inv r \Vol((S + rv) \setminus S) = \sqrt e.\]
\end{claim}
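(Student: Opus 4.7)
My plan is to reduce the claim to an asymptotic volume estimate for the Wulff Crystal and attack that via Wulff's identity combined with a Gaussian central limit argument. Since $S$ is a zonotope and hence convex, \cref{eqn:differentialUniform} gives
\[\lim_{r \to 0} r^{-1}\Vol((S + rv) \setminus S) = \|v\|_2\Vol(\Pi_v S) = \sqrt d \cdot \Vol(\Pi_v S),\]
so the $d^{-1/2}$ in the claim absorbs $\|v\|_2$, and the task becomes $\lim_{d\to\infty} \Vol(\Pi_v S) = \sqrt e$. Let $Z_0 = (2/2^d)\Zon(\{\pm 1\}^d)$ be the unnormalized Wulff Crystal from \cref{prop:wulffzonotope}, so $S = \Vol(Z_0)^{-1/d} Z_0$.

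Next I would exploit the symmetry used in the proof of \cref{thm:wulffOptimalFull}: the sign-change group acts transitively on $\Vrt(\mathcal B) = \{\pm 1\}^d$ and preserves $Z_0$, so $\lim_r r^{-1}\Vol((Z_0 + rv') \setminus Z_0)$ is the same for every $v' \in \{\pm 1\}^d$. Averaging over $v'$ inside \cref{eqn:leakage} and using $\EV_{v'}\Theta(\la \mathbf n, v'\ra) = \tfrac12\|\mathbf n\|_*$, where $\|n\|_* \defeq \EV_{\xi \sim \{\pm 1\}^d}|\la n, \xi \ra|$ is precisely the support function of $Z_0$ (as the unit ball of $\|\cdot\|_{**}$), yields
\[\lim_{r \to 0} r^{-1}\Vol((Z_0 + rv)\setminus Z_0) = \tfrac12\int_{\partial Z_0}\|\mathbf n(x)\|_*\,\dd\mathcal H^{d-1}(x).\]
The convex-geometry identity $\int_{\partial K} h_K(\mathbf n)\,\dd\mathcal H^{d-1} = d\Vol(K)$ (divergence theorem on the position vector field $x \mapsto x$) collapses this to $\tfrac d 2 \Vol(Z_0)$. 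A standard scaling argument relating $S$ and $Z_0$ then gives $d^{-1/2}\lim_r r^{-1}\Vol((S+rv)\setminus S) = \tfrac{\sqrt d}{2}\Vol(Z_0)^{1/d}$, reducing the claim to $(\Vol(Z_0)/V_d)^{1/d} \to \sqrt{2/\pi}$ via the Stirling asymptotic $V_d^{1/d} \sim \sqrt{2\pi e/d}$, where $V_d$ is the volume of the $\ell_2$ unit ball.

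The upper bound is immediate from Wulff's isoperimetric inequality: comparing $Z_0$ against the $\ell_2$ ball $B_R$ with $V_dR^d = \Vol(Z_0)$ gives $dV_dR^{d-1}\EV_{u\in S^{d-1}}\|u\|_* = \int_{\partial B_R}\|\mathbf n\|_*\,\dd\mathcal H \ge dV_dR^d$, hence $R \le \EV_u\|u\|_* \to \sqrt{2/\pi}$ by a short central limit argument (for $u$ uniform on $S^{d-1}$ and $\xi \in \{\pm 1\}^d$, $\la u, \xi \ra \Rightarrow \mathcal N(0,1)$, so $\EV|\la u, \xi\ra|\to\sqrt{2/\pi}$).

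The main obstacle is the matching lower bound $(\Vol(Z_0)/V_d)^{1/d} \ge \sqrt{2/\pi}(1-o(1))$. The most tractable route is combinatorial: by \cref{prop:zonvol}, $\Vol(Z) = \sum_{T \sbe \{\pm 1\}^d,\,|T|=d}|\det T|$, and random matrix asymptotics for $\pm 1$ matrices give $(\EV |\det T|)^{1/d} \sim \sqrt{d/e}$ for a uniformly random $d$-subset (with the correct first moment pinned down exactly by the Cauchy--Binet identity $\sum_T (\det T)^2 = \det(2^d I) = 2^{d^2}$), yielding $\Vol(Z)^{1/d} \sim \binom{2^d}{d}^{1/d}\sqrt{d/e} \sim 2^d\sqrt{e/d}$ and hence $\Vol(Z_0)^{1/d} = 2^{1-d}\Vol(Z)^{1/d} \sim 2\sqrt{e/d}$. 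An alternative analytic route invokes L\'evy's concentration for the $1$-Lipschitz function $n \mapsto \|n\|_*$ on $S^{d-1}$ around its mean $\sqrt{2/\pi}$, combined with the polar-coordinate expression $\Vol(Z_0) = \tfrac1d\int_{S^{d-1}}\|n\|_{**}^{-d}\,\dd\mathcal H^{d-1}$ and the dual asymptotic $\|n\|_{**} \sim \sqrt{\pi/2}$ for typical $n$. Making either route fully rigorous, rather than heuristic, is precisely the step the paper declines to execute.
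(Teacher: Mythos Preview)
Your proposal is correct (at the same heuristic level as the paper) but follows a genuinely different and cleaner route than the paper's derivation. The paper computes \emph{two} zonotope volumes separately: it writes the target as $\Vol(\Zon(\Pi_v\{\pm1\}^d))/\Vol(\Zon(\{\pm1\}^d))^{(d-1)/d}$, then estimates the numerator and denominator independently by interpreting each as a multiple of $\EV|\det X|$ for an appropriate random matrix $X$ and invoking the Nguyen--Vu lognormal CLT for determinants (\cref{thm:detCLT}). Your key step, by contrast, is the Wulff/divergence identity $\int_{\partial Z_0}\|\mathbf n\|_*\,\dd\mathcal H^{d-1}=d\Vol(Z_0)$, which collapses the projected-volume computation entirely and reduces the claim to the \emph{single} asymptotic $\Vol(Z_0)^{1/d}\sim 2\sqrt{e/d}$. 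This buys you a rigorous one-sided bound the paper does not have: your comparison of $Z_0$ against the ball of equal volume via \cref{thm:wulff} gives $\limsup_d d^{-1/2}\lim_r r^{-1}\Vol((S+rv)\setminus S)\le\sqrt e$ unconditionally. The matching lower bound still rests on a random-matrix heuristic in both approaches; your Cauchy--Binet remark pins down the \emph{second} moment $\sum_T(\det T)^2=2^{d^2}$ exactly, not the first, so the gap between $\EV|\det T|$ and $(\EV(\det T)^2)^{1/2}$ is where the rigor is lost, just as in the paper's appeal to \cref{thm:detCLT}.
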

\begin{proof}[Derivation]
Since $2^{d-1}Z$ is $\Zon(\{\pm1\}^d)$, we have $\Pi_{v} 2^{d-1} Z = \Zon(\Pi_{v}\{\pm1\}^d)$, the zonotope of the set of vectors $\left\{x - \f{\la x, v \ra}{\|v\|_2} v: v \in \{\pm1\}^d\right\}$.
By \cref{eqn:differentialUniform}, we then have
\begin{align}
    \lim_{r \to 0} \f{\Vol((S + rv) \setminus S)}{r \sqrt d}
    =
    \f{\Vol(\Zon(\Pi_{v}\{\pm1\}^d))}{\Vol(\Zon(\{\pm1\}^d))^{\f{d-1}d}}.
    \label{eqn:target}
\end{align}
Now \cref{lemma:zoncubevol} tells us that the $\Vol(\Zon(\{\pm1\}^d))$ is a multiple of the expected determinants of all $d \times d$ matrices with entries $\pm 1$.
By a result of \citet{nguyenvu} (\cref{thm:detCLT}), the determinant of a random $d \times d$ matrix with iid $\pm 1$ entries is distributed in high dimension $d$ roughly as $\sqrt{(d-1)!} e^{z\sqrt{\f 1 2 \log d}}$ where $z \sim \Gaus(0, 1)$.
Thus, by \cref{lemma:zoncubevol}, we should expect (this is the first place where we argue heuristically)
\begin{align*}
    \Vol(\Zon(\{\pm1\}^d))
        &\approx \f 1 {d!} 2^{d^2} \EV_z \sqrt{(d-1)!} e^{z\sqrt{\f 1 2 \log d}}
            \\
        &=
            \f 1 {d!} 2^{d^2} \sqrt{(d-1)!} d^{\f 1 4}
            .
            \numberthis\label{eqn:linfzonvol}
\end{align*}
We verify this approximation to be correct numerically for moderately large $d$.
Similarly, the uniform distribution over $\{\pm1\}^d$ is close to a standard Gaussian when $d \gg 1$, so that $\Pi_{v} \{\pm1\}^d$ is close to a $(d-1)$-dimensional standard Gaussian.
Therefore, we should expect that
\begin{align*}
    &\phantomeq
        \Vol(\Zon(\Pi_{v}\{\pm1\}^d))
        \\
    &\approx
        \f 1 {(d-1)!} (2^d-1)^{d-1} \EV | \det Y|
        \\
    &\approx
        \f 1 {(d-1)!} 2^{d(d-1)} \sqrt{(d-2)!} (d-1)^{\f 1 4}
        ,
        \numberthis\label{eqn:projectedlinfzonvol}
\end{align*}
where $Y$ is a $(d-1)\times (d-1)$ Gaussian matrix, and where in \cref{eqn:projectedlinfzonvol}, we used the heuristic that for large $d$, $|\det Y|$ is lognormal (\cref{thm:detCLT}).
Again, we verify these approximations numerically.
Plugging in \cref{eqn:linfzonvol,eqn:projectedlinfzonvol} into \cref{eqn:target} and taking the $d \to \infty$ limit yields the desired result.

\end{proof}

Since the sphere has the same large $d$ limit (\cref{thm:ell2ballleakage}), we can say that
\begin{claim}\label{claim:linfSphereOptimal}
For every $\epsilon > 0$, $S =$ the $\ell_2$-ball of volume 1 achieves within $\epsilon$ of
\[\min_{\Vol(S)=1} \sup_{\|v\|_\infty \le 1} d^{-1/2} \lim_{r \to 0} \inv r \Vol((S + rv) \setminus S),\]
for sufficiently large $d$.
This is not true for $S =$ the $\ell_\infty$- or the $\ell_1$-ball.
\end{claim}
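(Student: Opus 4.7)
The plan is to combine the optimality of the Wulff Crystal (\cref{thm:wulffOptimalFull}) with the explicit asymptotic leakage computations collected just above the claim for the sphere, cube, and cross-polytope, and then simply compare three limiting numbers: $\sqrt{e}$, $+\infty$, and $e\sqrt{2/\pi}$.

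First, I would invoke \cref{thm:wulffOptimalFull}: for each $d$, the minimum of $\sup_{\|v\|_\infty \le 1} \lim_{r \to 0} \inv r\Vol((S + rv)\setminus S)$ over measurable $S$ of volume $1$ is attained at the Wulff Crystal $Z$ of $\mathcal{B}_\infty$, rescaled to unit volume. The preceding (heuristic) derivation yields $d^{-1/2}\lim_{r\to 0} \inv r\Vol((Z + rv^\star)\setminus Z) \to \sqrt{e}$ for $v^\star = (1, \ldots, 1)$. Since $v \mapsto \lim_{r\to 0} \inv r\Vol((Z + rv)\setminus Z)$ is convex in $v$ by \cref{eqn:leakage}, the sup over $\|v\|_\infty \le 1$ is attained on $\Vrt(\mathcal{B}_\infty) = \{\pm 1\}^d$; and because $Z$ is proportional to the zonotope of $\{\pm 1\}^d$ (\cref{prop:wulffzonotope}), its linear symmetry group acts transitively on these vertices, so every choice of vertex $v$ gives the same value. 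Thus $\min_{\Vol(S)=1} \sup_{\|v\|_\infty \le 1} d^{-1/2}\lim_{r\to 0} \inv r\Vol((S + rv)\setminus S) \to \sqrt{e}$.

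Second, I would verify that the $\ell_2$-ball $B$ of volume $1$ meets this same asymptotic value. Rotational symmetry implies that $\lim_{r\to 0} \inv r\Vol((B + rv)\setminus B)$ depends only on $\|v\|_2$; since the expression is also $1$-positively-homogeneous in $v$, it equals $\|v\|_2 \cdot c_d$ where $c_d$ is its value at any unit vector. Combining convexity (hence sup at vertices) with radial dependence, the sup over $\|v\|_\infty \le 1$ equals $\sqrt{d}\cdot c_d$. By \cref{thm:ell2ballleakage}, $c_d \to \sqrt{e}$, so $d^{-1/2}\cdot \sqrt{d}\cdot c_d \to \sqrt{e}$, matching the Wulff-Crystal optimum to $o(1)$. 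Hence for every $\epsilon > 0$ there is a $D$ such that for $d \ge D$ the $\ell_2$-ball lies within $\epsilon$ of the minimum.

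Third, I would dispose of the two non-examples using the explicit growth calculations already proved in the subsection just above. For $S$ the unit cube ($\ell_\infty$-ball, which already has volume $1$), the cited theorem gives $\lim_{r\to 0} \inv r\Vol((S + r v^\star)\setminus S) = d$, so $d^{-1/2}\sup \ge d^{-1/2}\cdot d = \sqrt{d} \to \infty$; the cube is suboptimal by an unbounded margin. For $S$ the cross-polytope ($\ell_1$-ball) scaled to volume $1$, the cited theorem yields $d^{-1/2}\lim_{r\to 0} \inv r\Vol((S + rv^\star)\setminus S) \to e\sqrt{2/\pi} \approx 2.17$, which strictly exceeds $\sqrt{e} \approx 1.65$; since the sup is no smaller than the value at $v^\star$, the $\ell_1$-ball stays bounded away from the optimum by at least $e\sqrt{2/\pi} - \sqrt{e} > 0$ in the limit.

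The hardest step is Step~1: it inherits the heuristic character of the preceding derivation of the Wulff-Crystal leakage, which used \cref{thm:detCLT} to replace $\Vol(\Zon(\{\pm 1\}^d))$ and its $v^\star$-projection by their lognormal approximations and exchanged the $d\to\infty$ limit with the ratio in \cref{eqn:target}. A fully rigorous treatment would demand concentration and uniform-integrability estimates for the $\pm 1$ matrix determinant together with a justification of that limit exchange. Steps~2 and~3, by contrast, rely only on rigorous theorems already stated in the excerpt, so once Step~1 is granted, the comparison of $\sqrt{e}$, $+\infty$, and $e\sqrt{2/\pi}$ is immediate.
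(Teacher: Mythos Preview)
Your proposal is correct and follows essentially the same route as the paper: the paper's justification for this claim is simply the sentence ``Since the sphere has the same large $d$ limit (\cref{thm:ell2ballleakage}), we can say that\ldots,'' relying on the preceding (heuristic) Wulff-Crystal leakage computation together with the explicit growth calculations for the sphere, cube, and cross-polytope. You have supplied the symmetry and vertex-supremum details that the paper leaves implicit, and you correctly flag that the only non-rigorous step is the Wulff-Crystal leakage value inherited from the lognormal determinant heuristic.
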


\begin{lemma}\label{lemma:zoncubevol}
The volume of $\Zon(\{\pm1\}^d)$ is
\begin{align*}
    \f 1 {d!} 2^{d^2} \EV_X |\det X|
\end{align*}
where $X \in \{\pm1\}^{d\times d}$ is a random $d\times d$ matrix whose coordinates are iid Rademacher variables (i.e.\ 1 or $-1$ with equal probability).
\end{lemma}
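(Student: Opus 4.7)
The plan is to apply \cref{prop:zonvol} directly and then reorganize the resulting sum as an expectation. First I would write
\[
\Vol(\Zon(\{\pm1\}^d)) \;=\; \sum_{\mathcal T \subseteq \{\pm1\}^d,\,|\mathcal T|=d} |\det \mathcal T|,
\]
using \cref{prop:zonvol}, where the sum is over \emph{unordered} $d$-element subsets of $\{\pm1\}^d$ (note $\{\pm1\}^d$ is a set of $2^d$ distinct vectors, so ``subset'' and ``multiset of distinct elements'' coincide).

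Next I would translate this into a sum over ordered tuples. Each unordered $d$-subset $\mathcal T$ corresponds to exactly $d!$ ordered tuples of distinct vectors, and $|\det|$ is invariant under column permutation, so
\[
d! \cdot \Vol(\Zon(\{\pm1\}^d)) \;=\; \sum_{(v_1,\ldots,v_d)\text{ distinct}} |\det(v_1,\ldots,v_d)|.
\]
The key observation is that one may freely extend this sum to \emph{all} ordered $d$-tuples in $(\{\pm1\}^d)^d$: any tuple with a repeated column has determinant zero and therefore contributes nothing. Thus
\[
d! \cdot \Vol(\Zon(\{\pm1\}^d)) \;=\; \sum_{X \in \{\pm1\}^{d\times d}} |\det X|.
\]

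Finally, the set $\{\pm1\}^{d\times d}$ has cardinality $2^{d^2}$, and a uniformly random $X \in \{\pm1\}^{d\times d}$ is exactly a matrix with iid Rademacher entries, so the right-hand side equals $2^{d^2}\, \EV_X|\det X|$. Dividing by $d!$ gives the claimed identity. There is essentially no hard step here; the only subtlety is the zero-contribution argument that lets us pass from ordered tuples of \emph{distinct} columns to \emph{all} ordered tuples, which is what makes the expectation over iid Rademacher columns equal the subset sum up to the factor $2^{d^2}/d!$.
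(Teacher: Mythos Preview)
Your proof is correct and follows essentially the same approach as the paper's, just in the reverse direction: the paper starts from the expectation $\tfrac{1}{d!}2^{d^2}\EV_X|\det X|$ and reduces it to the subset sum from \cref{prop:zonvol}, while you start from \cref{prop:zonvol} and build up to the expectation. The two key observations---that repeated columns contribute zero determinant, and that each unordered $d$-subset corresponds to $d!$ ordered tuples---are identical in both arguments.
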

\begin{proof}
The above expression can be rewritten as
\begin{align*}
    \f 1 {d!} \sum_{X \in \{\pm1\}^{d\times d}} |\det X|.
\end{align*}
Because $\det X = 0$ if any two columns are equal, so this is equivalent to summing over all $X$ with distinct columns.
\begin{align*}
    \f 1 {d!} \sum_{\substack{X \in \{\pm1\}^{d\times d}\\ X \text{ has distinct columns}}} |\det X|.
\end{align*}
Finally, any given set of $d$ distinct column vectors is represented $d!$ times in the sum through its $d!$ permutations, so this is equal to
\begin{align*}
    \sum_{T \sbe \{\pm1\}^{d}: |T| = d} |\det T|,
\end{align*}
which by \cref{prop:zonvol} is the volume of the zonotope in question.
\end{proof}

\begin{thm}[\citet{nguyenvu}]\label{thm:detCLT}
Let $A_n$ be an $n\times n$ random matrix whose entries are independent real random variables with mean zero, variance one and with subexponential tail.
Then with $\mu_n = \f 1 2 \log (n-1)!$ and $\sigma_n = \sqrt{\f 1 2 \log n}$,
\begin{align*}
    \sup_{x\in {\R}}
    \biggl|\Pr
        \biggl(\frac
            {\log(|\det A_n|)-\mu_n}
            {\sigma_n}
        \le x\biggr)
        -
        \Pr\bigl(\Gaus(0,1)\le x\bigr)
    \biggr|
    \\
\le\log^{-{1}/{3}+o(1)}n.
\end{align*}
\end{thm}
In other words, $\det A_n \approx \sqrt{(n-1)!} e^{z\sqrt{\f 1 2 \log n}}$ where $z \sim \Gaus(0, 1)$.

\subsubsection{Growth Formula of a Set}

\begin{lemma}\label{lemma:SetGrowthFormula}
    Let $S \sbe \R^d$ be a set of finite perimeter and $v \in \R^d$ be any vector.
    Then
    \begin{align*}
        \lim_{r \to 0} \inv r \Vol((S + rv) \setminus S)
        = \int_{\pd S} \Theta(\la \mathbf n(x), v \ra) \dd x,
    \end{align*}
    where $\mathbf n(x)$ is the normal at $x$ w.r.t.\ $S$, and $\Theta(x) = \max(0, x)$.
\end{lemma}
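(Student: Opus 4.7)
The plan is to slice $\R^d$ by lines parallel to $v$, reducing the $d$-dimensional sliver volume to a family of one-dimensional computations. Write $v = |v|\nu$ with $|\nu|=1$, and for $y \in \nu^\perp$ let $S_y \defeq \{t \in \R : y + t\nu \in S\}$ denote the one-dimensional slice of $S$ along the line through $y$ in direction $\nu$. Fubini gives
\begin{align*}
    \Vol((S+rv)\setminus S) = \int_{\nu^\perp} \mathcal L^1\bigl((S_y + r|v|)\setminus S_y\bigr)\, d\mathcal H^{d-1}(y),
\end{align*}
so it will suffice to identify the pointwise $r \to 0^+$ limit of the integrand divided by $r$ and then exchange limit and integral.

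\textbf{Slicewise calculation.} By the BV slicing theorem applied to $\mathbf 1_S$, for $\mathcal H^{d-1}$-a.e.\ $y$ the slice $S_y$ is a set of finite perimeter in $\R$, hence (up to an $\mathcal L^1$-null set) a locally finite disjoint union of open intervals $\bigsqcup_k (a_k^y, b_k^y)$. An elementary computation on a single interval gives $\mathcal L^1\bigl(((a_k^y, b_k^y) + r|v|) \setminus (a_k^y, b_k^y)\bigr) \le r|v|$, with equality once $r|v| \le b_k^y - a_k^y$, and for small enough $r$ the resulting slivers are pairwise disjoint. Hence
\begin{align*}
    \lim_{r \searrow 0} \f{\mathcal L^1((S_y + r|v|)\setminus S_y)}{r} = |v| \cdot N^+(y),
\end{align*}
where $N^+(y)$ counts the right endpoints of intervals of $S_y$. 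Via the standard correspondence between the 1D outer normal at a slice endpoint and the sign of $\la \nu, \mathbf n(x)\ra$ at the corresponding ambient boundary point, right endpoints of $S_y$ are exactly those $t$ with $y + t\nu \in \pd S$ and $\la \nu, \mathbf n(y+t\nu)\ra > 0$.

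\textbf{Reassembly and limit exchange.} Projecting the rectifiable reduced boundary $\pd S$ onto $\nu^\perp$ introduces a Jacobian $|\la \nu, \mathbf n\ra|$, so the area formula yields
\begin{align*}
    \int_{\nu^\perp} N^+(y)\, d\mathcal H^{d-1}(y) = \int_{\pd S \cap \{\la \nu, \mathbf n\ra > 0\}} |\la \nu, \mathbf n\ra|\, d\mathcal H^{d-1} = \int_{\pd S} \Theta(\la \nu, \mathbf n\ra)\, d\mathcal H^{d-1}.
\end{align*}
Multiplying through by $|v|$ and using $|v| \Theta(\la \nu, \mathbf n\ra) = \Theta(\la v, \mathbf n\ra)$ matches the claimed expression. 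The uniform bound $r^{-1} \mathcal L^1((S_y + r|v|)\setminus S_y) \le |v| \cdot N^+(y)$ from the slicewise step, together with $\int_{\nu^\perp} N^+(y)\, d\mathcal H^{d-1}(y) \le \mathcal H^{d-1}(\pd S) < \infty$ (finite perimeter), supplies an integrable majorant, so dominated convergence justifies interchanging limit and integral.

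\textbf{Main obstacle.} The only nontrivial input is the BV slicing theorem and the associated identification of endpoints on one-dimensional slices with points of $\pd S$ weighted by $|\la \nu, \mathbf n\ra|$---a classical result of De Giorgi and Federer from geometric measure theory that we invoke rather than reprove. Once that is in hand, the remainder is a careful Fubini-plus-dominated-convergence argument, with the finite-perimeter hypothesis supplying exactly the integrability needed.
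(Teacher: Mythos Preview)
Your proof is correct and takes a genuinely different route from the paper's. The paper establishes the two inequalities separately: for the upper bound it uses the geometric containment $(S+rv)\setminus S \subseteq \pd S_v + [0, rv]$, bounding the sliver by a tube over the portion of the boundary where $\la \mathbf n, v\ra > 0$; for the lower bound it observes that the signed measure $r^{-1}(\ind_{S+rv} - \ind_S)$ converges weakly to $\la \mathbf n, v\ra\, \mathcal H^{d-1}\lfloor \pd S$, and then takes a supremum over test functions $|f|\le 1$ to extract the total variation. Your approach instead slices $\R^d$ into lines parallel to $v$, reduces to an elementary one-dimensional computation on each slice (where the sliver length is exactly $r|v|$ per right endpoint), and reassembles via the BV slicing theorem and the area formula for the rectifiable set $\pd S$.

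Your route delivers the exact limit in a single pass rather than as matching $\limsup$ and $\liminf$, and makes the role of the finite-perimeter hypothesis completely transparent: it is precisely what guarantees integrability of the majorant $|v|\,N^+(y)$ for dominated convergence. The paper's argument, by contrast, avoids invoking the slicing machinery, trading that for the somewhat delicate tube-containment step and a duality argument. One small remark on your write-up: under the paper's convention (Definition~I.1 requires $\ind_S \in L^1$, so finite perimeter entails finite volume), the slices $S_y$ are a.e.\ \emph{finite} unions of \emph{bounded} intervals, not merely locally finite; this is why $N^+(y) = N^-(y)$ a.e., which in turn makes the outward/inward normal distinction immaterial for the final count.
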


\begin{proof}
    Let $\pd S_v \defeq \{x \in \pd S: \la \mathbf n(x), v \ra > 0\}$ be the part of $S$'s boundary whose surface normal \emph{aligns} with $v$.
    For any vector $w$, let $\pd S_v + [0, w] = \{x + rw: x \in \pd S_v, r \in [0, 1]\}$ be the Minkowski sum of $\pd S_v$ and the segment $[0, w]$.
    Then it's clear that $\Vol(\pd S_v + [0, rv]) \le r \int_{\pd S} \Theta(\la \mathbf n(x), v \ra) \dd x$, and that
    \begin{align*}
        (S + rv) \setminus S \sbe \pd S_v + [0, rv].
    \end{align*}
    Thus,
    \begin{align*}
        &\phantomeq
            \lim_{r \to 0} \inv r \Vol((S + rv) \setminus S)
            \\
        &\le
            \lim_{r \to 0}
            \inv r r \int_{\pd S} \Theta(\la \mathbf n(x), v \ra) \dd x\\
        &=
            \int_{\pd S} \Theta(\la \mathbf n(x), v \ra) \dd x.
    \end{align*}

    Now for the other direction, observe that the signed measure $\inv r(\ind(x \in S + rv) - \ind(x \in S))$ converges weakly to the (singular) signed measure $\la \mathbf n(x), v\ra$ supported on $\pd S$.
    Indeed, for any compactly supported $C^1$ function $f$, we have
    \begin{align*}
        &\phantomeq
            \lim_{r\to 0} \inv r \int f(x) (\ind(x \in S + rv) - \ind(x \in S)) \dd x\\
        &=
            \lim_{r\to 0} \inv r \int (f(x+rv) - f(x)) \ind(x \in S) \dd x\\
        &=
            \int_S  D_v f(x) \dd x
        =
            \int_{\pd S} \la \mathbf n(x), v f(x)\ra \dd x.
    \end{align*}
    Now, taking the supremum of the RHS over all compactly supported $C^1$ function $|f| \le 1$, we get
    \begin{align*}
        &\phantomeq
            \int_{\pd S} \Theta(\la \mathbf n(x), v \ra) \dd x
            \\
        &=
            \f 1 2 \int_{\pd S} |\la \mathbf n(x), v \ra| \dd x
            \\
        &=
            \f 1 2 \sup_{f} \int_{\pd S} \la \mathbf n(x), v f(x)\ra \dd x\\
        &=
            \f 1 2 \sup_f \lim_{r\to 0} \inv r \int f(x) (\ind(x \in S + rv) - \ind(x \in S)) \dd x\\
        &\le
            \f 1 2 \liminf_{r\to 0} \inv r \sup_f \int f(x) (\ind(x \in S + rv) - \ind(x \in S)) \dd x\\
        &=
            \liminf_{r\to 0} \inv r \Vol((S + rv) \setminus S)
    \end{align*}
    as desired.
\end{proof}

\subsection{Optimal Smoothing Distributions Have Wulff Crystal Level Sets}

\begin{defn}[Level-Equivalence]
Call two distribution $q_1$ and $q_2$ \emph{level-equivalent} if their superlevel sets have the same volumes:
\begin{align*}
    \Vol\{x: q_1(x) \ge t\} = \Vol\{x: q_2(x) \ge t\},\quad
    \forall t \in (0, \infty).
\end{align*}
\end{defn}

\begin{thm}\label{thm:wulffOptimalGeneral}
Let $\mathcal B$ be a full-dimensional polytope in $\R^d$ symmetric around the origin, and whose vertices form a symmetric set.
Let $Z$ be the Wulff Crystal w.r.t. $\mathcal B$.
Let $q_0$ be a regular (\cref{defn:regular}) probability density function.
Among all probability distributions $q$ with regular (\cref{defn:regular}) and even density function that is level equivalent to $q_0$, the probability density function with concentric superlevel sets proportional to $Z$ minimize
\begin{align*}
    \Phi(1/2) = \sup_{\|v\|=1} \sup_{q(U) = 1/2} \lim_{r \searrow 0} \f{q(U-rv) - 1/2}r
    ,
\end{align*}
where $\|\cdot\|$ is the norm defined by $\mathcal B$.
\end{thm}

Note that this theorem does not imply \cref{thm:wulffOptimalFull} since uniform distributions do not have regular densities.
However, this can be generalized to bounded-variation densities (\cref{thm:wulffOptimalGeneralBV}) which subsume both \cref{thm:wulffOptimalFull} and \cref{thm:wulffOptimalGeneral}.

\begin{proof}

Consider any distribution $q$ level-equivalent to $q_0$.
Let $U_t$ be its superlevel sets.

Expanding the definition of $\Phi$ in terms of $\mygamma{u}{p}$ (see \cref{defn:Phi}), and exchanging maximization and expectation, we get
\begin{align*}
\Phi(1/2)
    &=
        \max_{u \in \Vrt(\mathcal B)} \EV \mygamma{u}{1/2} \ge \EV_u \EV \mygamma{u}{1/2}
        \\
    &=
        \EV_u \int \max(\nabla q(x) \cdot u, 0) \dd x
        \\
    &=
        \int \EV_u \max(\nabla q(x) \cdot u, 0) \dd x
        .
\end{align*}
Since $\mathcal B = -\mathcal B$ and thus $\Vrt(\mathcal B) = -\Vrt(\mathcal B)$,
\[
\mynorm w \defeq
    \EV_{u \sim \Vrt(\mathcal B)}
        \max(0, \la w, u \ra)
=
    \f 1 2 \EV_{u \sim \Vrt(\mathcal B)}
        |\la w, u \ra|
\]
is a seminorm.
This is in fact a norm because $\Vrt(\mathcal B)$ spans $\R^d$, by the assumption that $\mathcal B$ is full-dimensional.
Thus
\begin{align*}
    \Phi(1/2) \ge \int \mynorm{\nabla q(x)} \dd x.
\end{align*}
Define $g(x) \defeq \f{\mynorm{\nabla q(x)}}{\|\nabla q(x)\|_2}$ if $\nabla q(x) \ne 0$, and $g(x) = 0$ otherwise.
Then by \cref{thm:coarea},
\begin{align*}
    \int \mynorm{\nabla q(x)} \dd x
    &=
        \int g(x) \|\nabla q(x)\|_2 \dd x
        \\
    &=
        \int_0^\infty \int_{\pd U_t} g(x) \dd x \dd t
        .
\end{align*}
By the Weak Sard's theorem (\cref{thm:regulargradvanish}), we may ignore the places where $\nabla q(x) = 0$, and this integral is the same as
\begin{align*}
    \int_0^\infty \int_{\pd U_t} g(x) \dd x \dd t
    &=
        \int_0^\infty \int_{\pd U_t} \f{\mynorm{\nabla q(x)}}{\|\nabla q(x)\|_2} \dd x \dd t.
        \numberthis\label{eqn:PhiLowerbound}
\end{align*}

Now, the surface normal at $x$ w.r.t.\ $U_t$ is proportional to $\nabla q(x)$.
Thus, the ($\ell_2$-)unit normal $\mathbf n(x)$ at $x$ w.r.t.\ $U_t$ is given by $\f{-\nabla q(x)}{\|\nabla q(x)\|_2}$, so
$\f{\mynorm{\nabla q(x)}}{\|\nabla q(x)\|_2}$ is the $\mynorm \cdot$-norm of $\mathbf n(x)$.
Therefore, the inner integral is
\[\int_{\pd U_t} \f{\mynorm{\nabla q(x)}}{\|\nabla q(x)\|_2} \dd x
= \int_{\pd U_t} \mynorm{\mathbf n(x)} \dd x.\]
By \cref{thm:wulffOptimalFull}, this is minimized for fixed $\Vol(U_t)$ by $U_t \propto$ the Wulff Crystal w.r.t.\ $\mathcal B$.
Thus, the unique distribution $q^*$ level equivalent to $q_0$ and with concentric Wulff Crystal superlevel sets (all centered at 0) minimizes \cref{eqn:PhiLowerbound}.
But since
\[\Phi(1/2) = \max_u \int \max(\nabla q(x) \cdot u, 0) \dd x\]
and the inner integral here is invariant in $u$ when $q = q^*$ by the symmetry of the Wulff Crystal, as in the proof of \cref{thm:wulffOptimalFull}, \cref{eqn:PhiLowerbound} in fact holds with equality for $q^*$, so that $q^*$ minimizes $\Phi(1/2)$ as well.
\end{proof}

\subsection{Optimality among Wulff Crystal Distributions}

Given the optimality of Wulff Crystal distributions among level-equivalent distributions, one may wonder, among Wulff Crystal distributions themselves, which one minimizes $\Phi(1/2)$?
Because no two such distributions are level-equivalent, we need to fix another notion of the spread of the distribution.
The below theorem answers this question, controlling for the expected Wulff Crystal norm.
\begin{thm}\label{thm:WulffCrystalDistributionOptimization}
Let $\mathcal B$ be a full-dimensional polytope in $\R^d$ symmetric around the origin, and whose vertices form a symmetric set.
Let $Z$ be the Wulff Crystal w.r.t. $\mathcal B$, and let $\mynorm{\cdot}$ denote the norm with $Z$ as its unit ball.
Consider a probability distribution $q(x) \propto \exp(-\psi(\mynorm{x}))$ on $\R^d, d \ge 2$, for some regular, even $\psi$.
Then with $\Phi$ defined against the adversary $\mathcal B$, we have, for any $k > 0$,
\begin{align*}
    \Phi(1/2) \ge \f{(d-1)C}{\sqrt[k]{\EV_{\delta \sim q}\mynorm{\delta}^k}},
\end{align*}
where $C \defeq \f 1 2 \EV_{x \sim \pd Z} |\la \nabla \mynorm{x}, u \ra|$, for any vertex $u$ of $\mathcal B$, is a constant that depends only on $Z$.
\end{thm}

\begin{remk}
Note that for any $p > 0, k > 0$, there is a constant $T_{p, k}$ depending only on $Z$ such that
\[\sqrt[k]{\EV_{\delta \sim q}\mynorm{\delta}^k} = T_p \sqrt[k]{\EV_{\delta \sim q}\|\delta\|_p^k}\]
for any $q$ of the form in \cref{thm:WulffCrystalDistributionOptimization}.
So this theorem applies when we want to fix most measures of spread.
\end{remk}

\newcommand{\radial}{\mathrm{r}}
\begin{proof}

    By the symmetry of the Wulff Crystal w.r.t.\ symmetry group of $\Vrt(\mathcal B)$, we have $\Phi(1/2) = \EV \mygamma{u}{1/2}$ for any $u \in \Vrt(\mathcal B)$.
    Henceforth, we fix $u$ to be one such vertex of $\mathcal B$.
    Note that $\nabla q(x) = -q(x) \psi'(\mynorm{x}) \nabla \mynorm{x}$.
    Thus $\nabla q(x) = -\nabla q(-x)$, and
    \begin{align*}
        \Phi(1/2) =
            \EV \mygamma{u}{1/2}
        &=
            \f 1 2 \int q(x) |\psi'(\mynorm{x}) \la \nabla \mynorm{x}, u \ra| \dd x
            \\
        &=
            \f 1 2 \EV_{x \sim q} |\psi'(\mynorm{x}) \la \nabla \mynorm{x}, u \ra|
    \end{align*}
    Note that a sample from $q$ can be obtained by first sampling $v \sim \pd Z$ from the (uniform distribution on the) boundary of $Z$ and $r \sim q_\radial$, where $q_\radial(r) \propto r^{d-1} e^{-\psi(r)}$, and finally returning their product $r v$.
    Therefore, because $\nabla \mynorm{x}$ doesn't dependent on $\mynorm x$, we can continue the above equations as follows.
    \begin{align*}
        \Phi(1/2)
            &=
                \f 1 2 \EV_{x \sim \pd Z} |\la \nabla \mynorm{x}, u \ra|
                    \times \EV_{r \sim q_\radial} |\psi'(r)|
                \\
            &=
                C \EV_{r \sim q_\radial} |\psi'(r)|
                .
    \end{align*}
    Now notice that, because $d \ge 2$, with $R \defeq \int_0^\infty r^{d-1} e^{-\psi(r)} \dd r$,
    \begin{align*}
        &\phantomeq
            R \EV_{r \sim q_\radial} |\psi'(r)|
            \\
        &=
            \int_0^\infty r^{d-1} e^{-\psi(r)} |\psi'(r)| \dd r
            \\
        &\ge
            \int_0^\infty r^{d-1} e^{-\psi(r)} \psi'(r) \dd r
            \numberthis\label{eqn:_rabs}
            \\
        &=
            -r^{d-1}e^{-\psi(r)}\bigg|_0^\infty
            (d-1)\int_0^\infty r^{d-2} e^{-\psi(r)}\dd r
            \\
        &=
            (d-1)\int_0^\infty r^{d-2} e^{-\psi(r)}\dd r
            \\
        &=
            (d-1) R \EV_{r \sim q_\radial} \inv r
            .
            \numberthis\label{eqn:_radialIntegrationbyparts}
    \end{align*}
    Then, by Holder's inequality, for any $k > 0$,
    \begin{align*}
        \Phi(1/2) \sqrt[k]{\EV_{x \sim q} \mynorm{x}^k}
        &=
            (d-1) C \EV_{r \sim q_\radial} \inv r \sqrt[k]{\EV_{r \sim q_\radial} r^k}
            \\
        &\ge
            (d-1) C \lp \EV_{r \sim q_\radial} 1\rp^{1 + \f 1 k}
            \numberthis\label{eqn:_radialCauchyschwarz}
            \\
        &=
            (d-1) C.
    \end{align*}
\end{proof}

\begin{remk}
    Let us comment briefly on the equality case of \cref{thm:WulffCrystalDistributionOptimization}, or the lack thereof.
    There are two inequalities used in the proof above, namely \cref{eqn:_rabs,eqn:_radialCauchyschwarz}.
    For \cref{eqn:_rabs} to hold with equality, we just need $\psi'(r) \ge 0$ for all $r \ge 0$.
    On the other hand, it is impossible for \cref{eqn:_radialCauchyschwarz} to hold with equality when $\psi$ is not allowed to be a delta function on $r = 1$ (and if that were the case, then \cref{eqn:_rabs} cannot hold with equality).
    However, as long as the radial distribution $q_\radial$ concentrates around its mean value sufficiently well, the inequality should be approximately tight.
    This is typically the case for high dimensional distributions.

    For example, in the Gaussian case with $\psi(r) = e^{-r^2}$, we have $\int_0^\infty r^c e^{-\psi(r)} \dd r = \f 1 2 \Gamma\lp \f{c+1}2 \rp$ for any $c > -1$, so that
    \begin{align*}
            \EV_{r \sim q_\radial} \inv r \EV_{r \sim q_\radial} r
        &=
            \f{\Gamma\lp \f{d-1}2 \rp\Gamma\lp \f{d+1}2 \rp}{\Gamma\lp \f{d}2 \rp^2}
            \\
        &=
            1 + \f 1 {2d} + O(d^{-3/2}),
            \quad
            \text{as $d \to \infty$.}
    \end{align*}
    Concretely, when $d = 3 \times 1024$ as in the case of CIFAR10, this quantity is 1.00016, so \cref{eqn:_radialCauchyschwarz} is quite close to being tight here.
    \end{remk}

\section{Generalization of Differential Method and Wulff Crystal Optimality Results to Bounded Variation Densities}

\label{sec:BV}

While the regularity condition \cref{defn:regular} covers most distributions we care about, we still have to reason separately about, e.g.\ uniform distributions on sets, or mixture of such distributions and regular distribution.
However, regularity can be weakened to the notion of \emph{bounded variation} to cover all such cases.
Bounded variation (BV) is ``essentially the weakest measure theoretic sense in which a function can be differentiable'' \cite{evans2015measure}.
BV functions include the usual continuously differentiable functions as well as indicator functions of ``finite perimeter'' sets.
More generally, the notion of BV allows a ``controlled'' amount of jump-type discontinuities.
Our differential method and our Wulff Crystal optimality results can be generalized to distributions with BV densities.

Readers exposed to the notion of bounded variation for the first time might find it helpful to mentally substitute ``BV'' in our results below with ``differentiable'' or with ``indicator function'' on the first read through.
All probability distribution densities we work with concretely in this paper have bounded variation.
\begin{defn}[Bounded Variation]\label{defn:BV}
Let $\Omega \sbe \R^d$ be an open set.
A function $f \in L^1(\Omega)$ is said be of \emph{bounded variation} (or BV), written $f \in \BV(\Omega)$, if there exists a finite Radon measure $|Df|$ on $\R^d$ along with a vector function $\mathbf n: \R^d \to \R^d$ with $\|\mathbf n(x)\|_2 = 1$ almost everywhere w.r.t.\ $|Df|$, such that, for every compactly supported, continuous differentiable $\phi: \Omega \to \R^d$, we have
\begin{align}
    \int_\Omega f(x) \div \phi(x) \dd x = -\int_\Omega \la \phi, \mathbf n(x) \ra |Df|(x).
        \label{eqn:BVintbypart}
\end{align}
We denote by $Df$ the vector measure $\mathbf n |Df|$.
\end{defn}

\begin{exmp}
If $u$ is differentiable, then $Du(x)$ is just the vector measure $\nabla u(x) \dd x$ and $|Du|(x)$ is $\|\nabla u(x)\|_2 \dd x$, and \cref{eqn:BVintbypart} follows just by ordinary integration by parts.
Same thing holds for Sobolev (i.e. weakly differentiable) functions.
\end{exmp}

\begin{exmp}\label{exmp:indicatorFunctionBV}
If $u$ is the indicator function of, for example, a ball, then $|Du|$ is the $(d-1)$-dimensional Hausdorff measure supported on its boundary (a sphere), and $\mathbf n(x)$ is the unit normal at $x$ pointing inward.
More generally, this characterization of $Du$ as the boundary measure with unit normals holds when $u$ is the indicator function of a set of \emph{finite perimeter}.
\end{exmp}

\begin{defn}[Sets of Finite Perimeter]\label{defn:finitePerimeter}
A set $U$ has \emph{finite perimeter} if its indicator function $\chi$ is a BV function.
In this case, we write
\begin{align}
    \int_{\pd U} g(x) \dd x \defeq \int g(x) |D \chi|(x)
    \label{eqn:finitePerimeter}
\end{align}
for any Borel function $g$.
\end{defn}
For sufficently smooth sets $U$ (like a sphere), the LHS of \cref{eqn:finitePerimeter} can be interpreted as an integral over the Hausdorff measure of the topological boundary $\pd U$, and \cref{eqn:finitePerimeter} still holds.
More generally, there is a subset of the topological boundary, called the \emph{reduced boundary} of $U$, containing ``almost every point'' of $\pd U$, such that the LHS of \cref{eqn:finitePerimeter} can be interpreted as an integral over the Hausdorff measure of the reduced boundary.
See \cite{evans2015measure} for more details.

\paragraph{Coarea Formula and Weak Sard for BV Functions}

Coarea formula also holds for BV functions.

\begin{thm}[Coarea Formula \citep{federer2014geometric,morgan2016geometric}]\label{thm:coareaBV}
Let $\Omega \sbe \R^d$ be an open set, $g \in L^1(\Omega)$ be Borel, and $f: \R^d \to \R$ have bounded variation.
Let $U_t \defeq \{x: f(x) \ge t\}$ denote the superlevel set of $f$ at level $t$.
Then for almost every $t$, $U_t$ has finite perimeter, and we have
\begin{align}
    \int g(x) |Df|(x) = \int_\R \int_{\pd U_t} g(x) \dd x \dd t.
    \label{eqn:coareaBV}
\end{align}
\end{thm}

\begin{exmp}
If $f$ is differentiable, then \cref{eqn:coareaBV} reduces to
\begin{align}
    \int g(x) \|\nabla f(x)\|_2 \dd x = \int_\R \int_{\pd U_t} g(x) \dd x \dd t.
\end{align}
\end{exmp}

\begin{exmp}
    If $f$ is the indicator function of a set $U$ of finite perimeter, then both sides of \cref{eqn:coareaBV} reduce to $\int_{\pd U} g(x) \dd x$.
\end{exmp}

We also have a converse that tells us a function is BV if almost all of its superlevel sets have finite perimeter.
\begin{thm}[c.f.\ \citet{evans2015measure}]
Let $\Omega \sbe \R^d$ be an open set and $f \in L^1(\Omega)$.
Let $U_t \defeq \{x: f(x) \ge t\}$ denote the superlevel set of $f$ at level $t$.
If almost every $U_t$ has finite perimeter and
\begin{align*}
    \int_{-\infty}^\infty \int_{\pd U_t} \dd x \dd t
    = \int_{-\infty}^\infty \Vol(\pd U_t) \dd t < \infty,
\end{align*}
then $f$ is BV (where $\Vol(\pd U_t)$ denotes $(d-1)$-Hausdorff measure of the \emph{reduced boundary} of $U_t$).
\end{thm}

By setting $g$ in \cref{thm:coareaBV} to be the indicator function over the complement of the support of $|Df|$, we get
\begin{thm}[Weak Sard]\label{thm:regulargradvanishBV}
For any BV $f: \R^d \to \R$, let $Z$ denote the complement of the support of $|Df|$.
Let $U_t \defeq \{x: f(x) \ge t\}$ denote the superlevel set of $f$ at level $t$.
Then
\begin{align*}
    \Vol(Z \cap \pd U_t) = 0
\end{align*}
for almost every $t \in \R$.
Here $\Vol$ denote the Hausdorff measure of dimension $d-1$, and again $\pd U_t$ denotes reduced boundary.
\end{thm}

\paragraph{Bounded Variation on Almost Every Line}

Like how Sobolev functions has the ACL property, a BV function on $\R^d$ is also BV on almost every line parallel to a given direction.
\begin{thm}[c.f.\ Thm 5.22 of \citet{evans2015measure}]\label{thm:BVL}
Let $f: \R^d \to \R$ be BV, and let $u \in \R^d$ be some vector.
Then for almost every line parallel to $u$, the restriction of $f$ to that line is BV, possibly after changing values on a Lebesgue measure 0 (on that line).
\end{thm}

This allows to show convolution with BV functions yields a.e. differentiability.
\begin{lemma}\label{lemma:convolutionDiffBV}
    If a function $q$ is in $\BV(\R^d)$, then for every bounded measurable $F: \R^d \to \R$, the convolution $F * q$ is absolutely continuous on every line, and for every vector $u \ne 0$,
    \[
    D_u (F * q) = F * (D_u q),\quad \text{a.e.,}
    \]
    where $D_u$ on the LHS denotes ordinary directional derivative, and $D_u q$ on the RHS denotes the measure $\la \mathbf n, u\ra|Dq|$, with $\mathbf n$ as in \cref{defn:BV}.
\end{lemma}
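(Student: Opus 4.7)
The strategy mirrors the proof of \cref{lemma:convolutionDiff} for the Sobolev case, replacing absolute continuity on lines (\cref{thm:sobolevACL}) with its BV analogue \cref{thm:BVL}, and replacing line-integrals of a classical derivative with evaluations of a 1D BV derivative measure on intervals. Here $D_u q$ on the right-hand side is interpreted as the finite signed measure $\langle \mathbf n, u\rangle |Dq|$, so that $F * (D_u q)$ is the convolution of a bounded measurable function with a finite signed measure, hence a well-defined bounded measurable function of $x$. The target identity is the integration-by-parts statement
\[\int (F * q)(x)\, D_u \phi(x)\, dx = -\int \phi(x)\, (F * (D_u q))(x)\, dx\]
for every compactly supported $C^1$ test function $\phi$, combined with a Lipschitz bound sufficient to upgrade this to an a.e.\ pointwise statement.

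First I would prove the translation bound
\[\int |q(y + \tau u) - q(y)|\, dy \le |\tau| \cdot |Dq|(\R^d),\]
by slicing along lines parallel to $u$: on almost every such line \cref{thm:BVL} ensures $q$ is 1D BV, so differences $|q(y+\tau u)-q(y)|$ are bounded by the 1D total variation on $[0,\tau]$, and Fubini/disintegration reassembles this into $|\tau|\cdot|Dq|(\R^d)$. Substituting into $|F*q(x+\tau u)-F*q(x)| \le \|F\|_\infty \int |q(y+\tau u)-q(y)|\,dy$ shows $F*q$ is Lipschitz in direction $u$, hence absolutely continuous on every line. Next I would establish the integration-by-parts identity via two applications of Fubini: expand $F*q$, translate $\phi$ so the differential operator acts on $q$, invoke the BV defining identity \cref{eqn:BVintbypart} with vector test field $\phi(\cdot + \hat x)\,u$ (noting $\div(\phi u) = D_u \phi$), then reassemble via Fubini to read off $-\int \phi\,(F*D_u q)$. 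Combining this distributional identity with the Lipschitz bound, Lebesgue's differentiation theorem applied along lines parallel to $u$ gives the desired pointwise a.e.\ equality $D_u(F * q) = F * (D_u q)$.

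The main obstacle is bookkeeping the measure-theoretic structure carefully: disintegrating the vector measure $Dq$ along the foliation of $\R^d$ by lines parallel to $u$, so that the 1D BV derivatives from \cref{thm:BVL} on individual lines glue back to the projection of $Dq$ onto direction $u$, and ensuring that the null set of ``bad'' lines produced by \cref{thm:BVL} is negligible for the Fubini assembly. One must also verify that $F * (D_u q)$, being a convolution of a bounded function with a finite signed measure, is bounded and measurable (it need not be continuous, unlike the Sobolev case where $F * \nabla q$ is continuous), which is why we route through the distributional identity plus Lipschitzness rather than attempting pointwise continuity as in \cref{lemma:convolutionDiff}. Once these measure-theoretic details are settled, the rest of the argument proceeds in direct parallel with the Sobolev proof.
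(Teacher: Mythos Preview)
Your approach is correct but takes a different route from the paper's proof. The paper establishes directly the fundamental-theorem-of-calculus identity
\[
(F*q)(x+\tau u) - (F*q)(x) = \int_0^\tau (F * D_u q)(x + tu)\, dt
\]
for \emph{every} $x$, by expanding the convolution, applying Fubini (justified since $|Dq|$ has finite total variation and $F$ is bounded), and then invoking the linewise BV property \cref{thm:BVL} to evaluate $\int_0^\tau D_u q(x-\hat x + tu)\, dt$ as a Lebesgue--Stieltjes integral on each line, obtaining $\tilde q(x-\hat x+\tau u) - \tilde q_-(x-\hat x)$ for a right-continuous BV representative $\tilde q$; the correction from $\tilde q_-$ to $q$ affects only a null set. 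This single identity yields both absolute continuity on every line and the a.e.\ derivative equality (by 1D Lebesgue differentiation) in one stroke.

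Your approach instead decomposes into three modular steps: the translation bound on $q$ giving Lipschitzness of $F*q$ in direction $u$; the distributional identity $D_u(F*q) = F*(D_u q)$ via the defining BV integration-by-parts \cref{eqn:BVintbypart} and two Fubini swaps; and finally an ACL/Lebesgue-differentiation upgrade from weak to pointwise. This is sound, and your observation that $F*(D_u q)$ need not be continuous (so the Sobolev-case continuity argument of \cref{lemma:convolutionDiff} does not port directly) is exactly right and well motivates routing through the weak identity. The paper's route is shorter and more closely parallels its own Sobolev proof, using \cref{thm:BVL} in a sharper way (to get the exact increment formula, not just a bound); your route is more modular and avoids the Lebesgue--Stieltjes bookkeeping on lines, at the cost of needing the ACL-type bridge at the end. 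Either is fine.
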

Note that $F * q$ is already bounded and continuous as the convolution of a $L^\infty$ and a $L^1$ function.

\begin{proof}
It suffices to show that $(F * q)(x + \tau u) - (F * q)(x) = \int_{0}^\tau (F * D_u q)(x + t u) \dd t$ for every $x$ and every unit vector $u$.
\begin{align*}
    &\phantomeq
        \int_{0}^\tau (F * D_u q)(x + t u) \dd t
        \\
    &=
        \int_0^\tau \int F(\hat x) D_u q(x - \hat x + t u) \dd \hat x \dd t
        \\
    &=
        \int F(\hat x) \int_0^\tau D_u q(x - \hat x + t u) \dd t \dd \hat x
        \numberthis\label{_eqn:FubiniBV}
        \\
    &=
        \int F(\hat x) [q(x - \hat x + \tau u) - q(x - \hat x)] \dd \hat x
        \numberthis\label{_eqn:BV}
        \\
    &=
        F*q(x + \tau u) - F * q(x)
        .
\end{align*}
In \cref{_eqn:FubiniBV}, we applied Fubini's theorem after noting that
\begin{align*}
    &\phantomeq
        \int_0^\tau \int |F(\hat x)| |D_u q|(x - \hat x + t u) \dd \hat x \dd t
        \\
    &\le
        \|F\|_{L^\infty} \int_0^\tau \int |D q|(x - \hat x + t u) \dd \hat x \dd t\\
    &\le
        \tau \|F\|_{L^\infty} TV(|Dq|) < \infty,
\end{align*}
where $TV$ denotes total variation.
In \cref{_eqn:BV}, we applied the linewise BV property (\cref{thm:BVL}) of $q$ to modify $q$ on a null set to obtain a version $\tilde q$ that is BV and right continuous on almost every line parallel to $u$.
Then \cref{_eqn:BV} expands as
\begin{align*}
    &\phantomeq
        \int F(\hat x) \int_0^\tau D_u q(x - \hat x + t u) \dd t \dd \hat x
        \\
    &=
        \int F(\hat x) \int_0^\tau D_u \tilde q(x - \hat x + t u) \dd t \dd \hat x
        \numberthis\label{_eqn:prestieltjes}
        \\
    &=
        \int F(\hat x) [\tilde q(x - \hat x + \tau u) - \tilde q_-(x - \hat x)] \dd \hat x
        \numberthis\label{_eqn:stieltjes}
        \\
    &\pushright{\text{where $\tilde q_-(x) = \lim_{t \nearrow 0} \tilde q(x + tu)$}}\\
    &=
        \int F(\hat x) [\tilde q(x - \hat x + \tau u) - \tilde q(x - \hat x)] \dd \hat x
        \numberthis\label{_eqn:stieltjesCorrection}
        \\
    &=
        \int F(\hat x) [q(x - \hat x + \tau u) - q(x - \hat x)] \dd \hat x
        \numberthis\label{_eqn:stieltjesCorrection2}
        .
\end{align*}
Here in \cref{_eqn:stieltjes}, we use the fact that the Lebesgue integral in $t$ in \cref{_eqn:prestieltjes} is equal to the Lebesgue-Stietjes integral with integrator $t \mapsto \tilde q(x - \hat x + t u)$.
Because on almost every line, $\tilde q_-$ differs from $\tilde q$ only at the points of discontinuity, of which there are only countably many, $\tilde q_-$ differs from $\tilde q$ on $\R^d$ in a null set; this is \cref{_eqn:stieltjesCorrection}.
Finally, $q$ differs from $\tilde q$ on a null set, so \cref{_eqn:stieltjesCorrection2} holds.
\end{proof}

\subsection{Differential Method for BV Densities}

To generalize the differential method to distribution with BV densities, we need to to define $\Phi$ differently.
\begin{defn}\label{defn:PhiBV}
Let $q(x)$ be a distribution with BV density, which we also denote as $q$.
Then $|Dq|$ is a finite Radon measure.
By Lebesgue Decomposition Theorem, $|Dq|$ is the sum of two measures $|Dq|_{ac}$ and $|Dq|_s$ which are resp.\ absolutely continuous and singular w.r.t.\ $q$.
Thus, there is some set of $q$-measure 0 that has full measure under $|Dq|_s$.

For any vector $u \in \R^d$, let $\gamma_u$ be the random variable given by
\[\gamma_u \defeq \la u, \mathbf n(\delta) \ra \f{\dd |Dq|_{ac}(\delta)}{\dd q(\delta)}, \delta \sim q,\]
where $\f{\dd |Dq|_{ac}(\delta)}{\dd q(\delta)}$ is the Radon-Nikodym derivative of $|Dq|_{ac}$ against $q$, and $\mathbf n$ is the vector component of $Dq$ as in \cref{defn:BV}.
Define $\varphi_u$ to be the complementary CDF of $\gamma_u$,
\begin{align*}
    \varphi_u(c) \defeq \Pr[\gamma_u > c],
\end{align*}
and define the inverse complementary CDF $\inv\varphi_u(p)$ of $\gamma_u$ to be
\begin{align*}
    \inv\varphi_u(p) \defeq \inf\{c: \Pr[\gamma_u > c] \le p\}.
\end{align*}
For any $p \in [0, 1]$, define a new random variable $\mygamma{u}{p}$ by
\begin{align*}
    \mygamma{u}{p} =
    \begin{cases}
    \gamma_u|_{(c, \infty)}
        &   \text{with probability $\varphi_u(c)$}\\
    c
        &   \text{with probability $p - \varphi_u(c)$}\\
    0   &   \text{with probability $1 - p$,}
    \end{cases}
\end{align*}
where $c \defeq \varphi_u^{-1}(p)$ and $\gamma_u|_{(c, \infty)}$ is the random variable $\gamma_u$ conditioned on $\gamma_u > c$.

Define
\begin{align*}
\vartheta_u \defeq
    \int \max(0, \la\mathbf n(x), u\ra) |Dq|_s(x).
\end{align*}
Let $\Bb$ be the unit ball of $\|\cdot \|$ as in \cref{assm:FGqsmooth}.
Then we define $\Phi: [0, 1] \to \R$ by
\begin{align*}
    \Phi(p) \defeq
        \max_{u \in \Vrt(\Bb)}
            \EV \mygamma{u}{p}
            + \vartheta_u
            .
\end{align*}
\end{defn}
Here, $\vartheta_u$ represents the instantaneous growth in measure when the set has maximal allocation toward the support of $|Dq|_s$.

\begin{exmp}
Let $q$ be the uniform distribution on $[0, 1]^d$.
Then $|Dq|$ is the Hausdorff measure on the surface of the cube, which is purely singular w.r.t.\ $q$.
Thus, $|Dq| = |Dq|_s$, $|Dq|_{ac} = 0$, and $\gamma_u = 0$.
On the other hand, for $x$ on the boundary of the cube, $\mathbf n(x)$ is the unit normal pointing into the cube, and
\begin{align*}
    \vartheta_u = \int \max(0, \la \mathbf n(x), u\ra) |Dq|_s(x)
        = \Vol(\Pi_u [0, 1]^d),
\end{align*}
for any $\ell_2$ unit vector $u$.
\end{exmp}
This example generalizes to any uniform distribution on any set $S$ of finite perimeter, except that the last equality needs not hold if $S$ is not convex.

With this definition of $\Phi$, the proof of the differential method goes through if we swap usage of \cref{lemma:convolutionDiff} with \cref{lemma:convolutionDiffBV}.
\begin{thm}[The Differential Method for BV Densities]
\label{thm:differentialMethodMasterTheoremBV}
Fix any norm $\| \cdot \|$ and
let $G: \R^d \to [0, 1]$ be the smoothing of any measurable $F: \R^d \to [0, 1]$ by $q(x)$ with BV density.
Let $\Phi: [0, 1] \to \R$ be given as in \cref{defn:PhiBV}.

Then for any $x$, if $G(x) < 1/2$, then $G(x + \delta) < 1/2$ for any
\begin{align*}
    \| \delta \| < \int_{G(x)}^{1/2} \f 1 {\Phi(p)} \dd p
    .
\end{align*}
\end{thm}

\subsection{Wulff Crystal Optimality for BV Densities}

Similarly, if we swap out usage of \cref{thm:coarea} with \cref{thm:coareaBV} and the usage of \cref{thm:regulargradvanish} with \cref{thm:regulargradvanishBV}, then we generalize \cref{thm:wulffOptimalGeneral} to distributions with BV densities.

\begin{thm}\label{thm:wulffOptimalGeneralBV}
Let $\mathcal B$ be a full-dimensional polytope in $\R^d$ symmetric around the origin, and whose vertices form a symmetric set.
Let $Z$ be the Wulff Crystal w.r.t. $\mathcal B$.
Let $q_0$ be a BV probability density function.
Among all probability distributions $q$ with BV and even density function that is level equivalent to $q_0$, the probability density function with concentric superlevel sets proportional to $Z$ minimize
\begin{align*}
    \Phi(1/2) = \sup_{\|v\|=1} \sup_{q(U) = 1/2} \lim_{r \searrow 0} \f{q(U-rv) - 1/2}r
    ,
\end{align*}
where $\|\cdot\|$ is the norm defined by $\mathcal B$.
\end{thm}

Likewise, \cref{thm:WulffCrystalDistributionOptimization} generalizes similarly to BV densities.

\begin{thm}\label{thm:WulffCrystalDistributionOptimizationBV}
    Let $\mathcal B$ be a full-dimensional polytope in $\R^d$ symmetric around the origin, and whose vertices form a symmetric set.
    Let $Z$ be the Wulff Crystal w.r.t. $\mathcal B$, and let $\mynorm{\cdot}$ denote the norm with $Z$ as its unit ball.
    Consider a probability distribution $q(x)$  on $\R^d, d \ge 2$ with even, BV density that depends only on the norm $\mynorm{x}$.
    Then with $\Phi$ defined against the adversary $\mathcal B$, we have, for any $k > 0$,
    \begin{align*}
        \Phi(1/2) \ge \f{(d-1)C}{\sqrt[k]{\EV_{\delta \sim q}\mynorm{\delta}^k}},
    \end{align*}
    where $C \defeq \f 1 2 \EV_{x \sim \pd Z} |\la \nabla \mynorm{x}, u \ra|$, for any vertex $u$ of $\mathcal B$, is a constant that depends only on $Z$.
    \end{thm}

\section{Robust Radii Derivations}
\subsection{IID Distributions}

In this section, we study smoothing distributions that have i.i.d.\ coordinates.

\subsubsection{\texorpdfstring{$\ell_1$ Adversary}{L1 Adversary}}

\subparagraph{IID Log Concave Distributions}

\begin{thm}\label{thm:iidlogconcaveL1}
Let $\phi: \R \to \R$ be absolutely continuous, even, and convex such that $\exp(-\phi(x))$ is integrable.
Suppose $H$ is a smoothed classifier smoothed by
\[q(x) \propto \prod_{i=1}^d e^{-\phi(x_i)},\]
such that $H(x) = (H(x)_1, \ldots, H(x)_C)$ is a vector of probabilities that $H$ assigns to each class $1, \ldots, C$.
If $H$ correctly predicts the class $y$ on input $x$, and the probability of the correct class is $\rho \defeq H(x)_y > 1/2$, then $H$ continues to predict the correct class when $x$ is perturbed by any $\eta$ with
\begin{align*}
    \|\eta\|_1 < \mathrm{CDF}_\phi^{-1}(\rho),
\end{align*}
where $\mathrm{CDF}_\phi^{-1}$ is the inverse CDF of the 1D random variable with density $\propto \exp(-\phi(x))$.
This robust radius is tight.
\end{thm}

\begin{proof}
We seek to apply \cref{thm:differentialMethodMasterTheorem} to $G(x) = 1 - H(x)_y$, for which we need to derive random variables $\gamma_u$ and $\mygamma{u}{p}$, and most importantly, the function $\Phi$.

WLOG, assume $u \in \Vrt(\Bb)$ is $e_1$.
Then $\gamma_u = \la u, \nabla \log q(\delta) \ra = \phi'(\delta_1)$, for $\delta \sim q$.
Let $X$ be the random variable whose density function is $\propto e^{-\phi(x)}$, and denote $\varphi(c) \defeq \Pr[X > c]$.
Thus $\gamma_u$ is distributed as $\phi'(X)$.
Since $\phi$ is convex, $\phi'$ is nondecreasing, so that $\mygamma{u}{p}$ is distributed like $\phi'(X) \ind(X > \varphi^{-1}(p))$ (using the fact that $X$ has an atomless measure).
Then with $C = \int_{-\infty}^\infty e^{-\phi(t)} \dd t,$ we have
\begin{align*}
    \Phi(p) =
    \EV \mygamma{u}{p}
        &=
        C^{-1}\int_{\varphi^{-1}(p)}^\infty
            e^{-\phi(t)} \phi'(t)
            \dd t\\
        &=
            C^{-1} e^{-\phi(\varphi^{-1}(p))}
        .
\end{align*}
Then with $p_0 = 1-\rho$, and by reparametrization the integral (\cref{lemma:reparam}), the certified radius is
\begin{align*}
    \int_{p_0}^{1/2} \f 1 {\Phi(p)} \dd p
    &= \int_{\varphi^{-1}(1/2)}^{\varphi^{-1}(p_0)}
        \f{|\varphi'(c)|}{C^{-1} e^{-\phi(c)}} \dd c
        \\
    &=
        \int_{\varphi^{-1}(1/2)}^{\varphi^{-1}(p_0)} \dd c
        \\
    &=
        \varphi^{-1}(p_0) - \varphi^{-1}(1/2)
        .
\end{align*}
Simplifying this in terms of the CDF, and noting that $\varphi^{-1}(1/2) = 0$ because $\phi$ is even, yields the expression in the theorem statement.

This robust radius is tight, as can be seen from the case when a half-plane $\{x: x_1 \ge s\}$ is the set of inputs that the base classifier assigns the label $y$.

\end{proof}

The same proof can be generalized straightforwardly to distributions with BV densities by using \cref{thm:differentialMethodMasterTheoremBV} (this, for example, yields another proof of the robust radii of uniform distribution against $\ell_1$ adversary).

\begin{thm}\label{thm:iidlogconcaveL1BV}
Let $q_1: \R \to \R$ be an even and convex function and assume it has bounded variations.
Suppose $H$ is a smoothed classifier smoothed by
\[q(x) \propto \prod_{i=1}^d q_1(x_i),\]
such that $H(x) = (H(x)_1, \ldots, H(x)_C)$ is a vector of probabilities that $H$ assigns to each class $1, \ldots, C$.
If $H$ correctly predicts the class $y$ on input $x$, and the probability of the correct class is $\rho \defeq H(x)_y > 1/2$, then $H$ continues to predict the correct class when $x$ is perturbed by any $\eta$ with
\begin{align*}
    \|\eta\|_1 < \mathrm{CDF}_{q_1}^{-1}(\rho),
\end{align*}
where $\mathrm{CDF}_{q_1}^{-1}$ is the inverse CDF of the 1D random variable with density $\propto q_1(x)$.
This robust radius is tight.
\end{thm}

\subparagraph{IID Log Convex* Distributions}

\begin{thm}\label{thm:iidlogconvexL1}
Let $\phi: [0, \infty) \to \R$ be absolutely continuous, concave, and nondecreasing, such that $\exp(-\phi(|x|))$ is integrable.
Suppose $H$ is a smoothed classifier smoothed by
\[q(x) \propto \prod_{i=1}^d e^{-\phi(|x_i|)}\]
such that $H(x) = (H(x)_1, \ldots, H(x)_C)$ is a vector of probabilities that $H$ assigns to each class $1, \ldots, C$.
If $H$ correctly predicts the class $y$ on input $x$, and the probability of the correct class is $\rho \defeq H(x)_y > 1/2$, then $H$ continues to predict the correct class when $x$ is perturbed by any $\eta$ with
\begin{align*}
    \|\eta\|_1
        &<
            \int^{\infty}_{\varphi^{-1}(1-\rho)}
            \f{\dd c}{e^{\phi(c) - \phi(0)} - 1}
            \\
        &=
            \int_{1-\rho}^{1/2} \f{C \dd p}{e^{-\phi(0)} - e^{-\phi(\varphi^{-1}(p))}}
\end{align*}
where $\varphi^{-1}$ is the inverse function of
\[\varphi(c) \defeq \Pr_{z \sim q}[0 \le z_1 \le c],\]
and
\begin{align*}
    C = \int_{-\infty}^\infty e^{-\phi(|t|)} \dd t.
\end{align*}
\end{thm}

\begin{proof}
We seek to apply \cref{thm:differentialMethodMasterTheorem} to $G(x) = 1 - H(x)_y$, for which we need to derive random variables $\gamma_u$ and $\mygamma{u}{p}$, and most importantly, the function $\Phi$.

WLOG, assume $u \in \Vrt(\Bb)$ is $e_1$.
Then $\gamma_u$ is the random variable $\la u, \nabla \log q(\delta) \ra = \phi'(|\delta_1|)\sgn(\delta_1)$ where $\delta \sim q$.
Let $X \in \R$ be the random variable whose density function is $\propto e^{-\phi(|x|)}$, and so $\varphi(c) = \Pr[0 \le X \le c]$.
Thus $\gamma_u$ is distributed as $\phi'(X)\sgn(X)$.
Since $\phi$ is concave, $\phi'$ is nonincreasing, so that for $p < 1/2$, $\mygamma{u}{p}$ is distributed as $\phi'(X) \ind(\varphi^{-1}(p) \ge X \ge 0)$ (using the fact that $X$ has an atomless measure).
Then with $C = \int_{-\infty}^\infty e^{-\phi(|t|)} \dd t,$
\begin{align*}
    \Phi(p) =
    \EV \mygamma{u}{p}
        &=
        C^{-1}\int_0^{\varphi^{-1}(p)}
            e^{-\phi(t)} \phi'(x)
            \dd t\\
        &=
            C^{-1} (e^{-\phi(0)} - e^{-\phi(\varphi^{-1}(p))})
    .
\end{align*}
Then by change of variables $c = \varphi^{-1}(p)$ and with $p_0 = 1-\rho$, the certified radius is
\begin{align*}
    \int_{p_0}^{1/2} \f 1 {\Phi(p)} \dd p
    &=
        \int^{\varphi^{-1}(1/2)}_{\varphi^{-1}(p_0)}
        \f{|\varphi'(c)|}{C^{-1} (e^{-\phi(0)} - e^{-\phi(c)})} \dd c
        \\
    &=
        \int^{\varphi^{-1}(1/2)}_{\varphi^{-1}(p_0)}
        \f{e^{-\phi(c)}}{e^{-\phi(0)} - e^{-\phi(c)}}
        \dd c
        \\
    &=
        \int^{\varphi^{-1}(1/2)}_{\varphi^{-1}(p_0)}
        \f{1}{e^{\phi(c) - \phi(0)} - 1}
        \dd c
        .
\end{align*}
Since $\phi$ is even, $\varphi^{-1}(1/2) = \infty$, yielding the desired result.

\end{proof}

\subparagraph{Corollaries}
The $\ell_p$ based exponential distribution $\propto e^{-\|x\|_p^p}$ has each coordinate is distributed as $\Rademacher \cdot \sqrt[p]{\Gammadist(1/p)}$.
When $p \ge 1$, it satisfies \cref{thm:iidlogconcaveL1}, so we obtain
\begin{cor}\label{cor:lpExponentialL1Adv}
    Suppose $H$ is a smoothed classifier smoothed by
    \[q(x) \propto e^{-\|x/\lambda\|_p^p}, p \ge 1,\]
    such that $H(x) = (H(x)_1, \ldots, H(x)_C)$ is a vector of probabilities that $H$ assigns to each class $1, \ldots, C$.
    If $H$ correctly predicts the class $y$ on input $x$, and the probability of the correct class is $\rho \defeq H(x)_y > 1/2$, then $H$ continues to predict the correct class when $x$ is perturbed by any $\eta$ with
    \begin{align*}
        \|\eta\|_1 < \lambda\sqrt[p]{\GammaCDF^{-1}(2\rho - 1; 1/p)},
    \end{align*}
    where $\mathrm{CDF}_\phi^{-1}$ is the inverse CDF of the 1D random variable with density $\propto \exp(-\phi(x))$.
    This robust radius is tight.
\end{cor}

\renewcommand{\polylog}{\operatorname{polylog}}

When $p < 1$, it satisfies \cref{thm:iidlogconvexL1}, so we obtain
\begin{cor}\label{cor:lpExponentialLogconvexL1Adv}
    Suppose $H$ is a smoothed classifier smoothed by
    \[q(x) \propto e^{-\|x/\lambda\|_p^p}, p < 1,\]
    such that $H(x) = (H(x)_1, \ldots, H(x)_C)$ is a vector of probabilities that $H$ assigns to each class $1, \ldots, C$.
    If $H$ correctly predicts the class $y$ on input $x$, and the probability of the correct class is $\rho \defeq H(x)_y > 1/2$, then $H$ continues to predict the correct class when $x$ is perturbed by any $\eta$ with
    \begin{align*}
        \|\eta\|_1 &< \lambda \int^{\infty}_{\varphi^{-1}(1-\rho)}
        \f{\dd c}{e^{c^p} - 1}\\
            &=
                \lambda \int_{1-\rho}^{1/2} \f{2 \Gamma(1 + \f 1 p) \dd p_0}{1 - e^{-|\varphi^{-1}(p_0)|^p}},
    \end{align*}
    where $\varphi^{-1}(p_0) \defeq \GammaCDF^{-1}(2p_0; 1/p)^{1/p}.$
\end{cor}
The integral above can be evaluated explicitly for inverse integer $p = 1/k$.
We show a few examples below:
\begin{align*}
    p&=1/2: & R&= 2 \lambda (-c \log (1 - e^{-c}) + \polylog (2, e^{-c}))\\
    p&=1/3: & R&= 3\lambda \bigg(- c^2 \log (1 - e^{-c}) \\
    &&&\pushright{{}+ 2 c \polylog (2, e^{-c}) + 2 \polylog(3, e^{-c})\bigg)}\\
    p&=1/4: & R&= 4\lambda\bigg(-c^3 \log(1 - e^{-c})\\
    &&&\pushright{{}+ 3 c^2 \polylog(2, e^{-c}) + 6 c \polylog(3, e^{-c})}\ \\
    &&&\pushright{{}+ 6 \polylog(4, e^{-c})\bigg)}
\end{align*}
where $c = \GammaCDF^{-1}(2(1-\rho); 1/p)$ for each $p$, and $\polylog$ is the function defined as
\begin{align*}
    \polylog(n, z) = \sum_{k=1}^\infty z^k / k^n.
\end{align*}
\subsection{\texorpdfstring{$\ell_\infty$}{Linf} Norm-Based Exponential Law}
\label{sec:cubicalExponential}
In this section, we derive robustness guarantees for distributions of the form $q(x) \propto \|x\|_\infty^{-j} \exp(-\|x\|_\infty^k)$.

\subsubsection{\texorpdfstring{$\ell_1$ Adversary}{L1 Adversary}}
\label{sec:cubicalExponentialL1}
In this section, we set the norm $\| \cdot \|$ in \cref{assm:FGqsmooth} to be the $\ell_1$ norm $\|x\|_1 = \sum_{i=1}^d |x_i|$.
Then the unit ball $\Bb$ in \cref{assm:FGqsmooth} is the convex hull of its vertices which are the coordinates vectors and their negations:
\[\Vrt(\Bb) = \{\pm e_i: i \in [d]\}.\]

\subparagraph{Overview} $\ell_\infty$ norm-based distributions will in general have certified radius that is linear in $\rho - 1/2$, where $\rho$ is the probability that the \emph{smoothed} classifier assigns to the correct class.

We first demonstrate the differential method on $q(x) \propto \exp(-\|x\|_\infty)$ as a warmup before stating the more general result.

\begin{thm}\label{thm:ExpInfL1}
Suppose $H$ is a smoothed classifier smoothed by
\[q(x) \propto \exp(-\|x\|_\infty/\lambda),\]
such that $H(x) = (H(x)_1, \ldots, H(x)_C)$ is a vector of probabilities that $H$ assigns to each class $1, \ldots, C$.
If $H$ correctly predicts the class $y$ on input $x$, and the probability of the correct class is $\rho \defeq H(x)_y > 1/2$, then $H$ continues to predict the correct class when $x$ is perturbed by any $\eta$ with
\begin{align*}
    \|\eta\|_1 <
    \begin{cases}
        2d\lambda (\rho - \f 1 2)
            & \text{if $\rho \le 1 - \f 1 {2d}$}
                \\
        \lambda \log \f 1 {2d(1 - \rho)}
        + \lambda (d-1)
            & \text{if $\rho > 1 - \f 1 {2d}$.}
    \end{cases}
\end{align*}
\end{thm}

\begin{proof}
By linearity in $\lambda$, it suffices to show this for $\lambda = 1$.
Here, we have $q(x) \propto \exp(\psi(x))$ with
\begin{align*}
    \psi(x) = \|x\|_\infty
    \quad\text{and}\quad
    \nabla \psi(x) = \sgn(x_{i^*}) e_{i^*},
\end{align*}
where $i^* = \argmax_i |x_i|$, and $e_{i^*}$ is the $i^*$th coordinate vector, with $\nabla \psi(x)$ defined whenever $i^*$ is the unique argmax.

We seek to apply \cref{thm:differentialMethodMasterTheorem} to $G(x) = 1 - H(x)_y$, for which we need to derive random variables $\gamma_u$ and $\mygamma{u}{p}$, and most importantly, the function $\Phi$.

For any $u \in \Vrt(\Bb)$ (i.e.\ $u = \pm e_i$), the random variable $\gamma_u = \la u, \nabla \psi(\delta) \ra = \la u, \sgn(\delta_{i^*})e_{i^*}\ra, \delta \sim q,$ is given by
\begin{align*}
    \gamma_u =
    \begin{cases}
    0 & \text{with prob.\ $1 - \f 1 d$}\\
    1 & \text{with prob.\ $\f 1 {2d}$}\\
    -1 & \text{with prob.\ $\f 1 {2d}$.}\\
    \end{cases}
\end{align*}
Therefore, for $p \in [0, 1/2]$, the random variable $\mygamma{u}{p}$ defined in \cref{defn:Phi} is
\begin{align*}
    \begin{cases}
    \mygamma{u}{p} = \begin{cases}
                1 & \text{with prob.\ $\f 1 {2d}$}\\
                0 & \text{with prob.\ $1 - \f 1 {2d}$}
            \end{cases}
        & \text{if $p \in [\f 1 {2d},\f 1 2]$,}
            \\
    \mygamma{u}{p} = \begin{cases}
                1 & \text{with prob.\ $p$}\\
                0 & \text{with prob.\ $1 - p$}
            \end{cases}
        & \text{if $p \in [0, \f 1 {2d}]$.}
    \end{cases}
\end{align*}
Thus, for any $u \in \Vrt(\Bb)$,
\begin{align*}
    \Phi(p) = \EV \mygamma{u}{p} =
    \begin{cases}
        \f 1 {2d} & \text{if $p \in [ \f 1 {2d}, \f 1 2]$}\\
        p & \text{if $p \in [0, \f 1 {2d}]$.}
    \end{cases}
\end{align*}
Then, by setting $G(x)$ in \cref{thm:differentialMethodMasterTheorem} to be $1 - H(x)_y = 1 - \rho \defeq p_0$, we get the provably robust radius of
\begin{align*}
    &\phantomeq
        \int_{p_0}^{1/2} \f 1 {\Phi(p)} \dd p
        \\
    &=
    \begin{cases}
    \int_{p_0}^{1/2} 2d \dd p
        = 2d(\f 1 2 - p_0)
        & \text{if $p_0 \ge \f 1 {2d}$}\\
    \int_{p_0}^{1/2d} \inv p \dd p + \int_{1/2d}^{1/2} 2d \dd p
        & \text{if $p_0 \le \f 1 {2d}$.}
    \end{cases}
\end{align*}
Simplifying the arithmetics yields the desired claim.
\end{proof}

Now we tackle the general case.

\begin{thm}\label{thm:ExpInfkjL1}
Suppose $H$ is a smoothed classifier smoothed by
\[q(x) \propto (\|x\|_\infty/\lambda)^{-j}\exp(-(\|x\|_\infty/\lambda)^k), k \ge 1,\]
such that $H(x) = (H(x)_1, \ldots, H(x)_C)$ is a vector of probabilities that $H$ assigns to each class $1, \ldots, C$.
If $H$ correctly predicts the class $y$ on input $x$, and the probability of the correct class is $\rho \defeq H(x)_y \in (1/2, 1- \f 1 {2d}]$, then $H$ continues to predict the correct class when $x$ is perturbed by any $\eta$ with
\begin{align*}
    \|\eta\|_1 <
        \f {2d\lambda} {d-1}
        \f{\Gamma\lp \f {d-j}k\rp}
          {\Gamma\lp \f {d-1-j}k \rp}
        \lp \rho - \f 1 2\rp.
\end{align*}
\end{thm}

\begin{proof}
By linearity in $\lambda$, it suffices to show this for $\lambda = 1$.
Here, we have
\begin{align*}
    q(x) &\propto \exp(-\|x\|_\infty^k-j \log \|x\|_\infty)
    \quad
    &\text{so that}\\
    \psi(x) &= \|x\|_\infty^k + j \log \|x\|_\infty\\
    \nabla \psi(x) &= (k\|x\|^{k-1}_\infty + j \|x\|_\infty^{-1}) \sgn(x_{i^*}) e_{i^*},
\end{align*}
where $i^* = \argmax_i |x_i|$, and $e_{i^*}$ is the $i^*$th coordinate vector, with $\nabla \psi(x)$ defined whenever $i^*$ is the unique argmax.

We seek to apply \cref{thm:differentialMethodMasterTheorem} to $G(x) = 1 - H(x)_y$, for which we need to derive random variables $\gamma_u$ and $\mygamma{u}{p}$, and most importantly, the function $\Phi$.

WLOG among $\Vrt(\Bb)$, let's assume $u = e_1$.
Then the random variable $\gamma_u = \la u, \nabla \psi(\delta) \ra, \delta \sim q,$ is 0 with probability $1 - \f 1 d$, when $i^* \ne 1$.
When $i^* = 1$ and $\sgn(x_{i^*}) = 1$ (which happens with probability $\f 1 {2d}$), $\gamma_u$ is $k \|x\|_\infty^{k-1} + j \|x\|_\infty^{-1}$, where $x \sim q$.
By \cref{lemma:expNormSampling}, this is just the random variable $k z^{\f{k-1}k} + j z^{-1}$, where $z \sim \Gammadist(d/k)$.
Likewise, with probability $\f 1 {2d}$, $\gamma_u$ is the random variable $-k z^{\f{k-1}k} - j z^{-1}$.
This can be summarized below.
\begin{align*}
    \gamma_u =
    \begin{cases}
    0 & \text{with prob.\ $1 - \f 1 d$}\\
    k z^{\f{k-1}k} + j z^{-1} & \text{with prob.\ $\f 1 {2d}$}\\
    - k z^{\f{k-1}k} - j z^{-1} & \text{with prob.\ $\f 1 {2d}$,}\\
    \end{cases}
\end{align*}
where $z \sim \Gammadist(d/k)$.

Therefore, for $p \in [\f 1 {2d}, \f 1 2]$, the random variable $\mygamma{u}{p}$ defined in \cref{defn:Phi} is
\begin{align*}
    \mygamma{u}{p} = \begin{cases}
                kz^{\f{k-1}k} + j z^{-1} & \text{with prob.\ $\f 1 {2d}$}\\
                0 & \text{with prob.\ $1 - \f 1 {2d}$}
            \end{cases}
\end{align*}
where $z$ is sampled from $\Gammadist(d/k)$.

Thus, for any $u \in \Vrt(\Bb)$, by \cref{lemma:GammaExpect},
\begin{align*}
    \Phi(p) = \EV \mygamma{u}{p} =
        \f 1 {2d}\EV kz^{\f{k-1}k}
        = \f {d-1} {2d} \f
                {\Gamma\lp \f {d-1-j}k \rp}
                {\Gamma\lp \f {d-j}k\rp}
\end{align*}
which does not depend on $p$.

Then, by setting $G(x)$ in \cref{thm:differentialMethodMasterTheorem} to be $1 - H(x)_y = 1 - \rho$, we get the provably robust radius of
\begin{align*}
    \int_{1-\rho}^{1/2} \f 1 {\Phi(p)} \dd p =
    \f {2d} {d-1} \f
        {\Gamma\lp \f {d-j}k\rp}
        {\Gamma\lp \f {d-1-j}k \rp}
    \lp \rho - \f 1 2 \rp
\end{align*}
as desired.

\end{proof}

As $j = 0$ and $k \to \infty$, the distribution above converges to the uniform distribution, and the robust certificate converges likewise to the one computed previous for uniform distribution.

\begin{thm}[\citet{lee_tight_2019}]\label{thm:UniformL1}
Suppose $H$ is a smoothed classifier smoothed by the uniform distribution on the cube $[-\lambda, \lambda]^d$,
such that $H(x) = (H(x)_1, \ldots, H(x)_C)$ is a vector of probabilities that $H$ assigns to each class $1, \ldots, C$.
If $H$ correctly predicts the class $y$ on input $x$, and the probability of the correct class is $\rho \defeq H(x)_y > 1/2$, then $H$ continues to predict the correct class when $x$ is perturbed by any $\eta$ with
\begin{align*}
    \|\eta\|_1 <
        2\lambda \lp \rho - \f 1 2 \rp.
\end{align*}
\end{thm}

\subsubsection{\texorpdfstring{$\ell_\infty$ Adversary}{Linf Adversary}}

In this section, we set the norm $\| \cdot \|$ in \cref{assm:FGqsmooth} to be the $\ell_\infty$ norm $\|x\|_\infty = \max_{i=1}^d |x_i|$.
Then the unit ball $\Bb$ in \cref{assm:FGqsmooth} is the convex hull of its vertices which are points in the Boolean cube:
\[\Vrt(\Bb) = \{\pm1\}^d.\]

\begin{thm}\label{thm:ExpInfLinfty}
Suppose $H$ is a smoothed classifier smoothed by
\[q(x) \propto \exp(-\|x\|_\infty/\lambda),\]
such that $H(x) = (H(x)_1, \ldots, H(x)_C)$ is a vector of probabilities that $H$ assigns to each class $1, \ldots, C$.
If $H$ correctly predicts the class $y$ on input $x$, and the probability of the correct class is $\rho \defeq H(x)_y > 1/2$, then $H$ continues to predict the correct class when $x$ is perturbed by any $\eta$ with
\begin{align*}
    \|\eta\|_\infty < \lambda \log \f{1}{2(1- \rho)}.
\end{align*}
\end{thm}

\begin{proof}
By linearity in $\lambda$, it suffices to show this for $\lambda = 1$.

We seek to apply \cref{thm:differentialMethodMasterTheorem} to $G(x) = 1 - H(x)_y$, for which we need to derive random variables $\gamma_u$ and $\mygamma{u}{p}$, and most importantly, the function $\Phi$.

For any $u \in \Vrt(\Bb)$, the random variable $\gamma_u = \la u, \nabla \psi(\delta) \ra = \la u, \sgn(\delta_{i^*})e_{i^*}\ra, \delta \sim q,$ is a Rademacher random variable taking values $\pm 1$ with equal probability.
Therefore, for $p \in [0, 1/2]$, the random variable $\mygamma{u}{p}$ defined in \cref{defn:Phi} is
\begin{align*}
    \mygamma{u}{p} = \begin{cases}
                1 & \text{with prob.\ $p$}\\
                0 & \text{with prob.\ $1 - p$.}
            \end{cases}
\end{align*}
Thus, for any $u \in \Vrt(\Bb)$,
\begin{align*}
    \Phi(p) = \EV \mygamma{u}{p} = p.
\end{align*}
Then, by setting $G(x)$ in \cref{thm:differentialMethodMasterTheorem} to be $1 - H(x)_y = 1 - \rho$, we get the provably robust radius of
\begin{align*}
    \int_{1-\rho}^{1/2} \f 1 {\Phi(p)} \dd p = \int_{1-\rho}^{1/2} \f 1 {p} \dd p = \log \f{1}{2(1- \rho)}.
\end{align*}
\end{proof}

\begin{thm}\label{thm:ExpInfjkLinfty}
Suppose $H$ is a smoothed classifier smoothed by
\[q(x) \propto \exp(-\|x/\lambda\|_\infty^k), k \ge 1\]
such that $H(x) = (H(x)_1, \ldots, H(x)_C)$ is a vector of probabilities that $H$ assigns to each class $1, \ldots, C$.
If $H$ correctly predicts the class $y$ on input $x$, and the probability of the correct class is $\rho \defeq H(x)_y > 1/2$, then $H$ continues to predict the correct class when $x$ is perturbed by any $\eta$ with
\begin{align}
    \|\eta\|_\infty < \lambda \int_{1-\rho}^{1/2} \f 1 {\Phi(p)} \dd p,
    \label{eqn:ExpInfkLinftyRadius}
\end{align}
in which
\begin{align*}
    \Phi(p) &\defeq C \lp 1 - \GammaCDF\left(c^*(p); \f{d+k-1}k\right) \rp,\\
        &\pushright{\text{where $c^*(p) \defeq \GammaCDF^{-1}\left(1-2p; \f d k\right)$,}}\\
        &\pushright{C \defeq \f {k} 2 \f{\Gamma\left(\f{d+k-1}k\right)}{\Gamma\left(\f d k\right)}.}
\end{align*}
More generally, if the smoothing distribution is
\[q(x) \propto \|x/\lambda\|_\infty^{-j}\exp(-\|x/\lambda\|_\infty^k), k \ge 1, j < d - 1,\]
then $H$ is robust against $\ell_\infty$ perturbation
\begin{align}
    \|\eta\|_\infty < \lambda \int_{1-\rho}^{1/2} \f 1 {\Phi(p)} \dd p,
    \label{eqn:ExpInfjkLinftyRadius}
\end{align}
where
\begin{align*}
    \Phi(p) &\defeq \f 1 2 \bar \phi(\phi^{-1}(2p)), \quad \text{where}\\
    \phi(c) &\defeq \Pr[\gamma > c]\\
    \bar \phi(c) &\defeq \EV \gamma \ind( \gamma > c)
\end{align*}
and $\gamma \defeq (k-1)\xi^{\f{k-1}k} + j \xi^{-\f 1 k}, \xi \sim \Gammadist\lp \f d k - \f j k \rp$.
\end{thm}

\begin{proof}
By linearity in $\lambda$, it suffices to show this for $\lambda = 1$.

We seek to apply \cref{thm:differentialMethodMasterTheorem} to $G(x) = 1 - H(x)_y$, for which we need to derive random variables $\gamma_u$ and $\mygamma{u}{p}$, and most importantly, the function $\Phi$.

For any $u \in \Vrt(\Bb)$, we have
\begin{align*}
    \gamma_u
        &=
            \la u, -\nabla \log q(\delta) \ra\\
        &=
            \la u, (k\|\delta\|_\infty^{k-1} + j \|\delta\|_\infty^{-1})\sgn(\delta_{i^*})e_{i^*}\ra,
            \delta \sim q.
\end{align*}
Since $\|\delta\|_\infty$ is distributed as $\sqrt[k]{\Gammadist(\f d k - \f j k)}$ and $\la u, \sgn(\delta_{i^*})e_{i^*}\ra$ is $\pm1$ with equal probability, $\gamma_u$ is distributed as the product of random variables
\begin{align*}
    \gamma_u &= \zeta (k\xi^{\f{k-1}k} + j \xi^{-\f 1 k}),\\
    &\quad \zeta \sim \Rademacher, \xi \sim \Gammadist\left(\f d k - \f j k\right).
\end{align*}
Let $\varphi(c) \defeq \Pr[\gamma_u > c]$.
Then for $p < 1/2$,  $\phi^{-1}(2p) = \varphi^{-1}(p)$.
Since $\gamma_u$ has an absolutely continuous distribution, the variable $\mygamma{u}{p} = \gamma_u|_{(\varphi^{-1}(p), \infty)}$ with probability $p$, and $0$ otherwise.
Thus
\begin{align*}
    \Phi(p) &= \EV \mygamma{u}{p} = \bar\varphi(\varphi^{-1}(p)), \quad\text{where}\\
    \bar \varphi(c) &= \EV \gamma_u \ind(\gamma_u > c).
\end{align*}
Note that $\bar \varphi(c) = \f 1 2 \bar \phi(c)$.
Plugging into \cref{thm:differentialMethodMasterTheorem} yields \cref{eqn:ExpInfjkLinftyRadius}.

\subparagraph{Assuming $j =0$}
If $j = 0$, $\gamma_u = \zeta k\xi^{\f{k-1}k}, \zeta \sim \Rademacher, \xi \sim \Gammadist\left(\f d k \right)$.
Then for $p < 1/2$,
\begin{align*}
    \bar \varphi(c)
        &=
            \f {k} 2 \EV \xi^{\f{k-1}k} \ind(\xi > c^*),\\
        &\quad
            \text{where $c^* = \GammaCDF^{-1}\left(1-2p; \f d k\right)$}\\
        &=
            C \lp 1 - \GammaCDF\left(c^*; \f{d+k-1}k\right)\rp,
\end{align*}
where $C = \f {k} 2 \f{\Gamma\left(\f{d+k-1}k\right)}{\Gamma\left(\f d k\right)}$.
Plugging into \cref{thm:differentialMethodMasterTheorem} yields \cref{eqn:ExpInfkLinftyRadius}.
\end{proof}

Compare this with the uniform case below.

\begin{thm}[\citet{lee_tight_2019}]\label{thm:UniformLinfty}
    Suppose $H$ is a smoothed classifier smoothed by the uniform distribution on the cube $[-\lambda, \lambda]^d$,
    such that $H(x) = (H(x)_1, \ldots, H(x)_C)$ is a vector of probabilities that $H$ assigns to each class $1, \ldots, C$.
    If $H$ correctly predicts the class $y$ on input $x$, and the probability of the correct class is $\rho \defeq H(x)_y > 1/2$, then $H$ continues to predict the correct class when $x$ is perturbed by any $\eta$ with
    \begin{align*}
        \|\eta\|_\infty <
            2\lambda \lp 1 - \sqrt[d]{\f 3 2 - \rho} \rp.
    \end{align*}
\end{thm}
When $d \to \infty$, this robust radius is roughly
\begin{align*}
    &\phantomeq 2\lambda \lp 1 - e^{\f 1 d \log\left[1 - \left(\rho - \f 1 2\right)\right]}\rp\\
    &\approx 2\lambda \lp 1 - \lp 1 + \f 1 d \log\left[1 - \left(\rho - \f 1 2\right)\right]\rp\rp\\
    &\approx \f {2\lambda} d \lp\rho - \f 1 2\rp.
\end{align*}

On the other hand, when $k \to \infty$ in \cref{eqn:ExpInfkLinftyRadius}, we see that
\begin{enumerate}
    \item $d/k \to 0$ while $\f{d+k-1}k \to 1$
    \item $c^* \to 0$ for any $p < 1/2$
    \item $\GammaCDF(c^*; \f{d+k-1}k) \to \GammaCDF(0; 1) = 1$ consequently
    \item by simple calculation $k \f{\Gamma(\f{d+k-1}k)}{\Gamma(\f d k)} \to d$
    \item so $\Phi(p) \to d/2$ for any $p < 1/2$.
\end{enumerate}
Therefore, when $k \to \infty$, the $\ell_\infty$ robust radius in \cref{eqn:ExpInfkLinftyRadius} converges to
\[\f {2\lambda} d \lp\rho - \f 1 2\rp\]
as well.

\subsection{\texorpdfstring{$\ell_\infty$}{Linf} Norm-Based Power Law}

Now consider a power law of the $\ell_\infty$ norm:
For $a > d$,
\begin{align*}
    q(x) &\propto (1 + \|x\|_\infty)^{-a}
    &\text{so that}\\
    \psi(x) &= a \log (1 + \|x\|_\infty)\\
    \nabla \psi(x) &= a(1 + \|x\|_\infty)^{-1} \sgn(x_{i^*}) e_{i^*},
\end{align*}
where $i^* = \argmax_i |x_i|$, and $e_{i^*}$ is the $i^*$th coordinate vector, with $\nabla \psi(x)$ defined whenever $i^*$ is the unique argmax.
Note that the $\ell_\infty$ norm of vector sampled from $q$ has distribution with CDF
\begin{align}
    \Pr_{\delta \sim q}[\|\delta\|_\infty \le c] =
    \f{\Gamma(a)}{\Gamma(a-d)\Gamma(d)}\int_0^c \f{r^{d-1}}{(1 + r)^a} \dd r
    .
    \label{eqn:powerlawRdist}
\end{align}
This is known as the \emph{Beta prime or Beta distribution of the second kind, with shape parameters $\alpha = d, \beta = a - d$,} which we will denote by $\BetaPrime(d, a-d)$.
If $a > d+1$, this distribution has mean
\begin{align*}
    \EV_{\delta \sim q}\|\delta\|_\infty =
    \f{d}{a-d-1}
        .
\end{align*}

\subsubsection{\texorpdfstring{$\ell_1$ Adversary}{L1 Adversary}}
\label{sec:cubicalPowerL1}

\begin{thm}\label{thm:ExpInfpolyL1}
Suppose $H$ is a smoothed classifier smoothed by
\[q(x) \propto (1 + \|x\|_\infty/\lambda)^{-a}, a > d,\]
such that $H(x) = (H(x)_1, \ldots, H(x)_C)$ is a vector of probabilities that $H$ assigns to each class $1, \ldots, C$.
If $H$ correctly predicts the class $y$ on input $x$, and the probability of the correct class is $\rho \defeq H(x)_y > 1/2$, then $H$ continues to predict the correct class when $x$ is perturbed by any $\eta$ with
\begin{align*}
    \|\eta\|_1 < \lambda \f{2d}{a-d}
                        \lp \rho - \f 1 2 \rp
                        .
\end{align*}
\end{thm}

\begin{proof}
By linearity in $\lambda$, it suffices to show this for $\lambda = 1$.

We seek to apply \cref{thm:differentialMethodMasterTheorem} to $G(x) = 1 - H(x)_y$, for which we need to derive random variables $\gamma_u$ and $\mygamma{u}{p}$, and most importantly, the function $\Phi$.

WLOG among $\Vrt(\Bb)$, let's assume $u = e_1$.
Then the random variable $\gamma_u = \la u, \nabla \psi(\delta) \ra, \delta \sim q,$ is 0 with probability $1 - \f 1 d$, when $i^* \ne 1$.
When $i^* = 1$ and $\sgn(x_{i^*}) = 1$ (which happens with probability $\f 1 {2d}$), $\gamma_u$ is distributed as  $a(1 + r)^{-1}$, where $r$ has CDF \cref{eqn:powerlawRdist}.
Likewise, with probability $\f 1 {2d}$, $\gamma_u$ is distributed as $-a (1 + r)^{-1}$.

This can be summarized below.
\begin{align*}
    \gamma_u =
    \begin{cases}
    0 & \text{with prob.\ $1 - \f 1 d$}\\
    a (1 + r)^{-1} & \text{with prob.\ $\f 1 {2d}$}\\
    -a (1 + r)^{-1} & \text{with prob.\ $\f 1 {2d}$,}\\
    \end{cases}
\end{align*}
where $r$ is a random variable with CDF \cref{eqn:powerlawRdist}.

Therefore, for $p \in [\f 1 {2d}, \f 1 2]$, the random variable $\mygamma{u}{p}$ defined in \cref{defn:Phi} is
\begin{align*}
    \mygamma{u}{p} = \begin{cases}
                a (1 + r)^{-1} & \text{with prob.\ $\f 1 {2d}$}\\
                0 & \text{with prob.\ $1 - \f 1 {2d}$.}
            \end{cases}
\end{align*}

Thus, for any $u \in \Vrt(\Bb)$, by \cref{lemma:GammaExpect},
\begin{align*}
    \Phi(p) &= \EV \mygamma{u}{p} =
        \f 1 {2d}\EV_r a (1 + r)^{-1}
            \\
    &=
        \f 1 {2d}\f{a\Gamma(a)}{\Gamma(a-d)\Gamma(d)}\int_0^\infty a\f{r^{d-1}}{(1 + r)^{a+1}} \dd r
        \\
    &=
        \f 1 {2d}\f{\Gamma(a+1)}{\Gamma(a-d)\Gamma(d)}\f{\Gamma(a+1-d)\Gamma(d)}{\Gamma(a+1)}
        \\
    &=
        \f {a-d}{2d}.
\end{align*}
which does not depend on $p$.

Then, by setting $G(x)$ in \cref{thm:differentialMethodMasterTheorem} to be $1 - H(x)_y = 1 - \rho$, we get the provably robust radius of
\begin{align*}
    \int_{1-\rho}^{1/2} \f 1 {\Phi(p)} \dd p =
    \f{2d}{a-d}
    \lp \rho - \f 1 2 \rp
\end{align*}
as desired.
\end{proof}

\subsubsection{\texorpdfstring{$\ell_\infty$ Adversary}{Linf Adversary}}

\begin{thm}\label{thm:ExpInfpolyLinf}
    Suppose $H$ is a smoothed classifier smoothed by
    \[q(x) \propto (1 + \|x\|_\infty/\lambda)^{-a}, a > d,\]
    such that $H(x) = (H(x)_1, \ldots, H(x)_C)$ is a vector of probabilities that $H$ assigns to each class $1, \ldots, C$.
    If $H$ correctly predicts the class $y$ on input $x$, and the probability of the correct class is $\rho \defeq H(x)_y > 1/2$, then $H$ continues to predict the correct class when $x$ is perturbed by any $\eta$ with
    \begin{align*}
        \|\eta\|_1 < \f{2\lambda}{a-d} \int_{1-\rho}^{1/2} \f {\dd p} {\Upsilon(\Upsilon^{-1}(2p; d, a-d); d, a+1-d)}
                            ,
    \end{align*}
    where $\Upsilon = \BetaPrimeCDF$.
\end{thm}

\begin{proof}
    By linearity in $\lambda$, it suffices to show this for $\lambda = 1$.

    We seek to apply \cref{thm:differentialMethodMasterTheorem} to $G(x) = 1 - H(x)_y$, for which we need to derive random variables $\gamma_u$ and $\mygamma{u}{p}$, and most importantly, the function $\Phi$.

    For any $u \in \Vrt(\Bb)$, we have
    \begin{align*}
        \gamma_u
            &=
                \la u, -\nabla \log q(\delta) \ra\\
            &=
                \la u, a(1 + \|\delta\|_\infty)^{-1} \sgn(\delta_{i^*}) e_{i^*}\ra,
                \delta \sim q.
    \end{align*}

    Since $\|\delta\|_\infty$ is distributed as $\BetaPrime(d, a-d)$ and $\la u, \sgn(\delta_{i^*})e_{i^*}\ra$ is $\pm1$ with equal probability, $\gamma_u$ is distributed as the product of random variables
    \begin{align*}
        \gamma_u &= \zeta a (1 + \xi)^{-1},\\
        &\quad \zeta \sim \Rademacher, \xi \sim \BetaPrime(d, a-d).
    \end{align*}
    Since $r \mapsto (1 + r)^{-1}$ is a decreasing function on $r \in [0, \infty)$, we have, for $p < 1/2$,
    \begin{align*}
        \Phi(p)
            &=
                \EV \mygamma{u}{p} = \f 1 2 \EV_\xi a (1 + \xi)^{-1} \ind(\xi < c(p)),
                \\
            &\pushright{\text{where }
                c(p) = \BetaPrimeCDF^{-1}(2p; d, a-d).}
    \end{align*}
    Of course, we can simplify
    \begin{align*}
        &\phantomeq
            \EV_\xi (1 + \xi)^{-1} \ind(\xi < c)\\
        &=
            \f{\Gamma(a)}{\Gamma(a-d)\Gamma(d)}
            \int_0^c \f{r^{d-1}}{(1 + r)^{a+1}} \dd r\\
        &=
            \f{\Gamma(a)}{\Gamma(a-d)\Gamma(d)}
            \f{\Gamma(a+1-d)\Gamma(d)}{\Gamma(a+1)}\times
            \\
        &\pushright{
            \BetaPrimeCDF(c; d, a+1-d)}
            \\
        &=
            \f{(a-d)}{a}
            \BetaPrimeCDF(c; d, a+1-d)
            .
    \end{align*}
    Therefore,
    \begin{align*}
        \Phi(p) = \f {a-d} 2 \BetaPrimeCDF(c(p); d, a+1-d).
    \end{align*}
    Plugging into \cref{thm:differentialMethodMasterTheorem} yields the desired result.
\end{proof}

\subsection{\texorpdfstring{$\ell_1$}{L1} Norm-Based Exponential Law}

Consider the following generalization of the Laplace distribution
\begin{align*}
    q(x) &\propto \exp(-\|x\|_1^k)
    \quad
    \text{so that}\\
    \psi(x) &= \|x\|_1^k\\
    \nabla \psi(x) &= k\|x\|_1^{k-1} (\sgn(x_1), \ldots, \sgn(x_d)),
\end{align*}
with $\nabla \psi(x)$ defined whenever all $x_i$s are nonzero.

\subsubsection{\texorpdfstring{$\ell_1$ Adversary}{L1 Adversary}}

\begin{thm}\label{thm:Exp1kL1}
\renewcommand{\GammaCDF}{\Upsilon}
Suppose $H$ is a smoothed classifier smoothed by
\[q(x) \propto \exp(-(\|x\|_1/\lambda)^k),\]
such that $H(x) = (H(x)_1, \ldots, H(x)_C)$ is a vector of probabilities that $H$ assigns to each class $1, \ldots, C$.
If $H$ correctly predicts the class $y$ on input $x$, and the probability of the correct class is $\rho \defeq H(x)_y > 1/2$, then $H$ continues to predict the correct class when $x$ is perturbed by any $\eta$ with
\begin{align*}
    \|\eta\|_1 < \lambda
        \int_{1-\rho}^{1/2} \f R {\Psi(p)} \dd p
                    .
\end{align*}
Here $R = \f
{2\Gamma\lp \f d k \rp}
{k \Gamma\lp \f{d+k-1}k \rp}$, and
\begin{align*}
    \Psi(p) \defeq
    \begin{cases}
    1 - \GammaCDF\lp \GammaCDF^{-1}(1 - 2 p; \f d k); \f{d+k-1}k\rp
        &   \text{if $k \ge 1$}\\
    \GammaCDF\lp \GammaCDF^{-1}(2 p; \f d k); \f{d+k-1}k\rp
        &   \text{if $k \in (0, 1)$.}
    \end{cases}
\end{align*}
where $\GammaCDF = \mathrm{GammaCDF}$.
\end{thm}

\begin{proof}
\renewcommand{\GammaCDF}{\Upsilon}

By linearity in $\lambda$, it suffices to show this for $\lambda = 1$.

We seek to apply \cref{thm:differentialMethodMasterTheorem} to $G(x) = 1 - H(x)_y$, for which we need to derive random variables $\gamma_u$ and $\mygamma{u}{p}$, and most importantly, the function $\Phi$.

For any $u \in \Vrt(\Bb)$ (i.e.\ $u = \pm e_i$), $\gamma_u = \la u, \nabla \psi(\delta) \ra = \pm k \|\delta\|_1^{k-1}, \delta \sim q,$ takes positive or negative sign with equal probability.
By \cref{lemma:expNormSampling}, $\|\delta\|_1$ is the random variable $\Gamma(d/k)^{1/k}$.
Therefore, $\gamma_u$ is distributed as $ k\Gammadist(d/k)^{\f{k-1}k}\Rademacher(1/2)$.

Therefore, for $p \in [0, 1/2]$, the random variable $\mygamma{u}{p}$ defined in \cref{defn:Phi} is
\begin{align*}
    \mygamma{u}{p} = \begin{cases}
                k z_p^{\f{k-1}k} & \text{with prob.\ $p$}\\
                0 & \text{with prob.\ $1 - p$,}
            \end{cases}
\end{align*}
where,
if $k \ge 1$,
$z_p$ is sampled from $\Gammadist(d/k)$ conditioned on $z_p > \GammaCDF^{-1}(1 - 2 p; d/k)$ (because $z^{\f{k-1}k}$ is increasing in $z$), but if $k \in (0, 1)$, then $z_p$ is sampled from $\Gammadist(d/k)$ conditioned on $z_p < \GammaCDF^{-1}(2 p; d/k)$ (because $z^{\f{k-1}k}$ is decreasing in $z$).

Thus, for any $u \in \Vrt(\Bb)$,
\begin{align*}
    \Phi(p) = \EV \mygamma{u}{p} =
    \begin{cases}
    \f k 2 \EV z^{\f{k-1}k} \ind(z > \GammaCDF^{-1}(1-2p)) & \text{if $k \ge 1$}\\
    \f k 2 \EV z^{\f{k-1}k} \ind(z < \GammaCDF^{-1}(2p)) & \text{if $k < 1$}
    \end{cases}
\end{align*}
where $z \sim \Gammadist(d/k)$.
This integral simplifies to $R^{-1} \Psi(p)$ (with $\Psi$ taking different forms depending on $k$) by \cref{lemma:GammaExpect}.

Then, by setting $G(x)$ in \cref{thm:differentialMethodMasterTheorem} to be $1 - H(x)_y = 1 - \rho$, we get the provably robust radius of
\begin{align*}
    \int_{1-\rho}^{1/2} \f 1 {\Phi(p)} \dd p =
    \int_{1-\rho}^{1/2} \f R {\Psi(p)} \dd p
    .
\end{align*}

\end{proof}

\subsubsection{\texorpdfstring{$\ell_\infty$ Adversary}{Linf Adversary}}

We first start with the Laplace distribution to highlight the basic logic behind the $\ell_\infty$ radius derivation.

\begin{thm}\label{thm:laplacelinf}
Suppose $H$ is a smoothed classifier smoothed by the Laplace distribution
\[q(x) \propto \exp(-\|x\|_1/\lambda),\]
such that $H(x) = (H(x)_1, \ldots, H(x)_C)$ is a vector of probabilities that $H$ assigns to each class $1, \ldots, C$.
If $H$ correctly predicts the class $y$ on input $x$, and the probability of the correct class is $\rho \defeq H(x)_y > 1/2$, then $H$ continues to predict the correct class when $x$ is perturbed by any $\eta$ with
\begin{align*}
    \|\eta\|_\infty < \lambda \int_{1-\rho}^{1/2}
        \f 1 {\Phi(p)} \dd p,
\end{align*}
where
\begin{align*}
    \Phi(p) &= c (p - \phi_d(c))
            + d \phi_{d-1}\left(c - \f 1 2\right)
            - d \phi_{d}(c),\\
        &\pushright{\text{in which $c = \phi_d^{-1}(p)$, and}}\\
    \phi_d(c)
        &\defeq
            2^{-d} \sum_{i=\f{c+d}2 + 1}^d \binom{d}{i}\\
        &=
            1 - \BinomCDF\lp \f{c+d}2; d\rp
            \\
        &\pushright{\text{for any $c \equiv d \mod 2$}}
            \\
    \phi_d^{-1}(p)
        &\defeq
            \inf \{c: \phi_d(c) \le p\}\\
        &=
            2 \BinomCDF^{-1}(1 - p) - d
            .
\end{align*}
\end{thm}

Note that when $d$ is large,
\[\Phi(p) \approx \GaussianCDF'(\GaussianCDF^{-1}(p)) \sqrt d,\]
so that the the bound above is roughly
\[\|\delta\|_\infty < \lambda \GaussianCDF^{-1}(\rho) /\sqrt d.
\]

\begin{proof}
By linearity in $\lambda$, it suffices to show this for $\lambda = 1$.

We seek to apply \cref{thm:differentialMethodMasterTheorem} to $G(x) = 1 - H(x)_y$, for which we need to derive random variables $\gamma_u$ and $\mygamma{u}{p}$, and most importantly, the function $\Phi$.

WLOG, let $u \in \Vrt(\Bb)$ be $u = (1, \ldots, 1)$; arguments for other $u \in\Vrt(\Bb)$ proceeds similarly.
For this $u$, $\gamma_u = \la u, -\nabla \log q(\delta) \ra, \delta \sim q,$ is a sum of independent Rademacher random variable $\gamma_u = \sum_{i=1}^d R_i$, where each $R_i$ independently takes values 1 and $-1$ with equal probability.
Thus $\gamma_u$ is distributed like $2 B_d - d$, where $B_d$ is the binomial random variable corresponding to the number of heads in $d$ coin tosses.
Then for any integer $c$ with the same parity as $d$, $\phi_d(c)$ is the complementary CDF of $\gamma_u$ and $\phi_d^{-1}$ is the corresponding inverse CDF.
Then, for $p \in [0, 1/2]$, we have
\begin{align*}
    \mygamma{u}{p} =
    \begin{cases}
    \gamma_u|_{(c, \infty)}
        &   \text{with probability $\phi_d(c)$}\\
    c
        &   \text{with probability $p - \phi_d(c)$}\\
    0   &   \text{with probability $1 - p$,}
    \end{cases}
\end{align*}
where $c \defeq \phi_d^{-1}(p)$ and $\gamma_u|_{(c, \infty)}$ is the random variable $\gamma_u$ conditioned on $\gamma_u > c$.
Therefore,
\begin{align*}
    \Phi(p) &= \EV \mygamma{u}{p} = c (p - \phi_d(c)) + 2^{-d} \sum_{i=\f{c+d}2 + 1}^d (2i - d) \binom{d}{i}\\
    &=
        c (p - \phi_d(c)) + 2^{-d} \sum_{i=\f{c+d}2 + 1}^d 2d\binom{d-1}{i-1} - d \binom d i\\
    &=
        c (p - \phi_d(c)) + d \phi_{d-1}\left(c - \f 1 2\right) - d \phi_{d}(c)
\end{align*}

Then, by setting $G(x)$ in \cref{thm:differentialMethodMasterTheorem} to be $1 - H(x)_y = 1 - \rho$, we get the desired robust radius.
\end{proof}

\begin{thm}\label{thm:exp1kLinf}
    Suppose $H$ is a smoothed classifier smoothed by
    \[q(x) \propto \exp(-\|x/\lambda\|_1^k), k > 1,\]
    such that $H(x) = (H(x)_1, \ldots, H(x)_C)$ is a vector of probabilities that $H$ assigns to each class $1, \ldots, C$.
    If $H$ correctly predicts the class $y$ on input $x$, and the probability of the correct class is $\rho \defeq H(x)_y > 1/2$, then $H$ continues to predict the correct class when $x$ is perturbed by any $\eta$ with
    \begin{align*}
        \|\eta\|_\infty < \lambda \int_{1-\rho}^{1/2}
            \f 1 {\Phi(p)} \dd p,
    \end{align*}
    where
    \begin{align*}
        \Phi(p)
            &=
                \EV \gamma \ind(\gamma > \varphi^{-1}(p)), \quad \text{with}\\
        \gamma
            &=
                \lp\sum_{i=1}^d \zeta_i\rp k\xi^{\f{k-1}k}, \\
            &\pushright{\zeta_i \sim \Rademacher, \xi \sim \Gammadist(d/k)}\\
        \varphi(c)
            &=
                \Pr[\gamma > c].
    \end{align*}
\end{thm}

\begin{proof}
    By linearity in $\lambda$, it suffices to show this for $\lambda = 1$.

    WLOG, let $u \in \Vrt(\Bb)$ be $u = (1, \ldots, 1)$; arguments for other $u \in\Vrt(\Bb)$ proceeds similarly.
    As in the proof of \cref{thm:laplacelinf}, we find $\gamma_u = \la u, -\nabla \log q(\delta) \ra = \la u, k\|\delta\|_1^{k-1} \sgn(\delta) \ra, \delta \sim q,$ is distributed like $\gamma$ in the theorem statement --- a product of sum of Rademacher variables (coming from $\la u, \sgn(\delta))$) and $k \xi^{\f{k-1}k}, \xi \sim \Gamma(d/k)$ (coming from $k \|\delta\|_1^{k-1}$).
    Because $k>1$, $\gamma$'s distribution is absolutely continuous (as it's a mixture of scaled versions of $\xi^{\f{k-1}k}$'s distribution, which is absolutely continuous).
    Therefore, the random variable $\mygamma{u}{p} = \gamma \ind(\gamma > \varphi^{-1}(p))$.
    Then the theorem statement follows straightforwardly from \cref{thm:differentialMethodMasterTheorem}.
\end{proof}

\subsection{Pareto Distribution}

\newcommand{\Pareto}{\mathrm{Pareto}}
\newcommand{\ParetoCDF}{\mathrm{ParetoCDF}}
For $a, \lambda > 0$ and $u \in \R$, define the 0-centered, symmetric Pareto distribution by its PDF
\begin{align*}
    \Pareto(x; a, \lambda)
        &=
            \f a {2\lambda} \lp 1 + \left| \f{x}{\lambda} \right|\rp^{-1-a}
            \\
        &=
            \f a {2\lambda} \exp\left[-(1+a) \log\lp 1 + \left| \f{x}{\lambda} \right|\rp\right]
            .
\end{align*}
Its CDF is given by
\begin{align*}
    \ParetoCDF(x; a, \lambda)
        &=
            \begin{cases}
            1 - \f 1 2 \lp 1 + \left| \f{x}{\lambda} \right|\rp^{-a}
                &   \text{if $x > 0$}\\
            \f 1 2 \lp 1 + \left| \f{x}{\lambda} \right|\rp^{-a}
                &   \text{else.}
            \end{cases}
\end{align*}

Consider smoothing distributions of the form
\begin{align*}
    q(x) &= \prod_{i=1}^d \Pareto(x_i; a, 1),
    \quad\text{so that}\quad\\
    \psi(x) &= (1+a) \sum_{i=1}^d \log(1 + |x_i|)\\
    \nabla \psi(x) &= \left\{(1+a)\f{\sgn(x_i)}{1 + \left| {x_i} \right|}\right\}_{i=1}^d
    ,
\end{align*}
with $\nabla \psi(x)$ defined when all coordinates $x_i$s are nonzero.

\subsubsection{\texorpdfstring{$\ell_1$ Adversary}{L1 Adversary}}
\begin{thm}\label{thm:paretoL1}
Suppose $H$ is a smoothed classifier smoothed by
\[q(x) \propto \prod_{i=1}^d \Pareto(x_i; a, \lambda),\]
such that $H(x) = (H(x)_1, \ldots, H(x)_C)$ is a vector of probabilities that $H$ assigns to each class $1, \ldots, C$.
If $H$ correctly predicts the class $y$ on input $x$, and the probability of the correct class is $\rho \defeq H(x)_y > 1/2$, then $H$ continues to predict the correct class when $x$ is perturbed by any $\eta$ with
\begin{align*}
    \|\eta\|_1 <
    \lambda\frac{2 \rho - 1}{a}
    \, _2F_1\left(1,\frac{a}{a+1};\frac{a}{a+1}+1;(2 \rho - 1)^{1+\frac{1}{a}}\right),
\end{align*}
where $\, _2F_1$ is the hypergeometric function.
\end{thm}

\begin{proof}
By linearity in $\lambda$, it suffices to show this for $\lambda = 1$.

We seek to apply \cref{thm:differentialMethodMasterTheorem} to $G(x) = 1 - H(x)_y$, for which we need to derive random variables $\gamma_u$ and $\mygamma{u}{p}$, and most importantly, the function $\Phi$.

WLOG, assume $u \in \Vrt(\Bb)$ is $e_1$.
Then $\gamma_u = \la u, \nabla \psi(\delta) \ra = (1+a)\f{\sgn(\delta_1)}{1 + \left| {\delta_1} \right|}, \delta \sim q,$ is distributed as $(1+a)\f{\sgn(z)}{1 + |z|}$ where $z \sim \Pareto(a, 1)$.
Therefore, for $p \in [0, 1/2]$, the random variable $\mygamma{u}{p}$ defined in \cref{defn:Phi} is
\begin{align*}
    \mygamma{u}{p} = \begin{cases}
                \f{1+a}{1+z_p} & \text{with prob.\ $p$}\\
                0 & \text{with prob.\ $1 - p$,}
            \end{cases}
\end{align*}
where $z_p$ is sampled from $\Pareto(a, 1)$ conditioned on the interval $[0, \ParetoCDF^{-1}(p + 1/2; a, 1)]$.

Thus, for any $u \in \Vrt(\Bb)$,
\begin{align*}
    \Phi(p) = \EV \mygamma{u}{p} = \EV \f{1+a}{1+z} \ind(z \in [0, c]),
\end{align*}
where $z \sim \Pareto(a, 1)$ and $c = \ParetoCDF^{-1}(p + 1/2; a, 1)$.
This can be simplified as follows:
\begin{align*}
\Phi(p)
    &=
        \int_0^c \Pareto(r; a, 1)
            \f{1+a}{1 + r}
                \dd r
            \\
    &=
        -\int_0^c \Pareto'(r; a, 1)
                \dd r
            \\
    &=
        \Pareto(0; a, 1) - \Pareto(c; a, 1)
            \\
    &=
        \f a {2} \lp 1 - \lp 1 + c \rp^{-1-a} \rp.
\end{align*}
Note that for $p \in [1/2, 1]$,
\begin{align*}
    \ParetoCDF^{-1}(p + 1/2; a, 1)
    =
        (1 - 2p)^{-1/a}-1,
\end{align*}
so $\Phi(p)$ can be further simplified:
\begin{align*}
    \Phi(p)
        &=
            \f a {2} \lp 1 - (1-2p)^{\f{a+1}a} \rp.
\end{align*}

Then, by setting $G(x)$ in \cref{thm:differentialMethodMasterTheorem} to be $1 - H(x)_y = 1 - \rho$, we get the provably robust radius of
\begin{align*}
    &\phantomeq
        \int_{1-\rho}^{1/2} \f 1 {\Phi(p)} \dd p
    =
        \int_{1-\rho}^{1/2}  \f {2} a \lp 1 - (1-2p)^{\f{a+1}a}\rp^{-1} \dd p
        \\
    &=
    \frac{2 p-1}{a}
    \, _2F_1\left(1,\frac{a}{a+1};\frac{a}{a+1}+1;(1-2 p)^{1+\frac{1}{a}}\right)
    \bigg|_{1-\rho}^{1/2}
        \\
    &=
    \frac{2 \rho - 1}{a}
    \, _2F_1\left(1,\frac{a}{a+1};\frac{a}{a+1}+1;(2 \rho - 1)^{1+\frac{1}{a}}\right)
        .
\end{align*}
\end{proof}

\subsection{\texorpdfstring{$\ell_2$}{L2}-Norm Based Exponential Law}

In this section we consider
\begin{align*}
    q(x) &\propto \exp(-\|x\|_2)\quad\text{so that}\\
    \psi(x) &= \|x\|_2
    \quad\text{and}\quad
    \nabla \psi(x) = x/\|x\|_2,
\end{align*}
defined as long as $x \ne 0$.

\subsubsection{\texorpdfstring{$\ell_2$ Adversary}{L2 Adversary}}

\begin{thm}\label{eqn:exp2l2}
Suppose $H$ is a smoothed classifier smoothed by
\[q(x) \propto \exp(-\|x\|_2/\lambda),\]
such that $H(x) = (H(x)_1, \ldots, H(x)_C)$ is a vector of probabilities that $H$ assigns to each class $1, \ldots, C$.
If $H$ correctly predicts the class $y$ on input $x$, and the probability of the correct class is $\rho \defeq H(x)_y > 1/2$, then $H$ continues to predict the correct class when $x$ is perturbed by any $\eta$ with
\begin{align*}
    \|\eta\|_2
        &<
            \lambda (d-1) \arctanh\bigg(\\
        &\phantomeq\quad
            1 - 2\BetaCDF^{-1}\lp
                1 - \rho
                ;
                \f{d-1}2, \f{d-1}2
            \rp
        \bigg).
\end{align*}
\end{thm}

\newcommand{\PDF}{\mathrm{PDF}}
\begin{proof}

By linearity in $\lambda$, it suffices to show this for $\lambda = 1$.

We seek to apply \cref{thm:differentialMethodMasterTheorem} to $G(x) = 1 - H(x)_y$, for which we need to derive random variables $\gamma_u$ and $\mygamma{u}{p}$, and most importantly, the function $\Phi$.

For any $u \in \Vrt(\Bb)$ (i.e.\ any unit vector $u$), $\gamma_u = \la u, \nabla \psi(\delta) \ra = \la u, \f\delta{\|\delta\|_2} \ra, \delta \sim q,$ is distributed like $2 \BetaVar\left(\f{d-1}2, \f{d-1}2\right) - 1$ by \cref{lemma:sphericalCDF}.
Its complementary CDF is given by
\begin{align*}
    \Pr[\gamma_u > c]
        &=
            R \int_{c}^1 (1 - t^2)^{\f{d-3}{2}} \dd t\\
        &=
            \BetaCDF\lp \f{1-c}2; \f{d-1}2, \f{d-1}2\rp
        \defeq
            \varphi(c),
\end{align*}
where $R \defeq \frac{\Gamma \left(\frac{d}{2}\right)}{\sqrt{\pi } \Gamma \left(\frac{d-1}{2}\right)}$.
Therefore, for $p \in [0, 1]$, the random varible $\mygamma{u}{p}$ defined in \cref{defn:Phi} is given by
\begin{align*}
    \mygamma{u}{p} =
    \begin{cases}
    \gamma_u|_{(\inv \varphi(p), \infty)}
        &   \text{with probability $p$}\\
    0   &   \text{with probability $1 - p$,}
    \end{cases}
\end{align*}
Thus, for any $u \in \Vrt(\Bb)$,
\begin{align*}
    \Phi(p) &= \EV \mygamma{u}{p}
    = R \int_{\inv \varphi(p)}^1 t (1 - t^2)^{\f{d-3}2} \dd t\\
    &= \frac{R}{d-1}
        \left(1-\inv \varphi(p)^2\right)^{\frac{d-1}{2}}
        .
\end{align*}
Then, by setting $G(x)$ in \cref{thm:differentialMethodMasterTheorem} to be $1 - H(x)_y = 1 - \rho$, we get the provably robust radius of
\begin{align*}
    &\phantomeq
        \int_{1-\rho}^{1/2} \f 1 {\Phi(p)} \dd p
        \\
    &=
        \f{d-1}R
        \int_{1-\rho}^{1/2}
        \left(1-\inv \varphi(p)^2\right)^{-\frac{d-1}{2}}
        \dd p
        \\
    &=
        (d-1)
        \int^{\inv \varphi(1-\rho)}_{0}
        (1 - c^2)^{-\frac{d-1}{2}}(1 - c^2)^{\f{d-3}2}
            \dd c
        \\
    &=
        (d-1)
        \int^{\inv \varphi(1-\rho)}_{0}
        (1 - c^2)^{-1}
            \dd c
        \\
    &=
        (d-1) \arctanh(\inv \varphi (1- \rho))
        .
\end{align*}
Unpacking the definition of $\varphi$ yields the result.
\end{proof}

\subsection{Uniform Distribution over a Sphere}

\subsubsection{\texorpdfstring{$\ell_2$ Adversary}{L2 Adversary}}
\begin{figure}
    \centering
    \includegraphics[width=0.4\textwidth]{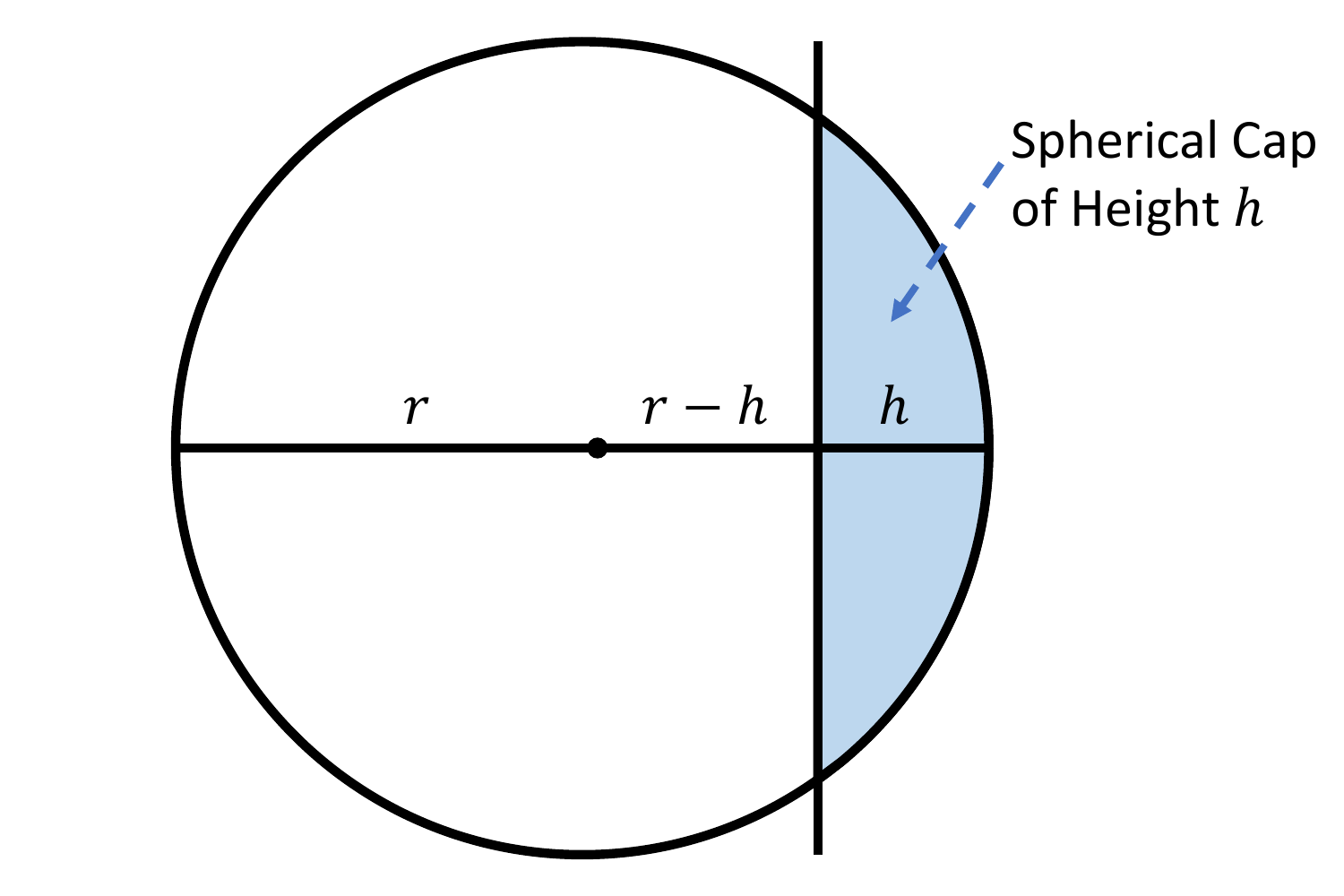}
    \caption{Spherical Cap}
    \label{fig:sphericalcap}
\end{figure}
Consider the distribution that is uniform on the $\ell_2$ unit ball $\{x: \|x\|_2 \le 1\}$.
The \emph{spherical cap} of height $h \le 1$ in this unit ball is the portion of the ball that is cut away by a hyperplane of distance $1-h$ from the origin; see \cref{fig:sphericalcap}.
By \cref{lemma:ballCDF}, this spherical cap has volume
\begin{align*}
    V_{d}^h \defeq V_d \BetaCDF\lp \f{h}2; \f{d+1}2, \f{d+1}2\rp.
\end{align*}
where $V_d$ is the volume of the unit sphere in $\R^d$.

Two unit radius spheres with centers $\epsilon$ apart intersects in a region that is the union of two spherical caps of height $1-\epsilon/2$.
This intersection thus has volume $2 V^{1-\epsilon/2}_{d}$, and the volume of one of the spheres outside this intersection is $V_d - 2 V^{1-\epsilon/2}_{d} = V_d\lp 1 - 2 \BetaCDF\lp \f{1-\epsilon/2}2; \f{d+1}2, \f{d+1}2\rp\rp.$

\begin{thm}\label{thm:BallUniformAdvL2}
Suppose $H$ is a smoothed classifier smoothed by the uniform distribution $q$ over a ball of radius $\lambda$ centered at the origin, such that $H(x) = (H(x)_1, \ldots, H(x)_C)$ is a vector of probabilities that $H$ assigns to each class $1, \ldots, C$.
If $H$ correctly predicts the class $y$ on input $x$, and the probability of the correct class is $\rho \defeq H(x)_y > 1/2$, then $H$ continues to predict the correct class when $x$ is perturbed by any $\eta$ with
\begin{align*}
    \|\eta\|_2 < \lambda\lp 2 - 4 \BetaCDF^{-1}\lp \f 3 4 - \f \rho 2; \f{d+1}2, \f{d+1}2\rp\rp
                .
\end{align*}
\end{thm}

\begin{proof}
By linearity in $\lambda$, it suffices to show this for $\lambda = 1$.

By assumption, there is a region of probability $\rho$ under the uniform distribution $q(x + \cdot)$ centered at $x$ that the base classifier classifies as $y$.
The intersection between the support of $q(x + \cdot)$ and $q(x + \delta + \cdot)$ for any $\|\delta\|_2 \le \epsilon$ contains a region of probability at least
\[\rho - \lp 1 - 2 \BetaCDF\lp \f{1-\epsilon/2}2; \f{d+1}2, \f{d+1}2\rp \rp\]
that the base classifier classifies as $y$.
For this probability to be at least $1/2$, we require
\begin{align*}
    \f 1 2
        &\le
            \rho - (1 - 2 \BetaCDF\lp \f{1-\epsilon/2}2; \f{d+1}2, \f{d+1}2\rp)
            \\
    \f 3 4 - \f\rho 2
        &\le
            \BetaCDF\lp \f{1-\epsilon/2}2; \f{d+1}2, \f{d+1}2\rp
            \\
    1-\epsilon/2
        &\ge
            2\BetaCDF^{-1}\lp \f 3 4 - \f \rho 2; \f{d+1}2, \f{d+1}2\rp
            \\
    \epsilon
        &\le
            2 - 4 \BetaCDF^{-1}\lp \f 3 4 - \f \rho 2; \f{d+1}2, \f{d+1}2\rp
            ,
\end{align*}
as desired.
\end{proof}

\subsection{General \texorpdfstring{$\ell_2$}{L2}-Norm Based Distributions via the Level Set Method}
\label{sec:levelsetl2}

\subsubsection{\texorpdfstring{$\ell_2$ Adversary}{L2 Adversary}}

\newcommand{\qrad}{\bar{q}}
Define $W_d(r, s, \epsilon)$ to be the probability a point sampled from the \emph{surface} of a ball of radius $r$ centered at the origin is outside a ball of radius $s$ with center $\epsilon$ away from the origin.
By \cref{lemma:sphericalCDF}, we have
\begin{align*}
    W_d(r, s, \epsilon) = \BetaCDF\left(\f{(r+\epsilon)^2 - s^2}{4 \epsilon r}; \f{d-1}2, \f{d-1}2\right).
    \numberthis\label{eqn:W}
\end{align*}
Note that $W_d$ can be evaluated quickly using standard \texttt{scipy} functions.

\begin{thm}\label{thm:levelsetl2}
Suppose that the density of a distribution satisfies $q(x) = \qrad(\|x\|_2)$ for some differentiable, decreasing function $\qrad: \R^{\ge 0} \to \R^{\ge 0}$.
Then for any $\kappa > 0$ and any $v \in \R^d$, the growth function satisfies
\begin{align*}
    \growth_q(p_0, v) = p_1
\end{align*}
where
\begin{align*}
p_0 &= 1 - \EV_{r} W_d(r, \bar q^{-1}(\bar q(r)/\kappa), \|v\|_2)\\
p_1 &= \EV_{r} W_d(r, \bar q^{-1}(\bar q(r)\kappa), \|v\|_2),
\\
r &\sim \text{distribution with density $\propto r^{d-1} \qrad(r)$.}
\end{align*}

\end{thm}
For most $\qrad$, $p_0$ and $p_1$ can be evaluated numerically and quickly for each $\kappa$ and $\|v\|_2$ using 1-dimensional integrals.

\begin{proof}
Let $r_t \defeq \qrad^{-1}(t)$.
Then the superlevel set $U_t = \{x: q(x) \ge t\}$ is a ball with radius $r_t$.
Furthermore,
\begin{align*}
    \nabla q(x) &= \qrad'(\|x\|_2) \f{x}{\|x\|_2}\\
    \|\nabla q(x)\|_2^{-1}
        &=
            -\qrad'(\|x\|_2)^{-1}
        =
            -r'_{\qrad(\|x\|_2)}.
\end{align*}

Let $SA_d$ be the surface area of the unit sphere in $\R^d$.
Then
\begin{align*}
    q(\npset_\kappa)
    =
        -SA_d \int_0^\infty r'_t t r_t^{d-1}  (1 - W_d(r_t, r_{t/\kappa}, \|v\|_2)) \dd t\\
    q(\npset_\kappa - v)
    =
        -SA_d \int_0^\infty r'_t t r_t^{d-1}  W_d(r_t, r_{t\kappa}, \|v\|_2) \dd t.
\end{align*}
If we change coordinates from $t$ to $r$, then
\begin{align*}
    &q(\npset_\kappa)=SA_d \times\\
    &\quad\quad
        \int_0^\infty \bar q(r) r^{d-1}  (1 - W_d(r, \bar q^{-1}(\bar q(r)/\kappa), \|v\|_2)) \dd r\\
    &q(\npset_\kappa - v)=SA_d \times \\
    &\quad\quad
        \int_0^\infty \bar q(r) r^{d-1}  W_d(r, \bar q^{-1}(\bar q(r)\kappa), \|v\|_2) \dd r.
\end{align*}
Since $\bar q(r) r^{d-1}$ is proportional to the density of $\|x\|_2, x \sim q$, we can also write this as
\begin{align*}
    &q(\npset_\kappa) = \EV_{r = \|x\|_2, x \sim q} (1 - W_d(r, \bar q^{-1}(\bar q(r)/\kappa), \|v\|_2))\\
    &q(\npset_\kappa - v) = \EV_{r = \|x\|_2, x \sim q} W_d(r, \bar q^{-1}(\bar q(r)\kappa), \|v\|_2)
    .
    \numberthis\label{eqn:NPspherical}
\end{align*}
The distribution of $r$ here has density $\propto r^{d-1}\qrad(r)$.
Then setting $p_0 = q(\npset_\kappa), p_1 = q(\npset_\kappa-v)$ yields the desired result by \cref{eqn:NP}.
\end{proof}

\begin{exmp}\label{sec:l2exponential}
If $q(x) \propto \|x\|_2^{-j} \exp(-\|x\|_2^k)$, then $\bar q(r) \propto r^{-j} \exp(-r^k)$, and the radius is distributed as $\sqrt[k]{\Gammadist(d/k - j/k)}$ by \cref{lemma:expNormSampling}.
A table of robust radii can then be built according to \cref{alg:levelsettable}, and certification can be done via \cref{alg:levelsetcert}.
\end{exmp}

\begin{exmp}
\label{sec:l2powerlaw}
If $q(x) \propto (1 + \|x\|_2^k)^{-a}$, then $\bar q(r) \propto (1 + r^k)^{-a}$, and the radius is distributed as $\sqrt[k]{\BetaPrime(d/k, a - d/k)}$.
A table of robust radii can then be built according to \cref{alg:levelsettable}, and certification can be done via \cref{alg:levelsetcert}.
\end{exmp}

\subsection{Basic Facts about Probability Distributions}

\begin{lemma}\label{lemma:sphericalCDF}
If $(x_1, \ldots, x_d)$ is sampled uniformly from the unit sphere $S^{d-1} \sbe \R^d$, then
\[\f {1 + x_1}2 \text{ is distributed as $\BetaVar\left(\f{d-1}2, \f{d-1}2\right)$},\]
i.e.
\begin{align*}
    \Pr[x_1 \ge c]
    &=
        \BetaCDF\lp \f{1-c}2; \f{d-1}2, \f{d-1}2\rp
        \\
    &=
        1 - \BetaCDF\lp \f{1+c}2; \f{d-1}2, \f{d-1}2\rp
        .
\end{align*}
\end{lemma}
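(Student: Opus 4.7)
The plan is to compute the marginal density of $x_1$ by slicing the sphere with hyperplanes perpendicular to the first coordinate axis, and then identify the resulting density as a (rescaled) Beta density. First I would observe that conditioning on $x_1 = c$ forces the remaining coordinates $(x_2, \ldots, x_d)$ to lie on a $(d-2)$-sphere of radius $\sqrt{1-c^2}$ in $\R^{d-1}$. The marginal density of $x_1$ at $c \in [-1,1]$ is therefore proportional to the surface area of this spherical slice, which scales as $(1-c^2)^{(d-2)/2}$, multiplied by the Jacobian factor $(1-c^2)^{-1/2}$ that accounts for the tilt of the sphere relative to the $x_1$-axis. Combining these two factors gives that the marginal density is proportional to $(1-c^2)^{(d-3)/2}$ on $[-1,1]$.

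Next I would apply the affine change of variables $u = (1+c)/2$, which bijects $[-1,1]$ onto $[0,1]$. Using $1 - c^2 = 4u(1-u)$, the density of $(1+x_1)/2$ becomes proportional to $u^{(d-3)/2}(1-u)^{(d-3)/2}$, which is exactly the $\BetaVar((d-1)/2,(d-1)/2)$ density. This identifies the distribution and gives the first claimed identity.

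For the tail formulas I would simply integrate. By the change of variables, $\Pr[x_1 \ge c] = \Pr[(1+x_1)/2 \ge (1+c)/2] = 1 - \BetaCDF((1+c)/2; (d-1)/2, (d-1)/2)$, yielding the second equality directly. The first equality then follows from the reflection symmetry of the $\BetaVar(\alpha,\alpha)$ distribution about $1/2$, which implies $1 - \BetaCDF(t;\alpha,\alpha) = \BetaCDF(1-t;\alpha,\alpha)$ applied at $t = (1+c)/2$.

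The only real care needed is getting the Jacobian factor $(1-c^2)^{-1/2}$ right when converting surface measure on the $(d-2)$-sphere at height $c$ into $x_1$-density; everything downstream is an elementary change of variables plus the Beta symmetry. So the main (very mild) obstacle is this geometric bookkeeping, which can be made fully rigorous either by the coarea formula or by parameterizing the upper hemisphere as a graph $x_1 = \sqrt{1 - x_2^2 - \cdots - x_d^2}$ and pushing forward the uniform measure on $S^{d-1}$.
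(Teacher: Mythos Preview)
Your proposal is correct and follows essentially the same approach as the paper: both obtain the marginal density $\propto (1-c^2)^{(d-3)/2}$ from the sphere's geometry and then change variables to recognize a symmetric Beta density. The paper uses the substitution $x=(1-t)/2$ to land directly on the first equality, whereas you use $u=(1+c)/2$ and then invoke the Beta$(\alpha,\alpha)$ reflection symmetry; you also spell out the Jacobian factor that the paper leaves as ``simple geometric reasoning,'' but the argument is the same.
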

\begin{proof}
From simple geometric reasoning, we get
\begin{align*}
    \Pr[x_1 \ge c]
        &\propto \int_c^1 (1 - t^2)^{\f{d-3}2} \dd t\\
        &= \int_c^1 (1 - t)^{\f{d-3}2} (1 + t)^{\f{d-3}2} \dd t\\
        &= \int_0^{\f{1-c}2}  \lp 2 x \rp^{\f{d-3}2} (2(1 - x))^{\f{d-3}2} \dd x.
\end{align*}
\end{proof}

\begin{lemma}\label{lemma:ballCDF}
If $(x_1, \ldots, x_d)$ is sampled uniformly from the ball $\{y: \|y\|_2 \le 1\} \sbe \R^d$, then
\[\f {1 + x_1}2 \text{ is distributed as $\BetaVar\left(\f{d+1}2, \f{d+1}2\right)$},\]
i.e.
\begin{align*}
    \Pr[x_1 \ge c]
    &=
        \BetaCDF\lp \f{1-c}2; \f{d+1}2, \f{d+1}2\rp
        \\
    &=
        1 - \BetaCDF\lp \f{1+c}2; \f{d+1}2, \f{d+1}2\rp
        .
\end{align*}
\end{lemma}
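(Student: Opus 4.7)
The plan is a direct computation of the marginal density of $x_1$ for $x$ uniform on the unit ball, followed by a change of variables that turns the resulting integral into a Beta CDF. This parallels the spherical case in \cref{lemma:sphericalCDF}, but with the exponent shifted by $1$ because we integrate over a solid ball instead of a sphere.

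First I would slice the unit ball by hyperplanes perpendicular to the $e_1$-axis. For any $t \in [-1, 1]$, the cross-section $\{y \in \R^d : \|y\|_2 \le 1, y_1 = t\}$ is a $(d-1)$-dimensional ball of radius $\sqrt{1 - t^2}$, whose Lebesgue volume is proportional to $(1 - t^2)^{(d-1)/2}$. Hence the marginal density of $x_1$ is $f(t) \propto (1 - t^2)^{(d-1)/2}$ on $[-1, 1]$, giving
\[
\Pr[x_1 \ge c] = \frac{\int_c^1 (1 - t^2)^{(d-1)/2}\,dt}{\int_{-1}^1 (1 - t^2)^{(d-1)/2}\,dt}.
\]

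Next I would apply the substitution $u = (1-t)/2$, so that $1 - t = 2u$, $1 + t = 2(1-u)$, hence $1 - t^2 = 4u(1-u)$, and $dt = -2\,du$. The upper limit $t = 1$ becomes $u = 0$ and the lower limit $t = c$ becomes $u = (1-c)/2$. Thus
\[
\int_c^1 (1 - t^2)^{(d-1)/2}\,dt = 2 \cdot 4^{(d-1)/2} \int_0^{(1-c)/2} u^{(d-1)/2}(1 - u)^{(d-1)/2}\,du,
\]
and the same identity with $c = -1$ gives the normalizing denominator, with upper limit $u = 1$. Dividing, the constant $2 \cdot 4^{(d-1)/2}$ cancels, so
\[
\Pr[x_1 \ge c] = \frac{\int_0^{(1-c)/2} u^{(d-1)/2}(1-u)^{(d-1)/2}\,du}{\int_0^{1} u^{(d-1)/2}(1-u)^{(d-1)/2}\,du}.
\]
Recognizing the integrand as the (unnormalized) Beta density with shape parameters $\alpha - 1 = \beta - 1 = (d-1)/2$, i.e.\ $\alpha = \beta = (d+1)/2$, the right-hand side is exactly $\BetaCDF\!\left(\tfrac{1-c}{2}; \tfrac{d+1}2, \tfrac{d+1}2\right)$. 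Equivalently $\tfrac{1 + x_1}2 \sim \BetaVar(\tfrac{d+1}2, \tfrac{d+1}2)$, which is the first claimed formula; the second follows from the symmetry $\BetaCDF(s; \alpha, \alpha) = 1 - \BetaCDF(1-s; \alpha, \alpha)$.

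There is no real obstacle here: the only thing to be careful about is tracking constants in the substitution so that the common factor $2 \cdot 4^{(d-1)/2}$ cancels between numerator and denominator, leaving a normalized Beta CDF. Compared to \cref{lemma:sphericalCDF}, the only difference is that slicing a ball gives a $(d-1)$-volume factor of $(1-t^2)^{(d-1)/2}$ instead of the sphere's surface-area factor of $(1-t^2)^{(d-3)/2}$, which is precisely why the Beta parameters shift from $(d-1)/2$ to $(d+1)/2$.
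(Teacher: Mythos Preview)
Your proof is correct and follows essentially the same approach as the paper's: slice the ball perpendicular to $e_1$, obtain the marginal density proportional to $(1-t^2)^{(d-1)/2}$, and apply the substitution $u=(1-t)/2$ to recognize a Beta CDF. The paper simply says ``Similar to \cref{lemma:sphericalCDF}'' and leaves the details implicit, whereas you have written them out (including the careful constant tracking), but the argument is the same.
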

\begin{proof}
Similar to \cref{lemma:sphericalCDF}.
\end{proof}

\begin{lemma}\label{lemma:expNormSampling}
For any norm $\| \cdot \|$ on $\R^d$, the distribution
\[q(x) \propto \|x\|^{-j}\exp(-\|x\|^k),\]
with $j < d$, can be sampled as follows:
\begin{enumerate}
    \item Sample the radius $r \sim \sqrt[k]{\Gammadist(\f d k - \f j k)}$
    \item Sample a point $v$ from the unit sphere of $\| \cdot \|$
    \item return $rv$
\end{enumerate}
\end{lemma}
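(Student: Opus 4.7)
The plan is to change to polar coordinates adapted to the norm $\|\cdot\|$ and check that the radial and angular parts of $q$ factor. Specifically, let $S = \{v \in \R^d : \|v\| = 1\}$ be the unit sphere of $\|\cdot\|$, and write every $x \ne 0$ uniquely as $x = r v$ with $r = \|x\| > 0$ and $v \in S$. The Lebesgue measure on $\R^d$ then disintegrates as $\dd x = r^{d-1} \dd r \, \dd \sigma(v)$, where $\sigma$ is the cone measure on $S$ (the pushforward onto $S$ of Lebesgue measure on the unit ball under radial projection, normalized so the identity above holds). This is the standard content of polar decomposition for a general norm.

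Substituting into the density $q(x) \propto \|x\|^{-j} \exp(-\|x\|^k)$, one obtains
\begin{align*}
q(x) \dd x \;\propto\; r^{-j} e^{-r^k} \cdot r^{d-1} \dd r \, \dd \sigma(v) \;=\; r^{d-j-1} e^{-r^k} \dd r \, \dd \sigma(v).
\end{align*}
The expression factors as a product of a function of $r$ and a function of $v$, so under $q$ the radius $r = \|x\|$ and direction $v = x/\|x\|$ are independent, and $v$ is distributed as the normalized cone measure $\sigma/\sigma(S)$, i.e.\ uniformly on the unit sphere of $\|\cdot\|$ as in Step 2.

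For the radial part, I would identify the density of $r$ via the substitution $u = r^k$, so $\dd u = k r^{k-1}\dd r$ and hence $r^{d-j-1}\dd r = \tfrac{1}{k} u^{(d-j)/k - 1} \dd u$. Thus the density of $u$ is proportional to $u^{(d-j)/k - 1} e^{-u}$, which is the $\Gammadist\!\bigl(\tfrac{d-j}{k}\bigr)$ density; the assumption $j < d$ guarantees the shape parameter is positive, so the distribution is well-defined and the normalizing constant is finite. Consequently $r = u^{1/k} \sim \sqrt[k]{\Gammadist\!\bigl(\tfrac{d-j}{k}\bigr)}$, matching Step 1. Combining this with the independence and uniformity of $v$ established above, the product $rv$ has the desired law $q$.

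The only mildly delicate point is justifying the polar decomposition $\dd x = r^{d-1}\dd r\,\dd\sigma(v)$ for a general (not necessarily smooth) norm; this is a standard fact proved by observing that the map $(r,v) \mapsto rv$ sends $[0,R]\times S$ onto the ball $RB$ of radius $R$, and $\Vol(RB) = R^d \Vol(B)$ gives the $r^{d-1}$ Jacobian factor after differentiating in $R$. Everything else is a one-line Gamma-integral computation, so I do not expect any real obstacle.
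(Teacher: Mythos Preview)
Your proposal is correct and is exactly the standard argument one would give. The paper itself states this lemma without proof, treating it as a basic fact about probability distributions, so there is nothing to compare against; your polar decomposition into cone measure on the unit sphere of $\|\cdot\|$ times the radial density, followed by the substitution $u=r^k$ to recognize the $\Gammadist((d-j)/k)$ law, is the expected justification.
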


\begin{lemma}\label{lemma:GammaExpect}
For any $c, s \ge 0$ and $r > 0$,
\begin{align*}
    &\phantomeq
        \EV_{z\sim \Gammadist(r)} z^s \ind(z > c)
        \\
    &=
        \f{\Gamma(r+s)}{\Gamma(r)}
        (1-\GammaCDF(c; r+s))
        .
\end{align*}
\end{lemma}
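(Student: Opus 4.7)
The plan is to verify this identity by directly unpacking both sides in terms of the Gamma density and recognizing a shape-parameter shift from $r$ to $r+s$. Specifically, I would write the expectation on the left as an explicit integral against the density $\frac{1}{\Gamma(r)} z^{r-1} e^{-z}$ of $\Gammadist(r)$, then combine the $z^s$ factor with $z^{r-1}$ to obtain $\frac{1}{\Gamma(r)}\int_c^\infty z^{r+s-1} e^{-z}\dd z$. Multiplying and dividing by $\Gamma(r+s)$ converts the integrand into the $\Gammadist(r+s)$ density, whose integral from $c$ to $\infty$ is by definition $1 - \GammaCDF(c; r+s)$.

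Concretely, the three-line calculation I envision is
\begin{align*}
    \EV_{z\sim \Gammadist(r)} z^s \ind(z > c)
    &= \f{1}{\Gamma(r)} \int_c^\infty z^{r+s-1} e^{-z} \dd z \\
    &= \f{\Gamma(r+s)}{\Gamma(r)} \int_c^\infty \f{z^{r+s-1} e^{-z}}{\Gamma(r+s)} \dd z \\
    &= \f{\Gamma(r+s)}{\Gamma(r)} \bigl( 1 - \GammaCDF(c; r+s)\bigr),
\end{align*}
where the conditions $c, s \ge 0$ and $r > 0$ ensure the integrals converge and that $\Gamma(r+s)$ is well-defined and positive.

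There is no genuine obstacle here; the statement is essentially the observation that multiplying a $\Gammadist(r)$ density by $z^s$ gives (up to normalization) a $\Gammadist(r+s)$ density, and the tail integral of the latter is precisely $1 - \GammaCDF(\cdot; r+s)$. No integration by parts, limiting argument, or special-function identity beyond the definition of the Gamma CDF is required.
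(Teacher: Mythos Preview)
Your proposal is correct and is exactly the natural direct computation; the paper itself states this lemma without proof, so there is no alternative approach to compare against.
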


\section{Proof of \texorpdfstring{\cref{thm:main-lb}}{Our No-Go Theorem}}
In this section we prove our main impossibility result, \cref{thm:main-lb}.
We will assume throughout this proof that the reader is familiar with standard notions in functional analysis.
Our proof will proceed in two steps.

For all $p \in (0,2]$ and $d' \geq 1$, we let $\ell_p^{d'}$ denote $\R^{d'}$ equipped with the $p$-quasinorm.
First, we show that if there exists a useful smoothing scheme, this implies a low embedding distortion of our normed space into $\ell_{0.99}^{d'}$, for some $d'$.

Formally, let $(X, d_X)$ and $(Y, d_Y)$ be two metric spaces.
We say an embedding $f: X \to Y$ has distortion $D$ if there exist positive constants $\alpha < 1 < \beta$ so that
\[
\alpha d_Y(f(x_1), f(x_2)) \leq d_X (x_1, x_2) \leq \beta d_Y (f(x_1), f(x_2))
\]
for all $x_1, x_2 \in X$, where $\beta / \alpha \leq D$.
We will first show:
\begin{lemma}
\label{thm:useful-to-embedding}
Suppose there exists an $(\eps, s, \ell)$-useful smoothing scheme for $\| \cdot \|$, and $s / \ell \leq 1/162$.
Then, there exists $d'$ and a linear embedding from $(\R^d, \| \cdot \|)$ into $\ell_{0.99}^{d'}$ with distortion at most
\[
O \left( \left( \frac{s}{\ell} \right)^{1/4} \cdot \frac{1}{\eps} \right) \; .
\]
\end{lemma}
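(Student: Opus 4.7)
The plan is to realize $x \mapsto q_x$ as a bi-Lipschitz embedding into total variation space, and then push this embedding into $\ell_{0.99}^{d'}$ by standard functional-analytic tools. The $\ell$-accuracy condition directly supplies $\dtv(q_x,q_y) \geq \ell/2$ whenever $\|x-y\|\geq 1$, since $\dtv = \tfrac12 \sup_U |q_x(U)-q_y(U)|$. For the reverse direction, given $\|x-y\| < \eps$ and any base classifier $U$, I would massage $U$ into a set $U'$ with $q_x(U')$ equal to exactly $1/2 + s$ by adjoining (or removing) mass from a reference region in $U^c$; the $(\eps,s)$-robustness hypothesis then forces $q_y(U') \geq 1/2$, and backing out the adjustment (and symmetrizing via $U^c$) will bound $|q_x(U) - q_y(U)|$ by an absolute constant times $s$. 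Hence $\|x-y\| < \eps \Rightarrow \dtv(q_x,q_y) \leq Cs$.

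Composing these two bounds only yields distortion $O(s/(\eps\ell))$, whereas the target is $O((s/\ell)^{1/4}/\eps)$; two square-root improvements are required. The first comes from replacing the direct total-variation estimate above by a Hellinger estimate $d_H(q_x,q_y)^2 \leq C's$ at distance $\eps$: the Neyman--Pearson-style mass-shift argument still applies because the extremal set for the Hellinger divergence has the same threshold structure, and Le Cam's inequality $d_H^2 \geq \tfrac12 \dtv$ promotes the accuracy-side lower bound to $d_H(q_x,q_y) \geq \Omega(\sqrt{\ell})$ at distance $1$. This already improves the distortion of the embedding into the Hellinger/$L^2$ metric to $O(\sqrt{s/\ell}/\eps)$.

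The second square-root comes from passing from the $L^2$ side down to $\ell_{0.99}^{d'}$. Using the standard isometric containment of TV in $L^1$, restricting to the separable subspace spanned by the image, and invoking a Johnson--Lindenstrauss-type finite-dimensional reduction in $L^1$ followed by the Pisier-style representation of $L^1(\mu)$ as a subspace of $\ell_p^{d'}$ for any $p \in (0,1)$ via $p$-stable random variables (taking $p = 0.99$), I can bring the image into $\ell_{0.99}^{d'}$. The Mazur map relating $L^2$ to $L^p$ for $p \neq 2$ contributes the second square-root saving, producing the claimed $(s/\ell)^{1/4}$ factor. Linearity of the composite map can be arranged by extracting the linear part after centering at an arbitrary basepoint (which costs only absolute constants). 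The numerical hypothesis $s/\ell \leq 1/162$ is precisely the regime in which each of the Pinsker- and Le Cam-type inequalities above is valid with the constants used in the final bound.

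\smallskip

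\noindent\textbf{Main obstacle.} The hardest part will be upgrading the one-sided TV estimate to a Hellinger estimate without losing the asymmetry inherent in the robustness hypothesis (which only says ``large confidence $\Rightarrow$ still-correct after perturbation''), and symmetrically promoting the single distinguishing set furnished by $\ell$-accuracy into a lower bound in a stronger divergence. This is where the $1/4$ exponent and the numerical constant $1/162$ are locked in, and it is what makes the downstream cotype argument of \cref{thm:main-lb} bite.
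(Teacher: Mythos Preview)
Your overall architecture is right, and your first paragraph (TV upper bound $\lesssim s$ at scale $\eps$, lower bound $\geq \ell$ at scale $1$) lands in the same place as the paper, though the paper obtains the upper bound more cleanly via concavity of the pairwise growth function $p \mapsto \sup_{q_x(U)=p} q_y(U)$ rather than by mass-shifting, and your formula for $\dtv$ has a stray factor $\tfrac12$. The genuine gaps are in how you extract the two square roots and the linearity.

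For the first square root, the paper does not use Hellinger. It uses the classical fact that $L_1$ embeds isometrically into \emph{squared}-$L_2$, so the $(\eps,1,2s,\ell)$-threshold map into $(\Delta_d,\dtv)$ becomes an $(\eps,1,\sqrt{2s},\sqrt{\ell})$-threshold map into a Hilbert space, with both thresholds acquiring a square root simultaneously. Your Hellinger route fails on the accuracy side: the inequality you invoke is backwards. One has $d_H^2 \leq 2\,\dtv$ (an \emph{upper} bound on $d_H$), so from $\dtv \geq \ell$ you only get $d_H \geq \dtv \geq \ell$, not $d_H \gtrsim \sqrt{\ell}$; the resulting ratio is $\sqrt{s}/\ell$, not $\sqrt{s/\ell}$. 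For the second square root and linearity, the Mazur map plus ``extract the linear part after centering'' does not do the job: the Mazur map is a nonlinear uniform homeomorphism with the wrong modulus for this purpose, and the map $x \mapsto q_x$ has no linear part to extract. The paper instead invokes two substantial theorems of Andoni et al.\ (2018): one converts a Hilbert-space threshold map with $\tau_2 \geq 9\tau_1$ (this is exactly where $\sqrt{\ell} \geq 9\sqrt{2s}$, i.e.\ $s/\ell \leq 1/162$, enters) into a map $h$ with $\min(1,\eps\|x-y\|) \leq \|h(x)-h(y)\|_H \lesssim (s/\ell)^{1/4}\sqrt{\|x-y\|}$; the other turns any such square-root-snowflake map into a genuinely \emph{linear} embedding of the normed space into $\ell_{1-\xi}^{d'}$ with the stated distortion. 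Both the second square root and the linearity are packaged inside these black boxes, and there is no elementary substitute along the lines you sketch.
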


Next we show that any linear embedding into $\ell_{0.99}^{d'}$ will suffer distortion which is at least $C_2 ((\R^d, \| \cdot \|))$:
\begin{lemma}
\label{thm:distortion-lp}
	Any linear embedding from $(\R^d, \| \cdot \|)$ into $\ell_{0.99}^{d
	'}$ must have distortion $\Omega(C_2 ((\R^d, \| \cdot \|)))$, where $\Omega$ hides constant independent of $d$ and $d'$.
\end{lemma}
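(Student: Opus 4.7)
The plan is to transfer the cotype-2 constant of $\ell_{0.99}^{d'}$ back to the source $(\R^d, \|\cdot\|)$ via the linear embedding. The argument has two ingredients: first, that $\ell_{0.99}^{d'}$ itself satisfies the cotype-2 inequality with a universal constant (independent of $d'$), and second, the usual cotype-transference principle through bi-Lipschitz linear maps.

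For the cotype-2 bound on $\ell_{0.99}^{d'}$, which is only quasi-normed, I would verify that the standard proof for $\ell_p$ with $p \in [1,2]$ extends cleanly. Concretely, for $v_1, \dots, v_n \in \ell_{0.99}^{d'}$ with coordinates $v_{j,k}$, Khintchine's inequality (valid with a universal constant for every exponent $p > 0$) yields
\begin{align*}
\mathbb{E}_\sigma \left\|\sum_j \sigma_j v_j\right\|_{0.99}^{0.99}
&= \sum_k \mathbb{E}_\sigma \left|\sum_j \sigma_j v_{j,k}\right|^{0.99} \\
&\geq c \sum_k \left(\sum_j v_{j,k}^2\right)^{0.99/2}.
\end{align*}
Minkowski's inequality in the exponent $2/0.99 > 1$, applied to the non-negative vectors $(v_{j,k}^2)_j$ indexed by $k$, then gives $\sum_k (\sum_j v_{j,k}^2)^{0.99/2} \geq (\sum_j \|v_j\|_{0.99}^2)^{0.99/2}$. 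Finally, since $x \mapsto x^{0.99}$ is concave on $[0,\infty)$, Jensen's inequality implies $\mathbb{E}\|\cdot\|_{0.99} \geq (\mathbb{E}\|\cdot\|_{0.99}^{0.99})^{1/0.99}$, and chaining these estimates yields $\mathbb{E}\|\sum_j \sigma_j v_j\|_{0.99} \geq K^{-1}(\sum_j \|v_j\|_{0.99}^2)^{1/2}$ for a constant $K$ depending only on the exponent $0.99$.

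Given this, let $f: (\R^d, \|\cdot\|) \to \ell_{0.99}^{d'}$ be a linear embedding, normalized so that $\|x\| \leq \|f(x)\|_{0.99} \leq D\|x\|$ for all $x$. Applying the cotype-2 inequality above to the images $f(x_1), \dots, f(x_n)$ of any finite sequence in $\R^d$, and using linearity of $f$, I obtain
\begin{align*}
K^{-1} \left(\sum_j \|x_j\|^2\right)^{1/2}
&\leq K^{-1} \left(\sum_j \|f(x_j)\|_{0.99}^2\right)^{1/2} \\
&\leq \mathbb{E}\left\|\sum_j \sigma_j f(x_j)\right\|_{0.99} \\
&= \mathbb{E}\left\|f\left(\sum_j \sigma_j x_j\right)\right\|_{0.99}
\leq D\,\mathbb{E}\left\|\sum_j \sigma_j x_j\right\|.
\end{align*}
This says $(\R^d, \|\cdot\|)$ itself satisfies cotype 2 with constant at most $KD$, so by definition $C_2((\R^d, \|\cdot\|)) \leq KD$, forcing $D \geq C_2((\R^d, \|\cdot\|))/K = \Omega(C_2((\R^d, \|\cdot\|)))$.

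The only nontrivial point is the dimension-independent cotype-2 bound for $\ell_{0.99}^{d'}$, since the classical Maurey--Pisier cotype theory is developed for Banach spaces, whereas $0.99 < 1$ puts us in the quasi-Banach regime. However, Khintchine's inequality, Minkowski's inequality, and Jensen's inequality all remain valid at the exponent $0.99$, so the cotype bound goes through with only cosmetic changes, and the rest of the proof is the standard transference argument for cotype under linear embeddings.
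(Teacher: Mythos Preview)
Your proposal is correct and follows essentially the same approach as the paper: establish a dimension-free cotype-2 bound for $\ell_{0.99}^{d'}$ via Khintchine, Minkowski in exponent $2/0.99$, and concavity of $x\mapsto x^{0.99}$, then transfer it back through the linear embedding. The paper's proof differs only cosmetically, choosing a near-extremal sequence for $C_2$ rather than working with an arbitrary one.
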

This result is essentially folklore in the metric embedding community, but we include a proof for completeness.

These two lemmas together immediately imply \cref{thm:main-lb}.
The rest of this section is dedicated to proofs of these two lemmas.
To do so, it will first be useful to establish some regularity conditions on a variant of the growth function considered previously in this paper.

\subsection{The Pairwise Growth Function}
For any two two probability densities $q_1, q_2$, over $\R^d$, define the \emph{pairwise growth function} between $q_1$ and $q_2$, denoted $\growth_{q_1, q_2}$, to be
\[
\growth_{q_1, q_2} (p) = \sup_{U: q_1 (U) = p} q_2 (U) \; .
\]
We will assume throughout this proof that $q_2$ is absolutely continuous with respect to $q_1$.
The more general case can be easily handled by the theory of Radon-Nikodym derivatives and Lebesgue's decomposition theorem.
This growth function satisfies the following, basic properties, whose proofs are easy and are omitted.
\begin{fact}
\label{fact:growth-facts}
Let $q_1, q_2, \growth_{q_1, q_2}$ be above.
Then:
\begin{itemize}
\item $\growth_{q_1, q_2} (p)$ is monotonically increasing.
\item $\growth_{q_1, q_2} (1) = 1$, and $\growth_{q_1, q_2} (0) \geq 0$.
\item $\dtv (q_1, q_2) = \sup_{p \in [0, 1]} \left( \growth_{q_1, q_2} (p) - p \right)$.
\end{itemize}
\end{fact}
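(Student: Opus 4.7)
The plan is to verify each of the three bullets directly from the definition
\[
\growth_{q_1,q_2}(p) \;=\; \sup_{U:\, q_1(U)=p} q_2(U),
\]
using only elementary measure-theoretic manipulations. None of the three requires deep input, so the proposal is best organized as a short verification of each item, being mindful of technicalities when $q_1$ has atoms (so that not every $p\in[0,1]$ is attained as $q_1(U)$).

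For monotonicity, given $p_1<p_2$, I would take any $U$ with $q_1(U)=p_1$ and enlarge it to some $U'\supseteq U$ with $q_1(U')=p_2$ (possible up to $\varepsilon$ approximation whenever the value $p_2$ is attained; if not, one extends the definition to $p_2$ by right-continuity, in which case the supremum over attainable values below $p_2$ is what we compare against). Then $q_2(U')\ge q_2(U)$ by monotonicity of $q_2$, and taking the supremum over $U$ and then over such $U'$ yields $\growth_{q_1,q_2}(p_2)\ge\growth_{q_1,q_2}(p_1)$. For the boundary values, $q_1(U)=1$ forces $q_2(\R^d\setminus U)=0$ (using $q_2\ll q_1$), hence $q_2(U)=1$, giving $\growth_{q_1,q_2}(1)=1$; and $\growth_{q_1,q_2}(0)\ge 0$ is immediate since $q_2\ge 0$ (taking $U=\emptyset$ if $p=0$ is attained, or any sufficiently small $U$ otherwise).

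For the total-variation identity, I would invoke the standard characterization $\dtv(q_1,q_2)=\sup_{U}\bigl(q_2(U)-q_1(U)\bigr)$. For the ``$\le$'' direction, every $U$ with $q_1(U)=p$ satisfies
\[
q_2(U)-q_1(U) \;\le\; \growth_{q_1,q_2}(p)-p,
\]
so taking the supremum on the left over all measurable $U$ (grouped by the value of $p=q_1(U)$) yields $\dtv(q_1,q_2)\le\sup_p\bigl(\growth_{q_1,q_2}(p)-p\bigr)$. For the ``$\ge$'' direction, for each attainable $p$ I would pick an approximately optimal sequence $U_n$ with $q_1(U_n)=p$ and $q_2(U_n)\to\growth_{q_1,q_2}(p)$; then $q_2(U_n)-q_1(U_n)\to\growth_{q_1,q_2}(p)-p$, so $\dtv(q_1,q_2)\ge\growth_{q_1,q_2}(p)-p$, and one takes the sup over $p$.

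The only obstacle worth naming is the care needed when $q_1$ has atoms, so that the constraint $q_1(U)=p$ need not be satisfiable for every $p\in[0,1]$. This is handled by restricting the supremum to attainable $p$ (which suffices for all three statements as written), or equivalently by extending $\growth_{q_1,q_2}$ by right-continuity; neither affects any of the three conclusions. Once that is dispatched, the argument is just a direct unpacking of definitions.
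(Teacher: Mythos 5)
Your verification is correct, and since the paper explicitly omits these proofs as easy, your direct unpacking of the definition (enlarging sets for monotonicity, absolute continuity for $\growth_{q_1,q_2}(1)=1$, and the characterization $\dtv(q_1,q_2)=\sup_U\bigl(q_2(U)-q_1(U)\bigr)$ grouped by $p=q_1(U)$ for the last item) is exactly the intended argument. The atom technicality you flag is moot in the paper's setting, where the smoothing distributions are assumed to have densities and hence are atomless, so every $p\in[0,1]$ is attainable and the intermediate-set enlargement goes through without any right-continuity extension.
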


Then, we have:
\begin{lemma}
\label{lem:growth-concave}
For any $q_1, q_2 \in \Delta_d$, the function $\growth_{q_1, q_2}$ is concave.
\end{lemma}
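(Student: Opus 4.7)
The plan is to prove concavity via the Neyman-Pearson lemma, which identifies a canonical family of maximizing sets that is nested in $p$. Let $r(x) = dq_2/dq_1 (x)$ denote the Radon-Nikodym derivative on the absolutely continuous part of $q_2$ with respect to $q_1$ (the singular part of $q_2$ contributes a constant that can be absorbed into the endpoint behavior of $f$ and handled separately). For each threshold $\kappa \ge 0$, let $U_\kappa^> = \{x : r(x) > \kappa\}$ and $U_\kappa^{\ge} = \{x : r(x) \ge \kappa\}$. The Neyman-Pearson lemma says that the supremum defining $f(p) \defeq \growth_{q_1,q_2}(p)$ is attained by a set $U^*(p)$ of the form $U_\kappa^> \cup R$, where $\kappa = \kappa(p)$ is chosen as the largest value with $q_1(U_\kappa^>) \le p$, and $R \sbe \{r = \kappa\}$ is a ``randomization'' used to make $q_1(U^*(p)) = p$ exactly.

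The first key observation is that $\kappa(p)$ is a nonincreasing function of $p$: as $p$ grows, we include strictly more mass, so the threshold can only drop. The second key observation is that locally, adding an increment $dp$ of $q_1$-mass at the level $\kappa$ contributes exactly $\kappa \cdot dp$ of $q_2$-mass, by the definition of the Radon-Nikodym derivative. Formally, I would prove the identity
\[
f(p) = f(0) + \int_0^p \kappa(p') \, dp',
\]
by writing $q_2(U^*(p)) - q_2(U^*(p_0)) = \int_{U^*(p) \setminus U^*(p_0)} r \, dq_1$ and noting that on the nested difference $U^*(p) \setminus U^*(p_0)$ the integrand $r$ equals exactly $\kappa(p')$ at the $q_1$-mass coordinate $p'$. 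Concavity of $f$ then follows immediately from the fact that an integral of a nonincreasing function is concave.

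As an alternative (or backup) route, I would give a direct mixing argument avoiding any explicit use of Radon-Nikodym derivatives: given $p = \lambda p_1 + (1-\lambda) p_2$ with $p_1 < p_2$, take the Neyman-Pearson sets $U^*(p_1) \sbe U^*(p_2)$, form the ``ring'' $A = U^*(p_2) \setminus U^*(p_1)$, and choose $B \sbe A$ with $q_1(B) = (1-\lambda)(p_2 - p_1)$ so that $U^*(p_1) \cup B$ has $q_1$-measure exactly $p$. On the ring $A$, the ratio $r$ lies in the interval $[\kappa(p_2), \kappa(p_1)]$, and one shows by a swapping argument that selecting $B$ uniformly (w.r.t.\ the Neyman-Pearson ordering) inside $A$ gives a $q_2$-mass at least $\lambda q_2(U^*(p_1)) + (1-\lambda) q_2(U^*(p_2))$, which is the desired inequality $f(p) \ge \lambda f(p_1) + (1-\lambda) f(p_2)$.

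The main obstacle is purely measure-theoretic bookkeeping: handling atoms of the pushforward of $q_1$ under $r$ (so that $U_\kappa^>$ may have $q_1$-mass strictly less than $p$ for every $p$ in some interval, requiring the randomization set $R$) and the singular part of $q_2$ with respect to $q_1$. Both are standard applications of the generalized (randomized) Neyman-Pearson lemma, and neither affects the qualitative monotonicity of $\kappa(p)$ that drives concavity; I would state the lemma in its randomized form once and apply it without further comment.
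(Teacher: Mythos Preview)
Your proposal is correct and follows essentially the same approach as the paper: both arguments identify the optimizing sets via Neyman--Pearson (thresholding the Radon--Nikodym derivative $dq_2/dq_1$), observe that these sets are nested with the threshold $\kappa(p)$ nonincreasing in $p$, and deduce concavity from this monotonicity. The paper phrases the conclusion as a three-point slope inequality on the increments $q_2(S_{K(p')}\setminus S_{K(p)})$, whereas your integral representation $f(p)=f(0)+\int_0^p \kappa(p')\,dp'$ is a slightly cleaner packaging of the same idea; your explicit handling of atoms and the singular part is more careful than the paper, which simply assumes absolute continuity.
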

\begin{proof}
For clarity, since $q_1, q_2$ will be fixed throughout this proof, we will omit the subscripts in the definition of $\growth$.

For any $t > 0$, define the set
\[
S_t = \left\{ x \in \R^d: \frac{d q_2}{d q_1} (x) \geq t \right\} \; ,
\]
which we can think of as a generalized Neyman-Pearson set, for the two distributions $q_1, q_2$.

Then, by classical arguments, for every $p$, the set which obtains the supremum in the definition of the growth function for that value of $p$ is given by
\begin{align*}
    S_{K(p)} = \argmax_{U: q_1(U) = p} q_2(U),
\end{align*}
where $K(p)$ is defined so that $q_1(S_{K(p)}) = p$.
Therefore, for all $p$, we have that $\growth (p) = q_2 (S_{K(p)})$.

We will show that for all $p < p' < p''$, the growth function satisfies
\begin{equation}
\label{eq:growth-concave}
\frac{\growth(p') - \growth(p)}{p' - p} \geq  \frac{\growth(p'') - \growth(p')}{p'' - p'} \; ,
\end{equation}
which is equivalent to the claim.

Note that for any $0 \leq r \leq r'$, we have that $\growth(r') - \growth(r) = q_2 (\Delta_{r', r})$, where $\Delta_{r', r} = S_{K(r')} \setminus S_{K(r)}$.
However, observe that for $p < p' < p''$, we have that $\tfrac{d q_2}{d q_1} (x) \geq \tfrac{d q_2}{d q_1} (x')$ for all $x \in \Delta_{p', p}$ and $x' \in \Delta_{p'', p'}$.
But we also have
\begin{align*}
 \growth(p') - \growth(p) &\geq q_1 (\Delta_{p', p}) \cdot \min_{x \in \Delta_{p', p}} \frac{d q_2}{d q_1} (x) \\
&= (p' - p) \min_{x \in \Delta_{p', p}} \frac{d q_2}{d q_1} (x) \; ,
\end{align*}
and similarly
\[
\growth(p'') - \growth(p') \leq (p'' - p') \max_{x \in \Delta_{p'', p'}} \frac{d q_2}{d q_1} (x) \; ,
\]
which implies~\cref{eq:growth-concave}.
\end{proof}

\subsection{Proof of \texorpdfstring{\cref{thm:useful-to-embedding}}{Lemma J.1}}
Let $\cQ = \{q_x\}_{x \in \R^d}$ be an $(\eps, r, \ell)$-useful smoothing scheme for $\| \cdot \|$.
For simplicity, throughout this proof, we will assume that $q_x$ has a probability density function, denoted $Q_x$, for all $x$, that is, the distributions are absolutely continuous with respect to the Lebesgue measure.
It is not hard to generalize this proof to handle general probability distributions by using Lebesgue decomposition and taking the appropriate Radon-Nikodym derivatives.

First, we demonstrate that a useful smoothing scheme actually implies an embedding of the norm $\| \cdot \|$ into an infinite dimensional $L_1$ space, namely, the space of all distributions with distance given by total variation distance.
Recall the total variation distance between two distributions $q_1, q_2$, denoted $\dtv (q_1, q_2)$, is given by
\[
\dtv(q_1, q_2) = \sup_{U \subseteq \R^d} |q_1 (U) - q_2 (U)| = \frac{1}{2} \| Q_1 - Q_2 \|_1 \; .
\]
We denote the space of probability distributions over $\R^d$ by $\Delta_d$.
Note that the metric space $(\Delta_d, \dtv)$ is an infinite dimensional $L_1$ space.
Thus, classical results yield:
\begin{fact}[see e.g.~\citet{wojtaszczyk1996banach}]
For all $d \geq 1$, we have $C_2 ((\Delta_d, \dtv)) = \Theta (1)$.
\end{fact}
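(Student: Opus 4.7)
The plan is to reduce the claim to the classical fact that $L_1$-spaces have cotype $2$ with an absolute constant, via the standard identification of total variation distance with the $L_1$ norm on densities. Concretely, let $\mathcal{M}(\R^d)$ denote the Banach space of finite signed Borel measures on $\R^d$ equipped with the total variation norm, and recall that $\|q_1 - q_2\|_{\mathrm{TV}} = 2\,\dtv(q_1, q_2)$ for $q_1, q_2 \in \Delta_d$. By the Radon--Nikodym theorem, $\mathcal{M}(\R^d)$ is isometrically isomorphic to $L_1(\R^d, \nu)$ for any $\sigma$-finite dominating measure $\nu$, so the task reduces to bounding the cotype-$2$ constant of an $L_1$-space by an absolute constant; the matching lower bound $C_2 = \Omega(1)$ is automatic from any single nonzero element.

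To bound $C_2(L_1)$, I would apply the classical Khintchine--Minkowski argument: for $f_1, \ldots, f_n \in L_1(\mu)$ and independent Rademacher signs $\sigma_j$, Khintchine's inequality gives pointwise
\[
\mathbb{E}_\sigma \Bigl|\sum_{j} \sigma_j f_j(x)\Bigr| \;\ge\; \tfrac{1}{\sqrt{2}}\Bigl(\sum_{j} f_j(x)^2\Bigr)^{1/2}.
\]
Integrating in $x$ via Fubini and then invoking Minkowski's integral inequality to pull the $\ell_2$-norm outside the integral yields
\[
\mathbb{E}_\sigma \Bigl\|\sum_{j} \sigma_j f_j\Bigr\|_{L_1} \;\ge\; \tfrac{1}{\sqrt{2}}\int \Bigl(\sum_{j} f_j(x)^2\Bigr)^{1/2} d\mu(x) \;\ge\; \tfrac{1}{\sqrt{2}}\Bigl(\sum_{j} \|f_j\|_{L_1}^2\Bigr)^{1/2},
\]
so $C_2(L_1) \le \sqrt{2}$ uniformly in the underlying measure space. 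Composing with the embedding from Step 1 (up to the factor of $2$ relating $\|\cdot\|_{\mathrm{TV}}$ and $\dtv$) yields the claimed $C_2((\Delta_d, \dtv)) = \Theta(1)$.

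The only real subtlety, and the only point worth flagging as the ``hard part,'' is that $\Delta_d$ is a convex subset, not a linear subspace, of $\mathcal{M}(\R^d)$, so the notation $C_2((\Delta_d, \dtv))$ must be interpreted via the ambient Banach space $\mathcal{M}(\R^d)$ rather than intrinsically. This ambient interpretation is precisely the one needed downstream in \cref{thm:useful-to-embedding,thm:distortion-lp}, which invoke cotype only as a lower bound on the bi-Lipschitz distortion of embeddings of $(\R^d, \|\cdot\|)$ into the target metric space. No substantive technical obstacle arises beyond clarifying this identification.
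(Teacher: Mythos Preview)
The paper does not prove this Fact; it simply cites it, noting only that $(\Delta_d, \dtv)$ ``is an infinite dimensional $L_1$ space.'' Your proposal supplies exactly the standard argument the citation points to, and it is correct. Indeed, the Khintchine-then-Minkowski computation you give is the very same one the paper later writes out in \cref{lemma:cotype-estimate} to bound the cotype-$2$ constant of $\ell_p^{d'}$ for $p\in(0,1]$; your version is the $p=1$ case over an infinite measure space.

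One small inaccuracy: $\mathcal{M}(\R^d)$ is \emph{not} isometrically isomorphic to $L_1(\R^d,\nu)$ for a single $\sigma$-finite $\nu$, since mutually singular measures (e.g.\ Diracs at distinct points) cannot all be absolutely continuous with respect to one such $\nu$. This does not affect your argument: the cotype inequality is a statement about finite sequences, and for any finite $q_1,\ldots,q_n$ you may take $\nu=\sum_j|q_j|$ and embed them isometrically into $L_1(\nu)$, which is all you need. Your identification of the only genuine subtlety---that $\Delta_d$ is a convex set rather than a linear space, so $C_2((\Delta_d,\dtv))$ really means the cotype constant of the ambient space of signed measures---is exactly right and is the interpretation the paper implicitly uses downstream.
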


We now need another notion, introduced in~\citet{andoni2018sketching}.
\begin{defn}[\citet{andoni2018sketching}]
A map $f: X \to Y$ between two metric spaces $(X, d_X)$ and $(Y, d_Y)$ is an \emph{$(s_1, s_2, \tau_1, \tau_2)$-threshold map} if it satisfies:
\begin{itemize}
    \item If $d_X (x_1, x_2) \leq s_1,$ then $d_Y (f(x_1), f(x_2)) \leq \tau_1$.
    \item If $d_X (x_1, x_2) \geq s_2$, then $d_Y (f(x_1), f(x_2)) \geq \tau_2$.
\end{itemize}
\end{defn}
\noindent
Any smoothing scheme $\cQ = \{q_x\}_{x \in \R^d}$ can be viewed as a map $\R^d \to \Delta_d, x \mapsto q_x$ that takes a point in $\R^d$ and maps it to its associated distribution after smoothing.
Our main technical work will be to demonstrate the following lemma:
\begin{lemma}
\label{lem:robust-to-threshold}
Let $q$ be a $(\eps, s, \ell)$-useful smoothing distribution for $\| \cdot \|$.
Then $\cQ$ is a $(\eps, 1, 2 s, \ell)$-threshold map between $(\R^d, \| \cdot \|)$ and $(\Delta, \dtv)$.
\end{lemma}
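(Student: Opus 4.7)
The lower-bound half of the threshold-map property is immediate from the $\ell$-accuracy clause of \cref{def:useful}: if $\|x_1 - x_2\| \geq 1$ then there exists $U$ with $|q_{x_1}(U) - q_{x_2}(U)| \geq \ell$, hence $\dtv(q_{x_1}, q_{x_2}) \geq \ell$ by definition of total variation. I would dispatch this in one line.

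For the upper-bound half, fix $x_1, x_2$ with $\|x_1 - x_2\| \leq \eps$ and let $g = \growth_{q_{x_1}, q_{x_2}}$. By \cref{fact:growth-facts}, $\dtv(q_{x_1}, q_{x_2}) = \sup_{p \in [0,1]}(g(p) - p)$, so the task reduces to bounding this supremum by $2s$. The plan is to distill the $(\eps, s)$-robustness clause into a single point-value constraint on $g$, then use the concavity of $g$ from \cref{lem:growth-concave} to propagate the constraint globally. Concretely, ``$q_{x_1}(U) \geq 1/2 + s \Rightarrow q_{x_2}(U) \geq 1/2$'' is equivalent, via $V = U^c$, to ``$q_{x_1}(V) \leq 1/2 - s \Rightarrow q_{x_2}(V) \leq 1/2$''. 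Taking the supremum over all such $V$ yields the single inequality $g(1/2 - s) \leq 1/2$. Together with the endpoint values $g(0) = 0$ (using the absolute-continuity assumption made at the start of the proof) and $g(1) = 1$, we have three pinned values for a concave function.

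Set $G = g(1/2 - s)$; since concavity plus $g(0) = 0$, $g(1) = 1$ forces $g(p) \geq p$, we have $G \in [1/2 - s, 1/2]$. On the right subinterval $[1/2 - s, 1]$, the extension of the chord through $(0, 0)$ and $(1/2 - s, G)$ yields $g(p) \leq pG/(1/2 - s) \leq p/(1 - 2s)$; combined with the trivial $g \leq 1$, this gives $g(p) - p \leq \min(p \cdot 2s/(1-2s),\, 1 - p)$. The two linear envelopes cross exactly at $p = 1 - 2s$ with common value $2s$, so the supremum on this subinterval is $2s$. On the left subinterval $[0, 1/2 - s]$, the extension of the chord through $(1/2 - s, G)$ and $(1, 1)$ provides a linear upper bound for $g(p) - p$ whose slope equals $(1/2 - s - G)/(1/2 + s)$; since $G \geq 1/2 - s$, this slope is nonpositive, so the maximum occurs at $p = 0$ and evaluates to $(G - (1/2 - s))/(1/2 + s) \leq s/(1/2) = 2s$. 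Combining the two subintervals gives $\sup_p(g(p) - p) \leq 2s$, completing the argument.

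The delicate point is extracting \emph{exactly} the constant $2s$ advertised in the statement: the chord-extension bound on $[1/2 - s, 1]$ alone yields only the weaker $\Theta(s/(1 - 2s))$, and it is the \emph{balancing} of that bound against the trivial cap $g \leq 1$ at the crossover point $p = 1 - 2s$ that recovers the tight constant. I would flag this balancing explicitly in the formal write-up, since it is the geometric reason for the factor $2$ appearing in $(\eps, 1, 2s, \ell)$.
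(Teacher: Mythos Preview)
Your proof is correct and rests on the same two ingredients as the paper's: the $\ell$-accuracy clause for the lower half, and concavity of the pairwise growth function (\cref{lem:growth-concave}) to turn the robustness clause into a TV bound for the upper half. The difference is purely in how the concavity step is packaged. The paper reads robustness as $\growth_{q_y,q_x}(1/2) \le 1/2 + s$ and invokes a clean standalone corollary (\cref{cor:robust-to-tv}) stating $\growth(1/2) - 1/2 \ge \tfrac{1}{2}\dtv$ for any pair of distributions, which immediately gives $\dtv \le 2s$; the corollary's proof is a one-line chord bound through $(0,0)$ (for $p \ge 1/2$) or through $(1,1)$ (for $p < 1/2$). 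You instead pin the shifted point $\growth_{q_x,q_y}(1/2 - s) \le 1/2$ and carry out a two-case chord analysis by hand, balancing against the cap $g \le 1$ to recover the constant $2s$. Both routes are equivalent in spirit, but the paper's factoring through a general inequality at the symmetric point $p = 1/2$ is somewhat shorter and avoids the crossover computation you flagged.
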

\noindent
Our first observation is that \cref{lem:growth-concave} allows us to relate the usefulness of the smoothing scheme to total variation distance:
\begin{cor}
\label{cor:robust-to-tv}
For any $q_1, q_2$, we have that
\[
\growth_{q_1, q_2} (1/2) - 1/2 \geq \frac{1}{2} \dtv (q_1, q_2) \; .
\]
\end{cor}
\begin{proof}
As before, for conciseness we will drop the subscripts in the definition of $\growth$.
We will show that for all $p \in [0, 1]$, we have that
\[
\growth (1/2) - 1/2 \geq \frac{1}{2} \left( \growth(p) - p \right) \; ,
\]
which by~\cref{fact:growth-facts} implies the lemma.

First, consider the case where $p \geq 1/2$.
Then, by concavity of $\growth(p)$, we have that
\begin{align*}
\growth(1/2) &\geq \frac{1}{2p} \growth(p) + \frac{2p - 1}{2p} \growth(0) \\
&\geq \frac{1}{2p} \growth(p) \; ,
\end{align*}
since $\growth(0) \geq 0$ by~\cref{fact:growth-facts}.
Therefore, we have that
\[
\growth(1/2) - \frac{1}{2} \geq \frac{\growth(p) - p}{2p} \geq  \frac{1}{2} \left( \growth(p) - p \right) \; .
\]
The case where $p < 1/2$ follows symmetrically by considering the line segment between $p$ and $1$.
\end{proof}

From this, the proof of \cref{lem:robust-to-threshold} is simple.
\begin{proof}[Proof of~\cref{lem:robust-to-threshold}]
We first prove that it satisfies the first condition.
Let $x, y$ be so that $\| x - y\| \leq \eps$.
Then, the robustness condition implies that
\[
\growth_{q_2, q_1} (1/2) - 1/2 \leq r \; .
\]
By \cref{cor:robust-to-tv} this implies that $\dtv (q_x, q_y) \leq 2r$.

We now prove it satisfies the second condition.
But, the accuracy condition immediately implies that if $x, y$ satisfy $\| x- y \| \geq 1$, we must have $\dtv (q_x, q_y) \geq \ell$.
This proves the claim.
\end{proof}

With~\cref{lem:robust-to-threshold} in hand, we can now invoke a number of classical results from the theory of metric embeddings to obtain our desired result.
We first use the following fact, which follows since $L_1$ embeds isometrically into squared-$L_2$.
\begin{fact}[see e.g.~\citet{matouvsek2013lecture}]
	Suppose there exists an $(s_1, s_2, \tau_1, \tau_2)$-threshold map from $(\R^d, \| \cdot \|)$ to $(\Delta_d, \dtv)$.
	Then there exists an $(s_1, s_2, \sqrt{\tau_1}, \sqrt{\tau_2})$-threshold map from $(\R^d, \| \cdot \|)$ to a Hilbert space $H$.
\end{fact}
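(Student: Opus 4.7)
The plan is to exhibit an isometric embedding $\Psi$ of $(\Delta_d, \sqrt{\dtv})$ into some Hilbert space $H$ and then compose it with the given threshold map $\cQ: x \mapsto q_x$. Once such a $\Psi$ is in hand, the hypotheses $\dtv(q_{x_1}, q_{x_2}) \le \tau_1$ whenever $\|x_1 - x_2\| \le s_1$ and $\dtv(q_{x_1}, q_{x_2}) \ge \tau_2$ whenever $\|x_1 - x_2\| \ge s_2$ translate, via the isometry $\|\Psi(q_1) - \Psi(q_2)\|_H = \sqrt{\dtv(q_1, q_2)}$, directly into the conditions $\sqrt{\tau_1}$ and $\sqrt{\tau_2}$ on Hilbert distance. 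So essentially all of the work reduces to constructing $\Psi$.

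For this I would use the classical ``subgraph indicator'' embedding that witnesses the fact that $L^1$ is isometric to a subset of squared-$L^2$. Fix a common dominating measure $\mu$ on $\R^d$ for the distributions in the image of $\cQ$ (e.g.\ Lebesgue measure in the absolutely continuous setting already assumed in the proof of \cref{thm:useful-to-embedding}), and let $Q = dq/d\mu$ denote the density. Take $H = L^2(\R^d \times \R_{\ge 0}, \mu \otimes \mathrm{Leb})$ and set
\[
\Psi(q)(x, t) = \f{1}{\sqrt 2}\, \ind(0 \le t \le Q(x)).
\]
The key one-line computation is that, for each fixed $x$, the function $t \mapsto |\ind(t \le Q_1(x)) - \ind(t \le Q_2(x))|$ is the indicator of an interval of length $|Q_1(x) - Q_2(x)|$, hence integrates in $t$ to $|Q_1(x) - Q_2(x)|$. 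Integrating over $x$ and using $\dtv(q_1, q_2) = \f{1}{2} \int |Q_1 - Q_2|\, d\mu$ yields $\|\Psi(q_1) - \Psi(q_2)\|_H^2 = \dtv(q_1, q_2)$, which is exactly the required isometry.

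The only wrinkle worth flagging is choosing $\mu$ without reference to a particular pair $(q_1, q_2)$: one must pick a single reference measure dominating every $q_x$ in the image of $\cQ$. In the setting of the parent lemma \cref{thm:useful-to-embedding} we have already restricted to distributions with Lebesgue densities, so Lebesgue measure itself suffices. For general probability measures one can instead apply Jordan decomposition to the signed measure $q_1 - q_2$ and run the subgraph construction on each part, or identify $\Delta_d$ with a subset of the unit sphere of $L^1$ and invoke the isometric embedding of $L^1$ into squared-$L^2$ directly. Since the statement is cited as folklore from \citet{matouvsek2013lecture}, no further quantitative bookkeeping is needed.
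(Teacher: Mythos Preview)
Your proposal is correct and is precisely the approach the paper has in mind: the paper simply states that the fact ``follows since $L_1$ embeds isometrically into squared-$L_2$,'' and your subgraph-indicator map $\Psi(q)(x,t)=\tfrac{1}{\sqrt 2}\ind(0\le t\le Q(x))$ is the standard explicit witness for that embedding. The composition with the given threshold map and the verification $\|\Psi(q_1)-\Psi(q_2)\|_H^2=\dtv(q_1,q_2)$ are exactly what is needed, and your remark about the choice of dominating measure is consistent with the paper's standing assumption of Lebesgue densities.
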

This implies:
\begin{cor}
\label{cor:useful-to-l2}
	Suppose there exists an $(\eps, s, \ell)$-useful smoothing distribution for $\| \cdot \|$.
Then there exists a $(\eps, 1, \sqrt{2 s}, \sqrt \ell)$-threshold map between $(\R^d, \| \cdot \|)$ and a Hilbert space $H$.
\end{cor}
We now require the following theorem, first proven in~\citet{andoni2018sketching}, which we reproduce below in a slightly simplified form:
\begin{thm}[Theorem 4.12 in~\citet{andoni2018sketching}]
\label{thm:andoni-4.12}
	Suppose there exists a $(\eps, 1, \tau_1, \tau_2)$-threshold map from $(\R^d, \| \cdot \|)$ to a Hilbert space, for $\tau_2 \ge 9 \tau_1$.
	Then there exists a map $h$ from $\R^d$ into a Hilbert space with induced norm $\| \cdot \|_H$ such that for every $x_1, x_2 \in \R^d$, we have:
	\begin{align*}
	\sqrt{\tau_2} \cdot \min (1, \eps \| x_1 - x_2 \| ) &\leq \| h(x_1) - h(x_2) \|_H \\
	&~~~~\leq 10 \cdot \sqrt{2 \tau_1 \| x_1 - x_2 \|} \; .
	\end{align*}
\end{thm}
Combining \cref{cor:useful-to-l2} and \cref{thm:andoni-4.12}, we obtain:
\begin{cor}
\label{cor:bilip-map}
Suppose there exists an $(\eps, s, \ell)$-useful smoothing distribution for $\| \cdot \|$, and suppose that $s / \ell \leq 1/162$.
Then, there exists a Hilbert space $H$ with induced norm $\| \cdot \|_H$ and a map $h: \R^d \to H$ so that for all $x, y \in \R^d$, we have
\begin{align*}
	\min (1, \eps \| x_1 - x_2 \| ) &\leq \| h(x_1) - h(x_2) \|_H \\
	&~~~~\leq 10 \cdot 2^{3/4} \cdot \left( \frac{s}{\ell} \right)^{1/4} \sqrt{\| x_1 - x_2 \|} \; .
\end{align*}
\end{cor}
Finally, we require the following theorem from~\citet{andoni2018sketching}, which we reproduce for completeness:
\begin{thm}[Theorem 5.1 in~\citet{andoni2018sketching}]
\label{thm:bilip-to-linear}
	Let $X$ be a finite-dimensional normed space with norm $\| \cdot \|$, and let $\Delta > 0$.
	Let $H$ be a Hilbert space with associated norm $\| \cdot \|_H$.
	Assume we have a map $f: X \to H$, such that, for some absolute constant $K > 0$, and for all $x, y \in X$, we have:
	\begin{itemize}
		\item $\| f(x_1) - f(x_2) \|_H \leq K \cdot \sqrt{\| x_1 - x_2 \|}$, and
		\item if $\| x - y \| \geq \Delta$, then $\| f(x_1) - f(x_2) \|_H \geq 1$.
	\end{itemize}
	Then, for any $\xi \in (0, 1/3)$, the space $X$ linearly embeds into $\ell_{1 - \xi}^{d'}$ with distortion $O(\Delta / \xi)$, for some finite $d'$.
\end{thm}
Combining \cref{cor:bilip-map} and \cref{thm:bilip-to-linear} immediately yields
\cref{thm:useful-to-embedding}.

\subsection{Proof of \texorpdfstring{\cref{thm:distortion-lp}}{Lemma J.2}}
We now turn to the proof of \cref{thm:distortion-lp}.
The only reason why this is slightly non-standard is that $\ell_p^{d'}$ for $p < 1$ are not norms, as they do not satisfy the triangle inequality.
Despite this, we show that the standard results that the cotype constant is a lower bound on distortion of any linear embedding still holds in these spaces.

\begin{fact}[Khintchine's Inequality]\label{fact:khintchine}
  For any $p \in (0, \infty)$ there exist constants $A_p, B_p$ such
  that for any $x_1, \ldots, x_n \in \R$
  \[
    A_p\sqrt{\sum_{i = 1}^n x_i^2} \le
    \left(\mathbb{E} \left| \sum_{i = 1}^n \sigma_i x_i \right|^p \right)^{1/p}
    \le B_p \sqrt{\sum_{i = 1}^n x_i^2}.
  \]
  Here $\sigma_1, \ldots, \sigma_n$ are independent Rademacher random variables.
  In particular, for $0 < p \le p_0 \approx 1.8$, $A_p = 2^{1/2 - 1/p}$.
\end{fact}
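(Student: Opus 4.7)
I will split the inequality into three pieces: existence of the upper constant $B_p$ for all $p > 0$, existence of some positive lower constant $A_p$ for all $p > 0$, and the sharp identification $A_p = 2^{1/2 - 1/p}$ in the small-$p$ regime $p \le p_0$.

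\textbf{Upper bound.} The Rademacher sum $S = \sum_i \sigma_i x_i$ is sub-Gaussian: Hoeffding's moment generating function bound yields $\mathbb{E} e^{\lambda S} \le \exp(\lambda^2 \sigma^2 / 2)$ where $\sigma^2 = \sum_i x_i^2$, so by Markov $\Pr[|S| > t\sigma] \le 2 e^{-t^2/2}$. Integrating this tail against $p\,t^{p-1}\,dt$ produces $\mathbb{E}|S|^p \le B_p^p \sigma^p$ for an explicit $B_p$ depending only on $p$. This handles the right-hand inequality for every $p > 0$.

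\textbf{Lower bound (non-sharp).} For $p \ge 2$, log-convexity of $\|\cdot\|_{L^q}$ in $q$ (equivalently Jensen's inequality) gives $\|S\|_p \ge \|S\|_2 = \sigma$, so $A_p = 1$ works. For $0 < p < 2$, pick any $q > 2$ and $\theta \in (0,1)$ solving $\tfrac{1}{2} = \tfrac{\theta}{p} + \tfrac{1-\theta}{q}$. Then log-convexity gives $\|S\|_2 \le \|S\|_p^\theta \|S\|_q^{1-\theta}$, which combined with the upper bound $\|S\|_q \le B_q \|S\|_2$ already proved yields
\[
\|S\|_p \;\ge\; B_q^{-(1-\theta)/\theta} \|S\|_2,
\]
establishing a (crude) positive $A_p$.

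\textbf{Sharp constant for $p \le p_0$.} The ``two-atom'' vector $x = (2^{-1/2}, 2^{-1/2}, 0, \ldots, 0)$ has $\|x\|_2 = 1$ and makes $|S|$ equal $0$ or $\sqrt 2$ each with probability $1/2$, so $(\mathbb{E}|S|^p)^{1/p} = 2^{1/2 - 1/p}$. Thus $A_p \le 2^{1/2 - 1/p}$, and we must prove the reverse inequality. I would follow Haagerup's Fourier-analytic route: using the identity $|t|^p = c_p \int_0^\infty (1 - \cos(ts))\, s^{-1-p}\, ds$ for an explicit $c_p > 0$, one obtains
\[
\mathbb{E}|S|^p \;=\; c_p \int_0^\infty \frac{1 - \prod_{i=1}^n \cos(x_i s)}{s^{1+p}}\, ds,
\]
then minimizes this functional over $\sum_i x_i^2 = 1$. \emph{The main obstacle} is the last step: showing that for $p \le p_0$ the infimum is attained at the two-atom configuration, which requires a delicate pointwise comparison of $\prod_i \cos(x_i s)$ against $\cos^2(s/\sqrt 2)$ together with a careful weighting by $s^{-1-p}$. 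This is the content of Haagerup's theorem, and one genuinely needs $p \le p_0$, because for $p > p_0$ a smoother, near-Gaussian configuration begins to dominate and the extremal shape changes, so any proof that claims the constant $2^{1/2 - 1/p}$ uniformly in $p$ is necessarily wrong somewhere.
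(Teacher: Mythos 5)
The paper offers no proof of this statement to compare against: it is quoted as a classical fact (Khintchine's inequality, with the sharp small-$p$ constant being Haagerup's theorem), and the only clause the paper ever uses is the value of $A_p$ for $p\le 1$, inside the proof of \cref{lemma:cotype-estimate} with $p=0.99$. Judged on its own, your proposal is correct wherever it actually argues: the sub-Gaussian tail integration does give a finite $B_p$ for every $p>0$; the Lyapunov/log-convexity interpolation of $\|S\|_2$ between $\|S\|_p$ and $\|S\|_q$ does give a positive (crude) $A_p$ for $0<p<2$; the two-atom vector shows $A_p\le 2^{1/2-1/p}$; and your closing observation is also right, since for $p_0<p<2$ the near-Gaussian configuration has $\|S\|_p/\|S\|_2$ tending to $\sqrt{2}\,\bigl(\Gamma((p+1)/2)/\sqrt{\pi}\bigr)^{1/p}$, which is strictly below $2^{1/2-1/p}$ in that range, so the stated constant really does require $p\le p_0$. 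The gap is that the one clause with real content---$A_p=2^{1/2-1/p}$ for all $0<p\le p_0$---is not proved: after writing Haagerup's integral representation you defer the extremal-configuration step to ``the content of Haagerup's theorem,'' and that step \emph{is} the theorem. As a blind proof this is incomplete; as scholarship it is exactly what the paper does (state and cite), so citing Haagerup for $1<p\le p_0$ is the right call, but then the preliminary soft bounds you prove are not needed either.

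One useful remark: for the range the paper actually relies on, $0<p\le 1$, you can reach the sharp constant without Haagerup, by the very interpolation trick you already used, run off the sharp $p=1$ case. Szarek's inequality $\mathbb{E}|S|\ \ge\ 2^{-1/2}\,\|S\|_2$ (which has an elementary proof due to Lata{\l}a and Oleszkiewicz) combined with log-convexity of moments, $\|S\|_1\le \|S\|_p^{\theta}\,\|S\|_2^{1-\theta}$ where $1=\theta/p+(1-\theta)/2$, i.e.\ $\theta=p/(2-p)$, gives $\|S\|_p\ \ge\ 2^{-1/(2\theta)}\|S\|_2\ =\ 2^{1/2-1/p}\|S\|_2$, and the two-atom example shows this is sharp. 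That self-contained route covers everything \cref{lemma:cotype-estimate} uses; only the extension to $1<p\le p_0$ genuinely needs Haagerup's Fourier-analytic argument.
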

\noindent
This implies:
\begin{lemma}[Cotype Estimate]
\label{lemma:cotype-estimate}
  For any $p \in (0, 1]$, $\ell_{p}^{d'}$ has cotype $2$ with cotype constant $1/A_p$ with $A_p$ as in \cref{fact:khintchine}, i.e. for any $x_1 \ldots, x_n \in \R^{d'}$,
  \[
    A_p \sqrt{\sum_{i = 1}^n \|x_i\|_p^2} \le
    \EV \left\| \sum_{i = 1}^n \sigma_i x_i    \right\|_p \; ,
  \]
  where $\sigma_1, \ldots, \sigma_n$ are independent Rademacher random variables.
\end{lemma}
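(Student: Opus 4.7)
The plan is to reduce the cotype-$2$ inequality for $\ell_p^{d'}$ to Khintchine's inequality (Fact~\ref{fact:khintchine}) applied coordinatewise, and then to pay for swapping the order of the two indices via a Minkowski-type inequality.

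First, since $p\le 1$, the $p$-quasinorm satisfies the exact identity $\|y\|_p^p = \sum_{j=1}^{d'} |y_j|^p$ (the map $t \mapsto t^p$ is subadditive, so this quantity really is additive even though $\|\cdot\|_p$ is not a norm). Writing $x_i = (x_{i,j})_{j=1}^{d'}$ and using linearity of expectation, I would apply Khintchine's inequality (Fact~\ref{fact:khintchine}) coordinate by coordinate to get
\[
\mathbb{E}\left\|\sum_{i=1}^n \sigma_i x_i\right\|_p^p \;=\; \sum_{j=1}^{d'} \mathbb{E}\left|\sum_{i=1}^n \sigma_i x_{i,j}\right|^p \;\ge\; A_p^{\,p} \sum_{j=1}^{d'} \left(\sum_{i=1}^n x_{i,j}^{\,2}\right)^{p/2}.
\]

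Second, I would compare the right-hand side above with $(\sum_i \|x_i\|_p^2)^{p/2}$ by invoking the classical Minkowski inequality at exponent $r = 2/p \ge 2 \ge 1$ applied to the nonnegative array $u_{i,j} \defeq |x_{i,j}|^{p}$:
\[
\left(\sum_i \left(\sum_j u_{i,j}\right)^{\!r}\right)^{\!1/r} \;\le\; \sum_j \left(\sum_i u_{i,j}^{\,r}\right)^{\!1/r}.
\]
Since $\sum_j u_{i,j} = \|x_i\|_p^{\,p}$ and $u_{i,j}^{\,r} = |x_{i,j}|^{pr} = x_{i,j}^{\,2}$, raising both sides to the $p$-th power turns this into the desired inequality
\[
\left(\sum_i \|x_i\|_p^{\,2}\right)^{\!p/2} \;\le\; \sum_j \left(\sum_i x_{i,j}^{\,2}\right)^{\!p/2}.
\]

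Chaining the two displays gives $\mathbb{E}\|\sum_i \sigma_i x_i\|_p^{\,p} \ge A_p^{\,p}\,(\sum_i \|x_i\|_p^{\,2})^{p/2}$. To pass from the $p$-th moment to the first moment I would use that $t \mapsto t^p$ is concave on $[0,\infty)$ for $p\le 1$, so Jensen's inequality yields $(\mathbb{E} Y)^p \ge \mathbb{E}\, Y^p$ for $Y = \|\sum_i \sigma_i x_i\|_p$; extracting $p$-th roots then delivers the claim. The only nontrivial ingredient beyond Khintchine is arranging that Minkowski points in the useful direction, but this is automatic once one passes to exponent $r = 2/p \ge 2$ via the substitution $u_{i,j} = |x_{i,j}|^p$, so there is no real obstacle.
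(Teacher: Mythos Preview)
Your proposal is correct and follows essentially the same route as the paper: coordinatewise Khintchine, then the triangle inequality in $\ell_{2/p}$ (which you phrase as Minkowski with $r=2/p$) applied to the vectors $(|x_{1j}|^p,\ldots,|x_{nj}|^p)$, and finally Jensen via concavity of $t\mapsto t^p$ to pass from the $p$-th moment to the first moment. One cosmetic slip: after your Minkowski substitution the exponents already come out as $p/2$ on both sides, so there is no additional ``raise to the $p$-th power'' step needed.
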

\begin{proof}
  Let $x_{ij}$ denote coordinate $j$ of $x_i$, i.e., $X$ is the $n
  \times d$ matrix whose \emph{rows} are $x_1, \ldots, x_n$.
  By Khintchine's inequality,
  \begin{align*}
    \mathbb{E}\left\| \sum_{i = 1}^n \sigma_i x_i  \right\|_p^p
    &=  \sum_{j = 1}^d \mathbb{E}\left|\sum_{i = 1}^n \sigma_i x_{ij}    \right|^p   \\
    &\ge A_p^p  \sum_{j = 1}^d \left(\sum_{i =  1}^nx_{ij}^2\right)^{p/2}.
  \end{align*}

  Let us now consider the case $p \le 2$.
  By the triangle inequality for $\|\cdot\|_q$, where $q = 2/p \ge 1$,
  applied to the vectors $(|x_{1j}|^p, |x_{2j}|^p, \ldots,
  |x_{nj}|^p)$, $j \in [d]$,
  \begin{align*}
    \sum_{j = 1}^d \left(\sum_{i = 1}^n x_{ij}^2\right)^{p/2}
    &= \sum_{j = 1}^d \left(\sum_{i = 1}^n (|x_{ij}|^p)^{2/p}\right)^{p/2}\\
    &\ge  \left(\sum_{i = 1}^n \left| \sum_{j = 1}^d |x_{ij}|^p\right|^{2/p} \right)^{p/2}\\
    &=  \left(\sum_{i = 1}^n \|x_i\|_p^2 \right)^{p/2}.
  \end{align*}
 Finally, by the concavity of the function $x \mapsto x^{p}$ for $p \in (0, 1]$, we have
 \begin{align*}
    \lp\EV\left\| \sum_{i = 1}^n \sigma_i x_i  \right\|_p\rp^p \ge \EV\left\| \sum_{i = 1}^n \sigma_i x_i  \right\|_p^p.
 \end{align*}
\end{proof}
We now have all the tools we need to prove \cref{thm:distortion-lp}:
\begin{proof}[Proof of \cref{thm:distortion-lp}]
For brevity, let $p = 0.99$ in this proof.
Let $T: \R^d \to \R^{d'}$ be any linear map satisfying
\[
\alpha \| T x \|_p \leq \| x \| \leq \beta \| T x \|_p\; .
\]
Let $C_2 = C_2 ((\R^d, \| \cdot \|))$, and let $x_1, \ldots, x_n$ be a sequence in $(\R^d, \| \cdot \|)$ satisfying
\[
\mathbb{E} \left[ \left\| \sum_{j = 1}^n \sigma_j x_j \right\| \right] = C_2 \sqrt{ \sum_{j = 1}^n \| x_i \|^2} \; .
\]
However, we have that
\[
\alpha \mathbb{E} \left[ \left\| \sum_{j = 1}^n \sigma_i T x_j \right\|_p \right] \le\cdot \mathbb{E} \left[ \left\| \sum_{j = 1}^n \sigma_j x_j \right\| \right] \; ,
\]
and simultaneously, we have
\[
    \sqrt{\sum_{j = 1}^n \| x_i \|^2 }
    \le
    \beta \sqrt{ \sum_{j = 1}^n \|T x_i \|_p^2 }
    \;.
\]
Combining these facts and \cref{lemma:cotype-estimate}, we obtain that $\beta / \alpha \geq A_p C_2 = \Omega (C_2)$, as claimed, where $A_p$ is as in \cref{lemma:cotype-estimate} and \cref{fact:khintchine}.
\end{proof}

\end{document}